\title{Learning Theory for Kernel Bilevel Optimization}
\author{
  Fares El~Khoury\thanks{Correspondence to: \href{mailto:fares.el-khoury@inria.fr}{\texttt{fares.el-khoury@inria.fr}}.}
  \\
  Universit\'e Grenoble Alpes, Inria, \\CNRS, Grenoble INP, LJK, \\38000 Grenoble, France
  \And
  Edouard Pauwels\\
  Toulouse School of Economics, \\Universit\'e Toulouse Capitole,\\ 31080 Toulouse, France\\
  \And
  Samuel Vaiter\\
  CNRS \& Universit\'e C\^ote d'Azur,\\ Laboratoire J. A. Dieudonn\'e, \\06108 Nice, France
  \And
  Michael Arbel\\
  Universit\'e Grenoble Alpes, Inria, \\CNRS, Grenoble INP, LJK, \\38000 Grenoble, France
}
\newlist{assumplist}{enumerate}{1}
\setlist[assumplist]{label=(\textbf{\Alph*}), leftmargin=*}
\Crefname{assumplisti}{Assumption}{Assumptions}
\Crefname{assumption}{Assumption}{Assumptions}
\newcommand{\diam}{\operatorname{diam}}
\newcommand{\op}{\operatorname{op}}
\newcommand{\hs}{\operatorname{HS}}
\newcommand{\Id}{\operatorname{Id}}
\newcommand*\diff{\mathop{}\!\mathrm{d}}
\newcommand{\alphabf}{\operatorname{\boldsymbol{\alpha}}}
\newcommand{\gammabf}{\operatorname{\boldsymbol{\gamma}}}
\newcommand{\K}{\operatorname{\mathbf{K}}}
\newcommand{\Kbar}{\operatorname{\mathbf{\overline{K}}}}
\newcommand{\diag}{\operatorname{\mathbf{diag}}}
\newcommand{\M}{\operatorname{\mathbf{M}}}
\newcommand{\Done}{\operatorname{\mathbf{D_v^{out}}}}
\newcommand{\Dtwo}{\operatorname{\mathbf{D_{v,v}^{in}}}}
\newcommand{\Dthree}{\operatorname{\mathbf{D_\omega^{out}}}}
\newcommand{\Dfour}{\operatorname{\mathbf{D_{\omega,v}^{in}}}}
\newcommand{\F}{\operatorname{\mathbf{F}}}
\newcommand{\vc}{\operatorname{\mathbf{c}}}
\newcommand{\C}{\operatorname{\mathbf{C}}}
\newcommand{\y}{\operatorname{\mathbf{y}}}
\newcommand{\w}{\operatorname{\mathbf{w}}}
\newcommand{\B}{\operatorname{\mathbf{B}}}
\newcommand{\Var}{\operatorname{\text{Var}}}
\newcommand{\J}{\operatorname{\mathbf{J}}}
\newcommand{\W}{\operatorname{\mathbf{W}}}
\newcommand{\PP}{\mathbb{D}}
\newcommand{\Dout}{\delta_\omega^{out}}
\newcommand{\Din}{\delta_\omega^{in}}
\newcommand{\Douth}{\partial_{h}\delta_\omega^{out}}
\newcommand{\Dinh}{\partial_{h}\delta_\omega^{in}}
\newcommand{\Dinw}{\partial_{\omega}\delta_\omega^{in}}
\newcommand{\Doutw}{\partial_{\omega}\delta_\omega^{out}}
\newcommand{\Dinhh}{\partial_{h}^2\delta_\omega^{in}}
\newcommand{\Dinwh}{\partial_{\omega,h}^2\delta_\omega^{in}}
\newcommand{\Eout}{E_\omega^{out}}
\newcommand{\Ein}{E_\omega^{in}}
\newcommand{\Eouth}{\partial_{h}E_\omega^{out}}
\newcommand{\Eoutw}{\partial_{\omega}E_\omega^{out}}
\newcommand{\Einhh}{\partial_{h}^2E_\omega^{in}}
\newcommand{\Einwh}{\partial_{\omega,h}^2E_\omega^{in}}
\newcommand{\lipout}{\operatorname{Lip}_{out}}
\newcommand{\lipin}{\operatorname{Lip}_{in}}
\newcommand{\parens}[1]{\left( #1 \right)}
\newcommand{\brackets}[1]{\left[ #1 \right]}
\newcommand{\verts}[1]{\left\lvert #1 \right\rvert}
\newcommand{\Verts}[1]{\left\lVert #1 \right\rVert}
\newcommand{\braces}[1]{\left\{ #1 \right\}}
\DeclareMathOperator*{\argmin}{arg\,min}
\theoremstyle{plain}
\newtheorem{theorem}{Theorem}[section]
\newtheorem{proposition}[theorem]{Proposition}
\newtheorem{lemma}[theorem]{Lemma}
\newtheorem{corollary}[theorem]{Corollary}
\theoremstyle{definition}
\theoremstyle{remark}
\newtheorem{remark}[theorem]{Remark}
\begin{document}

\addtocontents{toc}{\protect\setcounter{tocdepth}{0}}

\maketitle

\begin{abstract}
Bilevel optimization has emerged as a technique for addressing a wide range of machine learning problems that involve an outer objective implicitly determined by the minimizer of an inner problem. While prior works have primarily focused on the \emph{parametric} setting, a \emph{learning-theoretic} foundation for bilevel optimization in the \emph{nonparametric} case remains relatively unexplored. In this paper, we take a first step toward bridging this gap by studying \emph{Kernel Bilevel Optimization} (KBO), where the inner objective is optimized over a reproducing kernel Hilbert space. This setting enables rich function approximation while providing a foundation for rigorous theoretical analysis. In this context, we derive novel \emph{finite-sample generalization bounds} for KBO, leveraging tools from empirical process theory. These bounds further allow us to assess the statistical accuracy of gradient-based methods applied to the empirical discretization of KBO. We numerically illustrate our theoretical findings on a synthetic instrumental variable regression task.
\end{abstract}

\section{Introduction}
Bilevel optimization involves a nested structure where one optimization problem, called \emph{outer-level}, is constrained by the solution of another one, called \emph{inner-level} \citep{candler1977multi}. This formulation has found applications in a broad spectrum of machine learning fields, including hyperparameter tuning \citep{larsen1996design,bengio2000gradient,franceschi2018bilevel}, meta-learning \citep{bertinetto2018metalearning,pham2021contextual}, inverse problems \citep{holler2018bilevel}, and reinforcement learning \citep{hong2023two,liu2023value}, making it a powerful tool in theoretical and practical contexts. Its widespread use naturally raises fundamental questions about the \emph{generalization properties} of models learned through this procedure as the number of data samples increases. Several existing works have studied the generalization and convergence of bilevel algorithms under the assumption that the inner-level problem is \emph{strongly convex} and that its parameters lie in a \emph{finite-dimensional} space. These include analyses of the convergence of stochastic bilevel optimization algorithms \citep{Arbel:2021a,Dagreou2022SABA,ghadimi2018approximation,Ji:2021a} and approaches based on algorithmic stability \citep{bao2021stability,zhang2024fine}. The strong convexity assumption ensures a \emph{unique} inner-level solution, which is crucial for stability and convergence analysis in bilevel optimization. Moreover, restricting the inner-level parameters to a finite-dimensional space does not cover possibly \emph{richer infinite-dimensional} spaces, as in kernel methods, where the parameter's dimension may grow with the sample size. This sample size dependence in \emph{nonparametric} methods poses additional challenges as solutions at different sample sizes are not directly comparable. In contrast, in the finite-dimensional setting, generalization bounds can be derived by quantifying the convergence of inner-level finite-sample solutions toward the infinite samples \emph{population solution} limit, within a fixed Euclidean space.

Albeit theoretically convenient, strong convexity and finite-dimensionality limit models expressiveness, restricting them to linear functions. Moving beyond linear models requires either relaxing the strong convexity assumption to accommodate more expressive models, such as deep neural networks \citep{Goodfellow-et-al-2016}, or considering nonparametric bilevel problems, where the inner-level variable lies in an expressive infinite-dimensional function space, such as a \emph{Reproducing Kernel Hilbert Space} (RKHS) \citep{scholkopf2002learning}. Early works in bilevel optimization for machine learning followed the latter approach, developing methods for hyperparameter selection in kernel-based models \citep{keerthi2006efficient,kunapuli2008classification}. These works leverage the \emph{representer theorem} \citep{scholkopf2001generalized} to transform the infinite-dimensional problem into a finite-dimensional one, with dimension depending on the sample size. However, they do not address how sample size impacts generalization. Another line of research instead focuses on relaxing the strong convexity assumption, proposing bilevel algorithms in the absence of convexity \citep{arbel2022nonconvex,kwon2024on,shen2023penalty}. Yet, \emph{non-convex} bilevel optimization is a \emph{very hard} problem in general \citep{liu2021towards,arbel2022nonconvex,bolte2024geometric}, and strong generalization guarantees in this setting remain out of reach due to instabilities of the inner-level solutions. Overall, learning theory for bilevel problems beyond the \emph{strongly convex parametric} setting is essentially \emph{lacking}.

In the present work, we take an initial step toward developing a \emph{learning theory} that goes beyond the finite-dimensional setting. Specifically, we propose to study \emph{Kernel Bilevel Optimization} \eqref{eq:kbo} problems, where the inner objective $L_{in}: \mathbb{R}^d \times \mathcal{H}\rightarrow \mathbb{R}$ finds an optimal inner solution $h_{\omega}^{\star}$ in an RKHS $\mathcal{H}$ for a given parameter $\omega$ in $\mathbb{R}^d$, while the outer objective $L_{out}:\mathbb{R}^d \times \mathcal{H}\rightarrow \mathbb{R}$ optimizes the parameter $\omega$ over a closed subset $\mathcal{C}$ of $\mathbb{R}^d$, given the inner solution $h_{\omega}^{\star}$: 
\begin{equation}\tag{KBO}\label{eq:kbo}
    \min_{\omega\in\mathcal{C}}\mathcal{F}(\omega)\coloneqq L_{out}(\omega,h^\star_\omega)\quad\text{s.t.}\quad h^\star_\omega=\argmin_{h\in\mathcal{H}}L_{in}(\omega, h).
\end{equation}
In particular, we focus on objectives that are expectations of point-wise losses, a common setting in learning theory. RKHS provides a natural framework to study learning-theoretic arguments, and has been instrumental for many fruitful results in pattern recognition and machine learning. They allow to describe very expressive non-linear models with simple and stable algorithms, while enabling a rich statistical analysis and featuring adaptivity to the regularity of the population problem \citep{shawe2004kernel,scholkopf2002learning,hofmann2008kernel}. Our choice is also motivated by the relevance of kernel methods, even in the deep learning era. They remain competitive for some prediction problems, such as those involving physics \citep{doumeche2024physics,letizia2022learning}. Additionally, the mathematics of kernel methods are useful to describe the limiting behavior of  deep network training for very large models \citep{jacot2018neural,belkin2018understand}. In this limit, the problem becomes (strongly) convex in an infinite-dimensional function space, simplifying the difficulties of non-convex model parameterizations, a major bottleneck in the analysis of such models. This point of view was leveraged by \citet{petrulionyte2024functional} who introduced \emph{functional bilevel optimization}, and our setting can be seen as a special case for which the underlying function space is an RKHS. From a practical perspective, our setting is \emph{amenable} to first-order methods using \emph{implicit differentiation} techniques \citep{griewank2003piggyback,bai2019deep,blondel2022efficient}.

\textbf{Contributions.} We leverage empirical process theory and its extension to $U$-processes \citep{sherman1994maximal} to derive \emph{uniform generalization bounds} for the value function of \eqref{eq:kbo}, quantifying the discrepancy between $\mathcal{F}$ and its plug-in estimator $\widehat{\mathcal{F}}$ in terms of both their values and gradients. Classical empirical process results \citep{vaart_wellner_1996} do not directly apply here, as our functional setting involves processes taking values in an \emph{infinite-dimensional space} rather than real numbers. To address this, we exploit the RKHS structure to represent the resulting error terms as real-valued $U$-processes, enabling the use of results from \citet{sherman1994maximal}. The control in terms of gradients is crucial to study first-order optimization methods, since $\mathcal{\widehat{F}}$ is typically non-convex and iterative methods seek to find approximate critical points where $\|\nabla \widehat{\mathcal{F}}\|$ is small. Our result relies on an \emph{equivalence} we establish between $\nabla \widehat{\mathcal{F}}$ and a plug-in statistical estimate of $\nabla \mathcal{F}$ that is \emph{more amenable to a statistical analysis}. We then use our uniform bounds to provide generalization guarantees for \emph{gradient descent} and \emph{projected gradient descent} applied to $\widehat{\mathcal{F}}$. Under specific assumptions, we show \emph{convergence rates} for \emph{sub-optimality measures} that depend on the sample sizes and the number of algorithmic iterations. This illustrates the practical relevance of our generalization bounds on simple bilevel algorithms. For a large number of steps, gradient algorithms applied to the empirical \eqref{eq:kbo} find approximate critical points of the population \eqref{eq:kbo} up to a statistical error which we control.

\textbf{Organization of the paper.} In \Cref{sec:KBO}, we describe \eqref{eq:kbo}, give two application examples, and explain implicit differentiation in an RKHS. In \Cref{sec:finite_samples}, we present the empirical \eqref{eq:kbo} and state our first main result on the gradient of its value function. \Cref{sec:conv} provides uniform generalization bounds for \eqref{eq:kbo}, with applications to bilevel gradient methods, as well as a sketch of the proof of our main result. Finally, in \cref{sec:expe}, we illustrate our theoretical findings with experiments on synthetic data for the instrumental variable regression problem.

\section{Kernel bilevel optimization}\label{sec:KBO}
\subsection{Problem formulation}
We consider the \eqref{eq:kbo} problem with an RKHS $\mathcal{H}$, which is a space of real-valued functions defined on a \emph{Borel} input space $\mathcal{X}\subset\mathbb{R}^p$ and associated with a \emph{reproducing kernel} $K:\mathcal{X}\times\mathcal{X}\to\mathbb{R}$. We are interested, in particular, in (regularized) objectives expressed as expectations of point-wise loss functions, a formulation widely adopted in machine learning as it allows the loss functions to represent the average performance over some data distribution. Specifically, given two probability distributions $\mathbb{P}$ and $\mathbb{Q}$ supported on $\mathcal{X}\times \mathcal{Y}$ for some target space $\mathcal{Y}\subset \mathbb{R}^q$, we consider objectives of the form:  
\begin{equation*}
    L_{out}(\omega, h)=\mathbb{E}_\mathbb{Q}\left[\ell_{out}(\omega, h(x), y)\right],\quad L_{in}(\omega, h)=\mathbb{E}_\mathbb{P}\left[\ell_{in}(\omega, h(x), y)\right]+\frac{\lambda}{2}\|h\|_\mathcal{H}^2,
\end{equation*}
where $\ell_{in}\text{ and }\ell_{out}:\mathbb{R}^d\times\mathbb{R}\times\mathbb{R}^q\to\mathbb{R}$ represent the inner and outer point-wise loss functions, $\lambda>0$ is the regularization parameter which is fixed through this work, and $\|\cdot\|_\mathcal{H}$ denotes the norm in the RKHS $\mathcal{H}$. The regularization term in $L_{in}$ is often used in practice to prevent overfitting by penalizing overly complex models. In our setting, it ensures strong convexity of $h\mapsto L_{in}(\omega,h)$ under mild assumptions on $\ell_{in}$, which will be critical to leverage functional implicit differentiation. 

\textbf{Assumptions.} Through the paper, we make the following five assumptions to derive generalization bounds while retaining a simple and modular presentation. 
\begin{assumplist}
\item\label{assump:K_meas}(Measurability of $K$). $K$ is measurable on $\mathcal{X}\times\mathcal{X}$.
\item\label{assump:K_bounded}(Boundedness of $K$). There exists a constant $\kappa>0$ such that $K(x,x)\leq\kappa$, for any $x\in\mathcal{X}$.
\item\label{assump:compact}(Compactness of $\mathcal{Y}$). The subset $\mathcal{Y}$ of $\mathbb{R}^q$ is compact.
\item\label{assump:reg_lin_lout}(Regularity of $\ell_{in}$ and $\ell_{out}$). The functions $\ell_{in}$ and $\ell_{out}$ are of class $C^3$ jointly in their first two arguments $(\omega,v)$, and their derivatives are jointly continuous in $(\omega,v,y)$. 
\item\label{assump:convexity_lin}(Convexity of $\ell_{in}$). For any $(\omega,y)\in\mathbb{R}^d\times\mathbb{R}^q$, the map $v\mapsto \ell_{in}(\omega,v,y)$ is convex.
\end{assumplist}
\cref{assump:K_meas,assump:K_bounded} on $K$ hold for a wide class of kernels, such as the Gaussian, Laplacian, and Mat\'ern \citep{rasmussen2006gaussian} kernels. They also hold if $K$ is a \emph{Mercer kernel} \citep{mercer1909functions}, \textit{i.e.}, a continuous, positive‐definite kernel on a compact domain $\mathcal{X}$. Specifically,  a kernel built using neural network features, \textit{e.g.}, \emph{neural tangent kernel} \citep{jacot2018neural}, satisfies these assumptions on the space of images, which is compact since the pixel values have a bounded range. \cref{assump:compact} on $\mathcal{Y}$ is a mild assumption that holds in most supervised learning applications, such as classification where $\mathcal{Y}$ is finite, or cases where $\mathcal{Y} = [0,1]^q$, enabling the representation of complex data, like images. \cref{assump:reg_lin_lout} on the point-wise objectives is a mild regularity assumption, which is met for the most commonly used loss functions in practice, including the squared loss, logistic loss, cross-entropy loss, and KL divergence. Finally, \cref{assump:convexity_lin} is essential to ensure the existence and uniqueness of a smooth minimizer $h^\star_\omega$. It is a relatively weak assumption that was recently considered in \citep{petrulionyte2024functional} in the context of functional bilevel optimization, and that holds in many cases of interest, as discussed in \cref{sec:examples}.

\begin{remark}
    \cref{assump:K_bounded,assump:compact,assump:reg_lin_lout} can be relaxed at the expense of weaker yet more technical assumptions, such as finite moment assumptions on $\mathbb{P}$ and $\mathbb{Q}$, and suitable polynomial growth of the kernel and some partial derivatives of $\ell_{in}$ and $\ell_{out}$. It is also sufficient to require that \Cref{assump:reg_lin_lout} holds on $\mathcal{U} \times \mathbb{R} \times \mathbb{R}^q$ where $\mathcal{U}$ is an open neighborhood of $\mathcal{C}$, and that \Cref{assump:convexity_lin} holds for any $\omega\in\mathcal{C}$ and $y\in\mathcal{Y}$. We prefer to keep these stronger yet simpler assumptions for clarity.
\end{remark}

\subsection{\texorpdfstring{Examples of \eqref{eq:kbo}}{Examples of (KBO)} in machine learning}\label{sec:examples}
To illustrate the relevance of \eqref{eq:kbo}, we consider two examples that highlight its applicability.

\textbf{Hyperparameter selection under distribution shift.} In this application, the aim is to select the best hyperparameters for a machine learning model, \textit{e.g.}, regularization parameters, while accounting for distribution shift between the training and test data, \textit{i.e.}, when the training and test data distributions $\mathcal{D}_{\text{train}}$ and $\mathcal{D}_{\text{test}}$ are different \citep{pedregosa2016hyperparameter,franceschi2018bilevel}. This can be viewed as an instance of \eqref{eq:kbo} when using models in an RKHS. At the inner-level, the model $h$ is trained to minimize the regularized training squared error loss, with the hyperparameter $\omega > 0$ representing the weight for the data fitting term. At the outer-level, the task is to select the hyperparameter $\omega$ that maximizes the model’s performance on the distribution-shifted test data. Both inner and outer objectives can thus be formulated as:
\begin{equation*}
    L_{out}(\omega, h)=\frac{1}{2}\mathbb{E}_{(x,y)\sim\mathcal{D}_{\text{test}}}\left[\verts{h(x)-y}^2\right],\quad L_{in}(\omega, h)=\frac{\omega}{2} \mathbb{E}_{(x,y)\sim\mathcal{D}_{\text{train}}}\left[\verts{h(x)-y}^2\right]+\frac{\lambda}{2}\Verts{h}_\mathcal{H}^2.
\end{equation*}
This formulation could be used for domain adaptation \citep{ben2006analysis} or domain generalization \citep{wang2022generalizing} to choose hyperparameters that perform well on the distribution-shifted test data.

\textbf{Instrumental variable regression.} It is a technique used to address endogeneity in statistical modeling by leveraging instruments to estimate causal relationships \citep{newey2003instrumental}. The goal is to estimate a function $t\mapsto f_{\omega}(t)$ parameterized by a vector $\omega$, that satisfies $y=f_{\omega}(t)+\epsilon$, where $y\in\mathbb{R}$ is the observed outcome, $t$ is the treatment, and $\epsilon$ is the error term. The key issue is that $t$ is endogenous, which means that it is correlated with $\epsilon$, making direct regression inconsistent. Indeed, such correlation leads to biased estimates of $f_{\omega}(t)$ as the assumption of exogeneity, \textit{i.e.}, independence of $t$ and $\epsilon$, is violated. To resolve this, one can use an instrumental variable $x$, uncorrelated with $\epsilon$ but correlated with $t$, to recover the relationship between $y$ and $t$ without being directly affected by the bias introduced by $\epsilon$, typically via two-stage least squares regression \citep{singh2019kernel,meunier2024nonparametric}. As shown in \citep{petrulionyte2024functional}, this approach can be naturally expressed as a bilevel problem with inner and outer objectives of the form:
\begin{equation*}
    L_{out}(\omega,h)=\frac{1}{2}\mathbb{E}_{x,y}\left[\verts{h(x)-y}^2\right],\quad L_{in}(\omega,h)=\frac{1}{2}\mathbb{E}_{x,t}\left[\verts{h(x)-f_{\omega}(t)}^2\right]+\frac{\lambda}{2}\Verts{h}_\mathcal{H}^2,
\end{equation*}
where $h$ can be chosen to be in an RKHS to allow flexibility in the estimation while retaining uniqueness of the solution $h_{\omega}^{\star}$, a key property in bilevel optimization.

\subsection{Implicit differentiation in an RKHS}
A stationarity measure in \eqref{eq:kbo} is the gradient $\nabla\mathcal{F}(\omega)$ of the value function $\mathcal{F}$. Computing $\nabla\mathcal{F}(\omega)$, however, is challenging and will be addressed in this section. At a high level, our approach proceeds in two steps. First, we derive an abstract, \emph{a priori} intractable, expression for $\nabla\mathcal{F}(\omega)$ using implicit differentiation in an RKHS, which is the main source of difficulty. Then, we leverage the structure of our problem to reformulate the gradient in \cref{prop:func_gradient} using the solution of a regression problem in the RKHS (the adjoint problem). This more concrete formulation can be approximated with finite samples and will later serve as the foundation of our statistical analysis. Formally, evaluating the gradient requires computing the Jacobian $\partial_\omega h^\star_\omega$, which can be viewed as a linear operator from $\mathcal{H}$ to $\mathbb{R}^d$. Indeed, $h^\star_\omega$ depends implicitly on $\omega$. A key ingredient for computing $\partial_\omega h^\star_\omega$ is the \emph{implicit function theorem} \citep{ioffe1979theory}, which guarantees the differentiability of the implicit function $\omega\mapsto h_{\omega}^{\star}$ and allows characterizing $\partial_\omega h^\star_\omega$ as the \emph{unique} solution of a linear system of the form:
\begin{equation}\label{eq:impl_diff}
\partial_{\omega, h}^2 L_{in}(\omega, h^\star_\omega)+\partial_\omega h^\star_\omega \partial_h^2 L_{in}(\omega, h^\star_\omega)=0,    
\end{equation}
where $\partial_h^2 L_{in}(\omega, h^\star_\omega)$ is an operator from $\mathcal{H}$ to itself representing the Hessian of $L_{in}$ w.r.t. $h$, while $\partial_{\omega,h}^2 L_{in}(\omega, h^\star_\omega)$ is an operator from $\mathcal{H}$ to $\mathbb{R}^d$ representing the cross derivatives of $L_{in}$ w.r.t. to $\omega$ and $h$. Applying such result requires $h\mapsto L_{in}(\omega,h)$ to be Fr\'echet differentiable with invertible Hessian operator and jointly Fr\'echet differentiable gradient map $(\omega,h)\mapsto\partial_{h}L_{in}(\omega,h)$. All these properties are satisfied in our setting under \cref{assump:K_meas,assump:K_bounded,assump:compact,assump:convexity_lin,assump:reg_lin_lout} as shown in \cref{prop:fre_diff_L,prop:fre_diff_L_v,prop:strong_convexity_Lin} of \cref{sec:reg_ob}. Furthermore, when $L_{out}$ is Fr\'echet differentiable, which is our case under \cref{assump:K_meas,assump:K_bounded,assump:compact,assump:reg_lin_lout} ({\cref{prop:fre_diff_L}} of {\cref{sec:reg_ob}}), then by composition with $\omega\mapsto (\omega,h_{\omega}^{\star})$,  the map $\omega\mapsto \mathcal{F}(\omega)$ must also be differentiable with gradient obtained using the chain rule:
\begin{align*}
	\nabla\mathcal{F}(\omega)= \partial_{\omega}L_{out}(\omega,h_{\omega}^{\star}) +\partial_{\omega}h_{\omega}^{\star}\partial_{h}L_{out}(\omega,h_{\omega}^{\star}).
\end{align*}
The above expression for the gradient is intractable as it involves abstract operators, namely the derivatives $\partial_h$, $\partial_h^2$, and $\partial_{\omega, h}^2$, the last two of which arise when replacing $\partial_\omega h^\star_\omega$ by its expression in \cref{eq:impl_diff}. In \cref{prop:func_gradient} below, we derive an explicit expression for $\nabla\mathcal{F}(\omega)$ which exploits the particular structure of the objectives $L_{in}$ and $L_{out}$ as expectations of point-wise losses. 
\begin{proposition}[Expression of the total gradient]\label{prop:func_gradient}
Under \cref{assump:K_meas,assump:compact,assump:convexity_lin,assump:K_bounded,assump:reg_lin_lout}, $\mathcal{F}$ is differentiable on $\mathbb{R}^d$, with gradient $\nabla\mathcal{F}(\omega)$, for any $\omega\in \mathbb{R}^d$, given by:
\begin{equation}\label{eq:func_tot_gradient}
    \nabla\mathcal{F}(\omega)=\mathbb{E}_\mathbb{Q}\left[\partial_\omega\ell_{out}(\omega, h^\star_\omega(x),y)\right]+\mathbb{E}_\mathbb{P}\left[\partial_{\omega,v}^2\ell_{in}(\omega, h^\star_\omega(x),y)a^\star_\omega(x)\right],
\end{equation}
where \emph{the adjoint function} $a^\star_\omega\in\mathcal{H}$ is the unique minimizer of a strongly convex quadratic objective $a\mapsto L_{adj}(\omega,a)$ defined on $\mathcal{H}$ as:
\begin{align}\label{eq:adjoint_objective}
    L_{adj}(\omega,& a)\coloneqq\frac{1}{2}\mathbb{E}_{\mathbb{P}}\brackets{\partial_{v}^2 \ell_{in}\parens{\omega,h_{\omega}^{\star}(x),y} a^2(x)}+\mathbb{E}_{\mathbb{Q}}\brackets{\partial_{v}\ell_{out}\parens{\omega,h_{\omega}^{\star}(x),y}a(x)}+\frac{\lambda}{2}\Verts{a}_{\mathcal{H}}^2,
\end{align}
where $\partial_\omega\ell_{out}$ and $\partial_v\ell_{out}$ are the first-order partial derivatives of $\ell_{out}$ w.r.t. $\omega$ and $v$, while $\partial_{\omega,v}^2\ell_{in}$ and $\partial_{v}^2\ell_{in}$ denote the second-order partial derivatives of $\ell_{in}$ w.r.t. $\omega$ and $v$.
\end{proposition}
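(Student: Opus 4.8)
The plan is to turn the abstract implicit-differentiation formula for $\nabla\mathcal{F}$ into the concrete expression \cref{eq:func_tot_gradient} by exploiting the reproducing property of $\mathcal{H}$, and then to recognize the resulting linear system for the adjoint as the first-order optimality condition of the strongly convex quadratic $L_{adj}$ in \cref{eq:adjoint_objective}.

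\textbf{Step 1 (abstract gradient via implicit differentiation).} First I would invoke the regularity results \cref{prop:fre_diff_L,prop:fre_diff_L_v,prop:strong_convexity_Lin} of \cref{sec:reg_ob}, valid under \cref{assump:K_meas,assump:K_bounded,assump:compact,assump:reg_lin_lout,assump:convexity_lin}: they give that $L_{out}$ is Fr\'echet differentiable, that $h\mapsto L_{in}(\omega,h)$ is twice Fr\'echet differentiable with $\partial_h^2 L_{in}(\omega,h)\succeq\lambda\,\Id$ (hence boundedly invertible on $\mathcal{H}$), and that $(\omega,h)\mapsto\partial_h L_{in}(\omega,h)$ is jointly $C^1$. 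The implicit function theorem then yields differentiability of $\omega\mapsto h^\star_\omega$ with $\partial_\omega h^\star_\omega$ the unique solution of \cref{eq:impl_diff}, i.e. $\partial_\omega h^\star_\omega=-\,\partial_{\omega,h}^2 L_{in}(\omega,h^\star_\omega)\bigl(\partial_h^2 L_{in}(\omega,h^\star_\omega)\bigr)^{-1}$, and by the chain rule $\nabla\mathcal{F}(\omega)=\partial_\omega L_{out}(\omega,h^\star_\omega)+\partial_\omega h^\star_\omega\,\partial_h L_{out}(\omega,h^\star_\omega)$.

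\textbf{Step 2 (concretizing the derivatives).} Using the reproducing identity $h(x)=\Verts{\,}$-free form $h(x)=\langle h,K(x,\cdot)\rangle_{\mathcal{H}}$ together with differentiation under the Bochner integral, I would compute the explicit forms $\partial_\omega L_{out}(\omega,h)=\mathbb{E}_\mathbb{Q}\brackets{\partial_\omega\ell_{out}(\omega,h(x),y)}$, $\partial_h L_{out}(\omega,h)=\mathbb{E}_\mathbb{Q}\brackets{\partial_v\ell_{out}(\omega,h(x),y)K(x,\cdot)}\in\mathcal{H}$, the Hessian $\partial_h^2 L_{in}(\omega,h)[g]=\mathbb{E}_\mathbb{P}\brackets{\partial_v^2\ell_{in}(\omega,h(x),y)g(x)K(x,\cdot)}+\lambda g$, and the mixed derivative $\partial_{\omega,h}^2 L_{in}(\omega,h)[g]=\mathbb{E}_\mathbb{P}\brackets{\partial_{\omega,v}^2\ell_{in}(\omega,h(x),y)g(x)}\in\mathbb{R}^d$. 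The interchange of expectation and differentiation is licit by dominated convergence: \cref{assump:K_bounded} bounds $\Verts{K(x,\cdot)}_{\mathcal{H}}$ and $\verts{h(x)}$ on bounded sets, \cref{assump:compact} makes the range of $y$ compact, and \cref{assump:reg_lin_lout} gives joint continuity of the relevant partial derivatives, hence local uniform bounds.

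\textbf{Step 3 (adjoint reformulation and assembly).} I would then set $a^\star_\omega\coloneqq-\bigl(\partial_h^2 L_{in}(\omega,h^\star_\omega)\bigr)^{-1}\partial_h L_{out}(\omega,h^\star_\omega)\in\mathcal{H}$, which is well defined since the Hessian is boundedly invertible and $\partial_h L_{out}(\omega,h^\star_\omega)\in\mathcal{H}$. Self-adjointness of the Hessian gives $\partial_\omega h^\star_\omega\,\partial_h L_{out}(\omega,h^\star_\omega)=\partial_{\omega,h}^2 L_{in}(\omega,h^\star_\omega)[a^\star_\omega]$, and $a^\star_\omega$ is the unique solution of $\partial_h^2 L_{in}(\omega,h^\star_\omega)\,a^\star_\omega=-\partial_h L_{out}(\omega,h^\star_\omega)$. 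Plugging in the concrete formulas of Step 2, this identity is exactly the stationarity condition $\partial_a L_{adj}(\omega,a^\star_\omega)=0$ for the quadratic $L_{adj}$ of \cref{eq:adjoint_objective}; since $\partial_a^2 L_{adj}(\omega,\cdot)=\partial_h^2 L_{in}(\omega,h^\star_\omega)\succeq\lambda\,\Id$, the functional $a\mapsto L_{adj}(\omega,a)$ is $\lambda$-strongly convex, so $a^\star_\omega$ is its unique minimizer. Substituting $\partial_{\omega,h}^2 L_{in}(\omega,h^\star_\omega)[a^\star_\omega]=\mathbb{E}_\mathbb{P}\brackets{\partial_{\omega,v}^2\ell_{in}(\omega,h^\star_\omega(x),y)a^\star_\omega(x)}$ and $\partial_\omega L_{out}(\omega,h^\star_\omega)=\mathbb{E}_\mathbb{Q}\brackets{\partial_\omega\ell_{out}(\omega,h^\star_\omega(x),y)}$ into the chain-rule expression yields \cref{eq:func_tot_gradient}.

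\textbf{Main obstacle.} The delicate part is Step 2: rigorously justifying the exchange of expectation and Fr\'echet differentiation for $\mathcal{H}$-valued (Bochner) integrals, including measurability, integrability, and the domination needed to differentiate twice under the integral sign, as well as carefully tracking the composition of the abstract operators $\partial_h^2 L_{in}$, $\partial_{\omega,h}^2 L_{in}$, and $(\partial_h^2 L_{in})^{-1}$ between $\mathcal{H}$ and $\mathbb{R}^d$. Most of this groundwork is already supplied by the regularity propositions of \cref{sec:reg_ob}, so the proof mainly consists in assembling those facts and performing the adjoint bookkeeping described above.
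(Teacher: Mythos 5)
Your proposal is correct and follows essentially the same route as the paper: establish regularity via the propositions of \cref{sec:reg_ob}, apply the functional implicit function theorem and chain rule to get $\nabla\mathcal{F}(\omega)=\partial_\omega L_{out}(\omega,h^\star_\omega)-\partial_{\omega,h}^2L_{in}(\omega,h^\star_\omega)\bigl(\partial_h^2L_{in}(\omega,h^\star_\omega)\bigr)^{-1}\partial_hL_{out}(\omega,h^\star_\omega)$, identify $a^\star_\omega=-\bigl(\partial_h^2L_{in}\bigr)^{-1}\partial_hL_{out}$ as the unique minimizer of the $\lambda$-strongly convex quadratic $L_{adj}$ via its optimality condition, and use the Bochner-integral expressions of the derivative operators together with the reproducing property to obtain the explicit expectation form. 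The delicate interchange of differentiation and expectation you flag is exactly what the paper delegates to \cref{prop:fre_diff_L,prop:fre_diff_L_v} (proved in \cref{sec:proof_diff_results}), so your assembly matches the paper's argument.
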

\cref{prop:func_gradient} is proved in \cref{sec_app:reg} and relies essentially on proving Bochner's integrability {\citep[Definition 1, Chapter 2]{diestel1977vector}} of some suitable operators on $\mathcal{H}$, and then applying Lebesgue's dominated convergence theorem for Bochner's integral \citep[Theorem 3, Chapter 2]{diestel1977vector} to interchange derivatives and expectations. The expression in \Cref{prop:func_gradient} provides a natural way for approximating $\nabla\mathcal{F}(\omega)$ by estimating all expectations using finite-sample averages, as we further discuss in \cref{sec:finite_samples}.

\section{\texorpdfstring{Finite-sample approximation of \eqref{eq:kbo}}{Finite-sample approximation of (KBO)}}\label{sec:finite_samples}
\begin{wrapfigure}{R}{.39\textwidth}
    \vspace{-1em}
    \centering
    \begin{tikzcd}[row sep=0.8cm, column sep=2.5cm]
    \mathcal{F} \arrow[r, "\text{ \small Plug-in estimation}",] \arrow[d, "\text{\small $\nabla$: diff}"',] & \widehat{\mathcal{F}} \arrow[d, "\text{\small $\nabla$: diff}"] \\
    \nabla\mathcal{F} \arrow[r, "\text{\small Plug-in estimation}"'] & \nabla\widehat{\mathcal{F}} 
    \end{tikzcd}
    \caption{A commutative diagram illustrating that plug-in statistical estimation and differentiation can be interchanged for $\mathcal{F}$ and $\widehat{\mathcal{F}}$ resulting in a single gradient estimator. }
    \label{fig:commutative-diagram}
    \vspace{-1.2em}
\end{wrapfigure}
In this section, we consider an approximation of \eqref{eq:kbo} when only a \emph{finite} number of \emph{i.i.d.} samples $(x_i,y_i)_{1\leq i\leq n}$ and  $(\tilde{x}_j,\tilde{y}_j)_{1\leq j\leq m}$ from $\mathbb{P}$ and $\mathbb{Q}$ are available. This setting is ubiquitous in machine learning as it allows finding tractable approximate solutions to the original problem. As we are interested in approximately solving \eqref{eq:kbo} using gradient methods, our focus here is to derive estimators for both the value function $\mathcal{F}(\omega)$ and its  gradient $\nabla\mathcal{F}(\omega)$, whose generalization properties will be studied in \cref{sec:conv}. 

In \cref{sec:plug-in_loss}, we follow a commonly used approach of first deriving a plug-in estimator $\widehat{\mathcal{F}}$ of the value function, then considering its gradient $\nabla\mathcal{\widehat{F}}(\omega)$ as an approximation to $\nabla \mathcal{F}(\omega)$. In \cref{sec:plug_in_gradient}, we show that this approximation is \emph{equivalent} to a second estimator, more amenable to a statistical analysis, obtained by directly computing a plug-in estimator of $\nabla\mathcal{F}$ based on its expression in \cref{eq:func_tot_gradient}. \cref{fig:commutative-diagram} summarizes such equivalence.

\subsection{Value function: plug-in estimator and its gradient}\label{sec:plug-in_loss}
A natural approach for finding approximate solutions to \eqref{eq:kbo} is to consider an approximate problem obtained after replacing the objectives $L_{in}$ and $L_{out}$ by their empirical approximations $\widehat{L}_{in}$ and $\widehat{L}_{out}$:
\begin{equation*}
    \widehat{L}_{out}\parens{\omega,h}= \frac{1}{m} \sum_{j=1}^m \ell_{out}(\omega, h(\tilde{x}_j), \tilde{y}_j),\quad\widehat{L}_{in}\parens{\omega,h}= \frac{1}{n} \sum_{i=1}^n \ell_{in}(\omega, h(x_i), y_i)+\frac{\lambda}{2}\|h\|_\mathcal{H}^2.
\end{equation*}
A plug-in estimator $\omega\mapsto \widehat{\mathcal{F}}(\omega)$ is then obtained by first finding a solution $\hat{h}_{\omega}$ minimizing $h\mapsto \widehat{L}_{in}(\omega,h)$, that is meant to approximate the optimal inner solution $h_{\omega}^{\star}$, and subsequently plugging it into $\widehat{L}_{out}$. This procedure results in the following empirical version of \eqref{eq:kbo}:
\begin{equation*}
    \min_{\omega\in\mathcal{C}}\widehat{\mathcal{F}}(\omega)\coloneqq \widehat{L}_{out}(\omega,\hat{h}_{\omega})\quad\text{s.t.}\quad \hat{h}_\omega=\argmin_{h\in\mathcal{H}}\widehat{L}_{in}\parens{\omega,h}.
\end{equation*}
The inner problem still requires optimizing over a, potentially infinite-dimensional, RKHS. However, its finite-sum structure allows equivalently expressing it as a finite-dimensional bilevel optimization, by application of the so-called representer theorem \citep{scholkopf2001generalized}:
\begin{equation}\tag{$\widehat{\text{KBO}}$}\label{eq:kbo_app}
    \begin{aligned}
    \min_{\omega\in\mathcal{C}}\widehat{\mathcal{F}}(\omega)&\coloneqq\frac{1}{m}\sum_{j=1}^m\ell_{out}(\omega, \parens{\Kbar\hat{\gammabf}_\omega}_j, \tilde{y}_j)\\
    \text{s.t.}\quad\hat{\gammabf}_\omega&=\argmin_{\gammabf\in\mathbb{R}^n}\frac{1}{n}\sum_{i=1}^n\ell_{in}(\omega, \parens{\K\gammabf}_i, y_i)+\frac{\lambda}{2}\gammabf^\top\K\gammabf.
    \end{aligned}
\end{equation}
Here, $\K\in\mathbb{R}^{n\times n}$ and $\Kbar\in\mathbb{R}^{m\times n}$ are matrices containing the pairwise kernel similarities between the data points, \textit{i.e.}, $\K_{ij}\coloneqq K(x_i,x_j)$ and $\Kbar_{ij}\coloneqq K(\tilde{x}_i,x_j)$, while $\gammabf$ is a parameter vector in $\mathbb{R}^n$ representing the inner-level variables. The optimal solution $\hat{\gammabf}_{\omega}$ enables recovering the prediction function $\hat{h}_{\omega}$ by linearly combining kernel evaluations at inner-level samples, \textit{i.e.}, $\hat{h}_{\omega} = \sum_{i=1}^n (\hat{\gammabf}_{\omega})_i K(x_i,\cdot)$. 
The formulation in \eqref{eq:kbo_app} enables deriving an expression for the gradient $\nabla \widehat{\mathcal{F}}(\omega)$ in terms of the Jacobian $\partial_{\omega} \hat{\gammabf}_{\omega}$ by direct application of the chain rule. Unlike $\partial_{\omega}h_{\omega}^{\star}$, which requires solving the \emph{infinite-dimensional} linear system in \cref{eq:impl_diff}, $\partial_{\omega} \hat{\gammabf}_{\omega}$ can be obtained by solving a \emph{finite-dimensional} linear system using the implicit function theorem (see {\cref{prop:est_2} of \cref{sec:grad_est}}). Hence, \eqref{eq:kbo_app} falls into a class of optimization problems for which a rich body of literature has proposed practical and scalable algorithms, leveraging the expression of $\nabla\widehat{\mathcal{F}}(\omega)$ \citep{Ji:2021a,Arbel:2021a,Dagreou2022SABA}. Solving \eqref{eq:kbo_app} thus provides a practical way to approximate the solution of the original population problem \eqref{eq:kbo}, as proposed by several prior works on bilevel optimization involving kernel methods \citep{keerthi2006efficient,kunapuli2008classification}.

\textbf{Non-applicability of existing results.} Despite its practical advantages, the above approach yields algorithms that are \emph{not} directly amenable to a statistical analysis. The key challenge is to be able to control the approximation error between the true gradient $\nabla\mathcal{F}(\omega)$ and its approximation $\nabla\widehat{\mathcal{F}}(\omega)$ as the sample sizes $n$ and $m$ increase. Existing statistical analyses for bilevel optimization, such as \citep{bao2021stability,zhang2024fine}, consider objectives that are expectations or finite sums of point-wise losses, as we do here. While these results can be applied to our setting for each \emph{fixed} $n$, they do not capture the generalization behavior as $n$ grows. In particular, they require both the inner- and outer-level parameters to lie in spaces of fixed dimensions, that are independent of $n$ and $m$. That is because these parameters are expected to converge to some fixed vectors as $n,m\rightarrow +\infty$. In contrast, in our setting, the inner-level parameter $\gammabf$ lies in $\mathbb{R}^n$, so its dimension grows with $n$ and is not expected to converge to any well-defined object. Existing \emph{non-kernel} generalization bounds are discussed in \cref{app:dis}.

\textbf{Relation to instrumental variable regression.} Our formulation is related to instrumental variable regression, which is a special case, but differs in that we study \emph{regularized} inner problems with a fixed $\lambda$, independent of the sample sizes. In contrast, the instrumental variable regression literature typically considers \emph{un-regularized} population problems ($\lambda=0$), for which a \emph{closed-form} expression of the inner minimizer is available \citep{pmlr-v70-hartford17a,singh2019kernel,xu2021learning,li2024regularized}. Our contribution lies in an orthogonal direction: we handle \emph{more general} inner objectives, for which even the analysis of regularized problems raises new obstacles that had not been tackled before. Moreover, some prior works have provided convergence rates in the instrumental variable regression setting \citep{ai2003efficient,ai2007estimation,chen2012estimation,chen2015sieve}. Yet, these studies focus on \emph{asymptotic} results with sieve estimators, meanwhile we leverage the RKHS structure to provide \emph{finite-sample} bounds. More recently, \citet{meunier2024nonparametric} established \emph{minimax optimal rates} under \emph{source assumptions} by exploiting bounds for vector-valued kernel ridge regression \citep{li2024optimal} via a \emph{spectral filtering} technique \citep{meunier2024optimal}. However, this approach is not applicable to our case, as it requires the losses to be quadratic in $\omega$, as further discussed in \cref{app:dis}.

Next, we provide an equivalent expression for $\nabla\widehat{\mathcal{F}}(\omega)$, essential for our analysis in \cref{sec:conv}.

\subsection{Plug-in estimator of the total gradient}\label{sec:plug_in_gradient}
We now consider an \emph{a priori} different approach for approximating the total gradient $\nabla\mathcal{F}(\omega)$ based on direct plug-in estimation from \cref{eq:func_tot_gradient}, and show that it recovers the previously introduced estimator $\nabla\widehat{\mathcal{F}}(\omega)$. Such approach consists in replacing all expectations in \cref{eq:func_tot_gradient} by empirical averages, then replacing $h^{\star}_{\omega}$ and $a_{\omega}^{\star}$ by their finite-sample estimates $\hat{h}_{\omega}$ and $\hat{a}_{\omega}$. This yields the following estimator of the total gradient:
\begin{equation}\label{eq:functionalGradientEstimate}
    \widehat{\nabla\mathcal{F}}(\omega)=\frac{1}{m}\sum_{j=1}^m\partial_\omega\ell_{out}(\omega, \hat{h}_\omega(\tilde{x}_j),\tilde{y}_j)+\frac{1}{n}\sum_{i=1}^n\partial_{\omega,v}^2\ell_{in}(\omega, \hat{h}_\omega(x_i),y_i)\hat{a}_\omega(x_i).
\end{equation}
Just as in \cref{sec:plug-in_loss}, $h^{\star}_{\omega}$ can be estimated by $\hat{h}_{\omega}$, the minimizer of the empirical objective $h\mapsto  \widehat{L}_{in}(\omega,h)$. Similarly, $a^{\star}_{\omega}$ can be approximated by $\hat{a}_\omega$, the minimizer of $a\mapsto \widehat{L}_{adj}(\omega,a)$ defined as:
\begin{equation}\label{eq:l_adj}
    \widehat{L}_{adj}(\omega, a)=\frac{1}{2n}\sum_{i=1}^n\partial_v^2 \ell_{in}(\omega, \hat{h}_\omega(x_i), y_i)a^2(x_i)+\frac{1}{m}\sum_{j=1}^m\partial_v\ell_{out}(\omega, \hat{h}_\omega(\tilde{x}_j), \tilde{y}_j)a(\tilde{x}_j)+\frac{\lambda}{2}\|a\|_\mathcal{H}^2,
\end{equation}
which serves as the empirical counterpart of the adjoint objective $L_{adj}$ given in \cref{eq:adjoint_objective}. Both functions $\hat{h}_{\omega}$ and $\hat{a}_{\omega}$ can be expressed as linear combinations of kernel evaluations with some given parameter vectors whose dimensions \emph{increase} with the sample size $n$ (see \cref{prop:est_2} for $\hat{h}_{\omega}$ and \cref{prop:est_1} for $\hat{a}_{\omega}$, both in \cref{sec:grad_est}). However, these parameters are not required to compute the plug-in estimator $\widehat{\nabla\mathcal{F}}(\omega)$ in \cref{eq:functionalGradientEstimate}, since only the function values of $\hat{h}_{\omega}$ and $\hat{a}_{\omega}$ are needed. This property is precisely what makes $\widehat{\nabla\mathcal{F}}(\omega)$ \emph{suitable for a statistical analysis}. Indeed, its estimation error depends on the approximation errors of $\hat{h}_{\omega}$ and $\hat{a}_{\omega}$, which always belong to the same space $\mathcal{H}$ regardless of the sample size, and are expected to approach their population counterparts. This \emph{contrasts} with $\nabla\widehat{\mathcal{F}}(\omega)$ obtained by implicit differentiation, whose analysis would need controlling the behavior of the vector $\hat{\gammabf}_{\omega}$ that resides in a growing-dimensional space as $n\to+\infty$.

The next proposition establishes a link between practical applications and theoretical analysis by demonstrating that, surprisingly, both estimators $\nabla\widehat{\mathcal{F}}(\omega)$ and $\widehat{\nabla\mathcal{F}}(\omega)$ are precisely \emph{equal}.
\begin{proposition}\label{prop:equivalenceEstimates}
	Under \cref{assump:K_meas,assump:compact,assump:convexity_lin,assump:K_bounded,assump:reg_lin_lout}, the gradient $\nabla \mathcal{\widehat{F}}(\omega)$ of the plug-in estimator $\widehat{\mathcal{F}}(\omega)$ of $\mathcal{F}(\omega)$ defined in \eqref{eq:kbo_app} is equal to the plug-in estimator $\widehat{\nabla\mathcal{F}}(\omega)$ of the total gradient $\nabla\mathcal{F}(\omega)$ introduced in \cref{eq:functionalGradientEstimate}. 
\end{proposition}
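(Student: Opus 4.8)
The plan is to establish the equality $\nabla\widehat{\mathcal{F}}(\omega) = \widehat{\nabla\mathcal{F}}(\omega)$ by reproducing, at the finite-sample level, exactly the implicit-differentiation argument that led to \cref{prop:func_gradient}, but now on the finite-dimensional reduction \eqref{eq:kbo_app}. First I would apply the chain rule to $\widehat{\mathcal{F}}(\omega)=\frac{1}{m}\sum_j \ell_{out}(\omega,(\Kbar\hat{\gammabf}_\omega)_j,\tilde{y}_j)$, obtaining
\begin{equation*}
\nabla\widehat{\mathcal{F}}(\omega)=\frac{1}{m}\sum_{j=1}^m\partial_\omega\ell_{out}(\omega,(\Kbar\hat{\gammabf}_\omega)_j,\tilde{y}_j)+\partial_\omega\hat{\gammabf}_\omega^\top\Kbar^\top\Big(\tfrac{1}{m}\partial_v\ell_{out}\text{-vector}\Big),
\end{equation*}
so that the whole problem reduces to identifying the second term with $\frac{1}{n}\sum_i \partial^2_{\omega,v}\ell_{in}(\omega,\hat{h}_\omega(x_i),y_i)\hat{a}_\omega(x_i)$. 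Since $\hat{h}_\omega(x_i)=(\K\hat{\gammabf}_\omega)_i$ by the representer theorem, the first terms already match, and everything hinges on the adjoint term.

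The key steps, in order: (i) use the implicit function theorem on the inner first-order condition $\frac{1}{n}\sum_i\partial_v\ell_{in}(\omega,(\K\gammabf)_i,y_i)\K_{i\cdot}+\lambda\K\gammabf=0$ (this is \cref{prop:est_2} of \cref{sec:grad_est}, which I may assume) to get $\partial_\omega\hat{\gammabf}_\omega$ as the solution of a finite-dimensional linear system $\partial_\omega\hat{\gammabf}_\omega\,\M = -\,\mathbf{N}$, where $\M\in\mathbb{R}^{n\times n}$ involves $\diag(\partial^2_v\ell_{in}(\omega,(\K\hat\gammabf_\omega)_i,y_i))\K+\lambda\K$ and $\mathbf{N}$ the cross-derivative matrix; (ii) write down the finite-dimensional normal equations characterizing the minimizer $\hat{a}_\omega$ of $\widehat{L}_{adj}(\omega,\cdot)$ — by the representer theorem $\hat{a}_\omega=\sum_i(\hat{\alphabf}_\omega)_iK(x_i,\cdot)$ plus possibly terms at the $\tilde x_j$; here I would invoke \cref{prop:est_1} of \cref{sec:grad_est} for the explicit form of $\hat{a}_\omega$ and its defining linear system; (iii) substitute both into the second term of $\nabla\widehat{\mathcal{F}}(\omega)$ and check algebraically that the linear systems for $\partial_\omega\hat{\gammabf}_\omega$ and for $\hat{\alphabf}_\omega$ are adjoint/transpose to one another, so that the term collapses to $\frac{1}{n}\sum_i\partial^2_{\omega,v}\ell_{in}(\omega,\hat{h}_\omega(x_i),y_i)\hat{a}_\omega(x_i)$. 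The strong convexity guaranteed by $\lambda>0$ and \cref{assump:convexity_lin} ensures all the relevant matrices ($\K$ restricted to its range, and the Hessian-type matrix $\M$) are invertible on the appropriate subspaces, so each linear system has a unique solution and the manipulations are justified.

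The main obstacle I anticipate is bookkeeping rather than conceptual: the adjoint objective $\widehat{L}_{adj}$ mixes evaluations of $a$ at the inner points $x_i$ (in the quadratic Hessian term) and at the outer points $\tilde x_j$ (in the linear term), so its representer expansion lives over the union of the two point sets, and one must carefully track which blocks of the kernel matrices ($\K$, $\Kbar$, and the $m\times m$ Gram matrix of the $\tilde x_j$) appear where, and verify the transpose relationship respecting this block structure. A clean way to avoid drowning in indices is to do the computation abstractly in $\mathcal{H}$ first — i.e., redo the proof of \cref{prop:func_gradient} verbatim with $\mathbb{E}_\mathbb{P},\mathbb{E}_\mathbb{Q}$ replaced by the empirical measures $\frac1n\sum_i,\frac1m\sum_j$ and $h^\star_\omega,a^\star_\omega$ by $\hat h_\omega,\hat a_\omega$ — which immediately yields $\widehat{\nabla\mathcal{F}}(\omega)$ as "the gradient of $\widehat{\mathcal{F}}$ computed in $\mathcal{H}$", and then separately observe (via the representer theorem and \cref{prop:est_2}) that differentiating the finite-dimensional reduction \eqref{eq:kbo_app} gives the same object because $\hat h_\omega$ and its $\omega$-derivative computed in $\mathcal{H}$ agree with those recovered from $\hat{\gammabf}_\omega$. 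This route reuses the already-established Fréchet-differentiability and strong-convexity facts (\cref{prop:fre_diff_L,prop:fre_diff_L_v,prop:strong_convexity_Lin}) and makes the equality almost a corollary of \cref{prop:func_gradient} applied to empirical distributions.
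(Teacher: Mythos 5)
Your steps (i)--(iii) are essentially the paper's own proof: it invokes \cref{prop:est_2} to get $\nabla\widehat{\mathcal{F}}(\omega)=\frac{1}{m}\Dthree\mathbbm{1}_m-\frac{1}{m}\Dfour\M^{-1}\mathbf{u}$, \cref{prop:est_1} for the representer form of $\hat{a}_\omega$ together with the linear system \eqref{eq:linear_system_adjoint} defining its coefficients, and then writes the difference of the two gradient expressions as $\frac{1}{m}\Dfour\M^{-1}\bigl(\mathbf{u}+\frac{m}{n}(\M\K\hat{\alphabf}_\omega+\hat{\beta}_\omega\M\mathbf{u})\bigr)$, which vanishes by those normal equations --- exactly the cancellation you describe as the adjoint/transpose relationship. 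Two comments. First, the bookkeeping you fear is lighter in the paper: rather than expanding $\hat{a}_\omega$ over the union of inner and outer points, it observes that the outer data enter $\widehat{L}_{adj}$ only through the single element $\xi=\sum_{j}\partial_v\ell_{out}(\omega,\hat{h}_\omega(\tilde{x}_j),\tilde{y}_j)K(\tilde{x}_j,\cdot)$, so $\hat{a}_\omega$ is expanded over $K(x_1,\cdot),\ldots,K(x_n,\cdot)$ and $\xi$ alone; only $\K$, $\mathbf{u}=\Kbar^\top\Done$, and $\Verts{\xi}_\mathcal{H}^2$ appear, and the $m\times m$ Gram matrix of the outer points never enters. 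Second, your ``clean way'' is a genuinely different and valid route, and it is even simpler than you state: since \cref{assump:K_meas,assump:K_bounded,assump:compact,assump:reg_lin_lout,assump:convexity_lin} impose nothing on $\mathbb{P},\mathbb{Q}$ beyond being supported on $\mathcal{X}\times\mathcal{Y}$, \cref{prop:func_gradient} applies verbatim with $\widehat{\mathbb{P}}_n,\widehat{\mathbb{Q}}_m$ in place of $\mathbb{P},\mathbb{Q}$, showing that $\omega\mapsto\widehat{L}_{out}(\omega,\hat{h}_\omega)$ is differentiable with gradient exactly \cref{eq:functionalGradientEstimate}; and since the representer reduction makes $\widehat{\mathcal{F}}$ in \eqref{eq:kbo_app} the \emph{same function} of $\omega$ as $\widehat{L}_{out}(\omega,\hat{h}_\omega)$, their gradients coincide automatically, with no need to match $\partial_\omega\hat{h}_\omega$ against $\partial_\omega\hat{\gammabf}_\omega$ --- your final ``separately observe via \cref{prop:est_2}'' step can be dropped. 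The paper takes the explicit matrix route because the $\hat{\gammabf}_\omega$-based formula of \cref{prop:est_2} is also what implementations compute, but both arguments are sound.
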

\cref{prop:equivalenceEstimates} is proved in \cref{sec:grad_est} and relies on an application of the representer theorem \citep{scholkopf2001generalized} to provide explicit expressions for both estimators in terms of $\hat{\gammabf}_{\omega}$, kernel matrices $\K$ and $\Kbar$ and partial derivatives of the point-wise objectives $\ell_{in}$ and $\ell_{out}$. Both expressions are then shown to be equal using optimality conditions on the parameters defining  $\hat{a}_{\omega}$. The result in \cref{prop:equivalenceEstimates} precisely says that the operations of differentiation and plug-in estimation commute in the case of \eqref{eq:kbo}. Such a commutativity property does not necessarily hold anymore if one considers spaces other than an RKHS, such as $L_2$ \citep[Appendix~F]{petrulionyte2024functional}. The main difficulty arises from the argmin constraint and the use of implicit differentiation, which may introduce non-linear dependencies between inner- and outer-level variables, making the exchange of differentiation and discretization nontrivial. Next, we leverage the expression of the plug-in estimator $\widehat{\nabla\mathcal{F}}(\omega)$ to provide generalization bounds. 

\section{\texorpdfstring{Generalization bounds for \eqref{eq:kbo}}{Generalization bounds for (KBO)}}\label{sec:conv}
In this section, we present our main result: a \emph{maximal inequality} that controls how well both $\mathcal{F}$ and $\nabla\mathcal{F}$ are approximated by their empirical counterparts, uniformly over a compact subset $\Omega$ of $\mathbb{R}^d$.

\subsection{\texorpdfstring{Maximal inequalities for \eqref{eq:kbo}}{Maximal inequalities for (KBO)}}\label{sec:main_result}
The following theorem provides \emph{finite-sample bounds} on the uniform approximation errors on the objective and its gradient in expectation over both inner- and outer-level samples.

\begin{theorem}[Maximal inequalities]
\label{th:generalizationBounds}
Fix any compact subset $\Omega$ of $\mathbb{R}^d$. Under \cref{assump:K_meas,assump:compact,assump:convexity_lin,assump:K_bounded,assump:reg_lin_lout}, the following maximal inequalities hold:
\begin{align*}
    \mathbb{E}\left[\sup_{\omega\in\Omega}\left|\mathcal{F}(\omega)-\widehat{\mathcal{F}}(\omega)\right|\right]\leq C\parens{\frac{1}{\sqrt{m}}+\frac{1}{\sqrt{n}}},\mathbb{E}\left[\sup_{\omega\in\Omega}\left\|\nabla\mathcal{F}(\omega)-\widehat{\nabla\mathcal{F}}(\omega)\right\|\right]\leq C\parens{\frac{1}{\sqrt{m}}+\frac{1}{\sqrt{n}}}
\end{align*}
where the expectation is taken over the finite samples, and $C$ is a constant that depends only on $\Omega$, the dimension $d$, the regularization parameter $\lambda$, $\kappa$, and local upper bounds on $\ell_{in}$, $\ell_{out}$, and their partial derivatives over suitable compact sets.    
\end{theorem}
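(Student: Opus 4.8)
# Proof Proposal for Theorem \ref{th:generalizationBounds}

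**Plan.** The strategy is to reduce both maximal inequalities to controlling suprema of empirical processes, and then — because the natural objects here are Hilbert-space-valued rather than real-valued — to use the reproducing property of $\mathcal{H}$ to rewrite the relevant error terms as real-valued $U$-processes, so that the maximal inequalities of \citet{sherman1994maximal} apply. I would organize the argument in three stages: (i) establish uniform-in-$\omega$ boundedness and Lipschitz regularity of all the random functions appearing in the estimators ($\hat h_\omega$, $\hat a_\omega$, and their population counterparts $h^\star_\omega$, $a^\star_\omega$), over the compact set $\Omega$; (ii) decompose the two errors $\mathcal{F}-\widehat{\mathcal{F}}$ and $\nabla\mathcal{F}-\widehat{\nabla\mathcal{F}}$ into a "statistical" part (empirical averages vs.\ expectations with the population $h^\star_\omega$, $a^\star_\omega$ plugged in) plus a "plug-in error" part (difference between using $\hat h_\omega,\hat a_\omega$ and $h^\star_\omega,a^\star_\omega$); (iii) bound each part by a $U$-process maximal inequality.

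**Stage (i): a priori bounds.** Using \cref{assump:K_bounded} ($K(x,x)\le\kappa$) together with strong convexity of $h\mapsto\widehat{L}_{in}(\omega,h)$ coming from the $\tfrac{\lambda}{2}\|h\|_\mathcal{H}^2$ term and \cref{assump:convexity_lin}, I would first bound $\|\hat h_\omega\|_\mathcal{H}$ uniformly over $\omega\in\Omega$ (compare $\widehat{L}_{in}(\omega,\hat h_\omega)$ with $\widehat{L}_{in}(\omega,0)$, invoking \cref{assump:compact} and \cref{assump:reg_lin_lout} to bound $\ell_{in}$ on the relevant compacts); the reproducing property then gives a uniform sup-norm bound $\sup_x|\hat h_\omega(x)|\le\sqrt{\kappa}\,\|\hat h_\omega\|_\mathcal{H}$, and likewise for $h^\star_\omega$. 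The same scheme, now using that $\widehat{L}_{adj}(\omega,\cdot)$ is $\lambda$-strongly convex and that $\partial_v^2\ell_{in}\ge 0$, controls $\|\hat a_\omega\|_\mathcal{H}$ and $\|a^\star_\omega\|_\mathcal{H}$. Consequently all arguments fed into the derivatives of $\ell_{in},\ell_{out}$ stay in a fixed compact box, so by \cref{assump:reg_lin_lout} every partial derivative up to order three is uniformly bounded and Lipschitz there — this produces the constant $C$ in the statement. I would also record the Lipschitz dependence of $\omega\mapsto\hat h_\omega$ (and $\omega\mapsto\hat a_\omega$) on $\omega$, via the finite-dimensional implicit function theorem of \cref{prop:est_2}/\cref{prop:est_1}, to obtain equicontinuity of the function classes, which is what controls the covering numbers / makes the $U$-process classes Euclidean in the sense of \citet{sherman1994maximal}.

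**Stages (ii)--(iii): decomposition and $U$-processes.** For the value function, write $\mathcal{F}(\omega)-\widehat{\mathcal{F}}(\omega) = \big(L_{out}(\omega,h^\star_\omega)-\widehat L_{out}(\omega,h^\star_\omega)\big) + \big(\widehat L_{out}(\omega,h^\star_\omega)-\widehat L_{out}(\omega,\hat h_\omega)\big)$. The first bracket is a centered empirical average of $\ell_{out}(\omega,h^\star_\omega(\tilde x_j),\tilde y_j)$; since $h^\star_\omega$ itself is a deterministic (population) object depending only on $\omega$, this is an ordinary empirical process indexed by the Euclidean class $\{\omega\mapsto\ell_{out}(\omega,h^\star_\omega(\cdot),\cdot):\omega\in\Omega\}$, and a standard maximal inequality gives the $1/\sqrt m$ rate. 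The second bracket I bound by the uniform Lipschitz constant of $\ell_{out}$ in its $v$-argument times $\sup_\omega\|\hat h_\omega-h^\star_\omega\|_\infty$; the latter is itself controlled by $\sup_\omega\|\widehat L_{in}(\omega,\cdot)-L_{in}(\omega,\cdot)\|$ in an appropriate sense (strong convexity converts the objective gap into a norm gap on the minimizers), which is again an empirical-process quantity of order $1/\sqrt n$. For the gradient, I decompose $\nabla\mathcal{F}(\omega)-\widehat{\nabla\mathcal{F}}(\omega)$ term by term along \cref{eq:func_tot_gradient} vs.\ \cref{eq:functionalGradientEstimate}: the $\partial_\omega\ell_{out}$ terms are handled exactly as above; the delicate term is $\mathbb{E}_\mathbb{P}[\partial^2_{\omega,v}\ell_{in}(\omega,h^\star_\omega(x),y)a^\star_\omega(x)] - \tfrac1n\sum_i\partial^2_{\omega,v}\ell_{in}(\omega,\hat h_\omega(x_i),y_i)\hat a_\omega(x_i)$, because $\hat a_\omega$ is defined through $\widehat L_{adj}$, which in turn depends on $\hat h_\omega$ and on the \emph{same} inner samples $(x_i,y_i)$ — this self-referential dependence is exactly why the quantity is not a plain empirical average but a (degenerate-ish) $U$-statistic once one expands $\hat a_\omega$ via its representer expansion $\hat a_\omega=\sum_i(\hat{\boldsymbol\alpha}_\omega)_i K(x_i,\cdot)$ and uses the reproducing property to turn $\hat a_\omega(x)$ into $\sum_i(\hat{\boldsymbol\alpha}_\omega)_i K(x_i,x)$. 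I would further split off the population adjoint $a^\star_\omega$ to isolate a genuine second-order $U$-process term $\tfrac1n\sum_i(\cdots)a^\star_\omega(x_i)$ minus its mean, and a remainder controlled by $\sup_\omega\|\hat a_\omega-a^\star_\omega\|_\mathcal{H}$ (again via strong convexity of $\widehat L_{adj}$, itself depending on $\sup_\omega\|\hat h_\omega-h^\star_\omega\|_\infty$). Each resulting $U$-process of order at most two, with a Euclidean index class by Stage (i), gets the $n^{-1/2}$ rate from \citet{sherman1994maximal}, and summing contributions yields the claimed $C(m^{-1/2}+n^{-1/2})$.

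**Main obstacle.** The hard part is the gradient bound, specifically handling the adjoint term: $\hat a_\omega$ is not an explicit function of the data but the solution of a random quadratic problem whose coefficients involve $\hat h_\omega$ evaluated at the very inner-level samples over which we also average, so the naive object is neither an empirical process nor a clean $U$-statistic. The resolution — and the technical heart of the proof — is (a) using strong convexity of $\widehat L_{adj}(\omega,\cdot)$ to reduce $\sup_\omega\|\hat a_\omega-a^\star_\omega\|_\mathcal{H}$ to a uniform control of $\widehat L_{adj}-L_{adj}$, which after the reproducing-property rewriting becomes a supremum of second-order $U$-processes; and (b) establishing that the relevant function classes — indexed by $\omega\in\Omega$, and implicitly by the data-dependent $\hat h_\omega$ — are uniformly Euclidean, which requires the equicontinuity estimates from Stage (i) and careful bookkeeping of how the data-dependence of $\hat h_\omega$ propagates. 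Once those two points are in place the rest is an assembly of triangle inequalities and the $U$-process maximal inequality.
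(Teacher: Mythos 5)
Your value-function argument and your use of the reproducing property to convert RKHS-norm errors into real-valued second-order $U$-processes match the paper's strategy, and your Stage (i) a priori bounds correspond to what the paper establishes for $h^\star_\omega$ and $\hat h_\omega$. The gradient bound, however, is where your plan diverges and where it has a genuine gap. You propose to compare the two gradient expressions through the adjoint functions, reducing everything to $\sup_\omega\|\hat a_\omega-a^\star_\omega\|_{\mathcal H}$ and $\sup_\omega\|\hat h_\omega-h^\star_\omega\|_{\mathcal H}$, and you state that the technical heart is showing that function classes ``indexed by $\omega\in\Omega$, and implicitly by the data-dependent $\hat h_\omega$'' are uniformly Euclidean. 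That step, as described, does not go through: the maximal inequality of Sherman (and the classical empirical-process bounds) applies to a \emph{fixed} class of functions, and a class whose members contain $\hat h_\omega$ or $\hat a_\omega$ is random, built from the very samples you average over, so no amount of packing-number bookkeeping for the fixed-$\omega$ index alone licenses the application. The standard repair is to enlarge to a class indexed by $(\omega,h)$ with $h$ ranging over an RKHS ball, but then the covering numbers of the ball enter and the rate degrades --- exactly the pitfall the paper's discussion paragraph flags. A second, smaller risk is your phrase ``strong convexity converts the objective gap into a norm gap'': the objective-gap route yields $\|\hat h_\omega-h^\star_\omega\|_{\mathcal H}^2\lesssim\lambda^{-1}\sup_h|\widehat L_{in}-L_{in}|$, which costs either a $n^{-1/4}$ rate or again a ball-indexed class; the correct tool is the gradient form $\|\hat h_\omega-h^\star_\omega\|_{\mathcal H}\le\lambda^{-1}\|\partial_h\widehat L_{in}(\omega,h^\star_\omega)\|_{\mathcal H}$ as in \cref{prop:diff_hstar_hhat}.

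The paper avoids the data-dependent-class issue entirely by never letting $\hat h_\omega$ or $\hat a_\omega$ appear inside an empirical average that is treated statistically. It writes both gradients in the operator form $\partial_\omega L_{out}-\partial^2_{\omega,h}L_{in}\,(\partial^2_h L_{in})^{-1}\partial_h L_{out}$ (population at $h^\star_\omega$, empirical at $\hat h_\omega$), and uses the perturbation bound of \cref{lem:tech_res_1} together with the Lipschitz estimates of \cref{prop:lip_continuity_out} and strong convexity to funnel all randomness into the six discrepancy terms $\Dout,\Doutw,\Douth,\Dinh,\Dinhh,\Dinwh$, each evaluated at the deterministic $h^\star_\omega$ (\cref{prop:grad_app_bound}). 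Consequently every empirical or $U$-process class is indexed by $\omega$ alone, and \cref{prop:exp_uni_bound,prop:exp_uni_bound_2} apply directly; the adjoint never needs to be compared statistically. Your adjoint-based route can be repaired in the same spirit --- bound $\|\hat a_\omega-a^\star_\omega\|_{\mathcal H}\le\lambda^{-1}\|\partial_a\widehat L_{adj}(\omega,a^\star_\omega)\|_{\mathcal H}$ and then peel off the dependence on $\hat h_\omega$ via the uniform Lipschitzness of $\partial_v^2\ell_{in}$ and $\partial_v\ell_{out}$, so that only $h^\star_\omega$- and $a^\star_\omega$-evaluated, deterministic classes remain --- but as written your resolution (b) invokes a tool that does not cover the objects you construct, and that is the missing idea.
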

\cref{th:generalizationBounds} states that the estimation error can be decomposed into two contributions each resulting from finite-sample approximation of $L_{in}$ and $L_{out}$ with a \emph{parametric rate} of $\nicefrac{1}{\sqrt{m}}$ and $\nicefrac{1}{\sqrt{n}}$ up to a constant factor $C$. We provide a detailed expression for the constant in \cref{th:gen_bound} of \cref{app:sec_conv}. The restriction to a compact subset $\Omega$ instead of the whole space $\mathbb{R}^{d}$ allows controlling the complexity of some function classes indexed by the parameter $\omega$. Without further assumptions on the objectives, we obtain a constant $C$ that can grow with the diameter of the subset $\Omega$.

\textbf{Role of $\lambda$.} The regularization parameter $\lambda$ simultaneously controls the strong convexity of the inner objective (see \cref{prop:strong_convexity_Lin} of \cref{sec:reg_ob}), the boundedness and Lipschitz continuity of the inner solutions (see \cref{app:bound_lip_in}), the smoothness of the outer objective (see \cref{prop:uniform_Lipschitzness} of \cref{app:subsec_loc_bound_lip_point_loss}), the modulus of continuity of $L_{in}$, $\widehat{L}_{in}$, $L_{out}$, $\widehat{L}_{out}$ and their partial derivatives (see \cref{app:subsec_point_est}), and maximal inequalities for certain processes (see \cref{app_subsec:max_in}). Larger values of $\lambda$ yield smoother problems that are faster to optimize with larger step sizes, but introduce a larger statistical bias, while smaller values of $\lambda$ reduce bias but make optimization and generalization more delicate, with the error tending to $+\infty$ as $\lambda\to 0$. Selecting $\lambda$ therefore involves a trade-off between \emph{bias} (regularized vs un-regularized problems), \emph{variance} (finite sample vs population, as done in our study), and \emph{optimization efficiency} (step size). Under our assumptions, the dependence of the constants on $\lambda$ cannot be quantified, precluding generalization guarantees as $\lambda\to 0$. Stronger assumptions are required to obtain a quantitative control on $\lambda$.

\textbf{Probabilistic and variance bounds.} The most difficult quantity to control is the expectation of the maximal differences, which we have established. Once this expectation is bounded, a high-probability bound on the maximal differences can be derived via Markov's inequality. Moreover, since these differences are bounded (see \cref{prop:grad_app_bound}), we can also bound their variance. Indeed, let $Z\in[0,z]$ be a random variable representing the maximal difference between $\mathcal{F}$ or $\nabla\mathcal{F}$ and their respective plug-in estimators. Then, $\Var(Z/z)\leq\mathbb{E}[(Z/z)^2]\leq\mathbb{E}[Z/z]$, which implies that $\Var(Z)\leq z\mathbb{E}[Z]$.

We outline the general proof strategy for \cref{th:generalizationBounds} in the following section, with a full proof provided in \cref{app:sec_conv}.

\subsection{\texorpdfstring{General proof strategy for \cref{th:generalizationBounds}}{General proof strategy for Theorem~\ref{th:generalizationBounds}}}\label{sec:proof_strategy}
\label{sec:sketch_proof}
The main strategy behind the proof of \cref{th:generalizationBounds} in \cref{app:sec_conv} consists of three steps: (step 1) obtaining a point-wise error decomposition of the errors into manageable error terms that holds almost surely for any $\omega\in \Omega$, then applying maximal inequalities to suitable empirical processes (step 2) and some degenerate $U$-processes (step 3) to control each of these terms. The final error bounds are obtained by combining  all these bounds as shown in \cref{app_sub:main_proof}.  

\textbf{Step 1: point-wise error decomposition. }A main challenge in controlling the errors in \cref{th:generalizationBounds} is the non-linear dependence of both estimators $\widehat{\mathcal{F}}(\omega)$ and $\widehat{\nabla\mathcal{F}}(\omega)$ on the empirical distributions, as they are obtained via a plug-in procedure. We address this by breaking down the errors into components based on the discrepancies between expected values and their empirical counterparts of individual point-wise losses and their derivatives, all evaluated at the optimal solution  $h_\omega^{\star}$. Specifically, we denote by $\delta_\omega^{out}$ and $\delta_\omega^{in}$ the errors on the objectives defined as $\delta_\omega^{out}\coloneqq |L_{out}(\omega, h^\star_\omega)-\widehat{L}_{out}(\omega, h^\star_\omega)|$ and $\delta_\omega^{in}\coloneqq|L_{in}(\omega, h^\star_\omega)-\widehat{L}_{in}(\omega, h^\star_\omega)|$. Moreover, we quantify the errors between the partial derivatives of these objectives and their empirical counterparts. To simplify our proof outline, we slightly abuse notation by denoting $\partial_{h}\delta_\omega^{out}$, $\partial_{h}\delta_\omega^{in}$, $\partial_{\omega}\delta_\omega^{out}$, $\partial_{\omega,h}^2\delta_\omega^{in}$, and $\partial_{h}^2\delta_\omega^{in}$ to refer to these errors in terms of partial derivatives. For instance,  $\partial_{h}\delta_\omega^{out}$ is defined as  $\|\partial_h L_{out}(\omega, h^\star_\omega)-\partial_h\widehat{L}_{out}(\omega, h^\star_\omega)\|_\mathcal{H}$, with similar definitions for the other terms (see \cref{app:subsec_point_est}). We get $|\mathcal{F}(\omega)-\widehat{\mathcal{F}}(\omega)|\leq C (\delta_\omega^{out}+\partial_h\delta_\omega^{in})$ and $\|\nabla\mathcal{F}(\omega)-\widehat{\nabla\mathcal{F}}(\omega)\|\leq C(\partial_\omega\delta_\omega^{out}+\partial_h\delta_\omega^{out}+\partial_h^2\delta_\omega^{in}+\partial_{\omega, h}^2\delta_\omega^{in}+\partial_h\delta_\omega^{in})$. \cref{prop:grad_app_bound} formalizes this step and includes the exact constants. The error terms in both decompositions are amenable to a statistical analysis using empirical process theory as we discuss next.

\textbf{Step 2: maximal inequalities for empirical processes.} Some of the error terms, namely $\Dout$ and $\Doutw$, can be controlled directly using empirical process theory. For example, $\Dout$ is associated to the family of random functions $\sqrt{m}(L_{out}(\omega,h_{\omega}^{\star})-\widehat{L}_{out}(\omega,h_{\omega}^{\star}))_{\omega\in \Omega}$, which defines an empirical process, a scaled and centered empirical average of real-valued functions indexed by the parameter $\omega$. Thus, provided that suitable estimates of the class complexity are available (as measured by its packing number in \cref{prop:estimate_covering_numbers} of \cref{sec_app:max_in_bound_lip}), which are easy to obtain in our setting, we show in \cref{prop:exp_uni_bound} of \cref{app_subsec:max_in} that a maximal inequality of the following form follows from classical results on empirical processes: $\mathbb{E}_\mathbb{Q}\brackets{\sup_{\omega\in\Omega}\Dout}\leq \nicefrac{C}{\sqrt{m}}$.

\textbf{Step 3: maximal inequalities for degenerate \textit{U}-processes.} Step 2 cannot be readily applied to the remaining terms involving partial derivatives w.r.t. $h$ $(\Douth, \Dinh, \Dinwh, \Dinhh)\eqqcolon\mathbf{D}_h$. These are associated to processes that are not real-valued anymore, but take values in an infinite-dimensional space. In fact, one could apply step 2 to get an error per dimension, but then summing the errors yields a divergent sum. While the recent work in \citep{park2023towards} develops an empirical process theory for functions taking values in a vector space, the provided complexity estimates would result in an unfavorable dependence on the sample size. Instead, we leverage the structure of the RKHS to control these errors using maximal inequalities for suitable \emph{degenerate $U$-processes of order $2$} indexed by the parameter $\omega$ and for which such inequalities were provided in the seminal works of \citet{sherman1994maximal,nolan1987u}. $U$-processes of order 2 are generalization of empirical processes and involve empirical averages of real-valued functions which depend on \emph{pairs} of samples, instead of a single one as in empirical processes. In our case, these functions arise when taking the square of any term in $\mathbf{D}_h$ and exploiting the reproducing property of the RKHS. This approach, presented in \cref{prop:exp_uni_bound_2} of \cref{app_subsec:max_in}, allows us to obtain maximal inequalities for the terms in $\mathbf{D}_h$. For example, it is of the following form for $\Douth$: $\mathbb{E}_\mathbb{Q}\brackets{\sup_{\omega\in \Omega}\Douth}\leq\nicefrac{C}{\sqrt{m}}$. Combining the maximal inequalities from steps 2 and 3 with the error decomposition from step 1 allows to obtain the result of \cref{th:generalizationBounds}.

\textbf{Discussion.} Alternative approaches to $U$-processes could be used to derive generalization bounds, although these would result in a degraded sample dependence. Specifically, one could employ a variational formulation of the RKHS norm appearing in some of the error terms, such as $\Douth$, to express them as the error of some real-valued empirical process to which standard results could be applied. However, this comes at the cost of considering processes indexed not only by the finite-dimensional parameter $\omega$, but also by functions in the unit RKHS ball. As a result, these families have much larger complexities as measured by their covering/packing numbers \citep[Lemma~D.2]{yang2020function}, which directly impacts the generalization rate. In contrast, our proposed approach \emph{bypasses} this challenge by using \emph{real-valued $U$-processes indexed by finite-dimensional parameters}, at the expense of employing a more general empirical process theory for degenerate $U$-processes \citep{sherman1994maximal}.

To illustrate the implications of \cref{th:generalizationBounds}, we next provide convergence results for bilevel gradient methods.

\subsection{Applications to empirical bilevel gradient methods}\label{sec:gradientMethods}
A typical strategy to solve \eqref{eq:kbo} is to obtain empirical samples and apply a bilevel optimization algorithm to \eqref{eq:kbo_app}, for which our results offer statistical guarantees. Below, we present the generalization error for bilevel gradient descent. Generalization results for the projected bilevel gradient descent are directed to \cref{app:proofs_cor}.

\textbf{Bilevel gradient descent.} It is the simplest gradient-based method for solving the unconstrained \eqref{eq:kbo} problem, \textit{i.e.}, when $\mathcal{C}=\mathbb{R}^d$. It performs the update $\omega_{t+1}=\omega_t-\eta\nabla\widehat{\mathcal{F}}(\omega_t)$ for all $t\geq 0$, where $\eta>0$ is the step size. The algorithm requires access to the strongly convex inner-level solution and its derivative, which can be obtained using implicit differentiation.

\begin{corollary}[Generalization for bilevel gradient descent]\label{cor:gradient}
    Consider \cref{assump:K_meas,assump:compact,assump:convexity_lin,assump:K_bounded,assump:reg_lin_lout} and fix $\lambda > 0$. Assume further that $\mathbf{K}$ in \eqref{eq:kbo_app} is almost surely definite, and that there exists $c>0$ such that $\inf_{\omega,v,y} \ell_{out}(\omega,v,y) - c\|\omega\|^2 > - \infty$. Fix $\omega_0 \in \mathbb{R}^d$ and let $\omega_{t+1}=\omega_t-\eta\nabla\widehat{\mathcal{F}}(\omega_t)$ for all $t \geq 0$, where $\eta>0$ is the step size. Then, there exist constants $\bar{\eta} > 0$ and $\bar{c}> 0$ such that for any $0<\eta<\bar{\eta}$ any $t>0$, the following holds:
    \begin{align*}
        \mathbb{E}\Big[ \min_{i = 0,\ldots,t} \left\|\nabla\mathcal{F}(\omega_i)\right\|\Big] \leq\bar{c}\left(\frac{1}{\sqrt{m}}+\frac{1}{\sqrt{n}}+\frac{1}{\sqrt{t+1}}\right),\mathbb{E}\Big[ \underset{i \to \infty}{\lim\sup} \left\|\nabla\mathcal{F}(\omega_i)\right\|\Big] \leq\bar{c}\left(\frac{1}{\sqrt{m}}+\frac{1}{\sqrt{n}}\right)
    \end{align*}
\end{corollary}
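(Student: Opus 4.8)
The plan is to combine a deterministic convergence guarantee for gradient descent on the (non-convex but smooth) empirical objective $\widehat{\mathcal{F}}$ with the uniform gradient bound of \cref{th:generalizationBounds}. First I would establish that under the added assumptions — $\mathbf{K}$ almost surely definite and $\inf_{\omega,v,y}\ell_{out}(\omega,v,y)-c\|\omega\|^2>-\infty$ — the iterates $(\omega_t)_{t\geq 0}$ almost surely remain in a bounded (random) set. The coercivity-type assumption on $\ell_{out}$ ensures $\widehat{\mathcal{F}}$ is bounded below and, together with the descent lemma, that sublevel sets of $\widehat{\mathcal{F}}$ are bounded; since gradient descent with small enough step size is a descent method, all iterates stay in the sublevel set $\{\widehat{\mathcal{F}}\leq\widehat{\mathcal{F}}(\omega_0)\}$, which is contained in some ball. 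Then I would invoke the regularity results quoted in the excerpt (smoothness of $\widehat{L}_{out}$, strong convexity of $\widehat{L}_{in}$ via $\lambda>0$, definiteness of $\mathbf{K}$ so the finite-dimensional implicit system is solvable) to conclude that $\widehat{\mathcal{F}}$ is $\bar{L}$-smooth on any bounded set, with $\bar{L}$ depending on $\lambda$, $\kappa$, local bounds on $\ell_{in},\ell_{out}$ and their derivatives, and the radius of that ball. This fixes the admissible step-size threshold $\bar{\eta}$ (e.g. $\bar\eta=1/\bar L$).

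Next I would run the standard descent-lemma telescoping argument. For $0<\eta<\bar\eta$, the descent lemma gives $\widehat{\mathcal{F}}(\omega_{t+1})\leq\widehat{\mathcal{F}}(\omega_t)-\tfrac{\eta}{2}\|\nabla\widehat{\mathcal{F}}(\omega_t)\|^2$; summing over $i=0,\dots,t$ and using boundedness below yields
\begin{align*}
\min_{i=0,\dots,t}\|\nabla\widehat{\mathcal{F}}(\omega_i)\|^2\leq\frac{1}{t+1}\sum_{i=0}^{t}\|\nabla\widehat{\mathcal{F}}(\omega_i)\|^2\leq\frac{2\big(\widehat{\mathcal{F}}(\omega_0)-\inf\widehat{\mathcal{F}}\big)}{\eta(t+1)},
\end{align*}
hence $\min_{i\leq t}\|\nabla\widehat{\mathcal{F}}(\omega_i)\|\leq c'/\sqrt{t+1}$ and, letting $t\to\infty$, $\liminf_i\|\nabla\widehat{\mathcal{F}}(\omega_i)\|=0$; since $\|\nabla\widehat{\mathcal{F}}(\omega_t)\|^2$ is summable it actually converges to $0$, so $\limsup_i\|\nabla\widehat{\mathcal{F}}(\omega_i)\|=0$ almost surely. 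The remaining ingredient is to pass from $\nabla\widehat{\mathcal{F}}$ to $\nabla\mathcal{F}$: by the triangle inequality, for every $i$, $\|\nabla\mathcal{F}(\omega_i)\|\leq\|\nabla\widehat{\mathcal{F}}(\omega_i)\|+\sup_{\omega\in\Omega}\|\nabla\mathcal{F}(\omega)-\widehat{\nabla\mathcal{F}}(\omega)\|$, where $\Omega$ is a fixed compact set containing the iterates (using the boundedness step) and $\nabla\widehat{\mathcal{F}}=\widehat{\nabla\mathcal{F}}$ by \cref{prop:equivalenceEstimates}. Taking $\min_{i\leq t}$, then expectations, and applying \cref{th:generalizationBounds} to the supremum term gives the first bound; taking $\limsup_i$ first and then expectations gives the second, since the $\widehat{\mathcal{F}}$-gradient term vanishes.

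The main obstacle is the boundedness/containment step: $\Omega$ must be deterministic for \cref{th:generalizationBounds} to apply, yet the iterates' trajectory is random (it depends on the samples through $\widehat{\mathcal{F}}$), and a priori the sublevel set $\{\widehat{\mathcal{F}}\leq\widehat{\mathcal{F}}(\omega_0)\}$ and the smoothness constant $\bar L$ are sample-dependent. Resolving this requires showing that the coercivity margin is \emph{uniform} in the samples — this is exactly why the assumption is phrased as $\inf_{\omega,v,y}\ell_{out}(\omega,v,y)-c\|\omega\|^2>-\infty$ with constants independent of the data — so that $\widehat{\mathcal{F}}(\omega)\geq c\|\omega\|^2 + \text{const}$ uniformly, forcing $\{\widehat{\mathcal{F}}\leq\widehat{\mathcal{F}}(\omega_0)\}\subseteq\{c\|\omega\|^2\leq\widehat{\mathcal{F}}(\omega_0)-\text{const}\}$; one then needs a deterministic a priori bound on $\widehat{\mathcal{F}}(\omega_0)$ (which follows from \cref{assump:K_bounded,assump:compact,assump:reg_lin_lout} giving local bounds on $\ell_{out}$, with $\hat h_{\omega_0}$ itself bounded in $\mathcal{H}$ by $\lambda$-strong convexity — see the bounds referenced in the "Role of $\lambda$" paragraph). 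Granting such a uniform-in-samples radius $R$, one sets $\Omega=\overline{B(0,R)}$, obtains $\bar\eta,\bar c$ depending only on the allowed quantities, and the argument closes. A minor technical point is justifying the interchange of $\limsup_i$ and $\mathbb{E}$ in the second inequality, which follows from Fatou's lemma together with the (deterministic) boundedness of $\|\nabla\mathcal{F}\|$ on $\Omega$.
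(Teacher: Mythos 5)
Your proposal is correct and follows essentially the same route as the paper's proof: a deterministic a priori bound on $\widehat{\mathcal{F}}(\omega_0)$ (via the boundedness of $\hat{h}_{\omega_0}$ from $\lambda$-strong convexity) combined with the coercivity assumption to trap the iterates in a \emph{deterministic} ball $\Omega$, a deterministic Lipschitz constant for $\nabla\widehat{\mathcal{F}}$ on $\Omega$ yielding the standard nonconvex gradient-descent rates, and then the triangle inequality together with \cref{prop:equivalenceEstimates} and the uniform bound of \cref{th:generalizationBounds}. The paper simply cites standard gradient-descent results where you telescope the descent lemma by hand, and your Fatou remark is unnecessary since the pointwise a.s. bound on the $\limsup$ can be integrated directly.
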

The additional assumption on $\ell_{out}$ serves as a device to ensure the almost sure boundedness of the sequence \emph{a priori}. It is rather mild, as it can be enforced by a small perturbation of the form $(\omega,v,y) \mapsto \ell_{out}(\omega,v,y) + c\|\omega\|^2$, assuming $\ell_{out} \geq 0$, which is typical in applications. Any other device that ensures \emph{a priori} boundedness could also be considered. The assumption on $\K$ is satisfied almost surely for most commonly used kernels. The proof of \cref{cor:gradient} follows from \cref{th:generalizationBounds} and can be found in \cref{app:proofs_cor}. \cref{cor:gradient} also highlights a \emph{key algorithmic insight}: the convergence of the bilevel method requires striking a balance between \emph{data availability} (sample sizes $n$ and $m$) and \emph{computational budget} (number of gradient steps). This trade-off arises because the total convergence error combines a \emph{statistical component}, due to approximating the population gradient from finite samples, and an \emph{optimization component}, due to performing only a limited number of (projected) gradient steps. Ensuring the right balance when designing practical algorithms prevents either insufficient data or limited computation from dominating the overall error.

\section{Numerical experiments}\label{sec:expe}
\textbf{Setup.} To empirically validate our theoretical results, we consider the instrumental variable regression problem discussed in \cref{sec:examples}, in which we assume a linear dependence on $\omega$ for the function $f_\omega$ of the form $f_\omega(t)=\omega^\top\phi(t)$, where $\phi(t)\in\mathbb{R}^d$ denotes the feature map. We chose this particular problem because it allows us to derive closed-form expressions of the exact value function and its gradient, as well as their plug-in estimators, which are detailed in \cref{subsec:plugin_est}. We use the Gaussian kernel and follow the experimental setup of \citet{singh2019kernel}, generating synthetic data that remain fixed across all runs. We vary $n$ between $100$ and $5{,}000$, setting $m=n$. The case $n \neq m$ is analyzed separately in \cref{subsec:add_exp_res}. For the instrumental variable $x$, we consider two distributions: a $p$-dimensional standard Gaussian and a $p$-dimensional Student's $t$-distribution with degrees of freedom $\nu\in\{2.1,2.5,2.9\}$. Further details on the experimental setup are provided in \cref{subsec:stat_mod}. We optimize the outer loss in \eqref{eq:kbo_app} using gradient descent, where the step size is selected using backtracking line search and $\omega_0$ is randomly drawn from $\mathcal{U}(0,1)^d$. The stopping criterion is when $\|\nabla\widehat{\mathcal{F}}(\omega_i)\|\leq 10^{-5}$, where  $i$ is the iteration index. Our code is available at \href{https://github.com/fareselkhoury/KBO}{\texttt{https://github.com/fareselkhoury/KBO}.}

\textbf{Scalable approximations for $\mathcal{F}$ and $\nabla\mathcal{F}$.} Since the expressions of $\mathcal{F}$ and $\nabla\mathcal{F}$ involve expectations, they are intractable to compute exactly. We approximate them accurately using their plug-in estimators $\widehat{\mathcal{F}}$ and $\widehat{\nabla\mathcal{F}}$, derived in \cref{subsec:plugin_est}, and evaluate them using a large number of samples. To make this computation scalable, we approximate the kernel using a \emph{random Fourier features} approximation \citep{rahimi2007random}, as detailed in \cref{subsec:scal_rff}. Specifically, we use $1{,}000{,}000$ samples and $26{,}000$ random features, and handle memory constraints via a block decomposition strategy with a block size of $1{,}000$.

\textbf{Results.} The plots in \cref{fig:gen_results} show the generalization behavior as a function of the number of inner samples $n$. (a) and (b) display the generalization error at initialization for the value function and the gradient, respectively. (c) presents the generalization bound for the gradient norm at the final iteration, while (d) shows the bound for the minimum gradient norm across all iterations. These results align with our theoretical findings, as all curves closely follow the expected theoretical slope. Additionally, \cref{fig:gaussian_gaussian_heatmaps} of \cref{subsec:add_exp_res} shows that balanced sample sizes lead to improved optimization behavior.

\begin{figure}[ht]
    \centering
    \includegraphics[width=\linewidth]{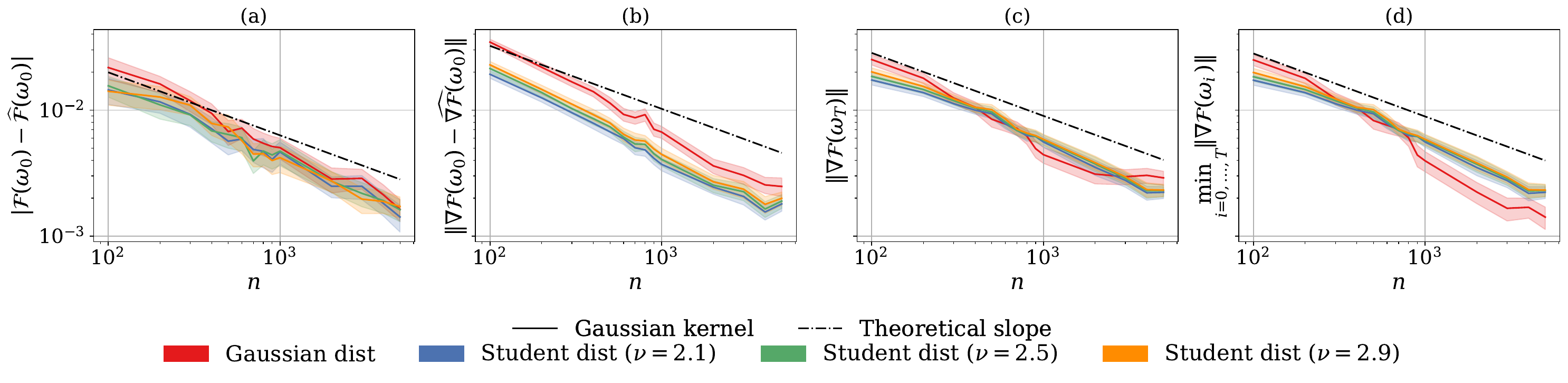}
    \caption{Illustration of gradient descent on \eqref{eq:kbo_app} for the instrumental variable regression task using synthetic data. The plots are averaged over 50 runs and displayed on a log-log scale. The line represents the mean across all runs, and the shaded region indicates the 95\% confidence interval.}
    \label{fig:gen_results}
\end{figure}

\section{Conclusion and perspectives}
\textbf{Summary. }In this work, we established the first generalization bounds for \eqref{eq:kbo}. These results are crucial for understanding the generalization properties of algorithms for solving \eqref{eq:kbo}. They offer rigorous guarantees on the algorithm's performance on unseen data---a fundamental criterion for any algorithmic design---and help control overfitting. Given that our bounds are of order $\mathcal{O}(\nicefrac{1}{\sqrt{m}}+\nicefrac{1}{\sqrt{n}})$, this highlights the equal importance of both outer- and inner-level sample sizes to the overall generalization error. Our findings can impact current practices, particularly in hyperparameter optimization, where the validation dataset is typically much smaller than the training set. 

\textbf{Limitations and future work. }This paper takes a first step toward providing generalization results for bilevel gradient-based methods in a nonparametric setting. While our theoretical analysis focused on a \emph{full-batch} bilevel setting with \emph{exact gradients}, extending this framework to \emph{stochastic} variants, such as those in \citep{Arbel:2021a,ghadimi2018approximation,chen2021closing,Dagreou2022SABA}, remains an open challenge. A promising direction would be to consider \emph{approximate kernel representations}, such as random Fourier features or neural tangent kernels, which enable scalable learning using kernel methods while preserving useful theoretical properties. Furthermore, the constants in our bounds are likely \emph{conservative}; we did not investigate their tightness or potential for improvement. A deeper analysis of their optimality could provide valuable insights and constitutes an avenue worth exploring. Additionally, providing generalization guarantees for the un-regularized problem, possibly in the form of minimax optimal rates for \eqref{eq:kbo}, is a worthwhile future direction. This requires controlling the constants as $\lambda\to 0$, provided additional source assumptions are made, as discussed in \citep{smale2007learning, sriperumbudur2017density}. Finally, broadening our framework to cover \emph{non-smooth losses}, such as the hinge loss in SVMs, is an interesting direction for future work.

\begin{ack}
This work was supported by the ANR project BONSAI (grant ANR-23-CE23-0012-01). EP acknowledges funding from the AI Interdisciplinary Institute ANITI through the French Investments for the Future – PIA3 program under grant agreement ANR-19-PI3A-0004; the Air Force Office of Scientific Research, Air Force Materiel Command, USAF under grant number FA8655-22-1-7012; TSE-P; the ANR project Chess (grant ANR-17-EURE-0010); the ANR project Regulia; and the Institut Universitaire de France (IUF). EP and SV acknowledge support from the ANR project MAD. SV also thanks the PEPR PDE-AI program (ANR-23-PEIA-0004) and the 3IA Côte d’Azur Chair BOGL.
\end{ack}

\bibliography{ref}

@string{my_nips = "{Advances in Neural Information Processing Systems (NIPS)}"}

@string{my_neurips = "{Advances in Neural Information Processing Systems (NeurIPS)}"}

@string{my_icml = "{International Conference on Machine Learning (ICML)}"}

@string{my_iclr = "{International Conference on Learning Representations (ICLR)}"}

@inproceedings{Arbel:2021a,
	author = {Arbel, Michael and Mairal, Julien},
	booktitle = my_iclr,
	title = {Amortized Implicit Differentiation for Stochastic Bilevel Optimization},
	year = {2022}
}

@inproceedings{Ji:2021a,
	author = {Ji, Kaiyi and Yang, Junjie and Liang, Yingbin},
	booktitle = my_icml,
	title = {Bilevel Optimization: Convergence Analysis and Enhanced Design},
	year = {2021}
	}

@inproceedings{bao2021stability,
  title={Stability and generalization of bilevel programming in hyperparameter optimization},
  author={Bao, Fan and Wu, Guoqiang and Li, Chongxuan and Zhu, Jun and Zhang, Bo},
  booktitle=my_neurips,
  year={2021}
}

@book{Goodfellow-et-al-2016,
    title={Deep Learning},
    author={Ian Goodfellow and Yoshua Bengio and Aaron Courville},
    publisher={The MIT Press},
    note={\url{http://www.deeplearningbook.org}},
    year={2016}
}

@inproceedings{zhang2024fine,
  title={Fine-grained analysis of stability and generalization for stochastic bilevel Optimization},
  author={Zhang, Xuelin and Chen, Hong and Gu, Bin and Gong, Tieliang and Zheng, Feng},
  booktitle={Proceedings of the Thirty-Third International Joint Conference on Artificial Intelligence},
  pages={5508--5516},
  year={2024}
}

@book{ioffe1979theory,
  title={Theory of Extremal Problems},
  author={Ioffe, Aleksandr Davidovich and Tihomirov, Vladimir Mihajlovi{\v{c}}},
  year={1979},
  series="{Series: Studies in Mathematics and its Applications 6}",
  publisher={Elsevier}
}

@article{nolan1987u,
  title={U-processes: rates of convergence},
  author={Nolan, Deborah and Pollard, David},
  journal={The Annals of Statistics},
  pages={780--799},
  year={1987},
  publisher={JSTOR}
}

@inproceedings{park2023towards,
  title={Towards empirical process theory for vector-valued functions: Metric entropy of smooth function classes},
  author={Park, Junhyung and Muandet, Krikamol},
  booktitle={International Conference on Algorithmic Learning Theory},
  pages={1216--1260},
  year={2023},
  organization={PMLR}
}

@book{van2000asymptotic,
  title={{A}symptotic {S}tatistics},
  author={van der Vaart, Aad W},
  volume={3},
  year={2000},
  publisher={Cambridge University Press}
}

@book{beck2014introduction,
  title={Introduction to nonlinear optimization: Theory, algorithms, and applications with MATLAB},
  author={Beck, Amir},
  year={2014},
  publisher={SIAM}
}

@book{beck2017first,
  title={First-order methods in optimization},
  author={Beck, Amir},
  year={2017},
  publisher={SIAM}
}

@book{rudin1987real,
  title={{R}eal and {C}omplex {A}nalysis},
  author={Rudin, Walter},
  year={1987},
  publisher={McGraw-Hill},
  edition={3rd},
  address={New York},
  isbn={978-0070542341}
}

@article{sherman1994maximal,
  author = {Robert P. Sherman},
  title = {{Maximal Inequalities for Degenerate $U$-Processes with Applications to Optimization Estimators}},
  volume = {22},
  journal = {The Annals of Statistics},
  number = {1},
  publisher = {Institute of Mathematical Statistics},
  pages = {439 -- 459},
  keywords = {Chaining, degenerate $U$-processes, Empirical processes, Euclidean classes of functions, generalized regression model, maximal inequality, optimazation estimator, polynomial classes of sets, Semiparametric estimation, symmetrization},
  year = {1994},
  URL = {https://doi.org/10.1214/aos/1176325377}
}

@book{Kosorok2008,
  author    = {Michael R. Kosorok},
  title     = {{I}ntroduction to {E}mpirical {P}rocesses and {S}emiparametric {I}nference},
  year      = {2008},
  publisher = {Springer},
  series    = {Springer Series in Statistics},
  address   = {New York},
  url       = {https://doi.org/10.1007/978-0-387-74978-5}
}

@inproceedings{petrulionyte2024functional,
    title={{F}unctional {B}ilevel {O}ptimization for {M}achine {L}earning},
    author={Ieva Petrulionyt{\.{e}} and Julien Mairal and Michael Arbel},
    booktitle=my_neurips,
    year={2024}
}

@InProceedings{scholkopf2001generalized,
    author={Sch{\"o}lkopf, Bernhard and Herbrich, Ralf and Smola, Alex J.},
    editor={Helmbold, David and Williamson, Bob},
    title={{A} {G}eneralized {R}epresenter {T}heorem},
    booktitle={Computational Learning Theory},
    year={2001},
    publisher={Springer Berlin Heidelberg},
    address={Berlin, Heidelberg},
    pages={416--426},
    isbn={978-3-540-44581-4}
}

@book{diestel1977vector,
  title     = {{V}ector {M}easures},
  author    = {Diestel, J. and Uhl, J. J.},
  series    = {Mathematical Surveys and Monographs},
  volume    = {15},
  year      = {1977},
  publisher = {American Mathematical Society},
  address   = {Providence, RI},
  isbn      = {978-0-8218-1515-1}
}

@book{scholkopf2002learning,
  title={Learning with kernels: support vector machines, regularization, optimization, and beyond},
  author={Sch{\"o}lkopf, Bernhard and Smola, Alexander J},
  year={2002},
  publisher={The MIT Press}
}

@inproceedings{jacot2018neural,
  title={{Neural Tangent Kernel: Convergence and Generalization in Neural Networks}},
  author={Jacot, Arthur and Gabriel, Franck and Hongler, Cl{\'e}ment},
  booktitle=my_nips,
  year={2018}
}

@inproceedings{shen2023penalty,
  title={On penalty-based bilevel gradient descent method},
  author={Shen, Han and Chen, Tianyi},
  booktitle=my_icml,
  year={2023}
}

@inproceedings{
kwon2024on,
title={On Penalty Methods for Nonconvex Bilevel Optimization and First-Order Stochastic Approximation},
author={Jeongyeol Kwon and Dohyun Kwon and Stephen Wright and Robert D Nowak},
booktitle=my_iclr,
year={2024}
}

@inproceedings{belkin2018understand,
  title={To understand deep learning we need to understand kernel learning},
  author={Belkin, Mikhail and Ma, Siyuan and Mandal, Soumik},
  booktitle=my_icml,
  year={2018}
}

@article{ghadimi2018approximation,
  title={Approximation methods for bilevel programming},
  author={Ghadimi, Saeed and Wang, Mengdi},
  journal={arXiv preprint arXiv:1802.02246},
  year={2018}
}

@inproceedings{Dagreou2022SABA,
  title = {A Framework for Bilevel Optimization That Enables Stochastic and Global Variance Reduction Algorithms},
  booktitle = my_neurips,
  author = {Dagr{\'e}ou, Mathieu and Ablin, Pierre and Vaiter, Samuel and Moreau, Thomas},
  year = {2022}
}

@inproceedings{chen2021closing,
  title={Closing the gap: Tighter analysis of alternating stochastic gradient methods for bilevel problems},
  author={Chen, Tianyi and Sun, Yuejiao and Yin, Wotao},
  booktitle=my_neurips,
  year={2021}
}

@inproceedings{bai2019deep,
  title={Deep equilibrium models},
  author={Bai, Shaojie and Kolter, J Zico and Koltun, Vladlen},
  booktitle=my_neurips,
  year={2019}
}

@inproceedings{blondel2022efficient,
  title={Efficient and modular implicit differentiation},
  author={Blondel, Mathieu and Berthet, Quentin and Cuturi, Marco and Frostig, Roy and Hoyer, Stephan and Llinares-L{\'o}pez, Felipe and Pedregosa, Fabian and Vert, Jean-Philippe},
  booktitle=my_neurips,
  year={2022}
}

@incollection{griewank2003piggyback,
  title={Piggyback differentiation and optimization},
  author={Griewank, Andreas and Faure, Christ{\`e}le},
  booktitle={Large-scale PDE-constrained optimization},
  pages={148--164},
  year=2003,
  publisher={Springer}
}

@article{letizia2022learning,
  title={Learning new physics efficiently with nonparametric methods},
  author={Letizia, Marco and Losapio, Gianvito and Rando, Marco and Grosso, Gaia and Wulzer, Andrea and Pierini, Maurizio and Zanetti, Marco and Rosasco, Lorenzo},
  journal={The European Physical Journal C},
  volume={82},
  number={10},
  pages={879},
  year={2022},
  publisher={Springer}
}

@article{doumeche2024physics,
  title={Physics-informed kernel learning},
  author={Doum{\`e}che, Nathan and Bach, Francis and Biau, G{\'e}rard and Boyer, Claire},
  journal={Journal of Machine Learning Research (JMLR)},
  year={2025}
}

@article{hofmann2008kernel,
  title={Kernel Methods in Machine Learning},
  author={Hofmann, T and Sch{\"o}lkopf, B and Smola, AJ},
  journal={The Annals of Statistics},
  volume={36},
  number={3},
  pages={1171--1220},
  year={2008},
  publisher={Institute of Mathematical Statistics [etc.]}
}

@book{shawe2004kernel,
  author    = {John Shawe-Taylor and Nello Cristianini},
  title     = {{K}ernel {M}ethods for {P}attern {A}nalysis},
  year      = {2004},
  publisher = {Cambridge University Press},
  address   = {Cambridge, UK},
}

@inproceedings{keerthi2006efficient,
 author = {Keerthi, S. and Sindhwani, Vikas and Chapelle, Olivier},
 title = {{An Efficient Method for Gradient-Based Adaptation of Hyperparameters in SVM Models}},
 booktitle = my_nips,
 year = {2006}
}

@book{candler1977multi,
  author    = {W. Candler and R. Norton},
  title     = {Multi-Level Programming and Development Policy},
  volume    = {1},
  publisher = {World Bank},
  year      = {1977}
}

@article{holler2018bilevel,
  author    = {G. Holler and K. Kunisch and R. C. Barnard},
  title     = {A bilevel approach for parameter learning in inverse problems},
  journal   = {Inverse Problems},
  volume    = {34},
  number    = {11},
  pages     = {115012},
  year      = {2018},
  doi       = {10.1088/1361-6420/aae473}
}

@inproceedings{larsen1996design,
  author    = {J. Larsen and L. Hansen and C. Svarer and M. Ohlsson},
  title     = {Design and regularization of neural networks: The optimal use of a validation set},
  booktitle = {Neural Networks for Signal Processing VI. Proceedings of the 1996 IEEE Signal Processing Society Workshop},
  pages     = {62--71},
  year      = {1996},
  address   = {Kyoto, Japan},
  publisher = {IEEE},
  doi       = {10.1109/NNSP.1996.548300}
}

@article{bengio2000gradient,
  author    = {Y. Bengio},
  title     = {Gradient-Based Optimization of Hyperparameters},
  journal   = {Neural Computation},
  volume    = {12},
  number    = {8},
  pages     = {1889--1900},
  year      = {2000},
  issn      = {0899-7667, 1530-888X},
  doi       = {10.1162/089976600300015187}
}

@inproceedings{bertinetto2018metalearning,
title={Meta-learning with differentiable closed-form solvers},
author={Luca Bertinetto and Joao F. Henriques and Philip Torr and Andrea Vedaldi},
booktitle=my_iclr,
year={2019},
}

@article{hong2023two,
  title={A two-timescale stochastic algorithm framework for bilevel optimization: Complexity analysis and application to actor-critic},
  author={Hong, Mingyi and Wai, Hoi-To and Wang, Zhaoran and Yang, Zhuoran},
  journal={SIAM Journal on Optimization},
  volume={33},
  number={1},
  pages={147--180},
  year={2023},
  publisher={SIAM}
}

@inproceedings{franceschi2018bilevel,
  author    = {Luca Franceschi and Paolo Frasconi and Saverio Salzo and Riccardo Grazzi and Massimiliano Pontil},
  title     = {Bilevel Programming for Hyperparameter Optimization and Meta-Learning},
  booktitle = my_icml,
  year      = {2018}
}

@article{liu2023value,
  title={Value-function-based sequential minimization for bi-level optimization},
  author={Liu, Risheng and Liu, Xuan and Zeng, Shangzhi and Zhang, Jin and Zhang, Yixuan},
  journal={IEEE Transactions on Pattern Analysis and Machine Intelligence},
  year={2023},
  publisher={IEEE}
}

@inproceedings{pham2021contextual,
  title={Contextual transformation networks for online continual learning},
  author={Pham, Quang and Liu, Chenghao and Sahoo, Doyen and Steven, HOI},
  booktitle=my_iclr,
  year={2021}
}

@inproceedings{arbel2022nonconvex,
  author    = {Michael Arbel and Julien Mairal},
  title     = {Non-convex Bilevel Games with Critical Point Selection Maps},
  booktitle = my_neurips,
  year      = {2022},
}

@article{bolte2024geometric,
  title={Geometric and computational hardness of bilevel programming},
  author={Bolte, J{\'e}r{\^o}me and L{\^e}, Quốc-T{\`u}ng and Pauwels, Edouard and Vaiter, Samuel},
  journal={Mathematical Programming},
  pages={1--36},
  year={2025},
  publisher={Springer}
}

@inproceedings{liu2021towards,
  author    = {Risheng Liu and Yaohua Liu and Shangzhi Zeng and Jin Zhang},
  title     = {Towards Gradient-Based Bilevel Optimization with Non-Convex Followers and Beyond},
  booktitle = my_neurips,
  year      = {2021},
}

@article{newey2003instrumental,
  title={Instrumental Variable Estimation of Nonparametric Models},
  author={Newey, Whitney K. and Powell, James L.},
  journal={Econometrica},
  volume={71},
  number={5},
  pages={1565--1578},
  year={2003},
  publisher={Wiley Online Library}
}

@inproceedings{singh2019kernel,
  title={{Kernel Instrumental Variable Regression}},
  author={Singh, Rahul and Sahani, Maneesh and Gretton, Arthur},
  booktitle=my_neurips,
  year={2019}
}

@article{meunier2024nonparametric,
  title={{Nonparametric Instrumental Regression via Kernel Methods is Minimax Optimal}},
  author={Meunier, Dimitri and Li, Zhu and Christensen, Tim and Gretton, Arthur},
  journal={arXiv preprint arXiv:2411.19653},
  year={2024}
}

@inproceedings{pedregosa2016hyperparameter,
  title={Hyperparameter optimization with approximate gradient},
  author={Pedregosa, Fabian},
  booktitle=my_icml,
  year={2016}
}

@inproceedings{ben2006analysis,
  title={Analysis of representations for domain adaptation},
  author={Ben-David, Shai and Blitzer, John and Crammer, Koby and Pereira, Fernando},
  booktitle=my_nips,
  year={2006}
}

@article{wang2022generalizing,
  title={Generalizing to unseen domains: A survey on domain generalization},
  author={Wang, Jindong and Lan, Cuiling and Liu, Chang and Ouyang, Yidong and Qin, Tao and Lu, Wang and Chen, Yiqiang and Zeng, Wenjun and Philip, S Yu},
  journal={IEEE transactions on knowledge and data engineering},
  volume={35},
  number={8},
  pages={8052--8072},
  year={2022},
  publisher={IEEE}
}

@inproceedings{yang2020function,
  title={{On Function Approximation in Reinforcement Learning: Optimism in the Face of Large State Spaces}},
  author={Yang, Zhuoran and Jin, Chi and Wang, Zhaoran and Wang, Mengdi and Jordan, Michael I},
  booktitle=my_neurips,
  year={2020}
}

@article{kunapuli2008classification,
  title={Classification model selection via bilevel programming},
  author={Kunapuli, Gautam and Bennett, Kristin P and Hu, Jing and Pang, Jong-Shi},
  journal={Optimization Methods \& Software},
  volume={23},
  number={4},
  pages={475--489},
  year={2008},
  publisher={Taylor \& Francis}
}

@inproceedings{rahimi2007random,
  title={{Random Features for Large-Scale Kernel Machines}},
  author={Rahimi, Ali and Recht, Benjamin},
  booktitle=my_nips,
  year={2007}
}

@book{bochner1959lectures,
  title={Lectures on Fourier integrals},
  author={Bochner, Salomon},
  volume={42},
  year={1959},
  publisher={Princeton University Press}
}

@article{ai2003efficient,
  title={Efficient estimation of models with conditional moment restrictions containing unknown functions},
  author={Ai, Chunrong and Chen, Xiaohong},
  journal={Econometrica},
  volume={71},
  number={6},
  pages={1795--1843},
  year={2003},
  publisher={Wiley Online Library}
}

@article{ai2007estimation,
  title={Estimation of possibly misspecified semiparametric conditional moment restriction models with different conditioning variables},
  author={Ai, Chunrong and Chen, Xiaohong},
  journal={Journal of Econometrics},
  volume={141},
  number={1},
  pages={5--43},
  year={2007},
  publisher={Elsevier}
}

@article{chen2012estimation,
  title={Estimation of nonparametric conditional moment models with possibly nonsmooth generalized residuals},
  author={Chen, Xiaohong and Pouzo, Demian},
  journal={Econometrica},
  volume={80},
  number={1},
  pages={277--321},
  year={2012},
  publisher={Wiley Online Library}
}

@article{chen2015sieve,
  title={{Sieve Wald and QLR inferences on semi/nonparametric conditional moment models}},
  author={Chen, Xiaohong and Pouzo, Demian},
  journal={Econometrica},
  volume={83},
  number={3},
  pages={1013--1079},
  year={2015},
  publisher={Wiley Online Library}
}

@inproceedings{williams2000using,
  title={{Using the Nystr{\"o}m Method to Speed Up Kernel Machines}},
  author={Williams, Christopher and Seeger, Matthias},
  booktitle=my_nips,
  year={2000}
}

@misc{jax2018github,
  author = {James Bradbury and Roy Frostig and Peter Hawkins and Matthew James Johnson and Chris Leary and Dougal Maclaurin and George Necula and Adam Paszke and Jake Vander{P}las and Skye Wanderman-{M}ilne and Qiao Zhang},
  title = {{JAX}: composable transformations of {P}ython+{N}um{P}y programs},
  url = {http://github.com/jax-ml/jax},
  version = {0.3.13},
  year = {2018},
}

@book{rasmussen2006gaussian,
  title={Gaussian Processes for Machine Learning},
  author={Rasmussen, Carl Edward and Williams, Christopher K. I.},
  year={2006},
  publisher={The MIT Press}
}

@article{mercer1909functions,
  title={Functions of positive and negative type, and their connection with the theory of integral equations},
  author={Mercer, James},
  journal={Philosophical Transactions of the Royal Society A},
  volume={209},
  pages={415--446},
  year={1909}
}

@inproceedings{wang2024lower,
  title={Lower bounds of uniform stability in gradient-based bilevel algorithms for hyperparameter optimization},
  author={Wang, Rongzhen and Zheng, Chenyu and Wu, Guoqiang and Min, Xu and Zhang, Xiaolu and Zhou, Jun and Li, Chongxuan},
  booktitle=my_neurips,
  year={2024}
}

@inproceedings{oymak2021generalization,
  title={Generalization guarantees for neural architecture search with train-validation split},
  author={Oymak, Samet and Li, Mingchen and Soltanolkotabi, Mahdi},
  booktitle=my_icml,
  year={2021}
}

@inproceedings{arora2020provable,
  title={Provable representation learning for imitation learning via bi-level optimization},
  author={Arora, Sanjeev and Du, Simon and Kakade, Sham and Luo, Yuping and Saunshi, Nikunj},
  booktitle=my_icml,
  year={2020}
}

@article{smale2007learning,
  title={Learning theory estimates via integral operators and their approximations},
  author={Smale, Steve and Zhou, Ding-Xuan},
  journal={Constructive Approximation},
  volume={26},
  number={2},
  pages={153--172},
  year={2007},
  publisher={Springer}
}

@article{sriperumbudur2017density,
  title={Density estimation in infinite dimensional exponential families},
  author={Sriperumbudur, Bharath K. and Fukumizu, Kenji and Gretton, Arthur and Hyv{\"a}rinen, Aapo and Kumar, Revant},
  journal={Journal of Machine Learning Research (JMLR)},
  volume={18},
  number={57},
  pages={1--59},
  year={2017},
  publisher={JMLR. org}
}

@book{vaart_wellner_1996,
  author    = {van der Vaart, Aad W. and Wellner, Jon A.},
  title     = {Weak Convergence and Empirical Processes: With Applications to Statistics},
  series    = {Springer Series in Statistics},
  publisher = {Springer-Verlag},
  address   = {New York},
  year      = {1996},
  doi       = {10.1007/978-1-4757-2545-2}
}

@inproceedings{pmlr-v70-hartford17a,
  title     = {Deep {IV}: A Flexible Approach for Counterfactual Prediction},
  author    = {Jason Hartford and Greg Lewis and Kevin Leyton-Brown and Matt Taddy},
  booktitle = my_icml,
  year      = {2017}
}

@inproceedings{xu2021learning,
  title     = {{Learning Deep Features in Instrumental Variable Regression}},
  author    = {Liyuan Xu and Yutian Chen and Siddarth Srinivasan and Nando de Freitas and Arnaud Doucet and Arthur Gretton},
  booktitle = my_iclr,
  year      = {2021}
}

@article{li2024regularized,
  title   = {Regularized {DeepIV} with {M}odel {S}election},
  author  = {Li, Zihao and Lan, Hui and Syrgkanis, Vasilis and Wang, Mengdi and Uehara, Masatoshi},
  journal = {arXiv preprint arXiv:2403.04236},
  year    = {2024}
}

@article{li2024optimal,
  title   = {{Towards Optimal Sobolev Norm Rates for the Vector-Valued Regularized Least-Squares Algorithm}},
  author  = {Li, Zhu and Meunier, Dimitri and Mollenhauer, Mattes and Gretton, Arthur},
  journal = {Journal of Machine Learning Research (JMLR)},
  volume  = {25},
  number  = {181},
  pages   = {1--51},
  year    = {2024},
  url     = {https://jmlr.org/papers/v25/23-1663.html},
}

@inproceedings{meunier2024optimal,
  title     = {{Optimal Rates for Vector-Valued Spectral Regularization Learning Algorithms}},
  author    = {Meunier, Dimitri and Shen, Zikai and Mollenhauer, Mattes and Gretton, Arthur and Li, Zhu},
  booktitle = my_neurips,
  year      = {2024}
}

@article{bauer2007regularization,
  title={On regularization algorithms in learning theory},
  author={Bauer, Frank and Pereverzev, Sergey V and Rosasco, Lorenzo},
  journal={Journal of Complexity},
  volume={23},
  number={1},
  pages={52--72},
  year={2007},
  publisher={Elsevier}
}

@book{engl1996regularization,
  title={Regularization of Inverse Problems},
  author={Engl, Heinz W and Hanke, Martin and Neubauer, Andreas},
  year={1996},
  publisher={Springer}
}

@article{LoGerfoRosascoOdoneDeVitoVerri2008,
  title        = "{Spectral Algorithms for Supervised Learning}",
  author       = {Lo Gerfo, L. and Rosasco, L. and Odone, F. and De Vito, E. and Verri, A.},
  journal      = {Neural Computation},
  volume       = {20},
  number       = {8},
  pages        = {1873--1897},
  year         = {2008},
  doi          = {10.1162/neco.2008.05-07-517},
}

@article{CaponnettoDeVito2007,
  title   = {Optimal Rates for the Regularized Least-Squares Algorithm},
  author  = {Caponnetto, Andrea and De Vito, Ernesto},
  journal = {Foundations of Computational Mathematics},
  volume  = {7},
  number  = {3},
  pages   = {331--368},
  year    = {2007},
  publisher = {Springer},
  doi     = {10.1007/s10208-006-0196-8}
}
\bibliographystyle{abbrvnat}

\newpage
\appendix
\onecolumn

\textbf{\LARGE Appendices}
\vspace{0.2cm}

\textbf{Roadmap.} In \cref{app:dis}, we review existing non-kernel generalization bounds in the bilevel optimization literature and explain why minimax optimal rates cannot be obtained in our setting. We begin the theoretical appendices by presenting and establishing regularity properties of the objective functions in \cref{sec_app:reg}. In \cref{sec:grad_est}, we introduce the gradient estimators. \Cref{sec_app:prel_res} is dedicated to proving the boundedness and Lipschitz continuity of $h^\star_\omega$ and $\hat{h}_\omega$, along with local boundedness and Lipschitz properties of $\ell_{in}$, $\ell_{out}$, and their derivatives. The generalization results are provided in \cref{app:sec_conv}. In \cref{sec_app:max_in_bound_lip}, we establish maximal inequalities for bounded and Lipschitz families of functions. Differentiability properties of the objectives are studied in \cref{sec:proof_diff_results}. \cref{ab_sec:aux} contains auxiliary technical lemmas used throughout the proofs. Finally, further details on the experiments and additional numerical results are provided in \cref{app:num_ex}.

\textbf{Notations.} $\|\cdot\|$ denotes the Euclidean norm in $\mathbb{R}^d$, $\|\cdot\|_\mathcal{H}$ denotes the norm in the RKHS $\mathcal{H}$, $\|\cdot\|_{\op}$ denotes the operator norm, and $\|\cdot\|_{\hs}$ denotes the Hilbert-Schmidt norm. $\langle\cdot,\cdot\rangle_\mathcal{H}$ denotes the inner product on $\mathcal{H}$, and $\langle\cdot,\cdot\rangle_{\hs}$ denotes the Hilbert-Schmidt inner product. $K(x,\cdot)$ denotes the feature map, for any $x\in\mathcal{X}$. For any two normed spaces $E$ and $F$, $\mathcal{L}(E,F)$ denotes the space of continuous linear operators from $E$ to $F$. For any two probability distributions $\mathcal{P}$ and $\mathcal{Q}$, $\mathcal{P}\otimes\mathcal{Q}$ denotes the product measure of $\mathcal{P}$ and $\mathcal{Q}$. Given two Hilbert spaces $\left(H_1,\langle\cdot,\cdot\rangle_{H_1}\right)$ and $\left(H_2,\langle\cdot,\cdot\rangle_{H_2}\right)$, the tensor product of $u\in H_1$ and $v\in H_2$, denoted by $u\otimes v$, is an operator from $H_2$ to $H_1$ defined, for any $e\in H_2$, as $\left(u\otimes v\right)e=u\left\langle v, e\right\rangle_{H_2}$. For any $v_1,\ldots,v_n\in\mathbb{R}$, $\diag(v_1,\ldots,v_n)\in\mathbb{R}^{n\times n}$ denotes a diagonal matrix of size $n\times n$, where the diagonal entries are $v_1,\ldots,v_n$ and all the off-diagonal entries are $0$. $\mathbbm{1}_m$ denotes a vector of size $m$ where all entries are $1$. $\mathbbm{1}_{n\times n}$ denotes the identity matrix of size $n$. For any vector space $V$ over $\mathbb{R}$, $\Id_V$ denotes the identity operator on $V$. Given a compact set $\mathcal{K}$, $\diam(\mathcal{K})$ denotes its diameter. $v^\top$ denotes the transpose of either a vector or a matrix, depending on the context.

\addtocontents{toc}{\protect\setcounter{tocdepth}{2}}
{
    \hypersetup{linkcolor=black}
    \tableofcontents
}

\section{Further Discussion}\label{app:dis}
\textbf{Existing generalization bounds for the non-kernel case. }\citet{bao2021stability} laid foundational work towards understanding generalization in bilevel optimization by analyzing uniform stability in full-batch bilevel optimization. Their generalization criterion compares the population outer loss evaluated at the output of a randomized algorithm to the empirical outer loss evaluated using the same algorithm. Given a number $\kappa$ between 0 and 1, they obtain a decay at a rate of $\mathcal{O}(\nicefrac{T^\kappa}{m})$ for unrolled optimization, which decreases as $\nicefrac{1}{m}$ in outer sample size, but increases with the number of outer iterations $T$ made. This criterion differs from ours, which instead compares the population outer objective at the theoretically optimal inner solution $h^\star_\omega$ to the empirical loss evaluated at the empirical solution $\hat{h}_\omega$. Complementing this upper bound, \citet{wang2024lower} established lower bounds on the uniform stability of gradient-based bilevel algorithms, demonstrating a rate of $\Omega(\nicefrac{1}{m})$. Building on \citep{bao2021stability}, \citet{zhang2024fine} extended the analysis to stochastic bilevel optimization, establishing on-average stability bounds and deriving a generalization rate of $\mathcal{O}(\nicefrac{1}{\sqrt{m}})$ due to the presence of stochastic gradients. In a related context, \citet{oymak2021generalization} studied generalization in neural architecture search using a bilevel formulation, showing that approximate inner solutions and Lipschitz continuity of the outer loss yield a generalization bound of $\mathcal{O}(\nicefrac{1}{\sqrt{m}}+\nicefrac{1}{\sqrt{n}})$. \citet{arora2020provable} investigated representation learning for imitation learning via bilevel optimization, offering generalization bounds of order $\mathcal{O}(\nicefrac{1}{\sqrt{m}})$ that depend both on the size of the dataset and the stability of learned representations.

\textbf{Difficulty of obtaining minimax rates in our setting.} Although spectral filtering yields minimax rates \citep{meunier2024nonparametric} in the kernel instrumental variable regression setting \citep{singh2019kernel}, it fundamentally relies on a linear operator representation of the inner minimizer, typically characterized through the spectral decomposition of a compact, self-adjoint covariance operator (see, \emph{e.g.}, \citep{CaponnettoDeVito2007,bauer2007regularization}). This formulation allows one to apply functional calculus on the spectrum, with filter functions (such as Tikhonov regularization and truncated SVD) controlling the contribution of small eigenvalues \citep{engl1996regularization,LoGerfoRosascoOdoneDeVitoVerri2008}. Under suitable source conditions, this enables the derivation of minimax optimal convergence rates for kernel ridge regression and other problems involving quadratic losses. In our framework, however, the inner objective is generally not quadratic in $\omega$, and the mapping $\omega \mapsto h^\star_\omega$ is nonlinear. Moreover, we consider a fixed $\lambda$, in contrast to the vanishing-regularization regimes where spectral filtering is most effective for minimax analysis. As a result, the source condition assumption and the spectral filtering tools that underpin minimax guarantees in the quadratic case do not apply directly in our setting.
\section{Regularity and Differentiability Results}\label{sec_app:reg}
\subsection{Regularity of the objectives}\label{sec:reg_ob}
The following propositions establish differentiability of considered objectives. We defer their proof to \cref{sec:proof_diff_results}.
\begin{proposition}[{Differentiability of $L_{in}$ and $L_{out}$}]\label{prop:fre_diff_L}
	Under \cref{assump:K_meas,assump:compact,assump:reg_lin_lout,assump:K_bounded}, for any $(\omega,h)\in {\mathbb{R}^d}\times \mathcal{H}$, the functions $L_{in}$ and $L_{out}$  admit finite values at $(\omega,h)$, are jointly differentiable in $(\omega,h)$, with gradients given by:
	\begin{gather*}
	    \partial_\omega L_{out}(\omega, h)=\mathbb{E}_{\mathbb{Q}}\left[\partial_\omega \ell_{out}(\omega, h(x), y)\right]\in\mathbb{R}^d,\partial_h L_{out}(\omega, h)=\mathbb{E}_{\mathbb{Q}}\left[\partial_v \ell_{out}(\omega, h(x), y)K(x,\cdot)\right]{\in\mathcal{H}}, \\
        \partial_\omega L_{in}(\omega, h)=\mathbb{E}_{\mathbb{P}}\left[\partial_\omega \ell_{in}(\omega, h(x), y)\right]\in\mathbb{R}^d, \partial_h L_{in}(\omega, h)=\mathbb{E}_{\mathbb{P}}\left[\partial_v \ell_{in}(\omega, h(x), y)K(x,\cdot)\right] + \lambda h\in\mathcal{H}.
	\end{gather*}
	Similarly, the empirical estimates $\widehat{L}_{in}$ and $\widehat{L}_{out}$ admit finite values, and are differentiable with gradients admitting similar expressions as above with $\mathbb{P}$ and $\mathbb{Q}$ replaced by their empirical estimates $\widehat{\mathbb{P}}_n$ and $\widehat{\mathbb{Q}}_m$.
\end{proposition}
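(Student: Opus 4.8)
The plan is to fix a point $(\omega,h)$, restrict to a closed ball around it, reduce all relevant arguments of $\ell_{in}$ and $\ell_{out}$ to a single compact set, and then push expectations through derivatives by dominated convergence — in the scalar sense for the $\omega$-derivative, and for the Bochner integral of an $\mathcal{H}$-valued map for the $h$-derivative. Concretely, I would first note that by \cref{assump:K_meas} the feature map $x\mapsto K(x,\cdot)$ is (weakly, hence strongly, since $\mathcal{X}$ is separable) measurable, so every $g\in\mathcal{H}$ is a measurable function via $g(x)=\langle g,K(x,\cdot)\rangle_{\mathcal{H}}$, and by the reproducing property, Cauchy--Schwarz, and \cref{assump:K_bounded}, $|g(x)|\le\sqrt{\kappa}\,\Verts{g}_{\mathcal{H}}$ for all $x$. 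Fixing $r>0$, for every $(\omega',h')$ in the closed ball of radius $r$ around $(\omega,h)$ the triple $(\omega',h'(x),y)$ ranges, as $(x,y)$ varies, inside the compact set $\mathcal{K}\coloneqq\bar B(\omega,r)\times[-R,R]\times\mathcal{Y}$ with $R\coloneqq\sqrt{\kappa}(\Verts{h}_{\mathcal{H}}+r)$, using \cref{assump:compact}. By \cref{assump:reg_lin_lout}, $\ell_{out}$ and its partial derivatives up to order two in $(\omega,v)$ are jointly continuous in $(\omega,v,y)$, hence uniformly bounded on $\mathcal{K}$ by some $M$; this gives finiteness of $L_{out}(\omega,h)=\mathbb{E}_{\mathbb{Q}}[\ell_{out}(\omega,h(x),y)]$ and, with the trivial finiteness of $\tfrac{\lambda}{2}\Verts{h}_{\mathcal{H}}^2$, of $L_{in}(\omega,h)$.

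For the $\omega$-gradient, with $h$ fixed the mean value inequality together with the bound $\Verts{\partial_\omega\ell_{out}}\le M$ on $\mathcal{K}$ provides a dominating function for the difference quotients, so the classical theorem on differentiation under the integral sign yields $\partial_\omega L_{out}(\omega,h)=\mathbb{E}_{\mathbb{Q}}[\partial_\omega\ell_{out}(\omega,h(x),y)]$. For the $h$-gradient I would write, for $u\in\mathcal{H}$ with $\Verts{u}_{\mathcal{H}}\le r$, the second-order Taylor expansion $\ell_{out}(\omega,(h+u)(x),y)-\ell_{out}(\omega,h(x),y)=\partial_v\ell_{out}(\omega,h(x),y)\,u(x)+R_x(u)$, where the Lagrange remainder obeys $|R_x(u)|\le\tfrac{M}{2}u(x)^2\le\tfrac{M\kappa}{2}\Verts{u}_{\mathcal{H}}^2$ (the intermediate value stays in $[-R,R]$, using $C^2$ regularity and the bound on $\partial_v^2\ell_{out}$ over $\mathcal{K}$). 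Rewriting $u(x)=\langle K(x,\cdot),u\rangle_{\mathcal{H}}$, the linear term equals $\langle \partial_v\ell_{out}(\omega,h(x),y)K(x,\cdot),\,u\rangle_{\mathcal{H}}$, and the $\mathcal{H}$-valued map $x\mapsto\partial_v\ell_{out}(\omega,h(x),y)K(x,\cdot)$ is measurable with norm bounded by $M\sqrt{\kappa}$, hence Bochner integrable \citep[Ch.~2]{diestel1977vector}; denote its Bochner integral by $G\in\mathcal{H}$. Continuity of the inner product gives $\mathbb{E}_{\mathbb{Q}}[\langle\partial_v\ell_{out}(\omega,h(x),y)K(x,\cdot),u\rangle_{\mathcal{H}}]=\langle G,u\rangle_{\mathcal{H}}$, while $\mathbb{E}_{\mathbb{Q}}[|R_x(u)|]\le\tfrac{M\kappa}{2}\Verts{u}_{\mathcal{H}}^2=o(\Verts{u}_{\mathcal{H}})$; hence $h\mapsto L_{out}(\omega,h)$ is Fréchet differentiable at $h$ with gradient $G=\mathbb{E}_{\mathbb{Q}}[\partial_v\ell_{out}(\omega,h(x),y)K(x,\cdot)]$. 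The same computation for $\mathbb{E}_{\mathbb{P}}[\ell_{in}]$, plus $\partial_h(\tfrac{\lambda}{2}\Verts{h}_{\mathcal{H}}^2)=\lambda h$ and $\partial_\omega(\tfrac{\lambda}{2}\Verts{h}_{\mathcal{H}}^2)=0$, yields the stated formulas for $\partial_\omega L_{in}$ and $\partial_h L_{in}$.

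To upgrade the existence of partials to joint differentiability I would verify that $(\omega,h)\mapsto\partial_\omega L_{out}(\omega,h)$ and $(\omega,h)\mapsto\partial_h L_{out}(\omega,h)$ are continuous — again by dominated convergence, for the Bochner integral in the second case, using joint continuity of the integrands from \cref{assump:reg_lin_lout} and the uniform bounds on $\mathcal{K}$ — and then invoke the standard fact that continuity of all partial Fréchet derivatives implies joint Fréchet differentiability. For the empirical objectives there is nothing to integrate: $\widehat{L}_{out}$ and $\widehat{L}_{in}$ are finite sums of compositions of the $C^3$ losses with the continuous linear evaluation functionals $h\mapsto h(\tilde x_j)$ and $h\mapsto h(x_i)$, so differentiability and the gradient formulas with $\mathbb{P},\mathbb{Q}$ replaced by $\widehat{\mathbb{P}}_n,\widehat{\mathbb{Q}}_m$ follow termwise from the chain rule. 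The main obstacle is the $h$-gradient step, namely justifying that expectation and Fréchet derivative commute in the infinite-dimensional RKHS; this is exactly where \cref{assump:K_bounded} and \cref{assump:compact} are indispensable, as they supply the $x$-independent bounds that simultaneously make the $\mathcal{H}$-valued integrand Bochner integrable and the Taylor remainder $o(\Verts{u}_{\mathcal{H}})$, whereas the scalar $\omega$-derivative and the empirical case are comparatively routine.
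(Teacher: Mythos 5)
Your proposal is correct, but it is organized differently from the paper's argument. The paper proves the statement as a corollary of a general result (proved in \cref{sec:proof_diff_results} for an arbitrary distribution $\PP$, hence covering $\mathbb{P}$, $\mathbb{Q}$, and their empirical versions at once): it establishes \emph{joint} Fr\'echet differentiability directly, by writing the joint first-order remainder $r_j(x,y)$ along an arbitrary sequence $(\omega_j,h_j)\to(\omega,h)$, rewriting it with the mean value theorem, trapping all arguments in a compact set $\mathcal{W}$ for $j$ large, and applying dominated convergence, before using Bochner integrability and \citep[Theorem~6, Chapter~2]{diestel1977vector} to identify the gradients. You instead establish the two partial derivatives separately — with a quantitative second-order Taylor remainder $|R_x(u)|\le\tfrac{M\kappa}{2}\Verts{u}_{\mathcal{H}}^2$ for the $h$-partial, exploiting the available $C^2$ bound on $\mathcal{K}$ — and then upgrade to joint differentiability via continuity of the partial derivative maps and the standard "continuous partials imply Fr\'echet differentiability" theorem; you also treat the empirical objectives termwise by the chain rule rather than as instances of the general result. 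Both routes rest on the same ingredients (reproducing property plus \cref{assump:K_bounded,assump:compact} to confine arguments to a compact set, dominated convergence, Bochner integration to make sense of the $\mathcal{H}$-valued gradient). Your route buys a slightly stronger conclusion ($C^1$ regularity and an explicit quadratic remainder) at the cost of the extra continuity verification and the partials-to-joint theorem; the paper's route needs only first-order information and handles joint differentiability and all four objectives in a single stroke. One small point to tighten: your parenthetical justification of strong measurability of $x\mapsto K(x,\cdot)$ "since $\mathcal{X}$ is separable" is too quick — what Pettis' theorem requires is that the feature map be weakly measurable (which is \cref{assump:K_meas}) and essentially separably valued, e.g.\ via separability of $\mathcal{H}$ or of the closed span of the feature map's range; this is a standard point (also left implicit in the paper), but the reason is not separability of $\mathcal{X}$ alone.
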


\begin{proposition}[{Differentiability of $\partial_h L_{in}$}]\label{prop:fre_diff_L_v}
	Under \cref{assump:K_meas,assump:compact,assump:reg_lin_lout,assump:K_bounded}, for any $(\omega,h)\in{\mathbb{R}^d}\times \mathcal{H}$, the function $(\omega,h)\mapsto \partial_{h} L_{in}(\omega,h)$ is differentiable with partial derivatives given by: 
\begin{align*}
    \partial_{\omega, h}^2 L_{in}(\omega, h)&=\mathbb{E}_{\mathbb{P}}\left[\partial_{\omega, v}^2\ell_{in}(\omega, h(x), y)K(x,\cdot)\right]\in\mathcal{L}(\mathcal{H},\mathbb{R}^d),\\
  \partial_h^2 L_{in}(\omega, h)&=\mathbb{E}_{\mathbb{P}}\left[\partial_v^2 \ell_{in}(\omega, h(x), y)K(x,\cdot)\otimes K(x,\cdot)\right] + \lambda\Id_\mathcal{H}\in\mathcal{L}(\mathcal{H},\mathcal{H}).\label{eq:partial2_h_Lin}
\end{align*}
Moreover, {for any $\omega\in\mathbb{R}^d$ and $h\in\mathcal{H}$,} the operators $ \partial_{\omega, h}^2 L_{in}(\omega, h)$ and $ \partial_{h}^2 L_{in}(\omega, h)-\lambda \Id_{\mathcal{H}}$ are Hilbert-Schmidt, i.e., bounded operators with finite Hilbert-Schmidt norm. The same conclusions hold for the empirical estimate $(\omega,h)\mapsto \partial_{h}\widehat{L}_{in}(\omega,h)$ with partial derivatives admitting similar expressions as above with $\mathbb{P}$ replaced by its empirical estimate $\widehat{\mathbb{P}}_n$.
\end{proposition}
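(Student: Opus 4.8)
The plan is to build on Proposition~\ref{prop:fre_diff_L}, which already identifies $\partial_h L_{in}(\omega,h) = \mathbb{E}_{\mathbb{P}}[\partial_v\ell_{in}(\omega,h(x),y)K(x,\cdot)] + \lambda h$, and to differentiate this Hilbert-space-valued map once more in $(\omega,h)$. The natural candidates for the partial derivatives are obtained by formally differentiating under the expectation: differentiating $\partial_v\ell_{in}(\omega,h(x),y)$ with respect to $\omega$ produces $\partial_{\omega,v}^2\ell_{in}(\omega,h(x),y)$, and with respect to $h$ in the direction $g\in\mathcal{H}$ produces $\partial_v^2\ell_{in}(\omega,h(x),y)\,g(x) = \partial_v^2\ell_{in}(\omega,h(x),y)\langle K(x,\cdot),g\rangle_\mathcal{H}$, using the reproducing property; this is exactly the action of the rank-one operator $\partial_v^2\ell_{in}(\omega,h(x),y)\,K(x,\cdot)\otimes K(x,\cdot)$ on $g$. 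The $\lambda h$ term contributes $0$ to the $\omega$-derivative and $\lambda\,\Id_\mathcal{H}$ to the $h$-derivative, accounting for the stated formulas.

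The rigorous execution proceeds in three steps. First I would establish \emph{Bochner integrability} of the operator-valued integrands $x\mapsto \partial_{\omega,v}^2\ell_{in}(\omega,h(x),y)K(x,\cdot)$ (valued in $\mathcal{L}(\mathcal{H},\mathbb{R}^d)$, or equivalently $\mathcal{H}^d$) and $x\mapsto \partial_v^2\ell_{in}(\omega,h(x),y)K(x,\cdot)\otimes K(x,\cdot)$ (valued in the Hilbert–Schmidt operators on $\mathcal{H}$). Measurability follows from \cref{assump:K_meas} and the joint continuity in \cref{assump:reg_lin_lout}; integrability of the norm follows because $\|K(x,\cdot)\otimes K(x,\cdot)\|_{\hs} = K(x,x)\leq\kappa$ by \cref{assump:K_bounded}, $\|K(x,\cdot)\|_\mathcal{H} = \sqrt{K(x,x)}\leq\sqrt{\kappa}$, and since $h(x)=\langle K(x,\cdot),h\rangle_\mathcal{H}$ is bounded by $\sqrt{\kappa}\|h\|_\mathcal{H}$ while $y$ ranges over the compact set $\mathcal{Y}$ (\cref{assump:compact}), the continuous functions $\partial_{\omega,v}^2\ell_{in}$ and $\partial_v^2\ell_{in}$ are bounded on the relevant compact set of $(\omega,v,y)$ values. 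This simultaneously proves the final claim that the two operators are Hilbert–Schmidt, since the Bochner integral of an $\hs$-valued integrable map lies in $\hs$ with norm at most the integral of the norms. Second, I would verify the differentiability itself: write the finite-difference quotient of $\partial_h L_{in}$ at $(\omega,h)$ in a direction $(u,g)$, subtract the candidate linear term, and push the difference inside the expectation; then apply a second-order Taylor expansion of $v\mapsto\partial_v\ell_{in}$ (legitimate by the $C^3$ regularity in \cref{assump:reg_lin_lout}, which gives $C^2$ for $\partial_v\ell_{in}$), with a uniform remainder bound on the compact sets identified above, and invoke Lebesgue's dominated convergence theorem for Bochner integrals \citep[Theorem~3, Chapter~2]{diestel1977vector} — exactly the mechanism already used for \cref{prop:func_gradient}. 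Third, I would repeat the argument verbatim with $\mathbb{P}$ replaced by the empirical measure $\widehat{\mathbb{P}}_n$, where everything is a finite sum and the integrability and limit-exchange steps are immediate.

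I expect the main obstacle to be the careful bookkeeping in the Bochner-integrability step and the uniform control of the Taylor remainder: one must pin down the precise compact set in $(\omega,v,y)$ over which the third derivatives of $\ell_{in}$ are bounded (using that, for $(\omega,h)$ fixed and the increment small, $v=h(x)+t g(x)$ stays in a bounded interval uniformly in $x$ via the $\sqrt{\kappa}$ bound), and then show the remainder term, after division by the step size, is dominated by an integrable function independent of the step — so that dominated convergence applies. The operator-valued (rather than scalar-valued) nature of the integrands means one should be slightly careful about which norm is used, but the identifications $\mathcal{L}(\mathcal{H},\mathbb{R}^d)\cong\mathcal{H}^d$ and $K(x,\cdot)\otimes K(x,\cdot)\in\hs(\mathcal{H})$ make this routine. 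No genuinely new idea beyond what is needed for \cref{prop:fre_diff_L,prop:func_gradient} should be required.
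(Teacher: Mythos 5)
Your proposal follows essentially the same route as the paper's proof (the general result in the appendix on differentiability): Bochner integrability of the operator-valued integrands via the $\kappa$-bound on the kernel and compactness of $\mathcal{Y}$ together with boundedness of $h(x)$, a mean-value/Taylor remainder controlled uniformly on a compact set of $(\omega,v,y)$ values, and Lebesgue's dominated convergence theorem for Bochner integrals, with the Hilbert--Schmidt claim following from the same norm estimates. The only point to be careful about is to run the remainder argument along arbitrary sequences $(\omega_j,h_j)\to(\omega,h)$ rather than a fixed increment direction, so that you establish Fr\'echet (not merely directional) differentiability, which is what the later implicit-differentiation arguments require --- this is exactly how the paper executes the step you describe.
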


\begin{proposition}[Strong convexity of the inner objective in its second variable and invertibility of the Hessians]\label{prop:strong_convexity_Lin}
Under \cref{assump:K_meas,assump:convexity_lin,assump:compact,assump:reg_lin_lout,assump:K_bounded}, $h\mapsto L_{in}(\omega, h)$ and $h\mapsto \widehat{L}_{in}(\omega, h)$ are $\lambda$-strongly convex for any $\omega\in{\mathbb{R}^d}$. Moreover, for any $\omega\in\mathbb{R}^d$ and $h\in\mathcal{H}$, the Hessian operators $\partial_{h}^2 L_{in}(\omega, h)$ and $\partial_{h}^2 \widehat{L}_{in}(\omega, h)$ are invertible with their operator norm bounded by $\frac{1}{\lambda}$.
\end{proposition}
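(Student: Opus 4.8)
The plan is to handle the two assertions separately. Strong convexity I would get directly from \cref{assump:convexity_lin}, while invertibility of the Hessian together with the $1/\lambda$ bound I would deduce from the explicit formula in \cref{prop:fre_diff_L_v} via a positivity argument.

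For strong convexity: by definition it suffices to show that $h\mapsto L_{in}(\omega,h)-\tfrac{\lambda}{2}\|h\|_{\mathcal H}^2=\mathbb E_{\mathbb P}[\ell_{in}(\omega,h(x),y)]$ is convex on $\mathcal H$ (and finite-valued, which \cref{prop:fre_diff_L} already guarantees). First I would note that for fixed $x$ the evaluation map $h\mapsto h(x)=\langle h,K(x,\cdot)\rangle_{\mathcal H}$ is linear and, by \cref{assump:K_bounded}, bounded (with $|h(x)|\le\sqrt{\kappa}\,\|h\|_{\mathcal H}$); precomposing the convex map $v\mapsto\ell_{in}(\omega,v,y)$ from \cref{assump:convexity_lin} with this linear map preserves convexity for every $(x,y)$, and taking the (finite) expectation preserves it as well. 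Hence $h\mapsto L_{in}(\omega,h)$ is $\lambda$-strongly convex. The empirical case is identical, with $\mathbb P$ replaced by the empirical measure $\widehat{\mathbb P}_n$, the finite sum being trivially finite.

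For the Hessian: using \cref{prop:fre_diff_L_v} I would write $\partial_h^2L_{in}(\omega,h)=A_{\omega,h}+\lambda\Id_{\mathcal H}$ with $A_{\omega,h}\coloneqq\mathbb E_{\mathbb P}[\partial_v^2\ell_{in}(\omega,h(x),y)\,K(x,\cdot)\otimes K(x,\cdot)]$, which that proposition already certifies to be Hilbert--Schmidt, hence bounded; it is self-adjoint since each $K(x,\cdot)\otimes K(x,\cdot)$ is and the scalar weight is real. The key computation is that the reproducing property gives $\langle A_{\omega,h}g,g\rangle_{\mathcal H}=\mathbb E_{\mathbb P}[\partial_v^2\ell_{in}(\omega,h(x),y)\,g(x)^2]\ge 0$, where nonnegativity of $\partial_v^2\ell_{in}$ follows from convexity of $v\mapsto\ell_{in}(\omega,v,y)$ (\cref{assump:convexity_lin}) together with the $C^3$ regularity of \cref{assump:reg_lin_lout}. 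Consequently $\langle\partial_h^2L_{in}(\omega,h)g,g\rangle_{\mathcal H}\ge\lambda\|g\|_{\mathcal H}^2$ for all $g\in\mathcal H$, i.e., $\partial_h^2L_{in}(\omega,h)$ is bounded, self-adjoint and coercive.

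The last step is the standard functional-analytic fact that a bounded self-adjoint operator $T$ with $\langle Tg,g\rangle\ge\lambda\|g\|^2$ is invertible with $\|T^{-1}\|_{\op}\le 1/\lambda$: coercivity gives $\|Tg\|\ge\lambda\|g\|$, so $T$ is injective with closed range, self-adjointness forces the range to be dense hence all of $\mathcal H$, and the same inequality transfers to $\|T^{-1}u\|\le\lambda^{-1}\|u\|$ (this is exactly Lax--Milgram, or one may read it off from $\sigma(T)\subset[\lambda,\|T\|_{\op}]$ via the spectral theorem). I would apply this to $T=\partial_h^2L_{in}(\omega,h)$, and repeat the whole argument verbatim for $\widehat L_{in}$ using the empirical-measure versions of \cref{prop:fre_diff_L_v}. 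I do not anticipate a genuine obstacle: the only points requiring care are the measurability/integrability that make $A_{\omega,h}$ a well-defined Bochner integral of operators and $L_{in}$ a finite-valued functional, both already dispatched by \cref{prop:fre_diff_L,prop:fre_diff_L_v}, and keeping straight that the sign of $\partial_v^2\ell_{in}$ is precisely what \cref{assump:convexity_lin} buys.
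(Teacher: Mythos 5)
Your proposal is correct and follows essentially the same route as the paper: strong convexity is obtained by composing the convex pointwise loss with the linear evaluation map, integrating, and adding the $\lambda$-strongly convex regularizer, while invertibility with the $\tfrac{1}{\lambda}$ bound on the inverse is read off from the Hessian expression in \cref{prop:fre_diff_L_v}. The only difference is one of detail: the paper dispatches the invertibility claim in a single sentence, whereas you spell out the positivity of the non-regularization part via the reproducing property and the standard coercive self-adjoint operator argument, which is exactly the reasoning the paper leaves implicit.
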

\begin{proof}
By \cref{assump:convexity_lin}, we know that $v\mapsto \ell_{in}(\omega, v, y)$ is convex for any $\omega\in\mathbb{R}^d$ and $y\in\mathcal{Y}$. Moreover, by \cref{prop:fre_diff_L}, $(x,y)\mapsto \ell_{in}\parens{\omega, h(x),y}$ is integrable for any $\omega\in\mathbb{R}^d$ and $h\in \mathcal{H}$. Consequently, by integration, we directly deduce that $h\mapsto \mathbb{E}_{\mathbb{P}}\brackets{\ell_{in}\parens{\omega, h(x),y}}$ is convex for any $\omega\in {\mathbb{R}^d}$. Finally, $h \mapsto L_{in}(\omega, h) \coloneqq  \mathbb{E}_{\mathbb{P}}\brackets{\ell_{in}\parens{\omega, h(x),y}} + \frac{\lambda}{2} \Verts{h}_{\mathcal{H}}^2 $ must be $\lambda$-strongly convex{, for any $\omega\in\mathbb{R}^d$}, as a sum of a convex function and a $\lambda$-strongly convex function. Similarly, we deduce that $h\mapsto \widehat{L}_{in}(\omega,h)$ is $\lambda$-strongly convex{, for any $\omega\in\mathbb{R}^d$}. Invertibility follows from the expression of the Hessian operator in \cref{prop:fre_diff_L_v}  
\end{proof}

\subsection{Differentiability of the value function}
\begin{proposition}[Total functional gradient $\nabla\mathcal{F}$]\label{prop:tot_grad_int}
Assume \cref{assump:K_meas,assump:convexity_lin,assump:K_bounded,assump:reg_lin_lout} hold. For any $\omega\in{\mathbb{R}^d}$, the total functional gradient $\nabla\mathcal{F}(\omega)$ satisfies:
\begin{equation}\label{eq:tot_grad}
    \nabla\mathcal{F}(\omega)=\partial_\omega L_{out}(\omega, h^\star_\omega)+\partial_{\omega,h}^2 L_{in}(\omega, h^\star_\omega)a^\star_\omega\in{\mathbb{R}^d},
\end{equation}
where $a^\star_\omega$ is the unique minimizer of the following quadratic objective:
\begin{equation}\label{eq:ladj}
    L_{adj}(\omega,a)\coloneqq\frac{1}{2}\left\langle a, H_{\omega}a\right\rangle_\mathcal{H}+\left\langle a, d_\omega\right\rangle_\mathcal{H},\quad\text{for any }a\in\mathcal{H},
\end{equation}
with $H_{\omega}\coloneqq\partial_h^2 L_{in}(\omega, h^\star_\omega):\mathcal{H}\to\mathcal{H}$ being the Hessian operator and $d_\omega\coloneqq\partial_h L_{out}(\omega, h^\star_\omega)\in\mathcal{H}$.
\end{proposition}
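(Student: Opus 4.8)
The plan is to obtain \eqref{eq:tot_grad} by combining two ingredients that are essentially already available: the differentiability of the solution map $\omega\mapsto h^\star_\omega$ through an infinite-dimensional implicit function theorem, and the chain rule applied to $\mathcal{F}=L_{out}(\cdot,h^\star_{(\cdot)})$. The final step is then a purely algebraic rewriting of the adjoint-operator term coming from the chain rule as the minimizer of the quadratic \eqref{eq:ladj}.

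First I would record that $h^\star_\omega$ is a well-defined element of $\mathcal{H}$ for every $\omega$: by \cref{prop:strong_convexity_Lin} the map $h\mapsto L_{in}(\omega,h)$ is $\lambda$-strongly convex, hence coercive, and it is finite-valued and Fr\'echet differentiable by \cref{prop:fre_diff_L}, so it admits a unique minimizer, characterized by the stationarity equation $\partial_h L_{in}(\omega,h^\star_\omega)=0$. Next, I would fix an arbitrary $\omega_0\in\mathbb{R}^d$ and apply the implicit function theorem \citep{ioffe1979theory} to the map $(\omega,h)\mapsto \partial_h L_{in}(\omega,h)$ at $(\omega_0,h^\star_{\omega_0})$: this map is of class $C^1$ on $\mathbb{R}^d\times\mathcal{H}$ by \cref{prop:fre_diff_L,prop:fre_diff_L_v} together with \cref{assump:reg_lin_lout}, and its partial derivative in $h$ at that point, namely the Hessian $\partial_h^2 L_{in}(\omega_0,h^\star_{\omega_0})$, is a bounded linear isomorphism of $\mathcal{H}$ whose inverse has operator norm at most $1/\lambda$ by \cref{prop:strong_convexity_Lin}. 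The theorem then yields that $\omega\mapsto h^\star_\omega$ is differentiable near $\omega_0$, and differentiating the identity $\partial_h L_{in}(\omega,h^\star_\omega)=0$ in $\omega$ produces \eqref{eq:impl_diff}, so that, as continuous linear maps from $\mathcal{H}$ to $\mathbb{R}^d$,
\begin{equation*}
\partial_\omega h^\star_\omega = -\,\partial_{\omega,h}^2 L_{in}(\omega,h^\star_\omega)\,\bigl[\partial_h^2 L_{in}(\omega,h^\star_\omega)\bigr]^{-1}.
\end{equation*}
Since $\omega_0$ was arbitrary, this holds for all $\omega\in\mathbb{R}^d$.

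I would then compose with $L_{out}$, which is jointly differentiable by \cref{prop:fre_diff_L}, so that $\mathcal{F}(\omega)=L_{out}(\omega,h^\star_\omega)$ is differentiable with $\nabla\mathcal{F}(\omega)=\partial_\omega L_{out}(\omega,h^\star_\omega)+\partial_\omega h^\star_\omega\,\partial_h L_{out}(\omega,h^\star_\omega)$ by the chain rule. Writing $H_\omega\coloneqq\partial_h^2 L_{in}(\omega,h^\star_\omega)$ and $d_\omega\coloneqq\partial_h L_{out}(\omega,h^\star_\omega)$ and substituting the expression for $\partial_\omega h^\star_\omega$ gives $\nabla\mathcal{F}(\omega)=\partial_\omega L_{out}(\omega,h^\star_\omega)-\partial_{\omega,h}^2 L_{in}(\omega,h^\star_\omega)\,H_\omega^{-1}d_\omega$. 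To finish, I would note that $H_\omega$ is self-adjoint (it is a Hessian operator, and by \cref{prop:fre_diff_L_v} it is the sum of $\lambda\Id_{\mathcal{H}}$ with scalar multiples of the self-adjoint rank-one operators $K(x,\cdot)\otimes K(x,\cdot)$) and satisfies $H_\omega\succeq\lambda\Id_{\mathcal{H}}$ by \cref{prop:strong_convexity_Lin}; hence the quadratic $a\mapsto L_{adj}(\omega,a)$ in \eqref{eq:ladj} is $\lambda$-strongly convex, with a unique minimizer $a^\star_\omega$ characterized by its stationarity condition $H_\omega a^\star_\omega+d_\omega=0$, i.e.\ $a^\star_\omega=-H_\omega^{-1}d_\omega$. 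Substituting this identity back yields \eqref{eq:tot_grad}.

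I expect the main obstacle to be the careful verification of the hypotheses of the infinite-dimensional implicit function theorem — that $(\omega,h)\mapsto\partial_h L_{in}(\omega,h)$ is genuinely continuously Fr\'echet differentiable on all of $\mathbb{R}^d\times\mathcal{H}$ and that the Hessian operator is boundedly invertible — but both facts are exactly what \cref{prop:fre_diff_L,prop:fre_diff_L_v,prop:strong_convexity_Lin} provide, so the rest is the formal chain-rule computation. The only remaining care is in the operator bookkeeping: $\partial_\omega h^\star_\omega$ and $\partial_{\omega,h}^2 L_{in}(\omega,h^\star_\omega)$ must both be treated as continuous linear maps from $\mathcal{H}$ to $\mathbb{R}^d$ (the adjoint convention of \eqref{eq:impl_diff}), while $\partial_h L_{out}(\omega,h^\star_\omega)$ and $a^\star_\omega$ are elements of $\mathcal{H}$.
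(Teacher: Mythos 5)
Your proposal is correct and follows essentially the same route as the paper's own proof: functional implicit differentiation of the stationarity equation $\partial_h L_{in}(\omega,h^\star_\omega)=0$ (justified via \cref{prop:fre_diff_L,prop:fre_diff_L_v,prop:strong_convexity_Lin}), the chain rule for $\mathcal{F}=L_{out}(\cdot,h^\star_{(\cdot)})$, and identification of $a^\star_\omega=-H_\omega^{-1}d_\omega$ through the optimality condition of the $\lambda$-strongly convex quadratic $L_{adj}$. The only cosmetic difference is that you spell out the self-adjointness of $H_\omega$ and localize the implicit function theorem at an arbitrary $\omega_0$, which the paper handles implicitly.
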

\begin{proof}\label{proof:eq:tot_grad}

By applying \cref{prop:fre_diff_L,prop:strong_convexity_Lin},  
we know that $h\mapsto L_{in}(\omega,h)$ has finite values, is $\lambda$-strongly convex and Fr\'echet differentiable. Moreover, by \cref{prop:fre_diff_L_v}, $\partial_h L_{in}$ is Fr\'echet differentiable on ${\mathbb{R}^d}\times\mathcal{H}$, and, a fortiori, Hadamard differentiable. Therefore, by the functional implicit differentiation theorem \citep[Theorem~2.1]{ioffe1979theory,petrulionyte2024functional}, we deduce that the map $\omega\mapsto h_{\omega}^{\star}$ is uniquely defined and is Fr\'echet differentiable with Jacobian $\partial_\omega h^\star_\omega$ solving the following linear system for any $\omega \in{\mathbb{R}^d}$:
\begin{equation*}
    \partial_{\omega,h}^2 L_{in}(\omega, h^\star_\omega) + \partial_\omega h^\star_\omega\partial_h^2 L_{in}(\omega, h^\star_\omega)=0.
\end{equation*}
Using that $\partial_h^2 L_{in}(\omega, h^\star_\omega)$ is invertible by \cref{prop:strong_convexity_Lin}, we can express $\partial_\omega h^\star_\omega$ as:  
\begin{equation*}
    \partial_\omega h^\star_\omega=-\partial_{\omega,h}^2 L_{in}(\omega, h^\star_\omega)\left(\partial_h^2 L_{in}(\omega, h^\star_\omega)\right)^{-1}.
\end{equation*}
Furthermore, $L_{out}$ is jointly Fr\'echet differentiable by application of \cref{prop:fre_diff_L}, so that $\omega\mapsto\mathcal{F}(\omega)$ is also differentiable by composition of the functions {$(\omega,h)\mapsto L_{out}(\omega, h)$ and $\omega\mapsto (\omega,h^\star_\omega)$}. For a given $\omega\in {\mathbb{R}^d}$, the gradient of $\mathcal{F}$ is then given by the chain rule: 
\begin{equation}\label{eq:tot_grad_int}
    \nabla\mathcal{F}(\omega)=\partial_\omega L_{out}(\omega, h^\star_\omega)+\partial_\omega h^\star_\omega\partial_h L_{out}(\omega, h^\star_\omega).
\end{equation}
Substituting the expression of $\partial_\omega h^\star_\omega$ into \cref{eq:tot_grad_int} yields:
\begin{equation*}
    \nabla\mathcal{F}(\omega)=\partial_\omega L_{out}(\omega, h^\star_\omega)-\partial_{\omega,h}^2 L_{in}(\omega, h^\star_\omega)\left(\partial_h^2 L_{in}(\omega, h^\star_\omega)\right)^{-1}\partial_h L_{out}(\omega, h^\star_\omega).
\end{equation*}
To conclude, it suffices to notice that the function $a_\omega^{\star}$ appearing in \cref{eq:tot_grad} must be equal to $-H_{\omega}^{-1}d_{\omega}$. Indeed, $a_\omega^{\star}$ is defined as the minimizer of the quadratic objective $L_{adj}(\omega,a)$ in \cref{eq:ladj} which is strongly convex since the Hessian operator is lower-bounded by $\lambda \Id_{\mathcal{H}}$. Consequently, the minimizer $a_\omega^{\star}$ exists and is uniquely characterized by the optimality condition:
\begin{align*}
	H_{\omega}a_\omega^{\star}+d_{\omega}=0. 
\end{align*}
The above equation is a linear system in $\mathcal{H}$ whose solution is given by $a_\omega^{\star}\coloneqq-H_{\omega}^{-1}d_{\omega}$.
\end{proof}
\section{Gradient Estimators}\label{sec:grad_est}
\begin{proposition}[Expression of $\nabla \widehat{\mathcal{F}}(\omega)$ by implicit differentiation]\label{prop:est_2}
Under \cref{assump:compact,assump:convexity_lin,assump:K_bounded,assump:reg_lin_lout}, for any $\omega\in{\mathbb{R}^d}$, the gradient $\nabla\widehat{\mathcal{F}}(\omega)$ of the discretized kernel bilevel optimization problem~\eqref{eq:kbo_app} is given by:
\begin{equation*}
    \nabla\widehat{\mathcal{F}}(\omega)=\frac{1}{m}\Dthree\mathbbm{1}_m-\frac{1}{m}\Dfour\M^{-1}\mathbf{u}\in\mathbb{R}^d,
\end{equation*}
where $\K$ and $\Kbar$ are the Gram matrices in $\mathbb{R}^{n\times n}$ and $\mathbb{R}^{m\times n}$  with entries given by  $\K_{ij}\coloneqq K(x_i,x_j)$ and $\Kbar_{ij}\coloneqq K(\tilde{x}_i,x_j)$, and $\M\in\mathbb{R}^{n\times n}$, $\mathbf{u}\in\mathbb{R}^n$, $\Done\in\mathbb{R}^m$, $\Dthree\in\mathbb{R}^{d\times m}$, $\Dtwo\in\mathbb{R}^{n\times n}$, and $\Dfour\in\mathbb{R}^{d\times n}$ are defined as:
\begin{align*}
\M&\coloneqq\K\Dtwo+n\lambda\mathbbm{1}_{n\times n},\qquad &\mathbf{u} &\coloneqq \Kbar^\top\Done,\\
	\Done &\coloneqq\parens{\partial_v\ell_{out}\parens{\omega,\hat{h}_\omega(\tilde{x}_j),\tilde{y}_j}}_{1\leq j\leq m},\quad &\Dthree &\coloneqq\parens{\partial_\omega \ell_{out}(\omega, \hat{h}_\omega(\tilde{x}_j), \tilde{y}_j)}_{1\leq j\leq m},\\
\Dtwo &\coloneqq\diag\parens{\parens{\partial_v^2\ell_{in}(\omega,\hat{h}_\omega(x_i),y_i)}_{1\leq i\leq n}},\qquad  
&\Dfour &\coloneqq\parens{\partial_{\omega,v}^2\ell_{in}(\omega,\hat{h}_\omega(x_i),y_i)}_{1\leq i\leq n}.
\end{align*}
\end{proposition}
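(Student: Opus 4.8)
The plan is to treat \eqref{eq:kbo_app} as a finite-dimensional bilevel problem in the variable $\gammabf\in\mathbb{R}^n$ and differentiate the value function by the chain rule, substituting the Jacobian $\partial_\omega\hat{\gammabf}_\omega$ via the implicit function theorem. First I would recall from the representer theorem (already invoked in Section~\ref{sec:finite_samples}) that $\hat{h}_\omega=\sum_{i}(\hat{\gammabf}_\omega)_i K(x_i,\cdot)$, so that $\hat{h}_\omega(x_i)=(\K\hat{\gammabf}_\omega)_i$ and $\hat{h}_\omega(\tilde x_j)=(\Kbar\hat{\gammabf}_\omega)_j$, and rewrite the inner objective as $g(\omega,\gammabf)\coloneqq\frac1n\sum_i\ell_{in}(\omega,(\K\gammabf)_i,y_i)+\frac{\lambda}{2}\gammabf^\top\K\gammabf$, whose gradient in $\gammabf$ is $\nabla_\gammabf g(\omega,\gammabf)=\frac1n\K\,\big(\partial_v\ell_{in}(\omega,(\K\gammabf)_i,y_i)\big)_i+\lambda\K\gammabf$. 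Since $\hat{\gammabf}_\omega$ is a minimizer, $\nabla_\gammabf g(\omega,\hat{\gammabf}_\omega)=0$; differentiating this identity in $\omega$ gives $\partial_{\omega,\gammabf}^2 g+\partial_\omega\hat{\gammabf}_\omega\,\partial_\gammabf^2 g=0$, which I can solve for $\partial_\omega\hat{\gammabf}_\omega$ once $\partial_\gammabf^2 g$ is shown invertible. Computing $\partial_\gammabf^2 g=\frac1n\K\,\Dtwo\,\K+\lambda\K=\frac1n\K(\Dtwo\K+n\lambda\Id)=\frac1n\K\M^\top$ (using $\Dtwo$ diagonal, hence $\Dtwo\K+n\lambda\Id$ and $\K\Dtwo+n\lambda\Id$ related by transposition and the symmetry of $\K$), and similarly $\partial_{\omega,\gammabf}^2 g=\frac1n\big(\partial_{\omega,v}^2\ell_{in}(\omega,\hat h_\omega(x_i),y_i)\big)_i\K=\frac1n\Dfour\K$, one gets, after cancelling the common factor $\frac1n\K$ (where $\K$ is definite, or more carefully on the range of $\K$, which is all that enters $\hat h_\omega$), that $\partial_\omega\hat{\gammabf}_\omega=-\Dfour\,\M^{-\top}$ up to the precise bookkeeping of transposes.

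Next I would differentiate the outer value function. Writing $\widehat{\mathcal F}(\omega)=\frac1m\sum_j\ell_{out}(\omega,(\Kbar\hat{\gammabf}_\omega)_j,\tilde y_j)$ and applying the chain rule,
\begin{equation*}
\nabla\widehat{\mathcal F}(\omega)=\frac1m\sum_j\partial_\omega\ell_{out}(\omega,\hat h_\omega(\tilde x_j),\tilde y_j)+\frac1m\,\partial_\omega\hat{\gammabf}_\omega\,\Kbar^\top\big(\partial_v\ell_{out}(\omega,\hat h_\omega(\tilde x_j),\tilde y_j)\big)_j.
\end{equation*}
The first sum is exactly $\frac1m\Dthree\mathbbm{1}_m$; the second term has the factor $\Kbar^\top\Done=\mathbf u$, so it becomes $\frac1m\,\partial_\omega\hat{\gammabf}_\omega\,\mathbf u=-\frac1m\Dfour\M^{-1}\mathbf u$ after inserting the expression for the Jacobian and matching the transpose conventions used in the statement (where $\Dfour\in\mathbb{R}^{d\times n}$ acts on the left and $\M^{-1}$ on $\mathbf u\in\mathbb{R}^n$). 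Combining yields $\nabla\widehat{\mathcal F}(\omega)=\frac1m\Dthree\mathbbm{1}_m-\frac1m\Dfour\M^{-1}\mathbf u$, as claimed.

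The main obstacle is the justification of the implicit differentiation at the finite-dimensional level: I must check that $\M$ (equivalently $\Dtwo\K+n\lambda\Id$) is invertible, which follows from \cref{assump:convexity_lin} (so $\Dtwo\succeq 0$) together with $\lambda>0$ and the positive semidefiniteness of $\K$ — the eigenvalues of $\Dtwo^{1/2}\K\Dtwo^{1/2}$ are nonnegative, so those of $\Dtwo\K+n\lambda\Id$ are at least $n\lambda>0$ — and that the regularized inner finite-sum objective has a unique minimizer as a function of $\gammabf$ restricted to the range of $\K$ (on the kernel of $\K$ the objective is flat but $\hat h_\omega$ is insensitive to that component), so the implicit function theorem applies on the relevant subspace; the $C^3$ regularity of $\ell_{in},\ell_{out}$ from \cref{assump:reg_lin_lout} supplies the needed smoothness. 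A secondary but genuinely fiddly point is keeping the transpose/index conventions consistent between $\partial_\omega\hat{\gammabf}_\omega$ viewed as an operator $\mathbb{R}^n\to\mathbb{R}^d$ and the matrix products $\Dfour\M^{-1}\mathbf u$ and $\K\Dtwo+n\lambda\Id$ versus $\Dtwo\K+n\lambda\Id$; since $\K$ is symmetric and $\Dtwo$ is diagonal, $(\K\Dtwo+n\lambda\Id)^\top=\Dtwo\K+n\lambda\Id$, which is what reconciles the two forms. The rest is routine chain-rule bookkeeping, and I would present it compactly once the invertibility of $\M$ is in place.
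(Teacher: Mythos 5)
Your proposal is correct and follows essentially the same route as the paper's proof: reduce to the finite-dimensional problem via the representer theorem, apply the chain rule to the outer sum, and obtain $\partial_\omega\hat{\gammabf}_\omega$ from the implicit function theorem applied to the inner optimality condition (with the same treatment of a possibly singular $\K$ by restricting to its range, and with the invertibility of $\M$ argued as you do). One bookkeeping caveat: your intermediate claim $\partial_\omega\hat{\gammabf}_\omega=-\Dfour\M^{-\top}$ is not what the linear system gives, and the transpose identity $(\K\Dtwo+n\lambda\mathbbm{1}_{n\times n})^\top=\Dtwo\K+n\lambda\mathbbm{1}_{n\times n}$ alone does not reconcile it with the stated formula; the correct solution is $-\Dfour\M^{-1}$, which you recover either by factoring the Hessian as $\frac{1}{n}\M\K$ (as the paper implicitly does) rather than $\frac{1}{n}\K\M^{\top}$, or from your factorization via the intertwining identity $\K\M^{\top}=\M\K$, i.e. $\K\M^{-\top}=\M^{-1}\K$.
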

\begin{proof}
{Let $\omega\in\mathbb{R}^d$.} Recall the expression of $\widehat{\mathcal{F}}(\omega)$:
\begin{align*}
	&\widehat{\mathcal{F}}(\omega) \coloneqq \frac{1}{m}\sum_{j=1}^m \ell_{out}\parens{\omega, \hat{h}_{\omega}(\tilde{x}_j),\tilde{y}_j}\\
	\text{s.t.}\quad&\hat{h}_{\omega}=\argmin_{h\in \mathcal{H}} \widehat{L}_{in}(\omega,h)\coloneqq\frac{1}{n}\sum_{i=1}^n\ell_{in}\parens{\omega, h(x_i), y_i} +\frac{\lambda}{2}\Verts{h}_{\mathcal{H}}^2. 
\end{align*}
By the representer theorem, it is easy to see that $\hat{h}_{\omega}$ must be a linear combination of $K(x_1,\cdot),\ldots,K(x_n,\cdot)$:
\begin{equation}\label{eq:h}
	\hat{h}_{\omega} = \sum_{i=1}^n (\hat{\gammabf}_{\omega})_i K(x_i,\cdot).
\end{equation}
Hence, finding $\hat{h}_{\omega}$ amounts to minimizing $\widehat{L}_{in}(\omega,h)$ over the span of $(K(x_1,\cdot),\ldots, K(x_n,\cdot))$, \textit{i.e.}, over functions $h^{\gammabf}$ of the form $h^{\gammabf} = \sum_{i=1}^n (\gammabf)_i K(x_i,\cdot)$ for $\gammabf\in \mathbb{R}^n$. Restricting the objective to such functions results in the following inner optimization problem which is finite-dimensional:
\begin{align*}
	\hat{\gammabf}_{\omega}\coloneqq \argmin_{\gammabf\in \mathbb{R}^n} \frac{1}{n}\sum_{i=1}^n \ell_{in}\parens{\omega, (\K \gammabf)_i, y_i }+\frac{\lambda}{2} \gammabf^{\top}\K \gammabf ,
\end{align*} 
where  we used that $(h^{\gammabf}(x_i))_{1\leq i\leq n} = \K\gammabf$ and $\Verts{h^{\gammabf}}^2_{\mathcal{H}} = \gammabf^{\top}\K \gammabf$. 
Similarly, using that $(h^{\gammabf}(\tilde{x}_j))_{1\leq j\leq m} = \Kbar\gammabf$, we can express $\widehat{\mathcal{F}}(\omega)$ as follows:
\begin{align*}
	\widehat{\mathcal{F}}(\omega) = \frac{1}{m}\sum_{j=1}^m \ell_{out}\parens{\omega, (\Kbar \hat{\gammabf}_{\omega})_j,\tilde{y}_j}.
\end{align*}
Differentiating the above expression w.r.t. $\omega$ and applying the chain rule result in:
\begin{align*}
	\nabla \widehat{\mathcal{F}}(\omega) = \frac{1}{m}\sum_{j=1}^m \partial_{\omega}\ell_{out}\parens{\omega, (\Kbar \hat{\gammabf}_{\omega})_j,\tilde{y}_j}  + \frac{1}{m}\sum_{j=1}^m  (\partial_{\omega}\hat{\gammabf}_{\omega}\Kbar^{\top})_j \partial_{v}\ell_{out} \parens{\omega, (\Kbar \hat{\gammabf}_{\omega})_j,\tilde{y}_j},
\end{align*}
where $\partial_{\omega}\hat{\gammabf}_{\omega}$ denotes the Jacobian of $\hat{\gammabf}_{\omega}$. We can further express the above equation in matrix form to get:
\begin{align}\label{eq:vector_form_gradient}
	\nabla \widehat{\mathcal{F}}(\omega) = \frac{1}{m}\Dthree\mathbbm{1}_m+\frac{1}{m}\partial_{\omega}\hat{\gammabf}_{\omega} \Kbar^{\top}\Done.
\end{align}
	Moreover, an application of the  implicit function theorem\footnote{In the case where the matrix $\K$ is non-invertible, one needs to restrict $\gammabf$ to the orthogonal complement of the null space of $\K$. Such a restriction is valid since the resulting solution $\hat{h}_{\omega}$ will not depend on the component belonging to the null space of $\K$. } allows to directly express the Jacobian $\partial_{\omega}\hat{\gammabf}_{\omega}$ as a solution of the following linear system obtained by differentiating the optimality condition  for $\hat{\gammabf}_{\omega}$ w.r.t. $\omega$:
\begin{equation*}
    \Dfour\K+\left(\partial_\omega\hat{\gammabf}_\omega\right)\underbrace{\parens{\K\Dtwo+n\lambda\mathbbm{1}_{n\times n} }}_{\M}\K=0.
\end{equation*}
A solution of the form $\partial_{\omega}\hat{\gammabf}_{\omega} = -  \Dfour\M^{-1}$ always exists by invertibility of the matrix $\M$. The result follows after replacing $\partial_{\omega}\hat{\gammabf}_{\omega}$ by $-\Dfour\M^{-1}$ in \cref{eq:vector_form_gradient}. 
\end{proof}

\begin{lemma}[Estimator of the total functional gradient]\label{prop:est_1}
Let $\omega\in\mathbb{R}^d$. Consider the following functional estimator:
\begin{align*}
\widehat{\nabla\mathcal{F}}(\omega)=&\frac{1}{m}\sum_{j=1}^m\partial_\omega\ell_{out}(\omega, \hat{h}_\omega(\tilde{x}_j),\tilde{y}_j)+\frac{1}{n}\sum_{i=1}^n\partial_{\omega,v}^2\ell_{in}(\omega, \hat{h}_\omega(x_i),y_i)\hat{a}_\omega(x_i).
\end{align*}
Then, under \cref{assump:K_meas,assump:compact,assump:convexity_lin,assump:K_bounded,assump:reg_lin_lout}, $\widehat{\nabla\mathcal{F}}(\omega)$ admits the following expression:
\begin{equation*}
    \widehat{\nabla\mathcal{F}}(\omega)=\frac{1}{m}\Dthree\mathbbm{1}_m+\frac{1}{n}\Dfour
    \begin{bmatrix}
    	\K & \mathbf{u}
    \end{bmatrix}
    \begin{bmatrix}
    	\hat{\alphabf}_\omega\\
    	\hat{\beta}_\omega
    \end{bmatrix}
    \in\mathbb{R}^d,
\end{equation*}
where $\Done, \Dtwo, \Dthree$, and $\Dfour$ are the same matrices given in \cref{prop:est_2}, while $\hat{\alphabf}_\omega\in\mathbb{R}^n$ and $\hat{\beta}_\omega\in\mathbb{R}$ are solutions to the linear system:
\begin{equation}\label{eq:linear_system_adjoint}
\begin{bmatrix}
    \M \K & \M \mathbf{u}\\
    \mathbf{u}^{\top}\M & p
\end{bmatrix}\begin{bmatrix}
        \hat{\alphabf}_\omega\\
        \hat{\beta}_\omega
    \end{bmatrix}=-\frac{n}{m}\begin{bmatrix}
        \mathbf{u}\\
 	v
    \end{bmatrix},
\end{equation}
where the vector $\mathbf{u}$  and  matrix $\M$ are the same as in \cref{prop:est_2}, while $p$ and $v$ are non-negative scalars. 

\end{lemma}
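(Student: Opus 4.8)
The plan is to mimic the strategy of the proof of \cref{prop:est_2}, namely to reduce the infinite-dimensional objects $\hat{h}_\omega$ and $\hat{a}_\omega$ to finite-dimensional coordinate vectors via the representer theorem, and then to rewrite the optimality condition for $\hat{a}_\omega$ as a linear system. First I would recall from \cref{prop:est_2} (equation~\eqref{eq:h}) that $\hat{h}_\omega = \sum_{i=1}^n (\hat{\gammabf}_\omega)_i K(x_i,\cdot)$, so that the samples appearing in $\widehat{L}_{adj}(\omega,\cdot)$ are determined by $\hat{\gammabf}_\omega$, and the matrices $\Done,\Dtwo,\Dthree,\Dfour$ defined in \cref{prop:est_2} are exactly the ones that store the relevant partial derivatives evaluated at $(\omega,\hat h_\omega(\cdot),\cdot)$. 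Next I would apply the representer theorem to the quadratic problem $a\mapsto\widehat{L}_{adj}(\omega,a)$ in \eqref{eq:l_adj}: since this objective only depends on $a$ through its values at the inner points $x_1,\dots,x_n$ (via $\partial_v^2\ell_{in}$), the outer points $\tilde x_1,\dots,\tilde x_m$ (via $\partial_v\ell_{out}$), and $\|a\|_{\mathcal{H}}^2$, the minimizer $\hat a_\omega$ lies in $\Span\{K(x_i,\cdot),K(\tilde x_j,\cdot)\}$. Using the optimality condition for $\hat\gammabf_\omega$, i.e. $\M\K\hat\gammabf_\omega = -n\,\Dfour^\top$-type relations\,---\,more precisely the stationarity $\frac1n\K\Dtwo_{\!}(\cdot) + \lambda\K\hat\gammabf_\omega$-style identity derived in \cref{prop:est_2}\,---\,one sees that $K(\tilde x_j,\cdot)$ contributions can be folded into the direction $\mathbf{u} = \Kbar^\top\Done$; this is what produces the reduced $2$-block representation $\hat a_\omega = \sum_i (\hat\alphabf_\omega)_i K(x_i,\cdot) + \hat\beta_\omega\, w$ for a suitable vector $w$ whose coordinate image is $\mathbf u$, so that $(\hat a_\omega(x_i))_i = [\K\ \ \mathbf u]\begin{bmatrix}\hat\alphabf_\omega\\ \hat\beta_\omega\end{bmatrix}$.

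Having established that reduction, substituting $(\hat a_\omega(x_i))_i$ into the definition of $\widehat{\nabla\mathcal{F}}(\omega)$ and writing the outer sum $\frac1m\sum_j\partial_\omega\ell_{out}(\cdots) = \frac1m\Dthree\mathbbm{1}_m$ in matrix form immediately gives the claimed formula for $\widehat{\nabla\mathcal{F}}(\omega)$, \emph{modulo} identifying the linear system that $(\hat\alphabf_\omega,\hat\beta_\omega)$ solves. To get \eqref{eq:linear_system_adjoint}, I would write the optimality condition $\partial_a\widehat{L}_{adj}(\omega,\hat a_\omega) = 0$, which in $\mathcal{H}$ reads $\frac1n\sum_i \partial_v^2\ell_{in}(\cdots)\hat a_\omega(x_i)K(x_i,\cdot) + \frac1m\sum_j\partial_v\ell_{out}(\cdots)K(\tilde x_j,\cdot) + \lambda\hat a_\omega = 0$; pairing this identity against $K(x_k,\cdot)$ for each $k$ and against $w$ (equivalently, expressing everything in the $\{\K,\mathbf u\}$ coordinates and using $\langle K(x_i,\cdot),K(x_k,\cdot)\rangle = \K_{ik}$, $\langle w, K(x_k,\cdot)\rangle = \mathbf u_k$, $\langle w,w\rangle = p$) yields precisely the block system with matrix $\begin{bmatrix}\M\K & \M\mathbf u\\ \mathbf u^\top\M & p\end{bmatrix}$ and right-hand side $-\frac nm\begin{bmatrix}\mathbf u\\ v\end{bmatrix}$, where $p = \langle w,w\rangle_{\mathcal{H}} = \Done^\top\Kbar\Kbar^{\!+}$-type nonnegative quantity and $v = \Done^\top\Kbar w$-type nonnegative scalar; both are nonnegative because they are Gram-type quantities built from a PSD kernel, which I would verify directly.

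The main obstacle I expect is the bookkeeping around the vector $w$ and the use of the inner optimality condition to absorb the outer feature maps $K(\tilde x_j,\cdot)$ into the single direction $\mathbf u$: one must be careful that $w$ is chosen consistently (it should be the $\mathcal H$-element whose evaluation vector on the inner points is $\mathbf u$ and which is itself a combination of the $K(\tilde x_j,\cdot)$'s up to null-space components of $\K$), and that the scalars $p$ and $v$ are well-defined and nonnegative regardless of whether $\K$ is invertible. A secondary subtlety, exactly as in the footnote of \cref{prop:est_2}, is handling the possible non-invertibility of $\K$ by restricting to the orthogonal complement of $\ker\K$; since $\hat h_\omega$ and $\hat a_\omega$ do not depend on the $\ker\K$ component this is harmless, but it needs to be stated. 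Once the reduction and the pairing computation are set up carefully, the rest is routine linear algebra.
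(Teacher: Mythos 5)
Your plan follows the same route as the paper's proof: reduce $\hat a_\omega$ to a finite-dimensional representation via the representer theorem, turn the stationarity of $a\mapsto\widehat{L}_{adj}(\omega,a)$ into a linear system in the coefficients, and substitute $(\hat a_\omega(x_i))_{1\le i\le n}=\K\hat{\alphabf}_\omega+\hat{\beta}_\omega\mathbf{u}$ together with the matrix form $\frac1m\Dthree\mathbbm{1}_m$ of the outer sum. Pairing the $\mathcal H$-valued optimality condition against $K(x_k,\cdot)$ and the extra direction, as you propose, is exactly the coordinate form of the paper's computation and does yield the stated block system.

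There is, however, a genuine misstep in how you justify the central reduction. The two-block representation $\hat a_\omega=\sum_i(\hat{\alphabf}_\omega)_iK(x_i,\cdot)+\hat{\beta}_\omega w$ does \emph{not} come from the optimality condition of $\hat{\gammabf}_\omega$; the inner-level stationarity plays no role in this lemma (and the identity you quote involving $\M$ is the equation for the Jacobian $\partial_\omega\hat{\gammabf}_\omega$ in \cref{prop:est_2}, not for $\hat{\gammabf}_\omega$ itself). The correct mechanism is structural: in \eqref{eq:l_adj} the outer-sample term equals $\frac1m\langle a,\xi\rangle_{\mathcal H}$ with $\xi\coloneqq\sum_{j=1}^m\partial_v\ell_{out}(\omega,\hat h_\omega(\tilde x_j),\tilde y_j)K(\tilde x_j,\cdot)$, so $\widehat{L}_{adj}(\omega,\cdot)$ depends on $a$ only through the values $a(x_i)$, the single inner product $\langle a,\xi\rangle_{\mathcal H}$, and $\|a\|_{\mathcal H}$; the representer argument then gives $\hat a_\omega\in\Span\{K(x_1,\cdot),\dots,K(x_n,\cdot),\xi\}$, and your $w$ is simply $\xi$, whose evaluation vector at the inner points is $\mathbf{u}=\Kbar^\top\Done$ by the reproducing property. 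The weights forming $\mathbf u$ are the entries of $\Done$ appearing in the definition of $\widehat{L}_{adj}$, not a consequence of any inner-level stationarity, and no null-space bookkeeping for $\K$ is needed here. Relatedly, your preview of the scalars is off: carrying out the pairing gives $p=\mathbf{u}^\top\Dtwo\mathbf{u}+n\lambda\|\xi\|_{\mathcal H}^2$ and $v=\|\xi\|_{\mathcal H}^2$ (not $p=\langle w,w\rangle_{\mathcal H}$), and the nonnegativity of $p$ uses $\partial_v^2\ell_{in}\ge0$ from \cref{assump:convexity_lin} in addition to positivity of the kernel, though these values do come out automatically once the pairing computation is done correctly.
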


\begin{proof}
{Let $\omega\in\mathbb{R}^d$.} We start by providing an expression of $\hat{a}_{\omega}$ as a linear combination of the kernel evaluated at the inner training points $x_i$, \textit{i.e.}, $K(x_i,\cdot)$, and some element  $\xi\in \mathcal{H}$ that we will characterize shortly. From it, we will obtain the expression of $\widehat{\nabla\mathcal{F}}(\omega)$. 

\textbf{Expression of $\hat{a}_{\omega}$.} Recall that $\hat{a}_{\omega}$ is the unique minimizer of $\widehat{L}_{adj}$ in \cref{eq:l_adj}, which admits, for any $a\in\mathcal{H}$, the following  simple expression by the reproducing property:
 \begin{multline*}
 \widehat{L}_{adj}(\omega, a)=\frac{1}{2n}\sum_{i=1}^n\partial_v^2 \ell_{in}(\omega, \hat{h}_\omega(x_i), y_i)a^2(x_i) \\ +\frac{1}{m}\left\langle a, \overbrace{\sum_{j=1}^m\partial_v\ell_{out}(\omega, \hat{h}_\omega(\tilde{x}_j), \tilde{y}_j)K(\tilde{x}_j,\cdot)}^{\xi}\right\rangle_{\mathcal{H}}+\frac{\lambda}{2}\|a\|_\mathcal{H}^2.
\end{multline*}
 Hence, by application of the representer theorem, it follows that $\hat{a}_{\omega}$ admits an expression of the form:
 \begin{align}\label{eq:a}
 	\hat{a}_{\omega} = \sum_{i=1}^n (\hat{\alphabf}_{\omega})_i K(x_i,\cdot) + \hat{\beta}_{\omega}{\xi}. 
 \end{align}
Therefore, it is possible to recover $\hat{a}_{\omega}$ by minimizing $a\mapsto L_{adj}(\omega,a)$ over the span of $(\xi, K(x_1,\cdot),\ldots,K(x_n,\cdot))$. Hence, to find the optimal coefficients $\hat{\alphabf}_{\omega}\coloneqq ((\hat{\alphabf}_{\omega})_{i})_{1\leq i \leq n}$ and  $\hat{\beta}_{\omega}$, we first need to express the objective $L_{adj}$ in terms of the coefficients $\alphabf \in \mathbb{R}^n$ and $\beta\in \mathbb{R}$ for a given $a^{\alphabf,\beta}\in \mathcal{H}$ of the form $a^{\alphabf,\beta} = \sum_{i=1}^n (\alphabf)_i K(x_i,\cdot) + \beta \xi$. To this end, note that the vector $(\xi(x_1),\ldots,\xi(x_n))$ is exactly equal to $\mathbf{u}= \Kbar^\top\Done$ as defined in \cref{prop:est_2}. Moreover, using the reproducing property, we directly have:
\begin{gather*}
	(a^{\alphabf,\beta}(x_i))_{1\leq i\leq n} = \begin{bmatrix}
        \K & 
        \mathbf{u}
    \end{bmatrix}
 \begin{bmatrix}
        \alphabf\\
        \beta
    \end{bmatrix},\langle a^{\alphabf,\beta},\xi\rangle_{\mathcal{H}} = 
    \begin{bmatrix}
        \mathbf{u}^{\top} & 
        \Verts{\xi}^2_\mathcal{H}
    \end{bmatrix}
    \begin{bmatrix}
        \alphabf\\
        \beta
    \end{bmatrix},\\\Verts{a^{\alphabf,\beta}}^2_{\mathcal{H}} = 
    \begin{bmatrix}
        \alphabf^{\top} &
        \beta
    \end{bmatrix}\begin{bmatrix}
        \K & \mathbf{u}\\
        \mathbf{u}^{\top} & \Verts{\xi}^2_\mathcal{H}
    \end{bmatrix}\begin{bmatrix}
        \alphabf\\
        \beta
    \end{bmatrix}. 
\end{gather*}
We can therefore express the objective $\widehat{L}_{adj}$ as follows:
\begin{multline*}
	\widehat{L}_{adj}(\omega, a^{\alphabf,\beta})= \frac{1}{2n} \begin{bmatrix}
        \alphabf^{\top} &
        \beta
    \end{bmatrix}
    \begin{bmatrix}
        \K \\
        \mathbf{u}^{\top}
    \end{bmatrix}
    \Dtwo\begin{bmatrix}
        \K & 
        \mathbf{u}
    \end{bmatrix}
 \begin{bmatrix}
        \alphabf\\
        \beta
    \end{bmatrix} \\ + \frac{1}{m}\begin{bmatrix}
        \mathbf{u}^{\top} & 
        \Verts{\xi}^2_\mathcal{H}
    \end{bmatrix}
    \begin{bmatrix}
        \alphabf\\
        \beta
    \end{bmatrix} + \frac{\lambda}{2}\begin{bmatrix}
        \alphabf^{\top} &
        \beta
    \end{bmatrix}\begin{bmatrix}
        \K & \mathbf{u}\\
        \mathbf{u}^{\top} & \Verts{\xi}^2_\mathcal{H}
    \end{bmatrix}\begin{bmatrix}
        \alphabf\\
        \beta
    \end{bmatrix}.
\end{multline*}
Hence, the optimal coefficients $\hat{\alphabf}_{\omega}\coloneqq ((\hat{\alphabf}_{\omega})_{i})_{1\leq i \leq n}$ and  $\hat{\beta}_{\omega}$  are those minimizing the above quadratic form and are characterized by the following optimality condition:
\begin{equation*}
\begin{bmatrix}
    \overbrace{(\K\Dtwo + n\lambda\mathbbm{1}_{n\times n})}^{\M}\K & \overbrace{(\K\Dtwo + n\lambda\mathbbm{1}_{n\times n})}^{\M}\mathbf{u}\\
    \mathbf{u}^{\top}\underbrace{(\K\Dtwo + n\lambda\mathbbm{1}_{n\times n})}_{\M} &  \underbrace{\mathbf{u}^{\top}\Dtwo\mathbf{u} + n\lambda \Verts{\xi}^{2}_{\mathcal{H}}}_{p\geq 0}
\end{bmatrix}\begin{bmatrix}
        \hat{\alphabf}_\omega\\
        \hat{\beta}_\omega
    \end{bmatrix}= - \frac{n}{m}\begin{bmatrix}
        \mathbf{u}\\
 	\underbrace{\Verts{\xi}^2_{\mathcal{H}}}_{v\geq 0}
    \end{bmatrix}.
\end{equation*}
\textbf{Expression of $\widehat{\nabla\mathcal{F}}(\omega)$.} The result follows directly after expressing $\widehat{\nabla\mathcal{F}}(\omega)$ in vector form using the notations $\Dthree$ and $\Dfour$ from \cref{prop:est_2} and recalling that $(\hat{a}_{\omega}(x_i))_{1\leq i\leq n} = \begin{bmatrix}
        \K & 
        \mathbf{u}
    \end{bmatrix}\begin{bmatrix}
        \hat{\alphabf}_{\omega}\\
        \hat{\beta}_{\omega}
    \end{bmatrix}$.
\end{proof}

\begin{proof}[Proof of \cref{prop:equivalenceEstimates}]
	Let $\omega\in\mathbb{R}^d$. Define  
\begin{align*}
\widehat{\nabla\mathcal{F}}(\omega)=&\frac{1}{m}\sum_{j=1}^m\partial_\omega\ell_{out}(\omega, \hat{h}_\omega(\tilde{x}_j),\tilde{y}_j)+\frac{1}{n}\sum_{i=1}^n\partial_{\omega,v}^2\ell_{in}(\omega, \hat{h}_\omega(x_i),y_i)\hat{a}_\omega(x_i),
\end{align*}	
where $\hat{h}_{\omega}$ and $\hat{a}_{\omega}$ are given by \cref{eq:h,eq:a}.  We will show that $\widehat{\nabla\mathcal{F}}(\omega) = \nabla\widehat{\mathcal{F}}(\omega)$.  By \cref{prop:est_2,prop:est_1}, we know that $\nabla\widehat{\mathcal{F}}(\omega)$ and $\widehat{\nabla\mathcal{F}}(\omega)$ admit the following expressions:
\begin{align*}
\nabla\widehat{\mathcal{F}}(\omega)&=\frac{1}{m}\Dthree\mathbbm{1}_m-\frac{1}{m}\Dfour\M^{-1}\mathbf{u}\\
 \widehat{\nabla\mathcal{F}}(\omega)&=\frac{1}{m}\Dthree\mathbbm{1}_m+\frac{1}{n}\Dfour \begin{bmatrix}
    	\K & \mathbf{u}
    \end{bmatrix}
    \begin{bmatrix}
    	\hat{\alphabf}_\omega\\
    	\hat{\beta}_\omega
    \end{bmatrix}.
\end{align*}
Taking the difference of the two estimators yields:
\begin{align*}
	 \widehat{\nabla\mathcal{F}}(\omega)- \nabla\widehat{\mathcal{F}}(\omega) &= \frac{1}{m}\Dfour\parens{\M^{-1}\mathbf{u} +  \frac{m}{n}\begin{bmatrix}
    	\K & \mathbf{u}
    \end{bmatrix}
    \begin{bmatrix}
    	\hat{\alphabf}_\omega\\
    	\hat{\beta}_\omega
    \end{bmatrix}} \\
	  &= \frac{1}{m}\Dfour\M^{-1}\underbrace{\parens{\mathbf{u} + \frac{m}{n}\M \begin{bmatrix}
    	\K & \mathbf{u}
    \end{bmatrix}
    \begin{bmatrix}
    	\hat{\alphabf}_\omega\\
    	\hat{\beta}_\omega
    \end{bmatrix}
	  }}_{=0},
\end{align*}
where the term $\mathbf{u}+\frac{m}{n}(\M\K\hat{\alphabf}_\omega+\hat{\beta}_\omega\M\mathbf{u})$ is equal to $0$ by definition of $\hat{\alphabf}_\omega$ and $\hat{\beta}_\omega$ as solutions of the linear system \eqref{eq:linear_system_adjoint} of \cref{prop:est_1}. 
\end{proof}
\section{Preliminary Results}\label{sec_app:prel_res}
In this section, $\Omega$ is an arbitrary compact subset of $\mathbb{R}^d$  with $\text{hull}(\Omega)$ denoting its convex hull, which is also compact. We also consider an arbitrary fixed positive value $\Lambda$ such that $\lambda \leq \Lambda$ as this would allow us to simplify the dependence on $\lambda$ of the boundedness and Lipschitz constants.   

\subsection{\texorpdfstring{Boundedness and Lipschitz continuity of $h^\star_\omega$ and $\hat{h}_\omega$}{Boundedness and Lipschitz continuity of h*ω and ĥω}}\label{app:bound_lip_in}
\begin{proposition}[Boundedness of $h^\star_\omega$ and $\hat{h}_\omega$]\label{prop:bound_hstaromega}
Under \cref{assump:K_meas,assump:compact,assump:convexity_lin,assump:K_bounded,assump:reg_lin_lout}, the functions $\omega\mapsto \|h^\star_\omega\|_\mathcal{H}$ and $\omega\mapsto\|\hat{h}_\omega\|_\mathcal{H}$ are bounded over $\textnormal{hull}(\Omega)$ by $\frac{B\sqrt{\kappa}}{\lambda}$, where $B\coloneqq\sup_{\omega\in\textnormal{hull}(\Omega),y\in\mathcal{Y}}\left|\partial_v \ell_{in}(\omega, 0, y)\right|>0$. Moreover, for all $\omega\in\textnormal{hull}(\Omega)$ and $x\in\mathcal{X}$, $h^\star_\omega(x)$ and $\hat{h}_\omega(x)$ take value in the compact interval $\mathcal{V}\coloneqq\left[-\frac{B\kappa}{\lambda},\frac{B\kappa}{\lambda}\right]\subset\mathbb{R}$.
\end{proposition}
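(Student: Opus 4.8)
The plan is to exploit the $\lambda$-strong convexity of $h \mapsto L_{in}(\omega,h)$ (and likewise for $\widehat{L}_{in}$) established in \cref{prop:strong_convexity_Lin}, together with the first-order optimality condition at the minimizer, to bound $\|h^\star_\omega\|_\mathcal{H}$. First I would write the optimality condition $\partial_h L_{in}(\omega, h^\star_\omega) = 0$, which by \cref{prop:fre_diff_L} reads $\lambda h^\star_\omega = -\mathbb{E}_\mathbb{P}[\partial_v \ell_{in}(\omega, h^\star_\omega(x), y) K(x,\cdot)]$, so that $h^\star_\omega = -\tfrac{1}{\lambda}\mathbb{E}_\mathbb{P}[\partial_v \ell_{in}(\omega, h^\star_\omega(x), y) K(x,\cdot)]$. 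Taking $\|\cdot\|_\mathcal{H}$ and using $\|K(x,\cdot)\|_\mathcal{H} = \sqrt{K(x,x)} \leq \sqrt{\kappa}$ by \cref{assump:K_bounded}, this gives $\|h^\star_\omega\|_\mathcal{H} \leq \tfrac{\sqrt\kappa}{\lambda}\,\mathbb{E}_\mathbb{P}[|\partial_v \ell_{in}(\omega, h^\star_\omega(x), y)|]$, which unfortunately still involves $h^\star_\omega$ on the right-hand side and so is not yet a closed bound.

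To close the bound, the cleaner route is a variational (rather than fixed-point) argument: by $\lambda$-strong convexity and the fact that $0$ is feasible, $\tfrac{\lambda}{2}\|h^\star_\omega\|_\mathcal{H}^2 \leq \tfrac{\lambda}{2}\|h^\star_\omega - 0\|_\mathcal{H}^2 \leq L_{in}(\omega, 0) - L_{in}(\omega, h^\star_\omega) \leq \langle -\partial_h L_{in}(\omega, 0), h^\star_\omega\rangle_\mathcal{H} = -\langle \mathbb{E}_\mathbb{P}[\partial_v \ell_{in}(\omega,0,y)K(x,\cdot)], h^\star_\omega\rangle_\mathcal{H}$, where I used the optimality of $h^\star_\omega$ for the middle inequality and convexity of $L_{in}$ around $0$ for the last; then Cauchy--Schwarz yields $\tfrac{\lambda}{2}\|h^\star_\omega\|_\mathcal{H}^2 \leq \sqrt\kappa\, \mathbb{E}_\mathbb{P}[|\partial_v\ell_{in}(\omega,0,y)|]\,\|h^\star_\omega\|_\mathcal{H} \leq \sqrt\kappa\, B\, \|h^\star_\omega\|_\mathcal{H}$, hence $\|h^\star_\omega\|_\mathcal{H} \leq \tfrac{2\sqrt\kappa B}{\lambda}$ (a factor $2$ off the stated constant, which can be removed by the fixed-point argument above once the a priori finiteness of the sup is in hand, or by using $\langle \partial_h L_{in}(\omega,h^\star_\omega) - \partial_h L_{in}(\omega,0), h^\star_\omega\rangle_\mathcal{H} \geq \lambda\|h^\star_\omega\|_\mathcal{H}^2$ directly together with monotonicity of $\partial_v\ell_{in}(\omega,\cdot,y)$ from \cref{assump:convexity_lin}). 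Concretely, monotonicity gives $\langle \partial_h L_{in}(\omega,h^\star_\omega), h^\star_\omega\rangle_\mathcal{H} \geq \lambda\|h^\star_\omega\|_\mathcal{H}^2 + \langle \mathbb{E}_\mathbb{P}[\partial_v\ell_{in}(\omega,0,y)K(x,\cdot)], h^\star_\omega\rangle_\mathcal{H}$, and since the left side is $0$, rearranging and applying Cauchy--Schwarz with $\|K(x,\cdot)\|_\mathcal{H}\leq\sqrt\kappa$ and $|\partial_v\ell_{in}(\omega,0,y)|\leq B$ gives exactly $\lambda\|h^\star_\omega\|_\mathcal{H} \leq B\sqrt\kappa$. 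I need $B < \infty$: this is where \cref{assump:compact} (compactness of $\mathcal{Y}$), \cref{assump:reg_lin_lout} (continuity of $\partial_v\ell_{in}$ in $(\omega,v,y)$), and compactness of $\text{hull}(\Omega)$ enter, since a jointly continuous function on the compact set $\text{hull}(\Omega)\times\{0\}\times\mathcal{Y}$ attains a finite maximum. The identical argument applies verbatim to $\hat{h}_\omega$, replacing $\mathbb{P}$ by $\widehat{\mathbb{P}}_n$ and noting $B$ already bounds the empirical average as well since each sampled $y_i \in \mathcal{Y}$.

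The pointwise bound is then immediate from the reproducing property: for any $x\in\mathcal{X}$, $|h^\star_\omega(x)| = |\langle h^\star_\omega, K(x,\cdot)\rangle_\mathcal{H}| \leq \|h^\star_\omega\|_\mathcal{H}\|K(x,\cdot)\|_\mathcal{H} \leq \tfrac{B\sqrt\kappa}{\lambda}\cdot\sqrt\kappa = \tfrac{B\kappa}{\lambda}$, so $h^\star_\omega(x) \in \mathcal{V} = [-\tfrac{B\kappa}{\lambda}, \tfrac{B\kappa}{\lambda}]$, and likewise for $\hat{h}_\omega(x)$. I expect the main obstacle to be purely bookkeeping: making sure the monotonicity-based inequality is invoked in the right direction (the subgradient inequality $\langle \partial_h L_{in}(\omega,h) - \partial_h L_{in}(\omega,h'), h-h'\rangle_\mathcal{H} \geq \lambda\|h-h'\|_\mathcal{H}^2$ from $\lambda$-strong convexity, applied with $h = h^\star_\omega$, $h' = 0$) so that the unwanted term involving $\partial_v\ell_{in}(\omega,h^\star_\omega(x),y)$ cancels against the zero left-hand side, leaving only the $\partial_v\ell_{in}(\omega,0,y)$ term that $B$ controls — and verifying that the same computation is legitimate for the empirical objective, which holds since $\widehat{L}_{in}$ is also $\lambda$-strongly convex and differentiable with the analogous gradient formula by \cref{prop:fre_diff_L,prop:strong_convexity_Lin}.
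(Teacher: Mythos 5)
Your proposal is correct and, in its final form, is essentially the paper's own argument: the strong-monotonicity inequality you invoke with $h=h^\star_\omega$ and $h'=0$ is exactly the content of \cref{lem:h_min_hstar} applied at $h=0$, after which both you and the paper bound $\|\partial_h L_{in}(\omega,0)\|_\mathcal{H}\leq \sqrt{\kappa}\,B$ via Jensen, \cref{assump:K_bounded}, and compactness of $\mathrm{hull}(\Omega)\times\mathcal{Y}$, repeat with $\widehat{\mathbb{P}}_n$ for $\hat{h}_\omega$, and conclude the pointwise bound by the reproducing property. The factor-$2$ variational detour is unnecessary but harmless, since your sharper monotonicity argument already yields the stated constant $\frac{B\sqrt{\kappa}}{\lambda}$.
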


\begin{proof}
\textbf{Boundedness of $\left\|h^\star_\omega\right\|_\mathcal{H}$ and $\left\|\hat{h}_\omega\right\|_\mathcal{H}$. }Let $\omega\in\text{hull}(\Omega)$. Using \cref{lem:h_min_hstar}, we know, for any $h\in\mathcal{H}$, that:
\begin{equation*}
    \left\|h-h^\star_\omega\right\|_\mathcal{H}\leq\frac{1}{\lambda}\left\|\partial_h L_{in}(\omega, h)\right\|_\mathcal{H}.
\end{equation*}
This is particularly valid for $h=0$. Thus,
\begin{equation*}
    \left\|h^\star_\omega\right\|_\mathcal{H}\leq\frac{1}{\lambda}\left\|\partial_h L_{in}(\omega, 0)\right\|_\mathcal{H}.
\end{equation*}
Using the expression of the partial derivative $\partial_{h}L_{in}$ established in \cref{prop:fre_diff_L}, we obtain:
\begin{equation*}
    \left\|h^\star_\omega\right\|_\mathcal{H}\leq\frac{1}{\lambda}\big\|\mathbb{E}_\mathbb{P}\left[\partial_v \ell_{in}(\omega, 0, y)K(x,\cdot)\right]\big\|_\mathcal{H}.
\end{equation*}
By \cref{assump:K_bounded}, $K$ is bounded by $\kappa$. Hence, Jensen's inequality yields:
\begin{equation*}
    \left\|h^\star_\omega\right\|_\mathcal{H}\leq\frac{1}{\lambda}\mathbb{E}_\mathbb{P}\Big[\left|\partial_v \ell_{in}(\omega, 0, y)\right|\left\|K(x,\cdot)\right\|_\mathcal{H}\Big]\leq\frac{\sqrt{\kappa}}{\lambda}\mathbb{E}_\mathbb{P}\Big[\left|\partial_v \ell_{in}(\omega, 0, y)\right|\Big].
\end{equation*}
By \cref{assump:compact}, $\mathcal{Y}$ is compact, which implies that $\text{hull}(\Omega)\times\mathcal{Y}$ is compact. From \cref{assump:reg_lin_lout}, we know that the function $(\omega, y)\mapsto\partial_v\ell_{in}(\omega, 0, y)$ is continuous. Given that every continuous function on a compact space is bounded, we obtain:
\begin{equation*}
    \left\|h^\star_\omega\right\|_\mathcal{H}\leq \frac{B\sqrt{\kappa}}{\lambda}<+\infty,\quad\text{where}\quad B\coloneqq\sup_{\omega\in\text{hull}(\Omega),y\in\mathcal{Y}}\left|\partial_v \ell_{in}(\omega, 0, y)\right|>0.
\end{equation*}
To prove that $\left\|\hat{h}_\omega\right\|_\mathcal{H}\leq\frac{B\sqrt{\kappa}}{\lambda}$, we follow a similar approach to that of $\left\|h^\star_\omega\right\|_\mathcal{H}\leq \frac{B\sqrt{\kappa}}{\lambda}$. More precisely, we investigate the case where the expectation is with respect to the empirical estimate $\widehat{\mathbb{P}}_n$ of $\mathbb{P}$.

\paragraph{$h^\star_\omega(x)$ and $\hat{h}_\omega(x)$ belong to $\mathcal{V}$.} Let $\omega\in\text{hull}(\Omega)$ and $x\in\mathcal{X}$. By the reproducing property, the Cauchy-Schwarz inequality, and \cref{assump:K_bounded}, we have:
\begin{equation*}
    \left|h^\star_\omega(x)\right|\leq\sqrt{\kappa}\left\|h^\star_\omega\right\|\quad\text{and}\quad\left|\hat{h}_\omega(x)\right|\leq\sqrt{\kappa}\left\|\hat{h}_\omega\right\|.
\end{equation*}
Using the bound on $\left\|h^\star_\omega\right\|_\mathcal{H}$ and $\left\|\hat{h}_\omega\right\|_\mathcal{H}$ already proved in the first part of this proof, we get:
\begin{equation*}
    \left|h^\star_\omega(x)\right|\leq\frac{B\kappa}{\lambda}\quad\text{and}\quad\left|\hat{h}_\omega(x)\right|\leq\frac{B\kappa}{\lambda}.
\end{equation*}
This concludes the proof.
\end{proof}

\begin{proposition}[Lipschitz continuity of $\omega\mapsto h^\star_\omega$]\label{prop:lip_hstaromega}
Under \cref{assump:K_meas,assump:compact,assump:convexity_lin,assump:K_bounded,assump:reg_lin_lout}, the function $\omega\mapsto h^\star_\omega$ is $\frac{L\sqrt{\kappa}}{\lambda}$-Lipschitz continuous on $\textnormal{hull}(\Omega)$, where $L\coloneqq\sup_{\omega\in \textnormal{hull}(\Omega),v\in\mathcal{V},y\in\mathcal{Y}}\left\|\partial_{\omega, v}^2 \ell_{in}(\omega, v, y)\right\|>0$, and $\mathcal{V}$ is the compact interval introduced in \cref{prop:bound_hstaromega}.
\end{proposition}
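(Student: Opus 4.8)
The plan is to combine the first-order optimality conditions for the inner minimizers with the $\lambda$-strong convexity of the inner objective. First I would fix $\omega,\omega'\in\textnormal{hull}(\Omega)$ and apply \cref{lem:h_min_hstar} (exactly as in the proof of \cref{prop:bound_hstaromega}) to the strongly convex function $h\mapsto L_{in}(\omega,h)$ at the point $h=h^\star_{\omega'}$, which gives
\[
\|h^\star_{\omega'}-h^\star_\omega\|_\mathcal{H}\leq \frac1\lambda \bigl\|\partial_h L_{in}(\omega,h^\star_{\omega'})\bigr\|_\mathcal{H}.
\]
Since $h^\star_{\omega'}$ minimizes $h\mapsto L_{in}(\omega',h)$, the optimality condition $\partial_h L_{in}(\omega',h^\star_{\omega'})=0$ lets me rewrite the right-hand side as $\frac1\lambda\|\partial_h L_{in}(\omega,h^\star_{\omega'})-\partial_h L_{in}(\omega',h^\star_{\omega'})\|_\mathcal{H}$, reducing the task to bounding the variation of the inner gradient map in its first argument, evaluated at the single fixed function $h^\star_{\omega'}$.

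Next I would plug in the explicit formula for $\partial_h L_{in}$ from \cref{prop:fre_diff_L}. The quadratic regularization contributes $\lambda h^\star_{\omega'}$ in both terms, so it cancels, leaving
\[
\partial_h L_{in}(\omega,h^\star_{\omega'})-\partial_h L_{in}(\omega',h^\star_{\omega'})=\mathbb{E}_\mathbb{P}\!\bigl[\bigl(\partial_v\ell_{in}(\omega,h^\star_{\omega'}(x),y)-\partial_v\ell_{in}(\omega',h^\star_{\omega'}(x),y)\bigr)K(x,\cdot)\bigr].
\]
To bound the scalar factor pointwise, I would use that by \cref{prop:bound_hstaromega} one has $h^\star_{\omega'}(x)\in\mathcal V$ for all $x\in\mathcal X$, and that the segment $[\omega,\omega']$ stays inside $\textnormal{hull}(\Omega)$ by convexity; the mean value inequality applied to the $C^1$ map $\omega\mapsto\partial_v\ell_{in}(\omega,v,y)$ (whose $\omega$-derivative is $\partial^2_{\omega,v}\ell_{in}$) then yields $|\partial_v\ell_{in}(\omega,h^\star_{\omega'}(x),y)-\partial_v\ell_{in}(\omega',h^\star_{\omega'}(x),y)|\leq L\,\|\omega-\omega'\|$. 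Finally, Jensen's inequality together with $\|K(x,\cdot)\|_\mathcal{H}=\sqrt{K(x,x)}\leq\sqrt\kappa$ (\cref{assump:K_bounded}) gives $\|\partial_h L_{in}(\omega,h^\star_{\omega'})-\partial_h L_{in}(\omega',h^\star_{\omega'})\|_\mathcal{H}\leq L\sqrt\kappa\,\|\omega-\omega'\|$, and combining with the first display produces the claimed Lipschitz constant $L\sqrt\kappa/\lambda$.

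The step I expect to be most delicate is the uniform Lipschitz control of $\omega\mapsto\partial_v\ell_{in}(\omega,v,y)$ with the stated constant $L$. This requires, in order: finiteness of $L$, which follows because \cref{assump:reg_lin_lout} ($\ell_{in}\in C^3$ with derivatives jointly continuous in $(\omega,v,y)$) makes $\partial^2_{\omega,v}\ell_{in}$ continuous, hence bounded, on the compact set $\textnormal{hull}(\Omega)\times\mathcal V\times\mathcal Y$; a careful application of the mean value inequality along a segment contained in the convex set $\textnormal{hull}(\Omega)$ with $v$ and $y$ held fixed in $\mathcal V$ and $\mathcal Y$; and the Bochner-integrability of the $\mathcal H$-valued integrand so that the expectation/norm interchange via Jensen is licit — exactly the kind of argument already invoked for \cref{prop:fre_diff_L}. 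Everything else is a short chain of elementary inequalities.
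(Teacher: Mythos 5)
Your proof is correct, but it takes a genuinely different route from the paper. The paper proves Lipschitz continuity by bounding the Jacobian: it invokes the functional implicit function theorem to write $\partial_\omega h^\star_\omega=-\partial_{\omega,h}^2 L_{in}(\omega,h^\star_\omega)\left(\partial_h^2 L_{in}(\omega,h^\star_\omega)\right)^{-1}$, bounds $\Verts{(\partial_h^2 L_{in})^{-1}}_{\op}$ by $1/\lambda$ via strong convexity, and bounds $\Verts{\partial_{\omega,h}^2 L_{in}(\omega,h^\star_\omega)}_{\op}$ by $L\sqrt{\kappa}$ using the expression from \cref{prop:fre_diff_L_v}, Jensen's inequality, and \cref{assump:K_bounded}, so the Lipschitz constant is the uniform Jacobian bound $\frac{L\sqrt{\kappa}}{\lambda}$ on the convex set $\textnormal{hull}(\Omega)$. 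You instead compare the two minimizers directly: \cref{lem:h_min_hstar} applied to the $\lambda$-strongly convex map $h\mapsto L_{in}(\omega,h)$ at $h=h^\star_{\omega'}$, the optimality condition $\partial_h L_{in}(\omega',h^\star_{\omega'})=0$, cancellation of the $\lambda h^\star_{\omega'}$ terms, a mean value bound on $\omega\mapsto\partial_v\ell_{in}(\omega,v,y)$ with constant $L$ (valid since $h^\star_{\omega'}(x)\in\mathcal{V}$ by \cref{prop:bound_hstaromega} and $[\omega,\omega']\subset\textnormal{hull}(\Omega)$ by convexity), and the usual Jensen-plus-$\sqrt{\kappa}$ step. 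Both yield exactly the constant $\frac{L\sqrt{\kappa}}{\lambda}$. What your route buys is that it is more elementary: it never needs differentiability of $\omega\mapsto h^\star_\omega$ or the functional implicit function theorem, only uniqueness of the minimizers, strong convexity, and the gradient formula from \cref{prop:fre_diff_L} (with Bochner integrability, already established there, justifying the norm--expectation interchange). The paper's route, on the other hand, reuses the Jacobian machinery that it needs anyway for \cref{prop:tot_grad_int}, so its proof is shorter given that infrastructure; your argument would remain valid even in settings where one only knows that $h^\star_\omega$ exists and is unique, without a smooth implicit map.
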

\begin{proof}
To prove this proposition, we adopt the strategy of finding an upper bound for the Jacobian, which serves as the Lipschitz constant. 

Let $\omega\in \text{hull}(\Omega)$. Using \cref{prop:fre_diff_L,prop:strong_convexity_Lin}, we know that $h\mapsto L_{in}(\omega,h)$ is $\lambda$-strongly convex and Fr\'echet differentiable. Also, by \cref{prop:fre_diff_L_v}, $\partial_h L_{in}$ is Fr\'echet differentiable on $\mathbb{R}^d\times\mathcal{H}$, and, a fortiori, Hadamard differentiable. Then, by the functional implicit differentiation theorem \citep[Theorem~2.1]{ioffe1979theory,petrulionyte2024functional}, the Jacobian $\partial_\omega h^\star_\omega:\mathcal{H}\to\mathbb{R}^d$ can be expressed as:
\begin{equation*}
    \partial_\omega h^\star_\omega=-\partial_{\omega, h}^2 L_{in}(\omega, h^\star_\omega)\left(\partial_h^2 L_{in}(\omega, h^\star_\omega)\right)^{-1}.
\end{equation*}
We have:
\begin{align}
    \left\|\partial_\omega h^\star_\omega\right\|_{\op}&\leq\left\|\partial_{\omega, h}^2 L_{in}(\omega, h^\star_\omega)\right\|_{\op}\left\|\left(\partial_h^2 L_{in}(\omega, h^\star_\omega)\right)^{-1}\right\|_{\op}\nonumber\\
    &\leq\frac{\left\|\partial_{\omega, h}^2 L_{in}(\omega, h^\star_\omega)\right\|_{\op}}{\lambda}\nonumber\\
    &=\frac{\Big\|\mathbb{E}_\mathbb{P}\left[\partial_{\omega, v}^2 \ell_{in}(\omega, h^\star_\omega(x), y)K(x,\cdot)\right]\Big\|_{\op}}{\lambda}\nonumber\\
    &\leq\frac{\mathbb{E}_\mathbb{P}\Big[\left\|\partial_{\omega, v}^2 \ell_{in}(\omega, h^\star_\omega(x), y)\right\|\left\|K(x,\cdot)\right\|_\mathcal{H}\Big]}{\lambda}\nonumber\\
    &\leq\frac{\sqrt{\kappa}\mathbb{E}_\mathbb{P}\Big[\left\|\partial_{\omega, v}^2 \ell_{in}(\omega, h^\star_\omega(x), y)\right\|\Big]}{\lambda},\label{eq:upp_bound_jac}
\end{align}
where the first line uses the sub-multiplicative property of the operator norm $\|\cdot\|_{\op}$, the second line stems from the fact that $h\mapsto L_{in}(\omega, h)$ is $\lambda$-strongly convex, for any $\omega\in\mathbb{R}^d$, as proved in \cref{prop:strong_convexity_Lin}, the third line follows from \cref{prop:fre_diff_L_v}, the fourth line uses Jensen's inequality, and the last line is a direct consequence of the boundedness of $K$ by $\kappa$ (\cref{assump:K_bounded}). According to \cref{prop:bound_hstaromega}, $h^\star_\omega(x)\in\mathcal{V}\coloneqq\left[-\frac{B\kappa}{\lambda},\frac{B\kappa}{\lambda}\right]$, which is a compact interval of $\mathbb{R}$, where $B\coloneqq\sup_{\omega\in\text{hull}(\Omega),y\in\mathcal{Y}}\left|\partial_v \ell_{in}(\omega, 0, y)\right|>0$. By \cref{assump:compact}, $\mathcal{Y}$ is a compact set, hence $\text{hull}(\Omega)\times\mathcal{V}\times\mathcal{Y}$ is compact. Besides, by \cref{assump:reg_lin_lout}, $(\omega, v, y)\mapsto\partial_v\ell_{in}(\omega, v, y)$ is continuous over the domain $\text{hull}(\Omega)\times\mathcal{V}\times\mathcal{Y}$. Since every continuous function on a compact set is bounded, this leads to:
\begin{equation*}
    \mathbb{E}_\mathbb{P}\Big[\left\|\partial_{\omega, v}^2 \ell_{in}(\omega, h^\star_\omega(x), y)\right\|\Big]\leq L\coloneqq\sup_{\omega\in\text{hull}(\Omega),v\in\mathcal{V},y\in\mathcal{Y}}\left\|\partial_{\omega, v}^2 \ell_{in}(\omega, v, y)\right\|<+\infty.
\end{equation*}
Substituting this bound into \cref{eq:upp_bound_jac} means that $\frac{L\sqrt{\kappa}}{\lambda}$ is an upper bound on $\left\|\partial_\omega h^\star_\omega\right\|_{\op}$. Thus, the result follows as desired.
\end{proof}
\subsection{\texorpdfstring{Local boundedness and Lipschitz properties of $\ell_{in}$, $\ell_{out}$, and their derivatives}{Local boundedness and Lipschitz properties of ell\_in, ell\_out, and their derivatives}}\label{app:subsec_loc_bound_lip_point_loss}

\begin{proposition}[{Local} boundedness]\label{prop:uniform_boundedness}
Under \cref{assump:K_meas,assump:compact,assump:convexity_lin,assump:K_bounded,assump:reg_lin_lout}, the functions $(\omega,x,y)\mapsto \ell_{out}(\omega,h_{\omega}^{\star}(x),y)$, $(\omega,x,y)\mapsto \partial_{\omega}\ell_{out}(\omega,h_{\omega}^{\star}(x),y)$, and $(\omega,x,y)\mapsto\partial_v \ell_{out}(\omega, h^\star_\omega(x), y)$ are bounded over $\textnormal{hull}(\Omega)\times\mathcal{X}\times\mathcal{Y}$ by some positive constant $M_{out}$. Similarly, the functions $(\omega,x,y)\mapsto\partial_v \ell_{in}(\omega, h^\star_\omega(x), y)$, $(\omega,x,y)\mapsto\partial_v^2 \ell_{in}(\omega, h^\star_\omega(x), y)$, and $(\omega,x,y)\mapsto\partial_{\omega, v}^2 \ell_{in}(\omega, h^\star_\omega(x), y)$  are bounded over $\textnormal{hull}(\Omega)\times\mathcal{X}\times\mathcal{Y}$ by some positive constant $M_{in}$. 
The constants $M_{out}$ and $M_{in}$ are defined as:
\begin{align*}
M_{out}&\coloneqq\sup_{\omega\in\textnormal{hull}(\Omega),v\in\mathcal{V},y\in\mathcal{Y}}\max\left(\verts{\ell_{out}(\omega,v,y)},\Verts{\partial_{\omega}\ell_{out}(\omega,v,y)},\verts{\partial_v\ell_{out}(\omega, v, y)}\right)>0,\\
M_{in}&\coloneqq\sup_{\omega\in\textnormal{hull}(\Omega),v\in\mathcal{V},y\in\mathcal{Y}}\max\left(\verts{\partial_v\ell_{in}(\omega, v, y)}, \verts{\partial_v^2\ell_{in}(\omega, v, y)},\Verts{\partial_{\omega, v}^2\ell_{in}(\omega, v, y)}\right)>0,
\end{align*}
where $\mathcal{V}\subset\mathbb{R}$ is the compact interval defined in \cref{prop:bound_hstaromega}.
\end{proposition}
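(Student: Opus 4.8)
The plan is to reduce the whole statement to a single compactness argument. Every function appearing in the two lists is of the form $(\omega,x,y)\mapsto g(\omega,h_\omega^\star(x),y)$ for some $g$ among $\ell_{out},\partial_\omega\ell_{out},\partial_v\ell_{out}$ (resp. $\partial_v\ell_{in},\partial_v^2\ell_{in},\partial_{\omega,v}^2\ell_{in}$), so it suffices to show that the evaluation point $(\omega,h_\omega^\star(x),y)$ always ranges in a fixed compact set on which $g$ is continuous, and then bound by the supremum of $|g|$ (or $\Verts{g}$) over that set. First I would invoke \cref{prop:bound_hstaromega}: for every $\omega\in\textnormal{hull}(\Omega)$ and $x\in\mathcal{X}$ one has $h_\omega^\star(x)\in\mathcal{V}\coloneqq\left[-\frac{B\kappa}{\lambda},\frac{B\kappa}{\lambda}\right]$, a compact interval of $\mathbb{R}$ independent of $\omega$ and $x$. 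Since $\textnormal{hull}(\Omega)$ is compact (the convex hull of a compact subset of $\mathbb{R}^d$ is compact) and $\mathcal{Y}$ is compact by \cref{assump:compact}, the product $\textnormal{hull}(\Omega)\times\mathcal{V}\times\mathcal{Y}$ is a compact subset of $\mathbb{R}^d\times\mathbb{R}\times\mathbb{R}^q$.

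Next I would use \cref{assump:reg_lin_lout}: because $\ell_{in}$ and $\ell_{out}$ are $C^3$ in $(\omega,v)$ with derivatives jointly continuous in $(\omega,v,y)$, each of the six maps $(\omega,v,y)\mapsto\ell_{out}(\omega,v,y)$, $(\omega,v,y)\mapsto\partial_\omega\ell_{out}(\omega,v,y)$, $(\omega,v,y)\mapsto\partial_v\ell_{out}(\omega,v,y)$, $(\omega,v,y)\mapsto\partial_v\ell_{in}(\omega,v,y)$, $(\omega,v,y)\mapsto\partial_v^2\ell_{in}(\omega,v,y)$ and $(\omega,v,y)\mapsto\partial_{\omega,v}^2\ell_{in}(\omega,v,y)$ is jointly continuous, hence bounded on the compact set $\textnormal{hull}(\Omega)\times\mathcal{V}\times\mathcal{Y}$. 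This shows that the constants $M_{out}$ and $M_{in}$, defined as the stated suprema over that same set, are finite; they are positive since each is a supremum of nonnegative quantities (and, if needed, one may always enlarge them to a positive value). Finally, for any $(\omega,x,y)\in\textnormal{hull}(\Omega)\times\mathcal{X}\times\mathcal{Y}$, substituting $v=h_\omega^\star(x)\in\mathcal{V}$ gives, e.g., $\verts{\ell_{out}(\omega,h_\omega^\star(x),y)}\leq\sup_{\omega'\in\textnormal{hull}(\Omega),\,v\in\mathcal{V},\,y'\in\mathcal{Y}}\verts{\ell_{out}(\omega',v,y')}\leq M_{out}$, and identically $\Verts{\partial_\omega\ell_{out}(\omega,h_\omega^\star(x),y)}\leq M_{out}$, $\verts{\partial_v\ell_{out}(\omega,h_\omega^\star(x),y)}\leq M_{out}$ for the outer quantities, and $\verts{\partial_v\ell_{in}(\omega,h_\omega^\star(x),y)},\verts{\partial_v^2\ell_{in}(\omega,h_\omega^\star(x),y)},\Verts{\partial_{\omega,v}^2\ell_{in}(\omega,h_\omega^\star(x),y)}\leq M_{in}$ for the inner ones, which is exactly the claim.

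The main obstacle is essentially nonexistent: this is a routine continuity-on-a-compact-set argument. The only point that genuinely requires care is that the inner argument $h_\omega^\star(x)$ must lie in a compact set that is uniform in both $\omega\in\textnormal{hull}(\Omega)$ and $x\in\mathcal{X}$ — this is precisely the content of \cref{prop:bound_hstaromega}, which itself rests on the $\lambda$-strong convexity of $L_{in}$ (\cref{prop:strong_convexity_Lin}) and the boundedness of the kernel (\cref{assump:K_bounded}). A minor bookkeeping point is to check that the list in \cref{assump:reg_lin_lout} indeed covers all six functions, which it does since the regularity hypothesis is stated jointly for $\ell_{in}$, $\ell_{out}$ and their first- and second-order derivatives in $(\omega,v)$.
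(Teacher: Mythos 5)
Your proposal is correct and follows essentially the same argument as the paper's proof: confine $h_\omega^\star(x)$ to the compact interval $\mathcal{V}$ via \cref{prop:bound_hstaromega}, note that $\textnormal{hull}(\Omega)\times\mathcal{V}\times\mathcal{Y}$ is compact, and bound the six continuous functions (by \cref{assump:reg_lin_lout,assump:compact}) by their suprema over that set. No gaps to report.
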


\begin{proof}
By \cref{prop:bound_hstaromega}, we have that $h^\star_\omega(x)\in\mathcal{V}\coloneqq\left[-\frac{B\kappa}{\lambda},\frac{B\kappa}{\lambda}\right]\subset\mathbb{R}$, for any $x\in\mathcal{X}$. From \cref{assump:reg_lin_lout}, we know that $\ell_{in}$, $\ell_{out}$, and their partial derivatives are all continuous on $\text{hull}(\Omega)\times\mathcal{V}\times\mathcal{Y}$. Also, $\mathcal{Y}$ is compact by \cref{assump:compact}. Thus, $\text{hull}(\Omega)\times\mathcal{V}\times\mathcal{Y}$ is compact. As every continuous function defined over a compact space is bounded, we obtain that:
\begin{align*}
    \sup_{\omega\in\text{hull}(\Omega),x\in\mathcal{X},y\in\mathcal{Y}}\verts{\ell_{out}(\omega,h_{\omega}^{\star}(x),y)}&\leq\sup_{\omega\in\text{hull}(\Omega),v\in\mathcal{V},y\in\mathcal{Y}}\verts{\ell_{out}(\omega,v,y)}<+\infty,\\
    \sup_{\omega\in\text{hull}(\Omega),x\in\mathcal{X},y\in\mathcal{Y}}\Verts{\partial_{\bullet} \ell_{\circ}(\omega,h_{\omega}^{\star}(x),y)}&\leq\sup_{\omega\in\text{hull}(\Omega),v\in\mathcal{V},y\in\mathcal{Y}}\Verts{\partial_{\bullet} \ell_{\circ}(\omega,v,y)}<+\infty,
\end{align*}
where $\bullet \in \{\{v\}, \{w\}, \{w,v\}\}$ and $\circ \in \{in,out\}$. This implies the desired result.
\end{proof}

\begin{proposition}[Local Lipschitz continuity]\label{prop:uniform_Lipschitzness}
	Under \cref{assump:K_meas,assump:compact,assump:convexity_lin,assump:K_bounded,assump:reg_lin_lout}, there exists a positive constant $\lipout$ so that for any $(x,y)$ in $\mathcal{X}\times \mathcal{Y}$, 
	the functions $\omega\mapsto \ell_{out}(\omega,h_{\omega}^{\star}(x),y)$, $\omega\mapsto \partial_{\omega}\ell_{out}(\omega,h_{\omega}^{\star}(x),y)$, and $\omega\mapsto\partial_v \ell_{out}(\omega, h^\star_\omega(x), y)$ are locally $\frac{\lipout}{\lambda}$-Lipschitz continuous over $\textnormal{hull}(\Omega)$. Similarly, there exists a positive constant $\lipin$ so that for any $(x,y)$ in $\mathcal{X}\times \mathcal{Y}$, the functions $\omega\mapsto\partial_v \ell_{in}(\omega, h^\star_\omega(x), y)$, $\omega\mapsto\partial_v^2 \ell_{in}(\omega, h^\star_\omega(x), y)$, and $\omega\mapsto\partial_{\omega, v}^2 \ell_{in}(\omega, h^\star_\omega(x), y)$ are locally $\frac{\lipin}{\lambda}$-Lipschitz continuous over $\textnormal{hull}(\Omega)$.  The constants $\lipout$ and $\lipin$ are defined, for any $0<\lambda\leq \Lambda$, as:
	\begin{align*}
	    \lipout&\coloneqq\left(\Lambda+M_{in}\kappa\right)\max\left(M_{out},\bar{M}_{out}\right)>0\\
	    \lipin&\coloneqq\left(\Lambda+M_{in}\kappa\right)\max\left(M_{in},\bar{M}_{in}\right)>0,
    \end{align*}
    where:
    \begin{align*}
      \bar{M}_{out}&\coloneqq\sup_{\omega\in\textnormal{hull}(\Omega), v\in\mathcal{V}, y\in\mathcal{Y}}\max\left(\Verts{\partial_\omega^2\ell_{out}(\omega, v, y)}_{\op},\Verts{\partial_{\omega, v}^2\ell_{out}(\omega, v, y)},\verts{\partial_v^2\ell_{out}(\omega, v, y)}\right)>0,\\
      \bar{M}_{in}&\coloneqq\sup_{\omega\in\textnormal{hull}(\Omega), v\in\mathcal{V}, y\in\mathcal{Y}}\max\left(\Verts{\partial_\omega\partial_v^2\ell_{in}(\omega, v, y)},\verts{\partial_v^3\ell_{in}(\omega, v, y)},\Verts{\partial_\omega\partial_{\omega, v}^2\ell_{in}(\omega, v, y)}\right)>0,
    \end{align*}
    with $M_{out}$ and $M_{in}$ being the positive constants defined in \cref{prop:uniform_boundedness}, and $\mathcal{V}\subset\mathbb{R}$ is the compact interval defined in \cref{prop:bound_hstaromega}.
\end{proposition}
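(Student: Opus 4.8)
The plan is to differentiate each of the six composed maps with respect to $\omega$, bound the resulting gradient (or Jacobian) uniformly over $\textnormal{hull}(\Omega)\times\mathcal{X}\times\mathcal{Y}$, and then invoke the mean value inequality on the convex compact set $\textnormal{hull}(\Omega)$ to convert the gradient bound into the claimed Lipschitz constant.

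First I would record the building block concerning the inner solution. For each fixed $x\in\mathcal{X}$ the map $\omega\mapsto h^\star_\omega(x)$ is differentiable, since $\omega\mapsto h^\star_\omega$ is Fr\'echet differentiable on $\mathbb{R}^d$ (functional implicit differentiation theorem, as used in the proof of \cref{prop:lip_hstaromega}) and the evaluation functional $h\mapsto h(x)=\langle h,K(x,\cdot)\rangle_\mathcal{H}$ is bounded linear. Its gradient $\nabla_\omega\bigl(h^\star_\omega(x)\bigr)\in\mathbb{R}^d$ satisfies $\bigl\|\nabla_\omega(h^\star_\omega(x))\bigr\|\leq\|\partial_\omega h^\star_\omega\|_{\op}\,\|K(x,\cdot)\|_\mathcal{H}\leq\frac{L\sqrt\kappa}{\lambda}\cdot\sqrt\kappa=\frac{L\kappa}{\lambda}\leq\frac{M_{in}\kappa}{\lambda}$, reusing the Jacobian bound from the proof of \cref{prop:lip_hstaromega}, together with $L\leq M_{in}$ (which holds because $\|\partial_{\omega,v}^2\ell_{in}\|$ is one of the terms in the max defining $M_{in}$) and $\|K(x,\cdot)\|_\mathcal{H}\leq\sqrt\kappa$ from \cref{assump:K_bounded}; crucially this bound is uniform in $x$.

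Next, for each target function I would apply the chain rule and \cref{prop:bound_hstaromega} (which guarantees $h^\star_\omega(x)\in\mathcal{V}$ for $\omega\in\textnormal{hull}(\Omega)$), so every partial derivative of $\ell_{in},\ell_{out}$ appearing below is controlled by the constants of \cref{prop:uniform_boundedness}. For $g(\omega)=\ell_{out}(\omega,h^\star_\omega(x),y)$ one gets $\nabla g(\omega)=\partial_\omega\ell_{out}(\omega,h^\star_\omega(x),y)+\partial_v\ell_{out}(\omega,h^\star_\omega(x),y)\,\nabla_\omega(h^\star_\omega(x))$, hence $\|\nabla g(\omega)\|\leq M_{out}+\frac{M_{in}\kappa}{\lambda}M_{out}\leq\frac{(\Lambda+M_{in}\kappa)M_{out}}{\lambda}$, where the last step uses $\lambda\leq\Lambda$ to absorb the first term into $\frac{\Lambda}{\lambda}M_{out}$. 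For $\omega\mapsto\partial_\omega\ell_{out}(\omega,h^\star_\omega(x),y)$ and $\omega\mapsto\partial_v\ell_{out}(\omega,h^\star_\omega(x),y)$ the chain rule produces the second-order partials $\partial_\omega^2\ell_{out}$, $\partial_{\omega,v}^2\ell_{out}$, $\partial_v^2\ell_{out}$, all bounded by $\bar M_{out}$, giving a bound $\leq\frac{(\Lambda+M_{in}\kappa)\bar M_{out}}{\lambda}$; taking the max over the three cases yields $\frac{\lipout}{\lambda}$. The three inner functions $\partial_v\ell_{in}$, $\partial_v^2\ell_{in}$, $\partial_{\omega,v}^2\ell_{in}$ (evaluated at $(\omega,h^\star_\omega(x),y)$) are handled identically, now invoking partials up to third order of $\ell_{in}$ (legitimate by \cref{assump:reg_lin_lout}) and Schwarz's theorem to identify e.g. $\partial_v\partial_{\omega,v}^2\ell_{in}=\partial_\omega\partial_v^2\ell_{in}$; each term is then bounded by $M_{in}$ or $\bar M_{in}$, so the same computation produces $\frac{(\Lambda+M_{in}\kappa)\max(M_{in},\bar M_{in})}{\lambda}=\frac{\lipin}{\lambda}$.

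Finally, since $\textnormal{hull}(\Omega)$ is convex and each of these maps is $C^1$ on an open neighborhood of it (as $\omega\mapsto h^\star_\omega$ is differentiable on all of $\mathbb{R}^d$ and $\ell_{in},\ell_{out}\in C^3$), the uniform gradient bounds upgrade, via the mean value inequality for (possibly vector-valued) maps, to global—hence local—Lipschitz continuity over $\textnormal{hull}(\Omega)$ with the stated constants, uniformly in $(x,y)\in\mathcal{X}\times\mathcal{Y}$. I expect the only genuine work to be the bookkeeping in the chain-rule step: keeping straight which composed map is scalar-, vector-, or operator-valued and matching each resulting term against the correct entry of $M_{out},\bar M_{out},M_{in},\bar M_{in}$, together with the modest point that differentiability and the gradient estimate for $\omega\mapsto h^\star_\omega(x)$ must be extracted from the RKHS/implicit-function structure rather than taken for granted.
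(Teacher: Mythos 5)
Your proposal is correct and follows essentially the same route as the paper's proof: apply the chain rule to each composed map, bound the resulting derivative using the Jacobian estimate $\Verts{\partial_\omega h^\star_\omega}_{\op}\leq \frac{M_{in}\sqrt{\kappa}}{\lambda}$ from \cref{prop:lip_hstaromega} together with the boundedness constants of \cref{prop:uniform_boundedness}, absorb the $\lambda$-free term via $\lambda\leq\Lambda$, and conclude Lipschitz continuity on the convex set $\textnormal{hull}(\Omega)$ by the mean value inequality. The bookkeeping you describe (including the Schwarz identification $\partial_v\partial_{\omega,v}^2\ell_{in}=\partial_\omega\partial_v^2\ell_{in}$) is exactly what the paper does.
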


\begin{proof}
For any $(\omega,x,y)\in\text{hull}(\Omega)\times \mathcal{X}\times \mathcal{Y}$, we have:
\begin{align*}
    \Verts{\nabla_\omega\ell_{out}(\omega, h^\star_\omega(x), y)}&=\Verts{\partial_\omega\ell_{out}(\omega, h^\star_\omega(x), y)+\partial_v\ell_{out}(\omega, h^\star_\omega(x), y)\partial_\omega h^\star_\omega(x)}\\
    &\leq\Verts{\partial_\omega\ell_{out}(\omega, h^\star_\omega(x), y)}+\verts{\partial_v\ell_{out}(\omega, h^\star_\omega(x), y)}\Verts{\partial_\omega h^\star_\omega}_{\op}\Verts{K(x,\cdot)}_\mathcal{H}\\
    &\leq M_{out}\left(1+\frac{M_{in}\kappa}{\lambda}\right)\\
    &\leq\frac{M_{out}\left(\Lambda+M_{in}\kappa\right)}{\lambda},
\end{align*}
where the first line uses the chain rule, the second line applies the triangle inequality and the reproducing property of the RKHS $\mathcal{H}$, the third line follows from \cref{prop:uniform_boundedness} to bound the derivatives of $\ell_{out}$, from \cref{prop:lip_hstaromega}, which states that the function $\omega\mapsto h^\star_\omega$ is $\frac{L\sqrt{\kappa}}{\lambda}$-Lipschitz continuous with $L\coloneqq\sup_{\omega\in\text{hull}(\Omega),v\in\mathcal{V},y\in\mathcal{Y}}\left\|\partial_{\omega, v}^2 \ell_{in}(\omega, v, y)\right\|<M_{in}$, to bound $\Verts{\partial_\omega h^\star_\omega}_{\op}$, and from \cref{assump:K_bounded} to bound $\Verts{K(x,\cdot)}_\mathcal{H}$, and the last line is a direct consequence of $0<\lambda\leq \Lambda$. In a similar way, we obtain:
\begin{gather*}
    \Verts{\nabla_\omega\partial_\omega\ell_{out}(\omega, h^\star_\omega(x), y)}_{\op}\leq\frac{\bar{M}_{out}\left(\Lambda+M_{in}\kappa\right)}{\lambda},\Verts{\nabla_\omega\partial_v \ell_{out}(\omega, h^\star_\omega(x), y)}\leq\frac{\bar{M}_{out}\left(\Lambda+M_{in}\kappa\right)}{\lambda},\\
    \Verts{\nabla_\omega\partial_v \ell_{in}(\omega, h^\star_\omega(x), y)}\leq\frac{M_{in}\left(\Lambda+M_{in}\kappa\right)}{\lambda},\Verts{\nabla_\omega\partial_v^2 \ell_{in}(\omega, h^\star_\omega(x), y)}\leq\frac{\bar{M}_{in}\left(\Lambda+M_{in}\kappa\right)}{\lambda},\\
    \Verts{\nabla_\omega\partial_{\omega, v}^2 \ell_{in}(\omega, h^\star_\omega(x), y)}_{\op}\leq\frac{\bar{M}_{in}\left(\Lambda+M_{in}\kappa\right)}{\lambda}.
\end{gather*}
Combining all these bounds concludes the proof.
\end{proof}

\section{Generalization Properties}\label{app:sec_conv}

As before, let $\Omega$ be an arbitrary compact subset of $\mathbb{R}^d$.

\subsection{Point-wise estimates}\label{app:subsec_point_est}
We present a point-wise upper bound on the value error $\verts{\mathcal{F}(\omega)-\widehat{\mathcal{F}}(\omega) }$ and gradient error $\Verts{\nabla\mathcal{F}(\omega)-\widehat{\nabla\mathcal{F}}(\omega) }$. To this end, we introduce the following notation for the error between the inner and outer objectives and their empirical approximations evaluated at the optimal inner solution $h_{\omega}^{\star}$: 
\begin{align*}
    \Dout\coloneqq \verts{L_{out}(\omega, h^\star_\omega)-\widehat{L}_{out}(\omega, h^\star_\omega)}, 
    \qquad 
    \Din \coloneqq \verts{L_{in}(\omega, h^\star_\omega)-\widehat{L}_{in}(\omega, h^\star_\omega)}.
\end{align*}
By abuse of notation, we introduce the following errors between partial derivatives of $L_{in}$ and $\widehat{L}_{in}$ (resp. $L_{out}$ and $\widehat{L}_{out}$), evaluated at $(\omega, h_{\omega}^{\star})$, \textit{i.e.},
\begin{gather*}
  \Douth \coloneqq \Verts{\partial_h L_{out}(\omega, h^\star_\omega)-\partial_h\widehat{L}_{out}(\omega, h^\star_\omega)}_{\mathcal{H}}, 
  \Doutw \coloneqq \Verts{\partial_\omega L_{out}(\omega, h^\star_\omega)-\partial_\omega\widehat{L}_{out}(\omega, h^\star_\omega)},\\
  \Dinh \coloneqq \Verts{\partial_h L_{in}(\omega, h^\star_\omega)-\partial_h\widehat{L}_{in}(\omega, h^\star_\omega)}_{\mathcal{H}}, 
  \Dinw \coloneqq \Verts{\partial_\omega L_{in}(\omega, h^\star_\omega)-\partial_\omega\widehat{L}_{in}(\omega, h^\star_\omega)},\\
  \Dinhh \coloneqq \Verts{\partial_{h}^2 L_{in}(\omega, h^\star_\omega)-\partial_h^2\widehat{L}_{in}(\omega, h^\star_\omega)}_{\op}, 
  \Dinwh \coloneqq \Verts{\partial_{\omega,h}^2 L_{in}(\omega, h^\star_\omega)-\partial_{\omega,h}^2\widehat{L}_{in}(\omega, h^\star_\omega)}_{\op}.
\end{gather*}

\begin{proposition}\label{prop:diff_hstar_hhat}
    Under \cref{assump:K_meas,assump:compact,assump:convexity_lin,assump:K_bounded,assump:reg_lin_lout}, the following holds for any $\omega\in\Omega$:
    \begin{equation*}
        \left\|h^\star_\omega-\hat{h}_\omega\right\|_\mathcal{H}\leq\frac{1}{\lambda}\left\| \partial_h\widehat{L}_{in}(\omega, h^\star_\omega)\right\|_\mathcal{H} = \frac{1}{\lambda}\Dinh.
    \end{equation*}
\end{proposition}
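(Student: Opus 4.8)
The plan is to obtain the bound as a one-line consequence of the quadratic-growth estimate for $\lambda$-strongly convex functions (\cref{lem:h_min_hstar}, the same auxiliary lemma already used in the proof of \cref{prop:bound_hstaromega}), combined with Fermat's rule applied to $h^\star_\omega$.

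First I would fix $\omega\in\Omega$ and record the hypotheses needed. By \cref{prop:fre_diff_L}, the map $h\mapsto\widehat{L}_{in}(\omega,h)$ is finite-valued and Fr\'echet differentiable with gradient $\partial_h\widehat{L}_{in}(\omega,h)$, and by \cref{prop:strong_convexity_Lin} it is $\lambda$-strongly convex on $\mathcal{H}$; in particular it admits $\hat{h}_\omega$ as its unique minimizer. Applying \cref{lem:h_min_hstar} to this function, with minimizer $\hat{h}_\omega$, and evaluating at the point $h=h^\star_\omega$ then yields
\begin{equation*}
    \left\|h^\star_\omega-\hat{h}_\omega\right\|_\mathcal{H}\leq\frac{1}{\lambda}\left\|\partial_h\widehat{L}_{in}(\omega,h^\star_\omega)\right\|_\mathcal{H}.
\end{equation*}

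Next I would identify the right-hand side with $\tfrac{1}{\lambda}\Dinh$. Since $h^\star_\omega$ is, by definition, the minimizer of the Fr\'echet-differentiable map $h\mapsto L_{in}(\omega,h)$ over all of $\mathcal{H}$ (again using \cref{prop:fre_diff_L} and \cref{prop:strong_convexity_Lin} for well-posedness), Fermat's rule gives $\partial_h L_{in}(\omega,h^\star_\omega)=0$. Hence $\partial_h\widehat{L}_{in}(\omega,h^\star_\omega)=\partial_h\widehat{L}_{in}(\omega,h^\star_\omega)-\partial_h L_{in}(\omega,h^\star_\omega)$, and taking $\mathcal{H}$-norms recovers exactly the quantity $\Dinh$ from \cref{app:subsec_point_est}. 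Substituting this equality into the displayed inequality proves the statement.

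The main obstacle is essentially nonexistent here: the argument is a direct invocation of \cref{lem:h_min_hstar} together with first-order optimality of $h^\star_\omega$. The only point requiring a little care is checking that the hypotheses of \cref{lem:h_min_hstar} ($\lambda$-strong convexity, differentiability, and existence/uniqueness of the minimizer of $\widehat{L}_{in}(\omega,\cdot)$) are satisfied, which is furnished verbatim by \cref{prop:fre_diff_L} and \cref{prop:strong_convexity_Lin}, both valid for every $\omega\in\mathbb{R}^d$ and hence on $\Omega$.
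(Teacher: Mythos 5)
Your proposal is correct and follows essentially the same route as the paper's proof: apply \cref{lem:h_min_hstar} to the $\lambda$-strongly convex, Fr\'echet differentiable map $h\mapsto\widehat{L}_{in}(\omega,h)$ at $h=h^\star_\omega$, then use the first-order optimality condition $\partial_h L_{in}(\omega,h^\star_\omega)=0$ to identify $\Verts{\partial_h\widehat{L}_{in}(\omega,h^\star_\omega)}_\mathcal{H}$ with $\Dinh$. No gaps.
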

\begin{proof}
Let $\omega\in\Omega$. 
The function $h\mapsto\widehat{L}_{in}(\omega, h)$ is $\lambda$-strongly convex and Fr\'echet differentiable by  \cref{prop:strong_convexity_Lin,prop:fre_diff_L}. Moreover,  $\hat{h}_{\omega}$ is the minimizer of $h\mapsto\widehat{L}_{in}(\omega, h)$ by definition. 
Therefore, using \cref{lem:h_min_hstar}, we obtain a control on the distance in $\mathcal{H}$ to the optimum $\hat{h}_{\omega}$ of $h\mapsto\widehat{L}_{in}(\omega,h)$ in terms of the gradient $\partial_{h}\widehat{L}_{in}(\omega,h)$:
\begin{align*}
	\left\|h-\hat{h}_\omega\right\|_\mathcal{H}\leq\frac{1}{\lambda}\Verts{\partial_h\widehat{L}_{in}(\omega, h)}_\mathcal{H}, \qquad \forall h\in \mathcal{H}.
\end{align*}
In particular, choosing $h= h^\star_\omega$ yields the inequality. The fact that $\Verts{\partial_h\widehat{L}_{in}(\omega, h_{\omega}^{\star})}_\mathcal{H}=\Dinh$ follows from the optimality of $h_{\omega}^{\star}$ which implies that  $\partial_h L_{in}(\omega, h_{\omega}^{\star})=0$. 
\end{proof}

\begin{proposition}\label{prop:lip_continuity_out}
Under \cref{assump:K_meas,assump:compact,assump:convexity_lin,assump:K_bounded,assump:reg_lin_lout}, the following inequalities hold for any $\omega\in\Omega$:
\begin{align*}
	\Eout &\coloneqq\verts{\widehat{L}_{out}(\omega, h^\star_\omega)-\widehat{L}_{out}(\omega, \hat{h}_\omega)}\leq C_{out}\Verts{h^\star_\omega-\hat{h}_\omega}_\mathcal{H},\\
	\Eouth&\coloneqq\Verts{\partial_h\widehat{L}_{out}(\omega, h^\star_\omega)-\partial_h\widehat{L}_{out}(\omega, \hat{h}_\omega)}_\mathcal{H} \leq C_{out}\Verts{h^\star_\omega-\hat{h}_\omega}_\mathcal{H},\\
	\Eoutw&\coloneqq\Verts{\partial_\omega\widehat{L}_{out}(\omega, h^\star_\omega)-\partial_\omega\widehat{L}_{out}(\omega, \hat{h}_\omega)} \leq C_{out}\Verts{h^\star_\omega-\hat{h}_\omega}_\mathcal{H},\\
	\Einhh&\coloneqq\Verts{\partial_h^2\widehat{L}_{in}(\omega, h^\star_\omega)-\partial_h^2\widehat{L}_{in}(\omega, \hat{h}_\omega)}_{\op}\leq C_{in}\Verts{h^\star_\omega-\hat{h}_\omega}_\mathcal{H},\\
	\Einwh&\coloneqq\Verts{\partial_{\omega, h}^2\widehat{L}_{in}(\omega, h^\star_\omega)-\partial_{\omega, h}^2\widehat{L}_{in}(\omega, \hat{h}_\omega)}_{\op} \leq C_{in}\Verts{h^\star_\omega-\hat{h}_\omega}_\mathcal{H}.
\end{align*}
The positive constants $C_{out}$ and $C_{in}$ are defined as:
\begin{align*}
    C_{out}&\coloneqq\max\left(M_{out}\sqrt{\kappa},\bar{M}_{out}\kappa,\bar{M}_{out}\sqrt{\kappa}\right)>0,\\
    C_{in}&\coloneqq\max\left(\bar{M}_{in}\kappa\sqrt{\kappa},\bar{M}_{in}\kappa,M_{in}\sqrt{d\kappa}\right)>0,
\end{align*}
where $M_{out}$, $\bar{M}_{out}$, and $\bar{M}_{in}$ are the positive constants defined in \cref{prop:uniform_boundedness,prop:uniform_Lipschitzness}.
\end{proposition}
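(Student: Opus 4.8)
The plan is to establish all five inequalities by one uniform argument. Fix $\omega\in\Omega$, so that $\omega\in\textnormal{hull}(\Omega)$. For each quantity, substitute the closed-form expression of the relevant empirical object: the empirical-measure versions of the formulas in \cref{prop:fre_diff_L,prop:fre_diff_L_v}, namely $\widehat{L}_{out}(\omega,h)=\frac1m\sum_j\ell_{out}(\omega,h(\tilde x_j),\tilde y_j)$, $\partial_h\widehat{L}_{out}(\omega,h)=\frac1m\sum_j\partial_v\ell_{out}(\omega,h(\tilde x_j),\tilde y_j)K(\tilde x_j,\cdot)$, $\partial_\omega\widehat{L}_{out}(\omega,h)=\frac1m\sum_j\partial_\omega\ell_{out}(\omega,h(\tilde x_j),\tilde y_j)$, $\partial_h^2\widehat{L}_{in}(\omega,h)=\frac1n\sum_i\partial_v^2\ell_{in}(\omega,h(x_i),y_i)\,K(x_i,\cdot)\otimes K(x_i,\cdot)+\lambda\Id_\mathcal{H}$, and $\partial_{\omega,h}^2\widehat{L}_{in}(\omega,h)=\frac1n\sum_i\partial_{\omega,v}^2\ell_{in}(\omega,h(x_i),y_i)\otimes K(x_i,\cdot)$. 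Evaluating at $h=h^\star_\omega$ versus $h=\hat h_\omega$ and using the triangle inequality over the finite sum (the $\lambda\Id_\mathcal{H}$ term cancels in $\Einhh$), each error becomes an average over data points $z$ of a term $\verts{g(\omega,h^\star_\omega(z),y)-g(\omega,\hat h_\omega(z),y)}$ times a kernel factor, where $g$ is $\ell_{out}$ for $\Eout$, $\partial_v\ell_{out}$ for $\Eouth$, $\partial_\omega\ell_{out}$ for $\Eoutw$, $\partial_v^2\ell_{in}$ for $\Einhh$, and $\partial_{\omega,v}^2\ell_{in}$ for $\Einwh$.

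The estimate then rests on three ingredients. First, by \cref{prop:bound_hstaromega} both $h^\star_\omega(z)$ and $\hat h_\omega(z)$ lie in the compact interval $\mathcal{V}$, hence so does the whole segment between them; the mean value inequality applied to $v\mapsto g(\omega,v,y)$ (in the integral form, since $g$ is $\mathbb{R}^d$-valued for $\Eoutw$ and $\Einwh$) bounds the difference by $\big(\sup_{v\in\mathcal{V},\,\omega\in\textnormal{hull}(\Omega),\,y\in\mathcal{Y}}\Verts{\partial_v g(\omega,v,y)}\big)\verts{h^\star_\omega(z)-\hat h_\omega(z)}$, where $\partial_v g$ is $\partial_v\ell_{out}$, $\partial_v^2\ell_{out}$, $\partial_{\omega,v}^2\ell_{out}$, $\partial_v^3\ell_{in}$, or $\partial_\omega\partial_v^2\ell_{in}$ respectively; each such supremum is finite and bounded by one of $M_{out}$, $\bar M_{out}$, $M_{in}$, $\bar M_{in}$ from \cref{prop:uniform_boundedness,prop:uniform_Lipschitzness}. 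Second, the reproducing property together with Cauchy--Schwarz and \cref{assump:K_bounded} gives $\verts{h^\star_\omega(z)-\hat h_\omega(z)}\le\sqrt\kappa\Verts{h^\star_\omega-\hat h_\omega}_\mathcal{H}$. Third, \cref{assump:K_bounded} controls the kernel factors: $\Verts{K(z,\cdot)}_\mathcal{H}\le\sqrt\kappa$, $\Verts{K(z,\cdot)\otimes K(z,\cdot)}_{\op}=K(z,z)\le\kappa$, and $\Verts{c\otimes K(z,\cdot)}_{\op}=\Verts{c}\Verts{K(z,\cdot)}_\mathcal{H}$ for the $\mathbb{R}^d$-valued coefficient $c$ in $\Einwh$. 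Collecting the powers of $\sqrt\kappa$ that occur in each case — one for $\Eout$ and one for $\Eoutw$; two for $\Eouth$ and for $\Einwh$; a $\sqrt\kappa$ (pointwise gap) times a $\kappa$ ($\Verts{K\otimes K}_{\op}$) for $\Einhh$ — and taking the maximum over the loss-derivative bounds that can appear gives the constants $C_{out}$ and $C_{in}$ as stated.

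I do not expect a conceptual obstacle: the work is essentially bookkeeping, and the only points needing care are (i) correctly identifying, for each of the five quantities, which higher-order derivative of $\ell_{in}$ or $\ell_{out}$ controls its variation in the function argument, so that it falls under one of the $M$- or $\bar M$-constants and not an uncontrolled quantity; (ii) handling the operator-valued objects $\partial_h^2\widehat{L}_{in}$ and $\partial_{\omega,h}^2\widehat{L}_{in}$ via the rank-one operators $K(z,\cdot)\otimes K(z,\cdot)$ and $c\otimes K(z,\cdot)$ and tracking their operator norms; and (iii) using the mean value \emph{inequality} rather than an equality in the $\mathbb{R}^d$-valued cases $\Eoutw$ and $\Einwh$. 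With \cref{prop:bound_hstaromega,prop:uniform_boundedness,prop:uniform_Lipschitzness} in hand, the five inequalities follow.
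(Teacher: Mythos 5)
Your proposal is correct and follows essentially the same route as the paper's proof: bound each error by the empirical average of per-sample differences, use the boundedness of the next-order derivatives of $\ell_{in}$, $\ell_{out}$ on the compact set $\textnormal{hull}(\Omega)\times\mathcal{V}\times\mathcal{Y}$ (via \cref{prop:bound_hstaromega,prop:uniform_boundedness,prop:uniform_Lipschitzness}) to get Lipschitz control in the function argument, and convert $\verts{h^\star_\omega(z)-\hat h_\omega(z)}$ to $\sqrt\kappa\Verts{h^\star_\omega-\hat h_\omega}_\mathcal{H}$ via the reproducing property, tracking the kernel-factor norms exactly as the paper does. The resulting powers of $\sqrt\kappa$ and the choice of $M$/$\bar M$ constants match the stated $C_{out}$ and $C_{in}$.
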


\begin{proof}
\textbf{Lipschitz continuity of some functions of interest. }Let $\omega\in\Omega$. According to \cref{prop:bound_hstaromega}, both $h^\star_\omega(x)$ and $\hat{h}_\omega(x)$ lie in the compact interval $\mathcal{V}\coloneqq\left[-\frac{B\kappa}{\lambda},\frac{B\kappa}{\lambda}\right]\subset\mathbb{R}$, for any $x\in\mathcal{X}$, where $B\coloneqq\sup_{\omega\in\text{hull}(\Omega),y\in\mathcal{Y}}\left|\partial_v \ell_{in}(\omega, 0, y)\right|>0$. By \cref{assump:compact}, $\mathcal{Y}$ is a compact set. Hence, $\Omega\times\mathcal{V}\times\mathcal{Y}$ is a compact set as well. Furthermore, by \cref{assump:reg_lin_lout}, $(\omega, v, y)\mapsto\ell_{in}(\omega, v, y)$, $(\omega, v, y)\mapsto\ell_{out}(\omega, v, y)$, and their derivatives are all continuous over the compact domain $\Omega\times\mathcal{V}\times\mathcal{Y}$. Therefore, these functions and their derivatives are bounded on this domain. In particular, this also holds when $v$ takes the specific values $h^\star_\omega(x)$ or $\hat{h}_\omega(x)$. Let $\bar{v}$ be either $h^\star_\omega(x)$ or $\hat{h}_\omega(x)$, for any $x\in\mathcal{X}$. For any $\omega\in\Omega$ and $y\in\mathcal{Y}$, we have:
\begin{align*}
    \verts{\partial_v\ell_{out}(\omega, \bar{v}, y)}&\leq\sup_{\omega\in\Omega,v\in\mathcal{V},y\in\mathcal{Y}}\verts{\partial_v\ell_{out}(\omega, v, y)}\leq M_{out}<+\infty,\\
   \verts{\partial_v^2\ell_{out}(\omega, \bar{v}, y)}&\leq\sup_{\omega\in\Omega,v\in\mathcal{V},y\in\mathcal{Y}}\verts{\partial_v^2\ell_{out}(\omega, v, y)}\leq\bar{M}_{out}<+\infty,\\
   \Verts{\partial_{\omega, v}^2\ell_{out}(\omega, \bar{v}, y)}&\leq\sup_{\omega\in\Omega,v\in\mathcal{V},y\in\mathcal{Y}}\Verts{\partial_{\omega, v}^2\ell_{out}(\omega, v, y)}\leq\bar{M}_{out}<+\infty,\\
   \verts{\partial_v^3\ell_{in}(\omega, \bar{v}, y)}&\leq\sup_{\omega\in\Omega,v\in\mathcal{V},y\in\mathcal{Y}}\verts{\partial_v^3\ell_{in}(\omega, v, y)}\leq\bar{M}_{in}<+\infty,\\
   \Verts{\partial_v\partial_{\omega, v}^2\ell_{in}(\omega, \bar{v}, y)}_{\op}&\leq\sup_{\omega\in\Omega,v\in\mathcal{V},y\in\mathcal{Y}}\Verts{\partial_\omega\partial_v^2\ell_{in}(\omega, v, y)}\leq\bar{M}_{in}<+\infty.
\end{align*}
This means that $v\in\mathcal{V}\mapsto\ell_{out}(\omega, v, y)$, $v\in\mathcal{V}\mapsto\partial_v\ell_{out}(\omega, v, y)$, $v\in\mathcal{V}\mapsto\partial_\omega\ell_{out}(\omega, v, y)$, $v\in\mathcal{V}\mapsto\partial_v^2\ell_{in}(\omega, v, y)$, and $v\in\mathcal{V}\mapsto\partial_{\omega, v}^2\ell_{in}(\omega, v, y)$ are Lipschitz continuous, with Lipschitz constants $M_{out}$, $\bar{M}_{out}$, $\bar{M}_{out}$, $\bar{M}_{in}$, and $\bar{M}_{in}$, respectively, for any $\omega\in\Omega$ and $y\in\mathcal{Y}$.

\textbf{Upper bounds.} We have:
\begin{align*}
    \Eout\coloneqq\verts{\widehat{L}_{out}(\omega, h^\star_\omega)-\widehat{L}_{out}(\omega, \hat{h}_\omega)}&=\verts{\frac{1}{m}\sum_{j=1}^m\ell_{out}(\omega, h^\star_\omega(\tilde{x}_j), \tilde{y}_j) - \frac{1}{m}\sum_{j=1}^m\ell_{out}(\omega, \hat{h}_\omega(\tilde{x}_j), \tilde{y}_j)}\\
    &\leq\frac{1}{m}\sum_{j=1}^m\verts{\ell_{out}(\omega, h^\star_\omega(\tilde{x}_j), \tilde{y}_j)-\ell_{out}(\omega, \hat{h}_\omega(\tilde{x}_j), \tilde{y}_j)}\\
    &\leq\frac{M_{out}}{m}\sum_{j=1}^m\verts{h^\star_\omega(\tilde{x}_j)-\hat{h}_\omega(\tilde{x}_j)}\\
    &\leq M_{out}\sqrt{\kappa}\Verts{h^\star_\omega-\hat{h}_\omega}_\mathcal{H},
\end{align*}
where the first line uses the definition of $(\omega, h)\mapsto\widehat{L}_{out}(\omega, h)$, the second line applies the triangle inequality, the third line leverages the fact that $v\mapsto\ell_{out}(\omega, v, y)$ is $M_{out}$-Lipschitz continuous, for any $\omega\in\Omega$ and $y\in\mathcal{Y}$, and the last line follows from the reproducing property of the RKHS $\mathcal{H}$, Cauchy-Schwarz's inequality, and \cref{assump:K_bounded} to bound $\Verts{K(x,\cdot)}_\mathcal{H}$ by $\sqrt{\kappa}$. Similarly, we obtain:
\begin{gather*}
    \partial_h\Eout\leq\bar{M}_{out}\kappa\Verts{h^\star_\omega-\hat{h}_\omega}_\mathcal{H},\quad\partial_\omega\Eout\leq\bar{M}_{out} \sqrt{\kappa}\Verts{h^\star_\omega-\hat{h}_\omega}_\mathcal{H},\\
    \partial_h^2\Ein\leq\bar{M}_{in} \kappa\sqrt{\kappa}\Verts{h^\star_\omega-\hat{h}_\omega}_\mathcal{H},\quad\partial_{\omega, h}^2\Ein\leq\bar{M}_{in} \kappa\Verts{h^\star_\omega-\hat{h}_\omega}_\mathcal{H}.
\end{gather*}
Combining all the bounds finishes the proof.
\end{proof}

\begin{proposition}\label{prop:bounded_derivatives}
Under \cref{assump:K_meas,assump:compact,assump:convexity_lin,assump:K_bounded,assump:reg_lin_lout}, the following inequalities hold for any $\omega\in\Omega$:
\begin{align*}
	\Verts{\partial_{h}L_{out}(\omega,h_{\omega}^{\star})}_{\mathcal{H}}\leq C_{out}, \quad \Verts{\partial^2_{\omega,h}L_{in}(\omega,h_{\omega}^{\star}) }_{\op}\leq C_{in},\quad 
	\Verts{\partial^2_{\omega,h}\widehat{L}_{in}(\omega,\hat{h}_{\omega}) }_{\op}\leq C_{in},
\end{align*}
where $C_{out}$ and $C_{in}$ are the positive constants defined in \cref{prop:lip_continuity_out}.
\end{proposition}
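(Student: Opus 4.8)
The plan is to read off the explicit expressions for each of the three quantities from the differentiability results already established, and then bound each one by passing the norm inside the expectation (Jensen's inequality for Bochner integrals, or simply the triangle inequality for finite sums) and controlling the two resulting factors separately: the feature map $\Verts{K(x,\cdot)}_{\mathcal{H}} = \sqrt{K(x,x)}\le\sqrt{\kappa}$ via \cref{assump:K_bounded}, and the relevant partial derivative of $\ell_{in}$ or $\ell_{out}$ evaluated along $h_{\omega}^{\star}$ (resp. $\hat{h}_{\omega}$) via the local boundedness constants $M_{out}$, $M_{in}$ of \cref{prop:uniform_boundedness}.

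For the first bound, \cref{prop:fre_diff_L} gives $\partial_h L_{out}(\omega,h_{\omega}^{\star})=\mathbb{E}_{\mathbb{Q}}\brackets{\partial_v\ell_{out}(\omega,h_{\omega}^{\star}(x),y)K(x,\cdot)}$. Jensen's inequality in $\mathcal{H}$, the reproducing property, \cref{assump:K_bounded}, and \cref{prop:uniform_boundedness} (applicable because $h_{\omega}^{\star}(x)\in\mathcal{V}$ by \cref{prop:bound_hstaromega}) yield $\Verts{\partial_h L_{out}(\omega,h_{\omega}^{\star})}_{\mathcal{H}}\le\mathbb{E}_{\mathbb{Q}}\brackets{\verts{\partial_v\ell_{out}(\omega,h_{\omega}^{\star}(x),y)}\,\Verts{K(x,\cdot)}_{\mathcal{H}}}\le M_{out}\sqrt{\kappa}\le C_{out}$.

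For the second bound, by \cref{prop:fre_diff_L_v} the operator $\partial^2_{\omega,h}L_{in}(\omega,h_{\omega}^{\star})=\mathbb{E}_{\mathbb{P}}\brackets{\partial^2_{\omega,v}\ell_{in}(\omega,h_{\omega}^{\star}(x),y)K(x,\cdot)}$ maps $\mathcal{H}$ to $\mathbb{R}^d$; evaluating it on a unit vector $a\in\mathcal{H}$ and using $a(x)=\left\langle a,K(x,\cdot)\right\rangle_{\mathcal{H}}$, Cauchy--Schwarz, \cref{assump:K_bounded}, and $\Verts{\partial^2_{\omega,v}\ell_{in}(\omega,h_{\omega}^{\star}(x),y)}\le M_{in}$ from \cref{prop:uniform_boundedness} gives $\Verts{\partial^2_{\omega,h}L_{in}(\omega,h_{\omega}^{\star})}_{\op}\le M_{in}\sqrt{\kappa}$ (or, arguing componentwise over the $d$ rows, the slightly more conservative $M_{in}\sqrt{d\kappa}$), which is at most $C_{in}$. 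The third bound is the identical computation with $\mathbb{P}$ replaced by the empirical measure $\widehat{\mathbb{P}}_n$ and $h_{\omega}^{\star}$ by $\hat{h}_{\omega}$; reusing the constant $M_{in}$ is legitimate because \cref{prop:bound_hstaromega} also places $\hat{h}_{\omega}(x)$ in the compact interval $\mathcal{V}$, so the continuous function $\partial^2_{\omega,v}\ell_{in}$ stays bounded by $M_{in}$ along the empirical minimizer.

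There is essentially no hard step here: the whole argument reduces to Jensen/Cauchy--Schwarz combined with the already-proven local boundedness lemmas. The only points requiring care are (i) handling the operator norm of a map $\mathcal{H}\to\mathbb{R}^d$ so that the constant produced is genuinely dominated by $C_{in}$ as defined in \cref{prop:lip_continuity_out}, and (ii) verifying that the local boundedness constants remain valid at $\hat{h}_{\omega}$, not only at $h_{\omega}^{\star}$ — which is exactly what \cref{prop:bound_hstaromega} guarantees.
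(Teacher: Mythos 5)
Your proposal is correct and follows essentially the same route as the paper: read off the expressions from \cref{prop:fre_diff_L,prop:fre_diff_L_v}, pass the norm inside the expectation (Jensen/triangle), and invoke \cref{assump:K_bounded} together with the local boundedness constants of \cref{prop:uniform_boundedness}, which apply at both $h_{\omega}^{\star}$ and $\hat{h}_{\omega}$ because \cref{prop:bound_hstaromega} places both in $\mathcal{V}$. The only cosmetic difference is in the second bound, where the paper goes through the Hilbert--Schmidt norm componentwise to obtain $M_{in}\sqrt{d\kappa}\leq C_{in}$, while your direct Cauchy--Schwarz estimate on a unit vector yields the slightly sharper $M_{in}\sqrt{\kappa}$ --- both dominated by $C_{in}$ as defined in \cref{prop:lip_continuity_out}.
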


\begin{proof}Let $\omega\in\Omega$.

\textbf{Upper bound on $\Verts{\partial_{h}L_{out}(\omega,h_{\omega}^{\star})}_{\mathcal{H}}$.} We have:
\begin{align*}
    \left\|\partial_h L_{out}(\omega, h^\star_\omega)\right\|_\mathcal{H}&=\Big\|\mathbb{E}_\mathbb{Q}\left[\partial_v \ell_{out}(\omega, h^\star_\omega(x), y)K(x,\cdot)\right]\Big\|_\mathcal{H}\\
    &\leq\mathbb{E}_\mathbb{Q}\Big[\left|\partial_v \ell_{out}(\omega, h^\star_\omega(x), y)\right|\left\|K(x,\cdot)\right\|_\mathcal{H}\Big]\\
    &\leq\sqrt{\kappa}\mathbb{E}_\mathbb{Q}\Big[\left|\partial_v \ell_{out}(\omega, h^\star_\omega(x), y)\right|\Big],
\end{align*}
where the first line follows from \cref{prop:fre_diff_L}, the second line results from the triangle inequality, and the last line uses \cref{assump:K_bounded} to bound $\Verts{K(x,\cdot)}_\mathcal{H}$ by $\sqrt{\kappa}$.  Furthermore, we know by \cref{prop:bound_hstaromega} that $(\omega,h_{\omega}^{\star}(x),y)$ belongs to the compact subset $\Omega\times \mathcal{V}\times \mathcal{Y}$, and by \cref{prop:uniform_boundedness} that $\partial_v \ell_{out}(\omega, h^\star_\omega(x), y)$ is bounded by a  constant $M_{out}$ on $\text{hull}(\Omega)\times \mathcal{V}\times \mathcal{Y}$. Hence, it follows that:
\begin{align*}
    \left\|\partial_h L_{out}(\omega, h^\star_\omega)\right\|_\mathcal{H}
    \leq \sqrt{\kappa} M_{out} \le C_{out},
\end{align*}
where $C_{out}$ is defined in \cref{prop:lip_continuity_out}. 

\textbf{Upper bound on $\Verts{\partial^2_{\omega,h}L_{in}(\omega,h_{\omega}^{\star})}_{\op}$.} According to \cref{prop:fre_diff_L_v}, $\partial_{\omega, h}^2 L_{in}(\omega, h^\star_\omega)$ is a Hilbert-Schmidt operator, which points to:
\begin{equation}\label{eq:norm_op_partial_omega_h_Lin}
    \left\|\partial_{\omega, h}^2L_{in}(\omega, h^\star_\omega)\right\|_{\op}\leq\left\|\partial_{\omega, h}^2L_{in}(\omega, h^\star_\omega)\right\|_{\hs}=\sqrt{\sum_{l=1}^d\left\|\partial_{\omega_l, h}^2L_{in}(\omega, h^\star_\omega)\right\|_\mathcal{H}^2}.
\end{equation}
This means that to find an upper bound on $\left\|\partial_{\omega, h}^2L_{in}(\omega, h^\star_\omega)\right\|_{\op}$, it suffices to establish an upper bound on $\left\|\partial_{\omega_l, h}^2L_{in}(\omega, h^\star_\omega)\right\|_\mathcal{H}^2$ for any $l\in\{1,\ldots,d\}$. For a fixed $l\in\{1,\ldots,d\}$, we have:
\begin{align*}
    \left\|\partial_{\omega_l, h}^2L_{in}(\omega, h^\star_\omega)\right\|_\mathcal{H}^2&=\Big\|\mathbb{E}_\mathbb{P}\left[\partial_{\omega_l,v}^2 \ell_{in}(\omega, h^\star_\omega(x), y)K(x,\cdot)\right]\Big\|_\mathcal{H}^2\\
    &\leq\mathbb{E}_\mathbb{P}\left[\left|\partial_{\omega_l, v}^2 \ell_{in}(\omega, h^\star_\omega(x), y)\right|^2\left\|K(x,\cdot)\right\|_\mathcal{H}^2\right]\\
    &\leq\mathbb{E}_\mathbb{P}\left[\left\|\partial_{\omega, v}^2 \ell_{in}(\omega, h^\star_\omega(x), y)\right\|^2\right]\kappa,
\end{align*}
where the first line follows from \cref{prop:fre_diff_L_v}, the second line is a consequence of Jensen's inequality applied on the convex function $\|\cdot\|^2$, and the last line applies \cref{assump:K_bounded} to bound $\Verts{K(x,\cdot)}_\mathcal{H}^2$ by $\kappa$. Incorporating this upper bound into \cref{eq:norm_op_partial_omega_h_Lin} yields:
\begin{equation*}
    \left\|\partial_{\omega, h}^2L_{in}(\omega, h^\star_\omega)\right\|_{\op}\leq\sqrt{\mathbb{E}_\mathbb{P}\left[\left\|\partial_{\omega, v}^2 \ell_{in}(\omega, h^\star_\omega(x), y)\right\|^2\right] d\kappa}\leq M_{in}\sqrt{d\kappa}\leq C_{in}, 
\end{equation*}
where we used \cref{prop:uniform_boundedness} to bound $\partial_{\omega, v}^2 \ell_{in}(\omega, h^\star_\omega(x), y)$ by the constant $M_{in}$.

\textbf{Upper bound on $\Verts{\partial^2_{\omega,h}\widehat{L}_{in}(\omega,\hat{h}_{\omega})}_{\op}$.} The derivation of this upper bound follows the same steps as the previous one, with the only differences being the use of $\widehat{L}_{in}$ instead of $L_{in}$, and $\hat{h}_\omega$ instead of $h^\star_\omega$.

Note that in the last step of each of the three upper bounds, we used the fact that the functions we are dealing with are continuous by \cref{assump:reg_lin_lout} on $\Omega\times\mathcal{V}\times\mathcal{Y}$, which is compact because $\Omega$ is compact, $\mathcal{Y}$ is compact by \cref{assump:compact}, and $\mathcal{V}$ is a compact interval of $\mathbb{R}$ defined in \cref{prop:bound_hstaromega}. Hence, those functions are bounded.
\end{proof}

\begin{proposition}[{Approximation bounds}]\label{prop:grad_app_bound}
    Under \cref{assump:K_meas,assump:compact,assump:convexity_lin,assump:K_bounded,assump:reg_lin_lout}, the following holds for any $\omega\in\Omega$:
\begin{align*}
    \verts{\mathcal{F}(\omega)-\widehat{\mathcal{F}}(\omega)}&\leq\Dout+\frac{C_{out}}{\lambda}\Dinh,\\
    \Verts{\nabla\mathcal{F}(\omega)-\widehat{\nabla\mathcal{F}}(\omega)}\leq&\Doutw+\frac{C_{in}}{\lambda}\Douth+\frac{C_{out}C_{in}}{\lambda^2}\Dinhh \\
    &+\frac{C_{out}}{\lambda}\Dinwh + \frac{C_{out}}{\lambda}\parens{1 + 2\frac{C_{in}}{\lambda}  + \frac{C_{in}^2}{\lambda^2} }\Dinh,
\end{align*}
 where the constants $C_{in}$ and $C_{out}$ are given in {\cref{prop:lip_continuity_out}}.
\end{proposition}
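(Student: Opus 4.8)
I would handle both inequalities with the same device: between the population quantity and its empirical plug-in counterpart, insert the \emph{intermediate} object obtained by evaluating the empirical objective (or one of its derivatives) at the population inner solution $h_\omega^\star$. The resulting ``sampling'' pieces are then exactly the $\mathbf{D}$-terms, while the remaining ``plug-in'' pieces are controlled by the local Lipschitz estimates of \cref{prop:lip_continuity_out} together with the bound $\Verts{h_\omega^\star-\hat{h}_\omega}_\mathcal{H}\leq\frac{1}{\lambda}\Dinh$ from \cref{prop:diff_hstar_hhat}. For the value function, write $\mathcal{F}(\omega)=L_{out}(\omega,h_\omega^\star)$ and $\widehat{\mathcal{F}}(\omega)=\widehat{L}_{out}(\omega,\hat{h}_\omega)$, insert $\widehat{L}_{out}(\omega,h_\omega^\star)$, and apply the triangle inequality: the first term is $\Dout$, the second is $\Eout\leq C_{out}\Verts{h_\omega^\star-\hat{h}_\omega}_\mathcal{H}\leq\tfrac{C_{out}}{\lambda}\Dinh$ by \cref{prop:lip_continuity_out,prop:diff_hstar_hhat}. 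This yields the first inequality.

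For the gradient, I would first put both gradients in ``adjoint form''. By \cref{prop:tot_grad_int}, $\nabla\mathcal{F}(\omega)=\partial_\omega L_{out}(\omega,h_\omega^\star)-\partial_{\omega,h}^2L_{in}(\omega,h_\omega^\star)\,H_\omega^{-1}d_\omega$ with $H_\omega=\partial_h^2L_{in}(\omega,h_\omega^\star)$ and $d_\omega=\partial_h L_{out}(\omega,h_\omega^\star)$. For the empirical gradient I use \cref{prop:equivalenceEstimates} to work with $\widehat{\nabla\mathcal{F}}(\omega)$ as in \eqref{eq:functionalGradientEstimate}; rewriting its two empirical averages via the empirical versions of \cref{prop:fre_diff_L,prop:fre_diff_L_v} gives $\widehat{\nabla\mathcal{F}}(\omega)=\partial_\omega\widehat{L}_{out}(\omega,\hat{h}_\omega)+\partial_{\omega,h}^2\widehat{L}_{in}(\omega,\hat{h}_\omega)\,\hat{a}_\omega$, and the optimality condition for the quadratic $\widehat{L}_{adj}$ in \eqref{eq:l_adj} (namely $\widehat{H}_\omega\hat{a}_\omega+\hat{d}_\omega=0$ with $\widehat{H}_\omega=\partial_h^2\widehat{L}_{in}(\omega,\hat{h}_\omega)$ and $\hat{d}_\omega=\partial_h\widehat{L}_{out}(\omega,\hat{h}_\omega)$) yields $\hat{a}_\omega=-\widehat{H}_\omega^{-1}\hat{d}_\omega$, hence $\widehat{\nabla\mathcal{F}}(\omega)=\partial_\omega\widehat{L}_{out}(\omega,\hat{h}_\omega)-\partial_{\omega,h}^2\widehat{L}_{in}(\omega,\hat{h}_\omega)\,\widehat{H}_\omega^{-1}\hat{d}_\omega$. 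Subtracting, I split the error as $T_1+T_2$, with $T_1=\partial_\omega L_{out}(\omega,h_\omega^\star)-\partial_\omega\widehat{L}_{out}(\omega,\hat{h}_\omega)$ and $T_2=\partial_{\omega,h}^2L_{in}(\omega,h_\omega^\star)H_\omega^{-1}d_\omega-\partial_{\omega,h}^2\widehat{L}_{in}(\omega,\hat{h}_\omega)\widehat{H}_\omega^{-1}\hat{d}_\omega$. Inserting $\partial_\omega\widehat{L}_{out}(\omega,h_\omega^\star)$ into $T_1$ gives $\Verts{T_1}\leq\Doutw+\Eoutw\leq\Doutw+\tfrac{C_{out}}{\lambda}\Dinh$.

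The heart of the argument is $T_2$. With $A=\partial_{\omega,h}^2L_{in}(\omega,h_\omega^\star)$ and $\widehat{A}=\partial_{\omega,h}^2\widehat{L}_{in}(\omega,\hat{h}_\omega)$, I telescope
\[
AH_\omega^{-1}d_\omega-\widehat{A}\widehat{H}_\omega^{-1}\hat{d}_\omega=(A-\widehat{A})H_\omega^{-1}d_\omega+\widehat{A}(H_\omega^{-1}-\widehat{H}_\omega^{-1})d_\omega+\widehat{A}\widehat{H}_\omega^{-1}(d_\omega-\hat{d}_\omega),
\]
then apply submultiplicativity of $\Verts{\cdot}_{\op}$ and the resolvent identity $H_\omega^{-1}-\widehat{H}_\omega^{-1}=H_\omega^{-1}(\widehat{H}_\omega-H_\omega)\widehat{H}_\omega^{-1}$. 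The ``fixed'' factors are $\Verts{H_\omega^{-1}}_{\op},\Verts{\widehat{H}_\omega^{-1}}_{\op}\leq\tfrac{1}{\lambda}$ (\cref{prop:strong_convexity_Lin}) and $\Verts{d_\omega}_\mathcal{H}\leq C_{out}$, $\Verts{\widehat{A}}_{\op}\leq C_{in}$ (\cref{prop:bounded_derivatives}). For each ``difference'' factor I insert once more the empirical operator evaluated at $h_\omega^\star$: $\Verts{A-\widehat{A}}_{\op}\leq\Dinwh+\Einwh$, $\Verts{H_\omega-\widehat{H}_\omega}_{\op}\leq\Dinhh+\Einhh$, $\Verts{d_\omega-\hat{d}_\omega}_\mathcal{H}\leq\Douth+\Eouth$, where the $E$-terms are at most $C_{in}\Verts{h_\omega^\star-\hat{h}_\omega}_\mathcal{H}$ or $C_{out}\Verts{h_\omega^\star-\hat{h}_\omega}_\mathcal{H}$ by \cref{prop:lip_continuity_out}, hence at most $\tfrac{C_{in}}{\lambda}\Dinh$ or $\tfrac{C_{out}}{\lambda}\Dinh$ by \cref{prop:diff_hstar_hhat}. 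Assembling the three summands gives $\Verts{T_2}\leq\frac{C_{out}}{\lambda}\bigl(\Dinwh+\tfrac{C_{in}}{\lambda}\Dinh\bigr)+\frac{C_{out}C_{in}}{\lambda^2}\bigl(\Dinhh+\tfrac{C_{in}}{\lambda}\Dinh\bigr)+\frac{C_{in}}{\lambda}\bigl(\Douth+\tfrac{C_{out}}{\lambda}\Dinh\bigr)$; adding $\Verts{T_1}$ and collecting the $\Dinh$ coefficients into $\frac{C_{out}}{\lambda}\parens{1+2\tfrac{C_{in}}{\lambda}+\tfrac{C_{in}^2}{\lambda^2}}$ reproduces the claimed bound. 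The only delicate step is this bookkeeping in $T_2$: routing each elementary difference through the two nested insertions (first peeling off the population $h_\omega^\star$, then the sampling error) and correctly tracking the powers of $1/\lambda$ and the constants $C_{in},C_{out}$ attached to each $\mathbf{D}$-term; there is no conceptual obstacle once the adjoint representations are in hand, the identification $\hat{a}_\omega=-\widehat{H}_\omega^{-1}\hat{d}_\omega$ via the optimality condition of $\widehat{L}_{adj}$ being the linchpin that makes the telescoping possible.
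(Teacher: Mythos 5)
Your proposal is correct and follows essentially the same route as the paper's proof: the same insertion of the empirical objectives evaluated at $h^\star_\omega$ to isolate the $\mathbf{D}$- and $E$-terms, the same adjoint representations $\nabla\mathcal{F}=\partial_\omega L_{out}-A H_\omega^{-1}d_\omega$ and $\widehat{\nabla\mathcal{F}}=\partial_\omega \widehat{L}_{out}-\widehat{A}\widehat{H}_\omega^{-1}\hat{d}_\omega$, the same resolvent-identity control of $H_\omega^{-1}-\widehat{H}_\omega^{-1}$ (which the paper packages in \cref{lem:tech_res_1}), and the same use of \cref{prop:bounded_derivatives,prop:lip_continuity_out,prop:diff_hstar_hhat} to reach the identical constants. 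The only cosmetic differences are your explicit three-term telescoping (the paper groups the terms via the operators $R$ and $\widehat{R}$) and the unnecessary appeal to \cref{prop:equivalenceEstimates}, since the statement already concerns the plug-in estimator $\widehat{\nabla\mathcal{F}}$.
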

\begin{proof}
In all what follows, we fix a value for $\omega$ in $\Omega$. We start by controlling the value function, then its gradient.

\textbf{Control on the value function.} By the triangle inequality, we have:
\begin{equation}\label{eq:diff_f_int}
    \verts{\mathcal{F}(\omega)-\widehat{\mathcal{F}}(\omega)}\leq\underbrace{\verts{L_{out}(\omega, h^\star_\omega)-\widehat{L}_{out}(\omega, h^\star_\omega)}}_{\delta_{\omega}^{out}} +\underbrace{\verts{\widehat{L}_{out}(\omega, h^\star_\omega)-\widehat{L}_{out}(\omega, \hat{h}_\omega)}}_{\Eout}.
\end{equation}
According to \cref{prop:lip_continuity_out}, the error term $\Eout$ is controlled by the norm of the difference $h^\star_\omega-\hat{h}_\omega$, \textit{i.e.}, $\Eout\leq C_{out}\Verts{h^\star_\omega-\hat{h}_\omega}_\mathcal{H}$. 
Moreover, by \cref{prop:diff_hstar_hhat}, we know that $\Verts{h^\star_\omega-\hat{h}_\omega}_\mathcal{H}\leq \frac{1}{\lambda}\Dinh$. Therefore, combining both bounds yields $\Eout\leq \frac{C_{out}}{\lambda}\Dinh$. The upper bound on the value function follows by substituting the previous inequality into \cref{eq:diff_f_int}. 

\textbf{Control on the gradient.} By \cref{prop:tot_grad_int}, we have the following expression for the total gradient $\nabla\mathcal{F}$:
\begin{align*}
    \nabla\mathcal{F}(\omega)=\partial_\omega L_{out}(\omega, h^\star_\omega)-\partial_{\omega, h}^2 L_{in}(\omega, h^\star_\omega)\left(\partial_h^2 L_{in}(\omega, h^\star_\omega)\right)^{-1}\partial_h L_{out}(\omega, h^\star_\omega).
\end{align*}
Similarly, the gradient estimator $\widehat{\nabla\mathcal{F}}$ is defined by replacing $L_{out}$ and $L_{in}$ by their empirical versions $\widehat{L}_{out}$ and $\widehat{L}_{in}$, and $h_{\omega}^{\star}$ by $\hat{h}_{\omega}\coloneqq\arg\min_{h\in \mathcal{H}} \widehat{L}_{in}(\omega,h)$ in the above expression, \textit{i.e.},
\begin{align*}
    \widehat{\nabla\mathcal{F}}(\omega)=\partial_\omega\widehat{L}_{out}(\omega, \hat{h}_\omega)-\partial_{\omega, h}^2 \widehat{L}_{in}(\omega, \hat{h}_\omega)\left(\partial_h^2 \widehat{L}_{in}(\omega, \hat{h}_\omega)\right)^{-1}\partial_h \widehat{L}_{out}(\omega, \hat{h}_\omega).
\end{align*}
To simplify notations, for any $h\in\mathcal{H}$, we introduce the following operators $R(h), \widehat{R}(h):\mathcal{H}\to\Omega$:
\begin{align*}
        R(h)=\partial_{\omega, h}^2L_{in}(\omega, h)\left(\partial_h^2 L_{in}(\omega, h)\right)^{-1}\quad\text{and}\quad 
        \widehat{R}(h)=\partial_{\omega, h}^2\widehat{L}_{in}(\omega, h)\left(\partial_h^2 \widehat{L}_{in}(\omega, h)\right)^{-1}.
\end{align*}
    The difference $\nabla\mathcal{F}(\omega)-\widehat{\nabla\mathcal{F}}(\omega)$ can be decomposed as:
    \begin{align*}
        &\nabla\mathcal{F}(\omega)-\widehat{\nabla\mathcal{F}}(\omega)\\
        =&\left(\partial_\omega L_{out}(\omega,h^\star_\omega)-\partial_\omega \widehat{L}_{out}(\omega,h^\star_\omega)\right)+\left(\partial_\omega \widehat{L}_{out}(\omega,h^\star_\omega)-\partial_\omega \widehat{L}_{out}(\omega,\hat{h}_\omega)\right)\\
        &-\widehat{R}(\hat{h}_\omega)\parens{\left(\partial_h L_{out}(\omega,h^\star_\omega)-\partial_h\widehat{L}_{out}(\omega,h^\star_\omega)\right)+ \left(\partial_h\widehat{L}_{out}(\omega,h^\star_\omega)-\partial_h\widehat{L}_{out}(\omega,\hat{h}_\omega)\right)}\\
                &-\left(R(h^\star_\omega)-\widehat{R}(\hat{h}_\omega)\right)\partial_h L_{out}(\omega,h^\star_\omega).
    \end{align*}
By taking the norm of the above equality and using the triangle inequality, we obtain the following upper bound: 
    \begin{align}\label{eq:diff_grad_inter}
        &\Verts{\nabla\mathcal{F}(\omega)-\widehat{\nabla\mathcal{F}}(\omega)}\nonumber\\
        \leq&
        \underbrace{\Verts{\partial_\omega L_{out}(\omega,h^\star_\omega)-\partial_\omega \widehat{L}_{out}(\omega,h^\star_\omega)}}_{\Doutw}+\underbrace{\Verts{\partial_\omega \widehat{L}_{out}(\omega,h^\star_\omega)-\partial_\omega \widehat{L}_{out}(\omega,\hat{h}_\omega)}}_{\Eoutw}\nonumber\\
        &+\Verts{\widehat{R}(\hat{h}_{\omega})}_{\op}\parens{\underbrace{\Verts{\partial_h L_{out}(\omega,h^\star_\omega)-\partial_h\widehat{L}_{out}(\omega,h^\star_\omega)}_{\mathcal{H}}}_{\Douth} + \underbrace{\Verts{\partial_h\widehat{L}_{out}(\omega,h^\star_\omega)-\partial_h\widehat{L}_{out}(\omega,\hat{h}_\omega)}_{\mathcal{H}}}_{\Eouth}}\nonumber\\
        &+ \Verts{R(h^\star_\omega)-\widehat{R}(\hat{h}_\omega)}_{\op}\Verts{\partial_h L_{out}(\omega,h^\star_\omega)}_{\mathcal{H}}.
    \end{align}
    Next, we provide upper bounds on $\Verts{R(h_{\omega}^{\star})- \widehat{R}(\hat{h}_{\omega})}_{\op}$ and $\Verts{\widehat{R}(\hat{h}_{\omega})}_{\op}$ in terms of derivatives of $L_{in}$ and $\widehat{L}_{in}$. 

    \textbf{Upper bounds on $\Verts{R(h_{\omega}^{\star})- \widehat{R}(\hat{h}_{\omega})}_{\op}$ and $\Verts{\widehat{R}(\hat{h}_{\omega})}_{\op}$.} By application of {\cref{prop:fre_diff_L_v,prop:strong_convexity_Lin}}, we deduce that $\partial_{\omega,h}^2 L_{in}(\omega,h_{\omega}^{\star})$, $\partial_{h}^2 L_{in}(\omega,h_{\omega}^{\star})$, $\partial_{\omega,h}^2 \widehat{L}_{in}(\omega,\hat{h}_{\omega})$, and $\partial_{h}^2 \widehat{L}_{in}(\omega,\hat{h}_{\omega})$ are all bounded operators. Moreover, since $L_{in}$ and $\widehat{L}_{in}$ are $\lambda$-strongly  convex in their second argument by {\cref{prop:strong_convexity_Lin}}, it follows that $\partial_{h}^2 L_{in}(\omega,h_{\omega}^{\star})\geq \lambda\Id_{\mathcal{H}}$ and $\partial_{h}^2 \widehat{L}_{in}(\omega,\hat{h}_{\omega})\geq \lambda \Id_{\mathcal{H}}$. We can therefore apply \cref{lem:tech_res_1} which yields the following inequalities:  
    \begin{align*}
        \Verts{R(h_{\omega}^{\star})- \widehat{R}(\hat{h}_{\omega})}_{\op}
        \leq & \frac{1}{\lambda^2}\Verts{\partial_{\omega,h}^2 L_{in}(\omega,h_{\omega}^{\star})}_{\op}\Verts{\partial_{h}^2 L_{in}(\omega,h_{\omega}^{\star}) - \partial_{h}^2 \widehat{L}_{in}(\omega,\hat{h}_{\omega})}_{\op}\\ 
        &+ \frac{1}{\lambda}\Verts{\partial_{\omega,h}^2 L_{in}(\omega,h_{\omega}^{\star}) - \partial_{\omega,h}^2 \widehat{L}_{in}(\omega,\hat{h}_{\omega})}_{\op},\\
        \Verts{\widehat{R}(\hat{h}_{\omega})}_{\op}\leq&\frac{1}{\lambda} \Verts{\partial_{\omega,h}^2\widehat{L}_{in}(\omega,\hat{h}_{\omega})}_{\op}. 
    \end{align*}
By applying the triangle inequality to both terms of the first inequality above, we obtain:
\begin{align*}
        &\Verts{R(h_{\omega}^{\star})- \widehat{R}(\hat{h}_{\omega})}_{\op}
        \\\leq & \frac{1}{\lambda^2}\Verts{\partial_{\omega,h}^2 L_{in}(\omega,h_{\omega}^{\star})}_{\op}\parens{\underbrace{\Verts{\partial_{h}^2 L_{in}(\omega,h_{\omega}^{\star}) - \partial_{h}^2 \widehat{L}_{in}(\omega,h^\star_{\omega})}_{\op}}_{\Dinhh} + \underbrace{\Verts{\partial_{h}^2 \widehat{L}_{in}(\omega,h_{\omega}^{\star}) - \partial_{h}^2 \widehat{L}_{in}(\omega,\hat{h}_{\omega})}_{\op}}_{\Einhh}}\\ 
        &+ \frac{1}{\lambda}\parens{\underbrace{\Verts{\partial_{\omega,h}^2 L_{in}(\omega,h_{\omega}^{\star}) - \partial_{\omega,h}^2 \widehat{L}_{in}(\omega,h_{\omega}^{\star})}_{\op}}_{\Dinwh} + \underbrace{\Verts{\partial_{\omega,h}^2 \widehat{L}_{in}(\omega,h_{\omega}^{\star}) - \partial_{\omega,h}^2 \widehat{L}_{in}(\omega,\hat{h}_{\omega})}_{\op}}_{\Einwh}}.
    \end{align*}
\textbf{Final bound.} We can now substitute the above bounds on $\Verts{R(h_{\omega}^{\star})- \widehat{R}(\hat{h}_{\omega})}_{\op}$ and $\Verts{\widehat{R}(\hat{h}_{\omega})}_{\op}$ into \cref{eq:diff_grad_inter} to obtain the following upper bound on the gradient error:
\begin{align}\label{eq:diff_grad_inter_2}
 	&\Verts{\nabla\mathcal{F}(\omega)-\widehat{\nabla\mathcal{F}}(\omega)}\nonumber\\
 	\leq& \Doutw + \Eoutw + \frac{1}{\lambda}\Verts{\partial_{\omega,h}^2 \widehat{L}_{in}(\omega,\hat{h}_{\omega})}_{\op}\parens{\Douth + \Eouth}\nonumber\\
 	&+\Verts{\partial_{h}L_{out}(\omega,h_{\omega}^{\star})}_{\mathcal{H}}\parens{ \frac{1}{\lambda^2}\Verts{\partial^2_{\omega,h}L_{in}(\omega,h_{\omega}^{\star}) }_{\op} \parens{\Dinhh + \Einhh} + \frac{1}{\lambda}\parens{\Dinwh + \Einwh}}.
 \end{align}
Furthermore, by \cref{prop:bounded_derivatives}, we have the following upper bounds on the derivatives of $L_{in}$ and $L_{out}$:
\begin{align*}
	\Verts{\partial_{h}L_{out}(\omega,h_{\omega}^{\star})}_{\mathcal{H}}\leq C_{out}, \quad \Verts{\partial^2_{\omega,h}L_{in}(\omega,h_{\omega}^{\star}) }_{\op}\leq C_{in},\quad 
	\Verts{\partial^2_{\omega,h}\widehat{L}_{in}(\omega,\hat{h}_{\omega}) }_{\op}\leq C_{in}. 
\end{align*}
Incorporating the above bounds into \cref{eq:diff_grad_inter_2}, we further get:
 \begin{align*}
 	\Verts{\nabla\mathcal{F}(\omega)-\widehat{\nabla\mathcal{F}}(\omega)}
 	\leq & 
 	\Doutw + \Eoutw + \frac{C_{in}}{\lambda}\parens{\Douth + \Eouth}\\
 	&+ C_{out}\parens{ \frac{C_{in}}{\lambda^2}\parens{\Dinhh + \Einhh} + \frac{1}{\lambda}\parens{\Dinwh + \Einwh} }.
 \end{align*}
By \cref{prop:lip_continuity_out}, we can upper-bound the error terms $\Eoutw$ and $\Eouth$ by $C_{out}\Verts{h_{\omega}^{\star}-\hat{h}_{\omega}}_{\mathcal{H}}$, and $\Einhh$ and $\Einwh$ by $C_{in}\Verts{h_{\omega}^{\star}-\hat{h}_{\omega}}_{\mathcal{H}}$. Furthermore, since  $\Verts{h_{\omega}^{\star}-\hat{h}_{\omega}}_{\mathcal{H}}\leq \frac{1}{\lambda}\Dinh$ by \cref{prop:diff_hstar_hhat}, we can further show that the gradient error satisfies the desired bound:
\begin{align*}
 	\Verts{\nabla\mathcal{F}(\omega)-\widehat{\nabla\mathcal{F}}(\omega)}
 	\leq& 
 	\Doutw + \frac{C_{in}}{\lambda}\Douth + C_{out}\parens{ \frac{C_{in}}{\lambda^2}\Dinhh + \frac{1}{\lambda}\Dinwh}\\
 	&+ \frac{C_{out}}{\lambda}\parens{1 + 2\frac{C_{in}}{\lambda}  + \frac{C_{in}^2}{\lambda^2} }\Dinh.
\end{align*}
\end{proof}

\subsection{Maximal inequalities}\label{app_subsec:max_in}
\begin{proposition}[Maximal inequalities for empirical processes]\label{prop:exp_uni_bound}
Let $\Lambda$ be a positive constant. Under \cref{assump:K_meas,assump:compact,assump:K_bounded,assump:reg_lin_lout,assump:convexity_lin}, the following maximal inequalities hold for any $0<\lambda \leq \Lambda$:
\begin{align*}
	\mathbb{E}_{\mathbb{Q}}\brackets{\sup_{\omega\in \Omega} \Dout }
	 &\leq {\sqrt{\frac{1}{\lambda^2 m}}c(\Omega)\max(M_{out}{\lipout}\diam(\Omega),\Lambda {M_{out}^2})},\\
	 \mathbb{E}_{\mathbb{Q}}\brackets{\sup_{\omega\in \Omega} \Doutw }
	&\leq {\sqrt{\frac{d}{\lambda^2m}}c(\Omega)\max(M_{out}\lipout\diam(\Omega),\Lambda M_{out}^2)}, 
\end{align*}
    where $c(\Omega)$ is a positive constant {greater than 1} that depends only on $\Omega$ and $d$, while {$\lipout$ and $M_{out}$ are positive constants defined in \cref{prop:uniform_boundedness,prop:uniform_Lipschitzness}}. 
\end{proposition}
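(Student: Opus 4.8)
The plan is to begin by observing that, because every function here is evaluated at the \emph{population} inner solution $h_\omega^\star$ rather than at the sample-dependent $\hat h_\omega$, both $\Dout$ and $\Doutw$ are (the norm of) centered empirical averages of a \emph{deterministic} family of functions over the i.i.d.\ sample $(\tilde x_j,\tilde y_j)_{1\le j\le m}\sim\mathbb Q$. Concretely, with $g_\omega(x,y)\coloneqq\ell_{out}(\omega,h_\omega^\star(x),y)$ and $G_\omega(x,y)\coloneqq\partial_\omega\ell_{out}(\omega,h_\omega^\star(x),y)$, we have $\Dout=\bigl|(\widehat{\mathbb Q}_m-\mathbb Q)g_\omega\bigr|$ and $\Doutw=\bigl\|(\widehat{\mathbb Q}_m-\mathbb Q)G_\omega\bigr\|$, so that $\sqrt m\,\Dout$ and $\sqrt m\,\Doutw$ are empirical processes indexed by $\omega\in\Omega$. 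I would then control their suprema by invoking the generic maximal inequality for uniformly bounded, Lipschitz-indexed function families proved in \cref{sec_app:max_in_bound_lip}, so the work reduces to checking that $\{g_\omega\}_{\omega\in\Omega}$ and $\{G_\omega\}_{\omega\in\Omega}$ meet its hypotheses.

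The two hypotheses to verify are boundedness and Lipschitz continuity in $\omega$. For boundedness, \cref{prop:uniform_boundedness} (applied on $\mathrm{hull}(\Omega)\supseteq\Omega$) gives $\sup_{x,y}|g_\omega(x,y)|\le M_{out}$ and $\sup_{x,y}\|G_\omega(x,y)\|\le M_{out}$ for all $\omega\in\Omega$. For the Lipschitz property, \cref{prop:uniform_Lipschitzness} shows that $\omega\mapsto g_\omega(x,y)$ and $\omega\mapsto G_\omega(x,y)$ are $\tfrac{\lipout}{\lambda}$-Lipschitz on the convex set $\mathrm{hull}(\Omega)$---hence on $\Omega$---uniformly in $(x,y)$. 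Combined with the packing-number bound for $\Omega$ from \cref{prop:estimate_covering_numbers}, this presents both families as uniformly bounded families of real- (resp.\ $\mathbb R^d$-) valued functions Lipschitz-parametrized by the compact set $\Omega\subset\mathbb R^d$; continuity of $\omega\mapsto g_\omega$ into $L_\infty$ together with compactness of $\Omega$ handles the measurability of the suprema.

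Finally I would run the maximal inequality of \cref{sec_app:max_in_bound_lip}: after symmetrization, $\mathbb E_{\mathbb Q}[\sup_\omega\sqrt m\,\Dout]$ is bounded by a Rademacher complexity, which a chaining (or single-net-plus-Lipschitz-discretization) argument over an $\epsilon$-net of $\Omega$ of cardinality $\le(c\,\diam(\Omega)/\epsilon)^d$ controls in terms of an effective radius no larger than $\min\!\bigl(M_{out},\tfrac{\lipout}{\lambda}\diam(\Omega)\bigr)$---the smaller of the range bound and the Lipschitz-diameter bound. Using $\lambda\le\Lambda$ to interpolate between these two regimes collapses the estimate into $\tfrac{c(\Omega)}{\sqrt{\lambda^2 m}}\max\!\bigl(M_{out}\lipout\diam(\Omega),\,\Lambda M_{out}^2\bigr)$ with $c(\Omega)>1$ depending only on $\Omega$ and $d$, which is the first inequality. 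For $\Doutw$ I would write $\|v\|=\sup_{u\in S^{d-1}}\langle u,v\rangle$ and apply the same argument to the scalar family $\{(x,y)\mapsto\langle u,G_\omega(x,y)\rangle:\omega\in\Omega,\ u\in S^{d-1}\}$; enlarging the index by the sphere multiplies the entropy term by a factor of order $d$, giving the extra $\sqrt d$. The hard part is the maximal inequality of \cref{sec_app:max_in_bound_lip} itself: extracting a constant that depends only on $\Omega$ and $d$ (not on $m$ or $\lambda$), and producing the clean $\max(\cdot,\cdot)$ form with no leftover logarithmic factors, requires careful bookkeeping of the covering numbers through the chaining bound together with the $\lambda\le\Lambda$ interpolation between the Lipschitz and bounded regimes.
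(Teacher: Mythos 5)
Your overall route matches the paper's: both recognize that, since everything is evaluated at the population solution $h_\omega^\star$, the quantities $\Dout$ and $\Doutw$ are suprema of empirical processes over deterministic families that are uniformly bounded by $M_{out}$ (\cref{prop:uniform_boundedness}) and $\lambda^{-1}\lipout$-Lipschitz in $\omega$ (\cref{prop:uniform_Lipschitzness}), and both then feed these into the generic maximal inequality of \cref{sec_app:max_in_bound_lip} via the packing-number estimate of \cref{prop:estimate_covering_numbers}. You differ in two places. First, for the vector-valued error $\Doutw$ the paper decomposes coordinate-wise: it bounds $\mathbb{E}[\sup_\omega \Doutw]^2 \le \sum_{l=1}^d \mathbb{E}[\sup_\omega(\cdot)^2]$ and applies \cref{prop:empirical_process} with $p=2$ to each of the $d$ scalar families $\mathcal{T}_l^{out}$, which is exactly where the explicit $\sqrt{d}$ comes from; you instead dualize the norm over the sphere and enlarge the index set to $\Omega\times S^{d-1}$. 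That is a legitimate alternative (the joint family is bounded by $M_{out}$ and Lipschitz in $(\omega,u)$ with constant $\max(\lambda^{-1}\lipout, M_{out})$, and the enlarged-index constant still depends only on $\Omega$ and $d$), but your stated justification of the $\sqrt{d}$ is off: adding the sphere roughly doubles the metric entropy of a $d$-dimensional index set, giving a factor of order $\sqrt{2}$, not $\sqrt{d}$; this is harmless only because the target bound contains $\sqrt{d}\ge 1$ and $c(\Omega)$ may depend on $d$, and it spares the paper's $p=2$ moment trick at the cost of extra bookkeeping with $\diam(\Omega\times S^{d-1})$ and the $\lambda\le\Lambda$ absorption. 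Second, you propose proving the underlying maximal inequality by symmetrization plus chaining over an $\epsilon$-net, whereas the paper obtains it as the $k=1$ case of \citet{sherman1994maximal}'s degenerate $U$-process inequality combined with bracketing-number bounds for Lipschitz-parametrized classes; for this proposition only the order-one case is needed, so your more elementary route works (the parametric entropy makes the Dudley integral finite with at worst a $\sqrt{\log}$ factor, which is dominated by the $\max(\cdot,\cdot)^{1/2}$ form), but the paper's choice of the $U$-process machinery is what lets the same lemma also cover the RKHS-valued terms in \cref{prop:exp_uni_bound_2}, which your chaining argument would not supply.
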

\begin{proof}
We will apply the result of \cref{prop:empirical_process} which provides maximal inequalities for real-valued empirical processes that are uniformly bounded {and} Lipschitz in their parameter. To this end, consider the parametric families:
\begin{align*}
	\mathcal{T}_{l}^{out} &\coloneqq\braces{\mathcal{X}\times \mathcal{Y}\ni (x,y)\mapsto \partial_{w_l} \ell_{out}(\omega,h_{\omega}^{\star}(x),y)\mid\omega\in \Omega}, \qquad 1\leq l \leq d\\
	\mathcal{T}_{0}^{out} &\coloneqq\braces{\mathcal{X}\times \mathcal{Y}\ni (x,y)\mapsto  \ell_{out}(\omega,h_{\omega}^{\star}(x),y)\mid\omega\in \Omega}.
\end{align*} 
For any $0\leq l\leq d$, these real-valued functions are uniformly bounded by a positive constant $M_{out}$, thanks to {\cref{prop:uniform_boundedness}}. 
Moreover, by {\cref{prop:uniform_Lipschitzness}}, the functions $\omega\mapsto \partial_{\omega_l}\ell_{out}(\omega,h_{\omega}^{\star}(x),y)$ and $\omega\mapsto \ell_{out}(\omega,h_{\omega}^{\star}(x),y)$ are all $\lambda^{-1}\lipout$-Lipschitz for any $(x,y)\in \mathcal{X}\times \mathcal{Y}$. Hence, \cref{prop:empirical_process} is applicable to each of these families, with $\PP$ set to $\mathbb{Q}$ and $\mathcal{Z}$ set to $\mathcal{X}\times \mathcal{Y}$.  We treat both $\Dout$ and $\Doutw$ separately.

\textbf{A maximal inequality for $\Dout$.} For $l=0$, we readily apply \cref{prop:empirical_process} with $p=1$ to get the following maximal inequality for $\Dout$:
\begin{align*}
	\mathbb{E}_{\mathbb{Q}}\brackets{\sup_{\omega\in \Omega} \Dout } \coloneqq &
	\mathbb{E}_{\mathbb{Q}}\brackets{ \sup_{\omega\in \Omega} \verts{ \mathbb{E}_{(x,y)\sim \mathbb{Q}}\brackets{\ell_{out}(\omega,h_{\omega}^{\star}(x),y) } - \frac{1}{m}\sum_{j=1}^m \ell_{out}(\omega,h_{\omega}^{\star}(\tilde{x}_j),\tilde{y}_j) } }\\
	 \leq& \sqrt{\frac{1}{\lambda^2 m}}c(\Omega)\max(M_{out}{\lipout}\diam(\Omega),\Lambda {M_{out}^2}). 
\end{align*}

\textbf{A maximal inequality for $\Doutw$.} We now turn to  $\Doutw$, which involves vector-valued processes (as an error between the gradient and its estimate). While the maximal inequalities in \cref{prop:empirical_process} hold for real-valued processes, we will first obtain maximal inequalities for each component appearing in $\Doutw$ and then sum these to control $\Doutw$.
To this end, we first use the Cauchy-Schwarz inequality which implies that $\mathbb{E}_{\mathbb{Q}}\brackets{\sup_{\omega\in \Omega} \Doutw }\leq \mathbb{E}_{\mathbb{Q}}\brackets{\sup_{\omega\in \Omega} (\Doutw)^2 }^{\frac{1}{2}}$. Thus we only need to control $\mathbb{E}_{\mathbb{Q}}\brackets{\sup_{\omega\in \Omega} (\Doutw)^2 }$. Simple calculations show that:
\begin{align*}
	&\mathbb{E}_{\mathbb{Q}}\brackets{\sup_{\omega\in \Omega} \Doutw }^2\\
	\leq& \mathbb{E}_{\mathbb{Q}}\brackets{\sup_{\omega\in \Omega} (\Doutw)^2 }\\
	\leq&  \sum_{l=1}^d \mathbb{E}_{\mathbb{Q}}\brackets{\sup_{\omega\in \Omega}\verts{\mathbb{E}_{(x,y)\sim\mathbb{Q}}\brackets{\partial_{w_l}\ell_{out}(\omega,h_{\omega}^{\star}(x),y)} - \frac{1}{m}\sum_{j=1}^m \partial_{\omega_l}\ell_{out}\parens{\omega,h_{\omega}^{\star}(\tilde{x}_j),\tilde{y}_j}  }^2}\\
	\leq& \parens{\sqrt{\frac{d}{\lambda^2m}}c(\Omega)\max(M_{out}\lipout\diam(\Omega),\Lambda M_{out}^{{2}})}^2,
\end{align*}
where the last inequality follows by application of \cref{prop:empirical_process} with $p=2$ to each term in the right-hand side of the first inequality for $1\leq l\leq d$. We get the desired bound on  $\mathbb{E}_{\mathbb{Q}}\brackets{\sup_{\omega\in \Omega} \Doutw }$ by taking the square root of the above inequality.
\end{proof}

\begin{proposition}[Maximal inequalities for RKHS-valued empirical processes]\label{prop:exp_uni_bound_2}
Let $\Lambda$ be a positive constant. Under \cref{assump:K_meas,assump:compact,assump:K_bounded,assump:reg_lin_lout,assump:convexity_lin}, the following maximal inequalities hold for any $0<\lambda\leq \Lambda$:
 \begin{align*}
 	\mathbb{E}_{\mathbb{Q}}\brackets{\sup_{\omega\in \Omega} \Douth }&\leq 
 	{\lambda^{-\frac{1}{4}}m^{-\frac{1}{2}}  \parens{c(\Omega)\max\parens{\widetilde{M}_{out,1}\widetilde{L}_{out,1}\diam(\Omega),\Lambda\widetilde{M}_{out,1}^2}}^{\frac{1}{4}}},\\
 	\mathbb{E}_{\mathbb{P}}\brackets{\sup_{\omega\in \Omega} \Dinh }&\leq 
 	{\lambda^{-\frac{1}{4}}n^{-\frac{1}{2}} \parens{c(\Omega)\max\parens{\widetilde{M}_{in,1}\widetilde{L}_{in,1}\diam(\Omega),\Lambda\widetilde{M}_{in,1}^2}}^{\frac{1}{4}}},\\
 	 	\mathbb{E}_{\mathbb{P}}\brackets{\sup_{\omega\in \Omega} \Dinwh }&\leq 
 	{\lambda^{-\frac{1}{4}}n^{-\frac{1}{2}}d^{\frac{1}{2}} \parens{c(\Omega)\max\parens{\widetilde{M}_{in,1}\widetilde{L}_{in,1}\diam(\Omega),\Lambda\widetilde{M}_{in,1}^2}}^{\frac{1}{4}}},\\
 	\mathbb{E}_{\mathbb{P}}\brackets{\sup_{\omega\in \Omega} \Dinhh }&\leq 
 	{\lambda^{-\frac{1}{4}}n^{-\frac{1}{2}} \parens{c(\Omega)\max\parens{\widetilde{M}^2_{in,2}\widetilde{L}_{in,2}\diam(\Omega),\Lambda\widetilde{M}^2_{in,2}}}^{\frac{1}{4}}},
 \end{align*}
 where $c(\Omega)$ is a positive constant greater than $1$ that depends only on $\Omega$ and $d$, $\widetilde{L}_{out,1},\widetilde{L}_{in,1},\widetilde{L}_{in,2},\widetilde{M}_{out,1},\widetilde{M}_{in,1}$, and $\widetilde{M}_{in,2}$ are positive constants defined as:
 \begin{gather*}
     \widetilde{L}_{out,1}\coloneqq 2\lipout M_{out}\kappa,\quad\widetilde{L}_{in,1}\coloneqq 2\lipin M_{in}\kappa,\quad\widetilde{L}_{in,2}\coloneqq 2\lipin M_{in}\kappa^2,\\
     \widetilde{M}_{out,1}\coloneqq M_{out}^2\kappa,\quad\widetilde{M}_{in,1}\coloneqq M_{in}^2\kappa,\quad\widetilde{M}_{in,2}\coloneqq M_{in}^2\kappa^2,
 \end{gather*}
 and $\lipout,\lipin,M_{out}$, and $M_{in}$ are positive constants given in \cref{prop:uniform_boundedness,prop:uniform_Lipschitzness}.
 
\end{proposition}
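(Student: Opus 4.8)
The plan is to reduce each of the four RKHS‑ or operator‑valued discrepancies $\Douth,\Dinh,\Dinwh,\Dinhh$ to a real‑valued \emph{degenerate $U$‑process of order $2$} indexed by $\omega\in\Omega$, for which a maximal inequality over uniformly bounded, Lipschitz parametric families is available (the $U$‑process analogue of \cref{prop:empirical_process}, relying on the packing‑number estimates of \cref{prop:estimate_covering_numbers} and on \citet{sherman1994maximal,nolan1987u}). First I would use the closed‑form gradients and Hessians of \cref{prop:fre_diff_L,prop:fre_diff_L_v} to write each error, evaluated at $(\omega,h^\star_\omega)$, as the norm of a Bochner integral against the centered signed measure $\widehat{\mathbb{Q}}_m-\mathbb{Q}$ (for the outer terms) or $\widehat{\mathbb{P}}_n-\mathbb{P}$ (for the inner terms); note that the $\lambda h$ summand of $\partial_h L_{in}$ cancels in the difference, that the operators in $\Dinhh,\Dinwh$ are Hilbert–Schmidt by \cref{prop:fre_diff_L_v} so that $\Verts{\cdot}_{\op}\le\Verts{\cdot}_{\hs}$, and that $\partial^2_{\omega,h}L_{in}-\partial^2_{\omega,h}\widehat{L}_{in}$ splits componentwise into $d$ $\mathcal{H}$‑valued integrals. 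The key step is then to square the relevant norm and invoke the reproducing property $\langle K(x,\cdot),K(x',\cdot)\rangle_\mathcal{H}=K(x,x')$ (and $\langle K(x,\cdot)^{\otimes2},K(x',\cdot)^{\otimes2}\rangle_{\hs}=K(x,x')^2$), which turns each squared discrepancy into $\iint\phi_\omega\,d(\widehat{\mathbb{Q}}_m-\mathbb{Q})^{\otimes2}$ (resp.\ with $\widehat{\mathbb{P}}_n-\mathbb{P}$) for a real‑valued symmetric kernel $\phi_\omega$ — for instance $\phi_\omega((x,y),(x',y'))=\partial_v\ell_{out}(\omega,h^\star_\omega(x),y)\,\partial_v\ell_{out}(\omega,h^\star_\omega(x'),y')\,K(x,x')$ for $\Douth$, with the obvious analogues for $\Dinh$ and $\Dinwh$ (a product of $\partial^2_v\ell_{in}$ or $\partial^2_{\omega_l,v}\ell_{in}$ values times $K(x,x')$) and for $\Dinhh$ (a product of $\partial^2_v\ell_{in}$ values times $K(x,x')^2$).

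Next I would pass from these $V$‑statistics to degenerate $U$‑statistics. Since $\int 1\,d(\widehat{\mathbb{Q}}_m-\mathbb{Q})=0$, one has $\iint\phi_\omega\,d(\widehat{\mathbb{Q}}_m-\mathbb{Q})^{\otimes2}=\iint\psi_\omega\,d\widehat{\mathbb{Q}}_m^{\otimes2}$, where $\psi_\omega$ is the completely degenerate (Hoeffding‑projected) part of $\phi_\omega$; isolating the diagonal then expresses this as $\tfrac{m-1}{m}$ times a degenerate $U$‑statistic of order $2$ plus a diagonal term bounded by $m^{-1}$ times the envelope of $\psi_\omega$. I would then verify the inputs needed for the $U$‑process inequality: by \cref{prop:uniform_boundedness} and \cref{assump:K_bounded} (which give $|K(x,x')|\le\kappa$ and $K(x,x')^2\le\kappa^2$) the kernels $\phi_\omega$, and hence $\psi_\omega$ up to a factor $4$, are bounded over $\Omega$ by $\widetilde{M}_{out,1}=M_{out}^2\kappa$, $\widetilde{M}_{in,1}=M_{in}^2\kappa$, or $\widetilde{M}_{in,2}=M_{in}^2\kappa^2$; and by \cref{prop:uniform_Lipschitzness} together with the product and chain rules, $\omega\mapsto\phi_\omega$ is $\lambda^{-1}\widetilde{L}$‑Lipschitz with $\widetilde{L}_{out,1}=2\lipout M_{out}\kappa$, $\widetilde{L}_{in,1}=2\lipin M_{in}\kappa$, or $\widetilde{L}_{in,2}=2\lipin M_{in}\kappa^2$ (the factor $2$ from differentiating a product, the $\kappa$ or $\kappa^2$ from the kernel factor). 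One must also check that the Hoeffding projection $\psi_\omega$ inherits these bounds up to absolute constants.

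Finally I would apply the maximal inequality for degenerate order‑$2$ $U$‑processes over such a family indexed by the compact set $\Omega$; for degenerate kernels this yields the \emph{faster} rate $m^{-1}$ (instead of $m^{-1/2}$), of the form $\mathbb{E}\big[\sup_{\omega\in\Omega}\big|\iint\psi_\omega\,d\widehat{\mathbb{Q}}_m^{\otimes2}\big|\big]\le m^{-1}\sqrt{\lambda^{-1}c(\Omega)\max(\widetilde{M}\,\widetilde{L}\,\diam(\Omega),\Lambda\widetilde{M}^2)}$, mirroring the structure of \cref{prop:empirical_process} but with an extra square root from the degenerate kernel. Absorbing the $O(1/m)$ diagonal term (using $c(\Omega)\ge1$ and $\lambda\le\Lambda$, so that the envelope contribution is dominated) gives $\mathbb{E}[\sup_\omega(\Douth)^2]\le\lambda^{-1/2}m^{-1}\sqrt{c(\Omega)\max(\widetilde{M}_{out,1}\widetilde{L}_{out,1}\diam(\Omega),\Lambda\widetilde{M}_{out,1}^2)}$, and similarly for $\Dinh$, $\Dinhh$ (with $m$ replaced by $n$), and for $\Dinwh$ an extra factor $d$ appears from summing its $d$ components. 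Taking square roots and applying Jensen's inequality, $\mathbb{E}[\sup_\omega\Douth]\le(\mathbb{E}[\sup_\omega(\Douth)^2])^{1/2}$, then delivers the four stated bounds. I expect the main obstacle to be the reproducing‑property reduction carried out cleanly and uniformly for all three output types (scalar, operator into $\mathbb{R}^d$, operator on $\mathcal{H}$) while tracking the exact envelope and Lipschitz constants, together with the degeneracy step — verifying that $\psi_\omega$ inherits the uniform bound and the Lipschitz property with constants of the claimed order and that the diagonal is genuinely lower order; once the squared errors are expressed as degenerate $U$‑processes, the remainder is a direct application of the $U$‑process maximal inequality, exactly as in the proof of \cref{prop:exp_uni_bound}.
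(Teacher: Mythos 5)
Your proposal is correct and follows essentially the same route as the paper: square each RKHS/Hilbert–Schmidt norm, use the reproducing property (and $\langle K(x,\cdot)\otimes K(x,\cdot),K(x',\cdot)\otimes K(x',\cdot)\rangle_{\hs}=K(x,x')^2$) to obtain real-valued symmetric kernels indexed by $\omega$, reduce to a degenerate order-2 $U$-statistic plus an $O(1/n)$ diagonal term via the Hoeffding projection, apply Sherman's maximal inequality with the packing-number estimate for the bounded, $\lambda^{-1}\widetilde{L}$-Lipschitz family, and finish with Cauchy--Schwarz, summing the $d$ components for $\Dinwh$. The only differences are bookkeeping (where $c(\Omega)$ sits relative to the square root, the exact diagonal split), which the paper also absorbs by redefining $c(\Omega)$.
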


\begin{proof}
Consider parametric families of real-valued functions indexed by $\Omega$ of the form:
\begin{align*}
	\mathcal{T}_{s,a}\coloneqq\braces{ t_{\omega}: ((x,y),(x',y'))\mapsto  f_s(\omega,x,y)f_s(\omega,x',y')K^{a}(x,x')\mid\omega\in \Omega },
\end{align*}
where $a\in \{1,2\}$,  $s$ is an integer satisfying {$0\leq s\leq d+2$}, and $f_s(\omega,x,y)$ are real-valued functions given by:
\begin{gather*}
	f_0: (\omega,x,y)\mapsto\partial_v \ell_{out}(\omega, h^\star_\omega(x), y),\qquad 
	f_1:(\omega,x,y)\mapsto\partial_v \ell_{in}(\omega, h^\star_\omega(x), y),\\
	f_2:(\omega,x,y)\mapsto\partial_v^2 \ell_{in}(\omega, h^\star_\omega(x), y),\qquad
	f_{2+l}: (\omega,x,y)\mapsto\partial_{\omega_l, v}^2 \ell_{in}(\omega, h^\star_\omega(x), y), \quad 1\leq l\leq d.
\end{gather*}
For any $1\leq s\leq d+2$, the real-valued functions $f_s$ are uniformly bounded by a positive constant $M_{in}$ thanks to {\cref{prop:uniform_boundedness}}. Moreover, since the kernel $K$ is bounded by {$\kappa$ due to \cref{assump:K_bounded}}, it follows that all elements $t_{\omega}$ of $\mathcal{T}_{s,a}$ are uniformly bounded by {$\widetilde{M}_{in,a}\coloneqq M_{in}^2\kappa^a$}. Moreover, for $1\leq s\leq d+2$, the functions $\omega\mapsto f_{s}(\omega,x,y)$ are $\lambda^{-1}\lipin$-Lipschitz for any $(x,y)\in\mathcal{X}\times\mathcal{Y}$ by {\cref{prop:uniform_Lipschitzness}}. Hence, it follows that the map $\omega\mapsto t_{\omega}((x,y),(x',y'))$ is $\lambda^{-1}\widetilde{L}_{in,a}$-Lipschitz with {$\widetilde{L}_{in,a}\coloneqq 2\lipin M_{in}\kappa^a$} for any $(x,y)$ and $(x',y')$ in $\mathcal{X}\times \mathcal{Y}$. Similarly, for $s=0$, we get the same properties, albeit, with different constants, \textit{i.e.}, the family {$\mathcal{T}_{0,a}$} is uniformly bounded by a constant {$\widetilde{M}_{out,a}\coloneqq M_{out}^2\kappa^a$} with {$M_{out}$ introduced in \cref{prop:uniform_boundedness}}, and is $\lambda^{-1}\widetilde{L}_{out,a}$-Lipschitz in its parameter with {$\widetilde{L}_{out,a}\coloneqq 2\lipout M_{out}\kappa^a$} where $\lipout$ is given in {\cref{prop:uniform_Lipschitzness}}.
Hence, the maximal inequality in \cref{lem:u_stat_sup_p_omega} is applicable to each of these families with $\mathcal{Z}$ set to $\mathcal{X}\times \mathcal{Y}$, and {$\PP$ set either to $\mathbb{P}$ for $1\leq s\leq d+2$, or to $\mathbb{Q}$ for $s=0$}.
For conciseness, in all what follows, we will write $z = (x,y)$ and $z_i = (x_i,y_i)$ and $\tilde{z}_j = (\tilde{x}_j,\tilde{y}_j)$ for $1\leq i\leq n$ and $1\leq j\leq m$. 

\textbf{Maximal inequalities for $\Douth$ and $\Dinh$.} We control $\Douth$ first as $\Dinh$ will be dealt with similarly. Using Cauchy-Schwarz inequality and standard calculus, we have that:
\begin{align*}
	&\mathbb{E}_{\mathbb{Q}}\brackets{\sup_{\omega\in \Omega} \Douth }^2
	\\\leq & \mathbb{E}_{\mathbb{Q}}\brackets{\sup_{\omega\in \Omega} (\Douth)^2 }\\
	\coloneqq & \mathbb{E}_{\mathbb{Q}}\brackets{ \sup_{\omega\in \Omega} \Verts{ \mathbb{E}_{(x,y)\sim \mathbb{Q}}\brackets{\partial_v\ell_{out}(\omega,h_{\omega}^{\star}(x),y)K(x,\cdot)} - \frac{1}{m}\sum_{j=1}^m \partial_v\ell_{out}(\omega,h_{\omega}^{\star}(\tilde{x}_j),\tilde{y}_j)K(\tilde{x}_j,\cdot) }_{\mathcal{H}}^2 }\\
	= & \mathbb{E}_{\mathbb{Q}}\brackets{ \sup_{\omega\in \Omega} \mathbb{E}_{z,z'\sim\mathbb{Q}\otimes\mathbb{Q}}\left[t_\omega(z,z')\right]+\frac{1}{m^2}\sum_{i,j=1}^m t_\omega(z_i,z_j)-\frac{2}{m}\sum_{j=1}^m\mathbb{E}_{z\sim\mathbb{Q}}\left[t_\omega(z,\tilde{z}_j)\right]},
\end{align*}
 where $t_\omega(z,z') \coloneqq \partial_v\ell_{out}(\omega,h_{\omega}^{\star}(x),y)\partial_v\ell_{out}(\omega,h_{\omega}^{\star}(x'),y')K(x,x')\in \mathcal{T}_{0,1}$. The last term is precisely what \cref{lem:u_stat_sup_p_omega} controls when applying it to the family $\mathcal{T}_{0,1}$ and choosing $\PP$ to be $\mathbb{Q}$. Therefore, the following maximal inequality holds by application of \cref{lem:u_stat_sup_p_omega}:
 \begin{align*}
 	\mathbb{E}_{\mathbb{Q}}\brackets{\sup_{\omega\in \Omega} \Douth }\leq 
 	\lambda^{-\frac{1}{4}}m^{-\frac{1}{2}}  \parens{c(\Omega)\max\parens{\widetilde{M}_{out,1}\widetilde{L}_{out,1}\diam(\Omega),\Lambda\widetilde{M}_{out,1}^2}}^{\frac{1}{4}},
 \end{align*}
 where $c(\Omega)$ is a positive constant {greater than $1$} that depends only on $\Omega$ and $d$. We obtain a similar inequality for $\Dinh$ by carrying out similar calculations, then applying \cref{lem:u_stat_sup_p_omega} to the family $\mathcal{T}_{1,1}$ and choosing $\mathbb{P}$ for the probability distribution $\PP$. The resulting bound is then of the form:
\begin{align*}
 	\mathbb{E}_{\mathbb{P}}\brackets{\sup_{\omega\in \Omega} \Dinh }\leq 
 	\lambda^{-\frac{1}{4}}n^{-\frac{1}{2}} \parens{c(\Omega)\max\parens{\widetilde{M}_{in,1}\widetilde{L}_{in,1}\diam(\Omega),\Lambda\widetilde{M}_{in,1}^2}}^{\frac{1}{4}}.
 \end{align*}

\textbf{A maximal inequality for $\Dinwh$.} We have:
\begin{align*}
	&\mathbb{E}_{\mathbb{P}}\brackets{\sup_{\omega\in \Omega} \Dinwh }^2\\
	\stackrel{(a)}{\leq} & \mathbb{E}_{\mathbb{P}}\brackets{\sup_{\omega\in \Omega} (\Dinwh)^2 }\\
	\stackrel{(b)}{\coloneqq} & \mathbb{E}_{\mathbb{P}}\brackets{ \sup_{\omega\in \Omega} \Verts{ \mathbb{E}_{(x,y)\sim \mathbb{P}}\brackets{\partial_{\omega,v}^2\ell_{in}(\omega,h_{\omega}^{\star}(x),y)K(x,\cdot)} - \frac{1}{n}\sum_{i=1}^n \partial_{\omega,v}^2\ell_{in}(\omega,h_{\omega}^{\star}(x_i),y_i)K(x_i,\cdot) }_{\op}^2 }\\
	\stackrel{(c)}{\leq} & \mathbb{E}_{\mathbb{P}}\brackets{ \sup_{\omega\in \Omega} \Verts{ \mathbb{E}_{(x,y)\sim \mathbb{P}}\brackets{\partial_{\omega,v}^2\ell_{in}(\omega,h_{\omega}^{\star}(x),y)K(x,\cdot)} - \frac{1}{n}\sum_{i=1}^n \partial_{\omega,v}^2\ell_{in}(\omega,h_{\omega}^{\star}(x_i),y_i)K(x_i,\cdot) }_{\hs}^2 }\\
	\stackrel{(d)}{=} & \sum_{l=1}^d \mathbb{E}_{\mathbb{P}}\brackets{ \sup_{\omega\in \Omega} \Verts{ \mathbb{E}_{(x,y)\sim \mathbb{P}}\brackets{\partial_{\omega_l,v}^2\ell_{in}(\omega,h_{\omega}^{\star}(x),y)K(x,\cdot)} - \frac{1}{n}\sum_{i=1}^n \partial_{\omega_l,v}^2\ell_{in}(\omega,h_{\omega}^{\star}(x_i),y_i)K(x_i,\cdot) }_{\mathcal{H}}^2 }\\
	\stackrel{(e)}{=} & \sum_{l=1}^d \mathbb{E}_{\mathbb{P}}\brackets{ \sup_{\omega\in \Omega} \mathbb{E}_{z,z'\sim\mathbb{P}\otimes\mathbb{P}}\left[t_{\omega,l}(z,z')\right]+\frac{1}{n^2}\sum_{i,j=1}^n t_{\omega,l}(z_i,z_j)-\frac{2}{n}\sum_{i=1}^n\mathbb{E}_{z\sim\mathbb{P}}\left[t_{\omega,l}(z,z_i)\right]},
\end{align*}
where we introduced $t_{\omega,l}(z,z') \coloneqq \partial_{\omega_l,v}^2\ell_{in}(\omega,h_{\omega}^{\star}(x),y)\partial_{\omega_l,v}^2\ell_{in}(\omega,h_{\omega}^{\star}(x'),y')K(x,x')\in \mathcal{T}_{2+l,1}$. Here, (a) follows from the Cauchy-Schwarz inequality, (b) is obtained by definition of $\Dinwh$, while (c) uses the general fact that the operator norm of an operator is upper-bounded by its Hilbert-Schmidt norm which is finite in our case by application of \cref{prop:fre_diff_L_v}. Moreover, (d) further uses the Hilbert-Schmidt norm of an operator  in terms of the norm of its rows, while (e) simply expands the squared RKHS norm and uses the reproducing property in the RKHS $\mathcal{H}$. Each term in the last item (e) is precisely what \cref{lem:u_stat_sup_p_omega} controls when applying it to the families $\mathcal{T}_{2+l,1}$ for $1\leq l\leq d$ and choosing $\PP$ to be $\mathbb{P}$.  Therefore, the following maximal inequality holds by a direct application of \cref{lem:u_stat_sup_p_omega}:
 \begin{align*}
 	\mathbb{E}_{\mathbb{P}}\brackets{\sup_{\omega\in \Omega} \Dinwh }\leq 
 	\lambda^{-\frac{1}{4}}n^{-\frac{1}{2}}d^{\frac{1}{2}} \parens{c(\Omega)\max\parens{\widetilde{M}_{in,1}\widetilde{L}_{in,1}\diam(\Omega),\Lambda\widetilde{M}_{in,1}^2}}^{\frac{1}{4}},
 \end{align*}
 where $c(\Omega)$ is a positive constant greater than $1$ that depends only on $\Omega$ and $d$.

\textbf{A maximal inequality for $\Dinhh$.} We will use a similar approach as for $\Dinwh$. We have:
\begin{align*}
	&\mathbb{E}_{\mathbb{P}}\brackets{\sup_{\omega\in \Omega} \Dinhh }^2\\
	\stackrel{(a)}{\leq} & \mathbb{E}_{\mathbb{P}}\brackets{\sup_{\omega\in \Omega} (\Dinhh)^2 }\\
	\stackrel{(b)}{\coloneqq} & \mathbb{E}_{\mathbb{P}}\Bigg[\sup_{\omega\in \Omega} \Bigg\| \mathbb{E}_{(x,y)\sim \mathbb{P}}\brackets{\partial_{v}^2\ell_{in}(\omega,h_{\omega}^{\star}(x),y)K(x,\cdot)\otimes K(x,\cdot)} \\
	&\qquad\qquad\qquad\qquad- \frac{1}{n}\sum_{i=1}^n \partial_{v}^2\ell_{in}(\omega,h_{\omega}^{\star}(x_i),y_i)K(x_i,\cdot)\otimes K(x_i,\cdot)\Bigg\|_{\op}^2 \Bigg]\\
	\stackrel{(c)}{\leq} & \mathbb{E}_{\mathbb{P}}\Bigg[ \sup_{\omega\in \Omega} \Bigg\| \mathbb{E}_{(x,y)\sim \mathbb{P}}\brackets{\partial_{v}^2\ell_{in}(\omega,h_{\omega}^{\star}(x),y)K(x,\cdot)\otimes K(x,\cdot)}\\
	&\qquad\qquad\qquad\qquad- \frac{1}{n}\sum_{i=1}^n \partial_{v}^2\ell_{in}(\omega,h_{\omega}^{\star}(x_i),y_i)K(x_i,\cdot)\otimes K(x_i,\cdot)\Bigg\|_{\hs}^2 \Bigg]\\
	\stackrel{(d)}{=} & \mathbb{E}_{\mathbb{P}}\brackets{\sup_{\omega\in \Omega} \mathbb{E}_{z,z'\sim\mathbb{P}\otimes\mathbb{P}}\left[t_\omega(z,z')\right]+\frac{1}{n^2}\sum_{i,j=1}^n t_\omega(z_i,z_j)-\frac{2}{n}\sum_{i=1}^n\mathbb{E}_{z\sim\mathbb{P}}\left[t_\omega(z,z_i)\right]},
\end{align*}
where we introduced $t_{\omega}(z,z')\coloneqq\partial_{v}^2\ell_{in}(\omega,x,y)\partial_{v}^2\ell_{in}(\omega,x',y')K^2(x,x')\in \mathcal{T}_{2,2}$. Here, (a) follows from the Cauchy-Schwarz inequality, (b) is obtained by definition of $\Dinhh$, while (c) uses the general fact that the operator norm of an operator is upper-bounded by its Hilbert-Schmidt norm which is finite in our case by application of \cref{prop:fre_diff_L_v}. Moreover, (d) further uses the identity in \cref{lem:hs_identity} for computing the Hilbert-Schmidt norm of sum/expectation of tensor-product operators. 
The last item (d) is precisely what \cref{lem:u_stat_sup_p_omega} controls when applying it to the family $\mathcal{T}_{2,2}$  and choosing $\PP$ to be $\mathbb{P}$.  Therefore, the following maximal inequality holds by direct application of \cref{lem:u_stat_sup_p_omega}:
 \begin{align*}
 	\mathbb{E}_{\mathbb{P}}\brackets{\sup_{\omega\in \Omega} \Dinhh }\leq 
 	\lambda^{-\frac{1}{4}}n^{-\frac{1}{2}} \parens{c(\Omega)\max\parens{\widetilde{M}_{in,2}\widetilde{L}_{in,2}\diam(\Omega),\Lambda\widetilde{M}^2_{in,2}}}^{\frac{1}{4}},
 \end{align*}
 where $c(\Omega)$ is a positive constant greater than $1$ that depends only on $\Omega$ and $d$.
\end{proof}

\subsection{\texorpdfstring{Proof of \cref{th:generalizationBounds}}{Proof of Theorem~\ref{th:generalizationBounds}}}\label{app_sub:main_proof}

\begin{theorem}[Generalization bounds]\label{th:gen_bound}
The following holds under \cref{assump:K_meas,assump:compact,assump:convexity_lin,assump:K_bounded,assump:reg_lin_lout}:
\begin{align*}
    \mathbb{E}\brackets{\sup_{\omega\in\Omega}\verts{\mathcal{F}(\omega)-\widehat{\mathcal{F}}(\omega)}}
    &\lesssim
    \frac{1}{\lambda m^{\frac{1}{2}}}+\frac{C_{out}}{\lambda^{\frac{5}{4}} n^{\frac{1}{2}}},\\
    \mathbb{E}\left[\sup_{\omega\in\Omega}\left\|\nabla\mathcal{F}(\omega)-\widehat{\nabla\mathcal{F}}(\omega)\right\|\right]
    &\lesssim
    \frac{1}{\lambda}\left(d^{\frac{1}{2}} + \frac{C_{in}}{\lambda^{\frac{1}{4}}}\right)\frac{1}{m^{\frac{1}{2}}} + \frac{C_{out}}{\lambda^{\frac{5}{4}}}\parens{2 + 3\frac{C_{in}}{\lambda}+\frac{C_{in}^2}{\lambda^2}}\frac{1}{n^{\frac{1}{2}}},
\end{align*}
where the constants $C_{in}$ and $C_{out}$ are given in \cref{prop:lip_continuity_out}.
\end{theorem}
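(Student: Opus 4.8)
The plan is to assemble three results already in place: the almost-sure point-wise decomposition of \cref{prop:grad_app_bound}, the empirical-process maximal inequalities of \cref{prop:exp_uni_bound}, and the degenerate $U$-process maximal inequalities of \cref{prop:exp_uni_bound_2}. Concretely, I would fix a constant $\Lambda\geq\lambda$ (so that all the boundedness and Lipschitz constants of \cref{sec_app:prel_res} are defined), take $\sup_{\omega\in\Omega}$ and then expectation in the point-wise bounds, substitute the maximal inequalities term by term, and finally collect the resulting powers of $\lambda$, $m^{-1/2}$ and $n^{-1/2}$ — absorbing $c(\Omega)$, $\diam(\Omega)$, $d$, $\Lambda$ and the local boundedness/Lipschitz constants into the implicit constant hidden in $\lesssim$, while keeping $C_{in}$ and $C_{out}$ explicit.

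In more detail, I would start from \cref{prop:grad_app_bound}, which gives, almost surely and for every $\omega\in\Omega$,
\begin{align*}
    \verts{\mathcal{F}(\omega)-\widehat{\mathcal{F}}(\omega)} &\leq \Dout + \tfrac{C_{out}}{\lambda}\,\Dinh,\\
    \Verts{\nabla\mathcal{F}(\omega)-\widehat{\nabla\mathcal{F}}(\omega)} &\leq \Doutw + \tfrac{C_{in}}{\lambda}\,\Douth + \tfrac{C_{out}C_{in}}{\lambda^2}\,\Dinhh + \tfrac{C_{out}}{\lambda}\,\Dinwh + \tfrac{C_{out}}{\lambda}\parens{1 + 2\tfrac{C_{in}}{\lambda} + \tfrac{C_{in}^2}{\lambda^2}}\Dinh .
\end{align*}
Taking $\sup_{\omega\in\Omega}$ on the right-hand sides by subadditivity of the supremum, and then expectation over the samples, reduces everything to bounding $\mathbb{E}[\sup_\omega\Dout]$ and the five analogous terms. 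At this point I would use that $\omega\mapsto h^\star_\omega$ is deterministic — it minimizes the population inner objective $L_{in}(\omega,\cdot)$, which does not depend on the samples — so that $\Dout$, $\Doutw$, $\Douth$ are measurable functions of the outer samples $(\tilde{x}_j,\tilde{y}_j)_{j\leq m}$ alone, while $\Dinh$, $\Dinwh$, $\Dinhh$ depend only on the inner samples $(x_i,y_i)_{i\leq n}$; since the two samples are independent, Fubini's theorem collapses each joint expectation to the marginal $\mathbb{E}_{\mathbb{Q}}[\cdot]$ or $\mathbb{E}_{\mathbb{P}}[\cdot]$, which is exactly the form in which \cref{prop:exp_uni_bound,prop:exp_uni_bound_2} are stated. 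Measurability of the suprema is not an issue, since $\omega\mapsto h^\star_\omega$ is Lipschitz by \cref{prop:lip_hstaromega} and the point-wise losses are continuous by \cref{assump:reg_lin_lout}, so each $\delta$-term is a continuous function of $\omega$ on the compact set $\Omega$.

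It then remains to substitute the maximal inequalities: \cref{prop:exp_uni_bound} yields $\mathbb{E}_{\mathbb{Q}}[\sup_\omega\Dout]\lesssim \lambda^{-1}m^{-1/2}$ and $\mathbb{E}_{\mathbb{Q}}[\sup_\omega\Doutw]\lesssim d^{1/2}\lambda^{-1}m^{-1/2}$, while \cref{prop:exp_uni_bound_2} yields $\mathbb{E}_{\mathbb{Q}}[\sup_\omega\Douth]\lesssim \lambda^{-1/4}m^{-1/2}$ and $\mathbb{E}_{\mathbb{P}}[\sup_\omega\Dinh]$, $\mathbb{E}_{\mathbb{P}}[\sup_\omega\Dinwh]$, $\mathbb{E}_{\mathbb{P}}[\sup_\omega\Dinhh]$ all $\lesssim \lambda^{-1/4}n^{-1/2}$. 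Plugging these in, the value-function bound becomes $\lambda^{-1}m^{-1/2} + C_{out}\lambda^{-5/4}n^{-1/2}$; for the gradient, the two $m^{-1/2}$ contributions $\Doutw$ and $\tfrac{C_{in}}{\lambda}\Douth$ combine to $\lambda^{-1}\parens{d^{1/2}+C_{in}\lambda^{-1/4}}m^{-1/2}$, and the three $n^{-1/2}$ contributions, from $\tfrac{C_{out}C_{in}}{\lambda^2}\Dinhh$, $\tfrac{C_{out}}{\lambda}\Dinwh$, and $\tfrac{C_{out}}{\lambda}\parens{1+2\tfrac{C_{in}}{\lambda}+\tfrac{C_{in}^2}{\lambda^2}}\Dinh$, sum — after factoring out $C_{out}\lambda^{-5/4}$ — to $C_{out}\lambda^{-5/4}\parens{2+3\tfrac{C_{in}}{\lambda}+\tfrac{C_{in}^2}{\lambda^2}}n^{-1/2}$, which is the announced inequality (and, collapsing the hidden constant to a single $C=C(\Omega,d,\lambda,\kappa,\ldots)$, recovers \cref{th:generalizationBounds}).

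There is no deep obstacle here; the analytic difficulty is contained in the three cited propositions. The steps requiring the most care are the $\lambda$-exponent bookkeeping — in particular, seeing that the $\lambda^{-1/4}$ produced by the $U$-process bounds of \cref{prop:exp_uni_bound_2} combines with the $\lambda^{-1}$-type prefactors in the decomposition of \cref{prop:grad_app_bound} to give the $\lambda^{-5/4}$ exponent — and the short Fubini/measurability argument that lets one replace the joint expectation by the marginal one over $\mathbb{P}$ or $\mathbb{Q}$, which is what makes \cref{prop:exp_uni_bound,prop:exp_uni_bound_2} directly applicable.
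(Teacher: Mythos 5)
Your proposal is correct and follows essentially the same route as the paper's proof: the point-wise decomposition of \cref{prop:grad_app_bound}, followed by term-by-term substitution of the maximal inequalities from \cref{prop:exp_uni_bound,prop:exp_uni_bound_2}, and the same $\lambda$-exponent bookkeeping yielding the $\lambda^{-5/4}$ factors and the $(2+3C_{in}/\lambda+C_{in}^2/\lambda^2)$ combination. Your explicit independence/Fubini remark reducing the joint expectation to the marginal ones over $\mathbb{P}$ or $\mathbb{Q}$ is a point the paper leaves implicit, and it is handled correctly.
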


\begin{proof}
Using the point-wise estimates in \cref{prop:grad_app_bound} and taking their supremum over $\Omega$ followed by the expectations over data, the following error bounds hold:
\begin{align*}
   \mathbb{E}\brackets{\sup_{\omega\in\Omega} \verts{\mathcal{F}(\omega)-\widehat{\mathcal{F}}(\omega)}}
   \leq & 
    \mathbb{E}_{\mathbb{Q}}\brackets{\sup_{\omega\in\Omega}\Dout}
    +\frac{C_{out}}{\lambda}\mathbb{E}_{\mathbb{P}}\brackets{\sup_{\omega\in\Omega}\Dinh},\\
    \mathbb{E}\brackets{\sup_{\omega\in\Omega}\Verts{\nabla\mathcal{F}(\omega)-\widehat{\nabla\mathcal{F}}(\omega)}}
 	\leq &  
    \mathbb{E}_{\mathbb{Q}}\brackets{\sup_{\omega\in\Omega}\Doutw}  + \frac{C_{in}}{\lambda}\mathbb{E}_{\mathbb{Q}}\brackets{\sup_{\omega\in\Omega}\Douth}\\ 
    &+\frac{C_{out}}{\lambda}\parens{1 + 2\frac{C_{in}}{\lambda}+\frac{C_{in}^2}{\lambda^2}} \mathbb{E}_{\mathbb{P}}\brackets{\sup_{\omega\in\Omega}\Dinh}\\
    &+ \frac{C_{out}C_{in}}{\lambda^2}\mathbb{E}_{\mathbb{P}}\brackets{\sup_{\omega\in\Omega}\Dinhh} + \frac{C_{out}}{\lambda}\mathbb{E}_{\mathbb{P}}\brackets{\sup_{\omega\in\Omega}\Dinwh}.
\end{align*}
Furthermore, we can use the maximal inequalities in  \cref{prop:exp_uni_bound,prop:exp_uni_bound_2} to control each term appearing in the right-hand side of the above inequalities:
\begin{align*}
    \mathbb{E}\brackets{\sup_{\omega\in\Omega} \verts{\mathcal{F}(\omega)-\widehat{\mathcal{F}}(\omega)}}
   \leq & 
   R\parens{ m^{-\frac{1}{2}}\lambda^{-1}  + C_{out}n^{-\frac{1}{2}}\lambda^{-(1+\frac{1}{4})}},\\
    \mathbb{E}\brackets{\sup_{\omega\in\Omega}\Verts{\nabla\mathcal{F}(\omega)-\widehat{\nabla\mathcal{F}}(\omega)}}
 	\leq &  R\Big( m^{-\frac{1}{2}}\lambda^{-1}d^{\frac{1}{2}} + C_{in}m^{-\frac{1}{2}}\lambda^{-(1+\frac{1}{4})}\\
 	&\qquad+ C_{out}n^{-\frac{1}{2}}\lambda^{-(1+\frac{1}{4})}\parens{1 + 2\frac{C_{in}}{\lambda}+\frac{C_{in}^2}{\lambda^2}}\\
 	&\qquad+ C_{out}C_{in}n^{-\frac{1}{2}}\lambda^{-(2+\frac{1}{4})}  +  C_{out}n^{-\frac{1}{2}}\lambda^{-(1+\frac{1}{4})}\Big),
\end{align*}
where the constant $R$ depends only on the Lipschitz constants $\lipin$ and $\lipout$, the upper bounds $M_{in}$ and $M_{out}$, the bound $\kappa$ on the kernel, the set $\Omega$, and the dimension $d$. Rearranging the obtained upper bounds concludes the proof.
\end{proof}

\subsection{Generalization for bilevel gradient methods}\label{app:proofs_cor}
\begin{proof}[Proof of \cref{cor:gradient}]
    Consider that $\inf_{\omega,v,y} \ell_{out}(\omega,v,y) - c\|\omega\|^2 \geq 0$, which entails $\ell_{out}(\omega,v,y) \geq c\|\omega\|^2$ for all $v,y$. Using \Cref{prop:bound_hstaromega} and setting $B = \sup_{y \in \mathcal{Y} } |\partial_v \ell_{in}(\omega_0,0,y)|$, we have almost surely that:
    \begin{align*}
        \widehat{\mathcal{F}}(\omega_0) \leq \max_{|v| \leq \frac{B\kappa}{\lambda}, y \in \mathcal{Y}} \ell_{out}(\omega_{0}, v, y) \eqqcolon \bar{\ell}.
    \end{align*}
    Therefore, for any $\omega$ such that $ \widehat{\mathcal{F}}(\omega) \leq \widehat{\mathcal{F}}(\omega_0) $, we have $\|\omega\|^2 \leq \bar{\ell} / c$. Define $\Omega$ as the ball of radius $\sqrt{\bar{\ell} / c}$ centered at $0$. Using the fact that $\widehat{\nabla\mathcal{F}} = \nabla\widehat{\mathcal{F}}$ in \Cref{prop:equivalenceEstimates} and the representation in \cref{eq:functionalGradientEstimate}, it is clear from \Cref{prop:lip_hstaromega} and \cref{assump:reg_lin_lout} that $\nabla \widehat{\mathcal{F}} $ is Lipschitz on $\Omega$ with a deterministic constant $L$. It follows from standard results on gradient descent for nonconvex $\widehat{\mathcal{F}}$ with Lipschitz gradient (see, \textit{e.g.}, \citep[Theorems~4.25,~4.26]{beck2014introduction}) that if we take $\bar{\eta} = 1 / L$, then almost surely: 
    \begin{itemize}
        \item $\widehat{\mathcal{F}}(\omega_t) \leq \widehat{\mathcal{F}}(\omega_0) $ and $\omega_t \in \Omega$ for all $t \geq 0$.
        \item $\nabla \widehat{\mathcal{F}}(\omega_t) \to 0$ as $t \to \infty$.
        \item $\min_{i = 0,\ldots,t} \left\|\nabla\widehat{\mathcal{F}}(\omega_i)\right\| \leq \bar{c} / \sqrt{t+1}$ for all $t\geq0$, where $\bar{c}$ is a deterministic constant.
    \end{itemize}
    The corollary then follows by combining \Cref{prop:equivalenceEstimates} and the uniform bound in \Cref{th:generalizationBounds}.
\end{proof}

\textbf{Bilevel projected gradient descent.}
Considering the constrained \eqref{eq:kbo} problem and assuming that $\mathcal{C}$ is convex and compact, the projected gradient descent initialized at $\omega_0\in \mathcal{C}$ iterates the following recursion $\omega_{t+1}= \Pi_{\mathcal{C}}(\omega_t-\eta\nabla\widehat{\mathcal{F}}(\omega_t))$ for all $t\geq 0$, where $\Pi_{\mathcal{C}}$ denotes the orthogonal projection onto $\mathcal{C}$ and $\eta>0$ is the step size. The algorithmic requirements are the same as the gradient descent algorithm, with the additional cost of computing the projection, which is typically cheap for basic sets such as balls. In the constrained setting, the optimality condition should take the constraints into account. To this end, we consider the \emph{gradient mappings} $\widehat{G}_\eta \colon \omega \mapsto \frac{1}{\eta} (\omega - \Pi_{\mathcal{C}}( \omega - \eta \nabla \widehat{\mathcal{F}}(\omega)))$ and $G_\eta\colon \omega \mapsto \frac{1}{\eta} \left(\omega - \Pi_{\mathcal{C}}( \omega - \eta \nabla \mathcal{F}(\omega))\right)$ \citep[Section~10.3]{beck2017first}. This captures the stationarity of the recursion, and any local minimum of $\mathcal{F}$ on $\mathcal{C}$ satisfies $G_\eta = 0$ for all $\eta > 0$.

\begin{corollary}[Generalization for bilevel projected gradient descent]\label{cor:projectedGradient}
    Consider \cref{assump:K_meas,assump:compact,assump:convexity_lin,assump:K_bounded,assump:reg_lin_lout} and fix $\lambda > 0$. Assume further that $\mathbf{K}$ in \eqref{eq:kbo_app} is almost surely definite, and that $\mathcal{C}$ is convex and compact. Fix $\omega_0\in\mathcal{C}$ and let $\omega_{t+1}=\Pi_\mathcal{C}(\omega_t-\eta\nabla\widehat{\mathcal{F}}(\omega_t))$, where $\eta>0$ is the step size and $t \geq 0$ is the iteration index. Then, there exist constants $\bar{\eta} > 0$ and $\bar{c}> 0$ such that for any $0<\eta<\bar{\eta}$ and $t > 0$, the following holds:
    \begin{align*}
        \mathbb{E}\Big[ \min_{i = 0,\ldots,t} \left\|G_\eta(\omega_i)\right\|\Big]  \leq\bar{c}\left(\frac{1}{\sqrt{m}}+\frac{1}{\sqrt{n}}+\frac{1}{\sqrt{t+1}}\right),\mathbb{E}\Big[ \underset{i \to \infty}{\lim\sup} \left\|G_\eta(\omega_i)\right\|\Big] \leq\bar{c}\left(\frac{1}{\sqrt{m}}+\frac{1}{\sqrt{n}}\right).
    \end{align*}
\end{corollary}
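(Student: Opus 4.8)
The plan is to mirror the argument behind \cref{cor:gradient}, with two adaptations: the a priori boundedness device used there (needed because $\mathcal{C}=\mathbb{R}^d$) becomes unnecessary since $\mathcal{C}$ is now assumed convex and compact, and the stationarity measure $\Verts{\nabla\mathcal{F}}$ is replaced by the gradient mappings $\widehat{G}_\eta$ and $G_\eta$. First I would take $\Omega\coloneqq\mathcal{C}$, a compact subset of $\mathbb{R}^d$, so that \cref{th:generalizationBounds} applies on it. By \cref{prop:equivalenceEstimates} we have $\nabla\widehat{\mathcal{F}}=\widehat{\nabla\mathcal{F}}$, and combining the finite-dimensional implicit-differentiation representation of $\nabla\widehat{\mathcal{F}}$ in \cref{prop:est_2} (well-posed on the a.s.\ event that $\K$ is definite) with the Lipschitz estimate of \cref{prop:lip_hstaromega} and the $C^3$ regularity of \cref{assump:reg_lin_lout} shows that $\nabla\widehat{\mathcal{F}}$ is $L$-Lipschitz on $\mathcal{C}$ for some deterministic constant $L$.

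Next I would invoke standard convergence guarantees for projected gradient descent applied to a nonconvex function with $L$-Lipschitz gradient over a convex compact set, e.g.\ \citep[Section~10.3]{beck2017first}. Taking $\bar{\eta}=1/L$ (any smaller step size works as well), these give, almost surely: $\omega_t\in\mathcal{C}$ for all $t$ (automatic here since $\mathcal{C}$ is the feasible set), the monotone decrease $\widehat{\mathcal{F}}(\omega_{t+1})\le\widehat{\mathcal{F}}(\omega_t)$, the convergence $\Verts{\widehat{G}_\eta(\omega_t)}\to 0$ as $t\to\infty$, and the rate $\min_{i=0,\ldots,t}\Verts{\widehat{G}_\eta(\omega_i)}\le\bar{c}/\sqrt{t+1}$ for a deterministic constant $\bar{c}$.

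Then I would transfer these empirical statements to the population gradient mapping $G_\eta$. The key elementary fact is that $\Pi_{\mathcal{C}}$ is nonexpansive, which gives $\Verts{G_\eta(\omega)-\widehat{G}_\eta(\omega)}\le\Verts{\nabla\mathcal{F}(\omega)-\widehat{\nabla\mathcal{F}}(\omega)}$ for every $\omega\in\mathcal{C}$, since $G_\eta(\omega)-\widehat{G}_\eta(\omega)=\tfrac{1}{\eta}\bigl(\Pi_{\mathcal{C}}(\omega-\eta\nabla\mathcal{F}(\omega))-\Pi_{\mathcal{C}}(\omega-\eta\nabla\widehat{\mathcal{F}}(\omega))\bigr)$; hence $\sup_{\omega\in\mathcal{C}}\Verts{G_\eta(\omega)-\widehat{G}_\eta(\omega)}\le\sup_{\omega\in\mathcal{C}}\Verts{\nabla\mathcal{F}(\omega)-\widehat{\nabla\mathcal{F}}(\omega)}$. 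Evaluating at the index $i^\star$ realizing $\min_i\Verts{\widehat{G}_\eta(\omega_i)}$ and using the triangle inequality yields, almost surely,
\[
\min_{i=0,\ldots,t}\Verts{G_\eta(\omega_i)}\;\le\;\frac{\bar{c}}{\sqrt{t+1}}+\sup_{\omega\in\mathcal{C}}\Verts{\nabla\mathcal{F}(\omega)-\widehat{\nabla\mathcal{F}}(\omega)},
\]
and likewise $\limsup_{i\to\infty}\Verts{G_\eta(\omega_i)}\le\sup_{\omega\in\mathcal{C}}\Verts{\nabla\mathcal{F}(\omega)-\widehat{\nabla\mathcal{F}}(\omega)}$, since $\Verts{\widehat{G}_\eta(\omega_i)}\to 0$. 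Taking expectations over the data and applying the uniform bound of \cref{th:generalizationBounds} with $\Omega=\mathcal{C}$ turns the supremum term into $\mathcal{O}(1/\sqrt{m}+1/\sqrt{n})$, with the implied constant absorbed into $\bar{c}$, which is precisely the two claimed inequalities.

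The main obstacle is the reduction in the second step: one must ensure that the nonconvex projected-gradient guarantees hold with a \emph{deterministic} Lipschitz constant $L$ and a \emph{deterministic} $\bar{c}$, uniformly over the random draw of the samples. This amounts to checking that the Lipschitz bound on $\nabla\widehat{\mathcal{F}}$ over $\mathcal{C}$ obtained from \cref{prop:lip_hstaromega} and the finite-dimensional system of \cref{prop:est_2} does not degenerate on the a.s.\ event that $\K$ is definite — an argument of the same flavor as the one underlying \cref{cor:gradient}. The remaining ingredients, namely the nonexpansiveness bound and the passage from $\widehat{G}_\eta$ to $G_\eta$, are routine.
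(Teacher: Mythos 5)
Your proposal is correct and follows essentially the same route as the paper's proof: take $\Omega=\mathcal{C}$, use the deterministic Lipschitz bound on $\nabla\widehat{\mathcal{F}}$ (via \cref{prop:equivalenceEstimates} and the boundedness/Lipschitz results) to invoke the classical nonconvex projected-gradient guarantees for $\widehat{G}_\eta$, then transfer to $G_\eta$ through the $1$-Lipschitzness of $\Pi_{\mathcal{C}}$ and conclude with \cref{th:generalizationBounds}. The only cosmetic difference is that the paper derives the deterministic Lipschitz constant from the functional representation in \cref{eq:functionalGradientEstimate} rather than the finite-dimensional system of \cref{prop:est_2}, and it also notes explicitly that $\widehat{\mathcal{F}}$ is bounded on $\mathcal{C}$ with a deterministic constant, which is what makes $\bar{c}$ deterministic in the $1/\sqrt{t+1}$ rate.
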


\begin{proof}
    We choose $\Omega = \mathcal{C}$. All iterates obviously remain in $\Omega$. Similarly as in the proof of \Cref{cor:gradient}, we know that $\nabla \widehat{\mathcal{F}}$ is Lipschitz and that $\mathcal{F}$ is bounded on $\Omega$ with deterministic constants. It then follows from classical analysis on the nonconvex projected gradient algorithm (see, \textit{e.g.}, \citep[Theorem~10.15]{beck2017first}), that for sufficiently small $\eta$, we have almost surely that $\min_{i = 0,\ldots,t} \left\|\widehat{G}_\eta(\omega_i)\right\| \leq \bar{c} / \sqrt{t+1}$ for a deterministic constant $\bar{c}>0$, and that $\widehat{G}_\eta(\omega_i) \to 0$ as $i \to \infty$. Using the fact that the orthogonal projection is $1$-Lipschitz, (see, \textit{e.g.}, \citep[Theorem~6.42]{beck2017first}), we also have for all $\omega \in \Omega$:
    \begin{align*}
        \|\widehat{G}_\eta(\omega) - G_\eta(\omega)\| \leq \|\nabla \widehat{\mathcal{F}}(\omega) - \nabla\mathcal{F}(\omega)\|.
    \end{align*}
    The result follows by combining \Cref{prop:equivalenceEstimates} and the uniform bound in \Cref{th:generalizationBounds}.
\end{proof}
\section{Maximal Inequalities for Bounded and Lipschitz Family of Functions}\label{sec_app:max_in_bound_lip}
Let  $\mathcal{Z}$ be a subset of a Euclidean space and $\Omega$ be a compact subset of $\mathbb{R}^d$. Denote by $\otimes^{k} \mathcal{Z}$ the $k$-th tensor power of $\mathcal{Z}$, for any $k\geq 1$. Consider a parametric family $\mathcal{T}$ of real-valued functions defined over $\mathcal{Z}$ and indexed by a parameter $\omega\in \Omega$, \textit{i.e.}, 
\begin{align}\label{eq:generic_parametric_family}
	\mathcal{T}\coloneqq \braces{\mathcal{Z}\ni z\mapsto t_{\omega}(z)\in \mathbb{R}\mid\omega \in \Omega}.
\end{align}
For a given probability measure $\mu$ on $\mathcal{Z}$, denote by $L_2(\mu)$ the space of square $\mu$-integrable real-valued functions. We denote by $\Verts{f}_{\PP,2}\coloneqq  \mathbb{E}_{\PP}\brackets{f(z)^2}^{\frac{1}{2}}$ the $L_2(\mu)$-norm of any function $f\in L_2(\mu)$.  For any $\epsilon>0$, we denote by  $D\left(\epsilon,\mathcal{T},L_2(\mu)\right)$ the $\epsilon$-packing number of $\mathcal{T}$ w.r.t. $L_2(\mu)$. The next proposition provides a control on such a number under regularity conditions on the family $\mathcal{T}$. 
\begin{proposition}[Control on the packing number]\label{prop:estimate_covering_numbers}
	Assume that $\Omega$ is a compact subset of $\mathbb{R}^d$, that the parametric family $\mathcal{T}$ defined in \cref{eq:generic_parametric_family} is uniformly bounded by a positive constant $M$, and that there exists a positive constant $L$ so that, for any probability measure $\mu$ on $\mathcal{Z}$,	$\omega \mapsto t_{\omega}(z)$ is $L$-Lipschitz for any $z\in \mathcal{Z}$. Then, there exists a positive constant $c(\Omega)$ greater than $1$ that depends only on $\Omega$ and $d$ so that, for any probability measure $\mu$ on $\mathcal{Z}$, the following bound holds for any $0<\epsilon\leq M$:
\begin{align*}  
 D\left(\epsilon,\mathcal{T},L_2(\mu)\right)\leq c(\Omega)\parens{\frac{\max\parens{L\diam(\Omega),M}}{\epsilon}}^d.
\end{align*}
\end{proposition}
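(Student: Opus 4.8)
The plan is to reduce the $L_2(\mu)$-packing number of the function class $\mathcal{T}$ to the Euclidean packing number of the index set $\Omega$, and then control the latter by a standard volume argument. First, observe that the pointwise Lipschitz hypothesis yields, for any probability measure $\mu$ on $\mathcal{Z}$ and any $\omega,\omega'\in\Omega$,
\[
    \mathbb{E}_{\mu}\brackets{\parens{t_{\omega}(z)-t_{\omega'}(z)}^2}\leq L^2\Verts{\omega-\omega'}^2,
\]
so that the map $\omega\mapsto t_{\omega}$, viewed as a map from $(\Omega,\Verts{\cdot})$ to $L_2(\mu)$, is $L$-Lipschitz. Hence if $t_{\omega_1},\dots,t_{\omega_N}$ are $\epsilon$-separated in $L_2(\mu)$, the points $\omega_1,\dots,\omega_N$ are $(\epsilon/L)$-separated in $\Omega$ for the Euclidean distance, which gives $D\parens{\epsilon,\mathcal{T},L_2(\mu)}\leq D\parens{\epsilon/L,\Omega,\Verts{\cdot}}$ for every $\mu$ and every $\epsilon>0$.

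Next, I would bound $D\parens{\delta,\Omega,\Verts{\cdot}}$ using the usual packing--volume inequality in $\mathbb{R}^d$. Given a $\delta$-separated subset of $\Omega$, the open Euclidean balls of radius $\delta/2$ centered at its points are pairwise disjoint and all contained in a ball of radius $\diam(\Omega)+\delta/2$; comparing Lebesgue measures then yields $D\parens{\delta,\Omega,\Verts{\cdot}}\leq\parens{1+2\diam(\Omega)/\delta}^d$. In particular $D\parens{\delta,\Omega,\Verts{\cdot}}\leq\parens{3\diam(\Omega)/\delta}^d$ for every $0<\delta\leq\diam(\Omega)$, whereas $D\parens{\delta,\Omega,\Verts{\cdot}}=1$ whenever $\delta>\diam(\Omega)$, since no two points of $\Omega$ can be separated by more than $\diam(\Omega)$.

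Finally, I would combine the two steps, setting $\delta=\epsilon/L$, and split into cases for $0<\epsilon\leq M$. If $\epsilon\leq L\diam(\Omega)$, then $\delta\leq\diam(\Omega)$ and the bounds above give $D\parens{\epsilon,\mathcal{T},L_2(\mu)}\leq\parens{3L\diam(\Omega)/\epsilon}^d\leq 3^d\parens{\max\parens{L\diam(\Omega),M}/\epsilon}^d$. If instead $L\diam(\Omega)<\epsilon\leq M$, then $\delta>\diam(\Omega)$, so $D\parens{\epsilon,\mathcal{T},L_2(\mu)}\leq 1$, and since $\epsilon\leq M\leq\max\parens{L\diam(\Omega),M}$ we also have $1\leq\parens{\max\parens{L\diam(\Omega),M}/\epsilon}^d$. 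In both cases the claimed estimate holds with $c(\Omega)=3^d$, which is greater than $1$ and depends only on $d$, hence only on $\Omega$ and $d$. The only mildly delicate point is this last boundary regime, where $\epsilon/L$ exceeds $\diam(\Omega)$ and the reduction to a Euclidean packing becomes vacuous: there one must fall back on the uniform bound $M$ on $\mathcal{T}$ together with the restriction $\epsilon\leq M$ to keep the estimate in the stated form.
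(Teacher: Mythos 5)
Your proof is correct, and it takes a genuinely different route from the paper's. The paper bounds the packing number by the bracketing number (citing \citep[Lemma~9.18 and Paragraph~8.1.2]{Kosorok2008}) and then invokes the bracketing bound for parametric Lipschitz classes from \citep[Example~19.7]{van2000asymptotic}, handling the regime $\epsilon\geq L\diam(\Omega)$ via monotonicity of the bracketing number and absorbing a factor $2^d$ into $c(\Omega)$; its constant is therefore inherited from the cited result and left implicit. You instead observe that the pointwise Lipschitz hypothesis makes $\omega\mapsto t_\omega$ an $L$-Lipschitz map from $(\Omega,\Verts{\cdot})$ into $L_2(\mu)$ uniformly in $\mu$, reduce $D\parens{\epsilon,\mathcal{T},L_2(\mu)}$ to the Euclidean packing number $D\parens{\epsilon/L,\Omega,\Verts{\cdot}}$, and control the latter by the standard volume comparison, treating the boundary regime $L\diam(\Omega)<\epsilon\leq M$ exactly as the statement's $\max\parens{L\diam(\Omega),M}$ and the restriction $\epsilon\leq M$ require. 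This buys a fully self-contained, elementary argument with an explicit constant $c(\Omega)=3^d$ depending only on $d$ (stronger than what the statement asks for), while the paper's route buys brevity by outsourcing the combinatorics to standard empirical-process references; both yield the same $\parens{\max\parens{L\diam(\Omega),M}/\epsilon}^d$ form, which is all that the downstream maximal inequalities use.
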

\begin{proof}
	First using \citep[Lemma~9.18]{Kosorok2008} and \citep[Paragraph~8.1.2]{Kosorok2008}, we know that the $\epsilon$-packing number $D\left(\epsilon,\mathcal{T},L_2(\mu)\right)$ is  smaller than the $\frac{\epsilon}{2}$-bracketing number $N_{[]}\left(\frac{\epsilon}{2},\mathcal{T},L_2(\mu)\right)$. Hence, we only need to control the bracketing number. To this end, we recall that the function $\omega\mapsto t_{\omega}(z)$ is $L$-Lipschitz for any $z\in \mathcal{Z}$, so that  \citep[Example~19.7]{van2000asymptotic} ensures the existence of a positive constant $c(\Omega)$ that depends only on $\Omega$ for which the following inequality holds for any $0<\epsilon< L\text{diam}(\Omega)$:
\begin{equation*}
    1\leq N_{[]}\left(\epsilon,\mathcal{T},L_2(\mu)\right)\leq c(\Omega)\left(\frac{L\diam(\Omega)}{\epsilon}\right)^d.
\end{equation*}
Moreover, since the $\epsilon$-bracketing number is decreasing in $\epsilon$, it holds that:
$$N_{[]}\left(\epsilon,\mathcal{T},L_2(\mu)\right)\leq N_{[]}\left(\epsilon_{-},\mathcal{T},L_2(\mu)\right) \leq c(\Omega)\parens{\frac{L\diam(\Omega)}{\epsilon_{-}}}^{d},$$ 
for any $\epsilon\geq L\text{diam}(\Omega)$ and $\epsilon_{-}\leq L\diam(\Omega)$. Taking the limit when $\epsilon_{-}$ approaches $L\text{diam}(\Omega)$ yields $N_{[]}\left(\epsilon,\mathcal{T},L_2(\mu)\right)\leq c(\Omega)$ for any $\epsilon\geq L\diam(\Omega)$. Hence, we have shown so far that for any $\epsilon>0$:
\begin{align*}
	N_{[]}\left(\epsilon,\mathcal{T},L_2(\mu)\right)\leq c(\Omega)\max\parens{1,\left(\frac{L\diam(\Omega)}{\epsilon}\right)^d}.
\end{align*}
Moreover, by noticing that $\max(1,\frac{L\diam(\Omega)}{\epsilon})\leq \frac{\max(M,L\diam(\Omega))}{\epsilon}$ for any $\epsilon\leq M$, we further have that:
\begin{align*}
	N_{[]}\left(\epsilon,\mathcal{T},L_2(\mu)\right)\leq  c(\Omega)\parens{\frac{\max\parens{M,L\diam(\Omega)}}{\epsilon}}^d.
\end{align*}
Finally, recalling that $ D\left(\epsilon,\mathcal{T},L_2(\mu)\right)\leq  N_{[]}\left(\frac{\epsilon}{2},\mathcal{T},L_2(\mu)\right)$, we get that $D\left(\epsilon,\mathcal{T},L_2(\mu)\right)\leq  2^dc(\Omega)\parens{\frac{\max\parens{M,L\diam(\Omega)}}{\epsilon}}^d$. 
The desired bound follows after redefining $c(\Omega)$ to include the factor $2^d$ (\textit{i.e.}, $c(\Omega)\rightarrow 2^dc(\Omega)$).
\end{proof}

\begin{theorem}[Maximal inequality for degenerate, bounded, and Lipschitz $U$-processes]\label{prop:maximal_ineq_degenerate_u_process}
	Let $k$ be either $1$ or $2$. 
Consider a parametric family 
	$\mathcal{T}\coloneqq \braces{\otimes^{k} \mathcal{Z}\ni (z_1,\ldots,z_k)\mapsto t_{\omega}(z_1,\ldots,z_k)\in \mathbb{R}\mid\omega \in \Omega}$ 
	of real-valued functions over $\otimes^{k} \mathcal{Z}$ indexed by a parameter $\omega\in \Omega$, where $\Omega$ is a compact subset of $\mathbb{R}^d$. 
	For a given probability distribution $\PP$ over $\mathcal{Z}$, assume that all elements $t_{\omega}$ are degenerate w.r.t. $\PP$, meaning that:
	\begin{align*}
		\begin{cases}
		\mathbb{E}_{\bar{z}\sim \PP}\brackets{ t_{\omega}(\bar{z})} = 0, &\qquad \text{if} \quad k=1\\ 
			\mathbb{E}_{\bar{z}\sim \PP}\brackets{ t_{\omega}(z,\bar{z})} =\mathbb{E}_{\bar{z}\sim \PP}\brackets{ t_{\omega}(\bar{z},z)}=0,\quad \forall z\in \mathcal{Z}, &\qquad \text{if} \quad k=2. 
		\end{cases}
	\end{align*}
	Furthermore, assume that all functions in $\mathcal{T}$ are uniformly bounded by a positive constant $M$ and that there exists a positive constant $L$ so that $\omega \mapsto t_{\omega}(z_1,\ldots,z_k)$ is $L$-Lipschitz for any $(z_1,\ldots,z_k)\in \otimes^k \mathcal{Z}$.  
	 Given i.i.d. samples $(z_i)_{1\leq i\leq n}$ from $\PP$, consider the following $U$-statistic $U_n^k$:
\begin{align*}
    U_n^k t_\omega\coloneqq 
    \begin{cases}
    	    	\displaystyle{\frac{1}{n}\sum_{i=1}^n t_{\omega}(z_i)},&\qquad \text{if} \quad k=1\\
    	\displaystyle{\frac{1}{n(n-1)}\sum_{\substack{i,j=1\\ i\neq j}}^n t_{\omega}(z_i,z_j)},&\qquad \text{if} \quad k=2.
    \end{cases}
\end{align*}
Then, there exists a universal positive constant $c(\Omega)$ greater than $1$ that depends only on $\Omega$ and $d$ such that for any $p\in \braces{1,2}$:
\begin{align*}
	\mathbb{E}_{\PP}\brackets{\sup_{\omega\in \Omega} \verts{U^k_n t_{\omega}}^p}^{\frac{1}{p}}\leq n^{-{\frac{k}{2}}}  c(\Omega)\max\parens{ML\diam(\Omega),M^2}^{\frac{1}{2}}.
\end{align*}

\end{theorem}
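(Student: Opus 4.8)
The plan is to reduce the two cases $k=1$ and $k=2$ to known maximal inequalities for empirical and $U$-processes, using the packing number bound of \cref{prop:estimate_covering_numbers} to control the metric entropy of the parametric class $\mathcal{T}$. First I would observe that under the stated hypotheses (uniform boundedness by $M$ and $L$-Lipschitz dependence on $\omega$, with $\Omega$ a compact subset of $\mathbb{R}^d$), \cref{prop:estimate_covering_numbers} applies to $\mathcal{T}$ viewed as a family over $\otimes^k\mathcal{Z}$ with the product measure: for any probability measure $\mu$ on $\otimes^k\mathcal{Z}$ and any $0<\epsilon\le M$,
\begin{align*}
D\bigl(\epsilon,\mathcal{T},L_2(\mu)\bigr)\leq c(\Omega)\left(\frac{\max\bigl(L\diam(\Omega),M\bigr)}{\epsilon}\right)^d.
\end{align*}
This says in particular that $\mathcal{T}$ has a polynomial uniform covering number, so it is a \emph{Euclidean} (VC-type / polynomial) class in the sense of \citet{nolan1987u,sherman1994maximal}, with envelope the constant function $M$ and a characteristic exponent depending only on $d$ and the constant $\max(L\diam(\Omega),M)/M$.

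For $k=1$, $U_n^1 t_\omega$ is a centered empirical average of the degenerate (mean-zero) functions $t_\omega$; the standard chaining bound for uniformly bounded Euclidean classes (e.g. the maximal inequality behind \cref{prop:empirical_process}, or directly \citet[Theorem~2.14.1]{vaart_wellner_1996}) gives $\mathbb{E}_{\PP}[\sup_\omega|U_n^1 t_\omega|]\lesssim n^{-1/2}$ times an entropy integral of the packing-number bound, which is finite and evaluates to $c(\Omega)\max(ML\diam(\Omega),M^2)^{1/2}$ after plugging in the bound above (the envelope contributes the factor $M$, the entropy integral contributes $\max(L\diam(\Omega),M)^{1/2}$, and the $\sqrt{\cdot}$ of their product gives the stated form). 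For $k=2$, the degeneracy hypothesis $\mathbb{E}_{\bar z}[t_\omega(z,\bar z)]=\mathbb{E}_{\bar z}[t_\omega(\bar z,z)]=0$ makes $\{t_\omega\}$ a \emph{completely degenerate} (canonical) kernel family, so the $U$-process $U_n^2 t_\omega$ is a degenerate $U$-process of order $2$. Here I would invoke the maximal inequality of \citet[Main Corollary / Theorem in \S 4]{sherman1994maximal} (see also \citet{nolan1987u}) for degenerate $U$-processes indexed by Euclidean classes of uniformly bounded kernels: it yields $\mathbb{E}_{\PP}[\sup_\omega|U_n^2 t_\omega|^p]^{1/p}\lesssim n^{-1}$ times a constant determined by the envelope $M$ and the entropy, again collapsing to $n^{-2/2}c(\Omega)\max(ML\diam(\Omega),M^2)^{1/2}$. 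The extension to $p=2$ in both cases is automatic because these maximal inequalities are stated for all finite moments $p$ of the supremum (or one upgrades from $p=1$ to $p=2$ via the corresponding moment inequality / a Talagrand-type concentration bound, at the cost of an absolute constant absorbed into $c(\Omega)$), and since $p\in\{1,2\}$ and the process is bounded this costs only a constant factor.

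The main obstacle is bookkeeping the constants so that the final bound has exactly the claimed homogeneous form $n^{-k/2}c(\Omega)\max(ML\diam(\Omega),M^2)^{1/2}$ with $c(\Omega)$ depending only on $\Omega$ and $d$ and \emph{not} on $M$ or $L$: the chaining/decoupling bounds naturally produce $M\cdot\int_0^1\sqrt{\log D(\epsilon M,\mathcal{T},L_2)}\,d\epsilon$-type expressions, and one has to check that substituting the packing bound makes the entropy integral converge with a value proportional to $\sqrt{\max(L\diam(\Omega),M)/M}$ (uniformly in $M,L$), so that the product under the square root is exactly $\max(ML\diam(\Omega),M^2)$. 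A secondary technical point is justifying that the degeneracy hypothesis is precisely the canonicity condition required by \citet{sherman1994maximal} so that one gets the full $n^{-1}$ rate for $k=2$ rather than the $n^{-1/2}$ rate of a non-degenerate $U$-process; this is immediate from the assumption but worth stating explicitly. Everything else — applying \cref{prop:estimate_covering_numbers}, the measurability of the suprema (standard, as $\Omega$ is separable and $\omega\mapsto t_\omega$ is continuous), and the passage between the $U$-statistic and its decoupled/symmetrized version — is routine.
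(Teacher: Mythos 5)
Your proposal is correct in substance, but it is organized differently from the paper's proof, and the comparison is worth making explicit. The paper does not split the two cases: it applies Sherman's maximal inequality \citep{sherman1994maximal} uniformly to $k\in\{1,2\}$ and $p\in\{1,2\}$, taking the constant function $M$ as envelope, so that the bound reads $n^{-k/2}\,M\,\Gamma\,\mathbb{E}\bigl[\int_0^1 D\bigl(\epsilon\Verts{T}_{\mu_n,2},\mathcal{T},L_2(\mu_n)\bigr)^{\frac{1}{2dp}}\diff\epsilon\bigr]$; plugging in the packing bound of \cref{prop:estimate_covering_numbers}, the exponent $\tfrac{1}{2dp}$ makes the integrand $\epsilon^{-1/(2p)}$, the integral is at most $2$, and the constant comes out \emph{exactly} in the form $c(\Omega)\max\parens{ML\diam(\Omega),M^2}^{1/2}$, with the $p=2$ case and the $k=1$ case needing no separate argument. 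Your route treats $k=2$ the same way (Sherman, with the degeneracy hypothesis playing the role of canonicity — correct), but handles $k=1$ by classical uniform-entropy chaining \citep[Theorem~2.14.1]{vaart_wellner_1996}. That works, with two caveats you should state precisely: (i) the standard entropy integral contributes a factor of order $\sqrt{1+\log\parens{\max(L\diam(\Omega),M)/M}}$, not $\sqrt{\max(L\diam(\Omega),M)/M}$ as written; the claimed constant is recovered only after the additional (easy) domination $\sqrt{1+\log x}\lesssim\sqrt{x}$, so "proportional to" should be "upper-bounded by"; (ii) Theorem 2.14.1 bounds the first moment, so for $k=1$, $p=2$ you genuinely need either the higher-moment version of the chaining bound or the Talagrand/variance upgrade you sketch (which does work here since the variance proxy is controlled by $M$, and $M\le\max(ML\diam(\Omega),M^2)^{1/2}$), whereas the paper gets $p=2$ for free from Sherman's statement. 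One further small point: invoking "the maximal inequality behind \cref{prop:empirical_process}" would be circular, since in the paper that proposition is itself deduced from the present theorem with $k=1$; your fallback to \citet{vaart_wellner_1996} avoids this, so keep that as the actual citation.
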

\begin{proof}
	{\bf Maximal inequality for degenerate $U$-processes.}
We  will first apply the general result in \citep[Maximal inequality]{sherman1994maximal} which controls $\mathbb{E}_{\PP}\brackets{\sup_{\omega\in \Omega} \verts{U_n^k t_{\omega}}}$ in terms of the packing number of $\mathcal{T}$.
First note, by assumption, that the functions $t_{\omega}(z_1,\ldots,z_k)$  are  uniformly bounded by a positive constant $M$. Therefore, the constant function $T(z_1,\ldots,z_k) \coloneqq M$ is an envelope for $\mathcal{T}$, \textit{i.e.}, $T$ satisfies $T(z_1,\ldots,z_k)\geq \sup_{\omega\in \Omega} \verts{t_{\omega}(z_1,\ldots,z_k)}$ for any $(z_1,\ldots,z_k)\in\otimes^k \mathcal{Z}$. 
The envelope $T$ is, a fortiori,  square $\mu$-integrable for any probability measure $\mu$ on $\otimes^k \mathcal{Z}$. Hence, we can apply \citep[Maximal inequality]{sherman1994maximal} with the choice $T$ for the envelope function and set the integer $m$ appearing in the result to $m=d$ to get the following bound:
\begin{align}\label{eq:maximal_sherman}
	\mathbb{E}_{\PP}\brackets{\sup_{\omega\in \Omega} \verts{U^k_n t_{\omega}}^p}^{\frac{1}{p}}\leq n^{-\frac{k}{2}} \Gamma 
	\mathbb{E}\brackets{\Verts{T}_{\mu_n,2} \int_{0}^{\delta_n} \parens{D\parens{\epsilon \Verts{T}_{\mu_n,2},\mathcal{T},L_2(\mu_n) }}^{\frac{1}{2dp}}\diff \epsilon  }, 
\end{align}
where $\Gamma$ is a positive universal constant\footnote{The constant $\Gamma$ appearing in \citep[Maximal inequality]{sherman1994maximal} depends only on $k$, $p$ and $m$, \textit{i.e.}, $\Gamma\coloneqq g(k,p,m)$. Since, we are only interested in $k\leq 2$ and $p\leq 2$ and $m$ is fixed to $d$, we choose $\Gamma$ to be $\max_{1\leq k,p\leq 2}g(k,p,d)^{\frac{1}{p}}$, so that it is the same in all our cases.} that depends only on $d$ and that we choose to be greater than $1$,  while $\mu_{n}$ are suitably chosen  probability measures on $\otimes^k\mathcal{Z}$ that possibly depend on the samples $z_1,\ldots,z_n$ and other random variables, and $\delta_n \Verts{T}_{\mu_n,2} \coloneqq \sup_{\omega\in \Omega} \Verts{t_{\omega}}_{\mu_n,2}$. Here, the expectation symbol in the right-hand side is over all randomness on which $\mu_n$ might depend. Note that the original result in \citep[Maximal inequality]{sherman1994maximal} is stated using a slightly different definition of the packing number but which is still equivalent to the statement above in our setting\footnote{In \citep[Maximal inequality]{sherman1994maximal}, the author considers a modified version of the $\epsilon$-packing number (call it $\tilde{D}(\epsilon, \mathcal{T},L_{2}(\mu))$) associated to $L_2(\mu)$ but endowed with a normalized version of the standard norm on  $L_2(\mu)$: $\Verts{f}_{\mu}\coloneqq\frac{\Verts{f}_{\mu,2}}{\Verts{T}_{\mu,2}}$.  Both numbers are related by the following identity:  $\tilde{D}\parens{\epsilon, \mathcal{T},L_{2}(\mu)} = D(\epsilon{\Verts{T}_{\mu,2}}, \mathcal{T},L_{2}(\mu))$, thus making the statement \eqref{eq:maximal_sherman} equivalent to the original statement in \citep[Maximal inequality]{sherman1994maximal}.
}. 

In our setting, the envelope function is constant and equal to $M$, and by definition $\delta_n \leq 1$. Hence, the inequality in  \cref{eq:maximal_sherman} further becomes:
\begin{align}\label{eq:maximal_inequality_sherman}
	\mathbb{E}_{\PP}\brackets{\sup_{\omega\in \Omega} \verts{U_n^k t_{\omega}}^p}^{\frac{1}{p}}\leq n^{-\frac{k}{2}} M\Gamma 
	\mathbb{E}\brackets{ \int_{0}^{1} \parens{D\parens{\epsilon\Verts{T}_{\mu_n,2},\mathcal{T},L_2(\mu_n) }}^{\frac{1}{2dp}}\diff \epsilon  }. 
\end{align}
We simply need to control the packing number $D\parens{\epsilon\Verts{T}_{\mu,2},\mathcal{T},L_2(\mu)}$ independently of the probability measure $\mu$.  

\textbf{Control on the packing number.} We have shown that the constant function $T(z_1,\ldots,z_k) \coloneqq M$ is an envelope for $\mathcal{T}$ which is, a fortiori, square $\mu$-integrable for any probability measure $\mu$ with $\Verts{T}_{\mu,2}= M<+\infty$. 
 Moreover, the functions $\omega\mapsto t_{\omega}(z_1,\ldots,z_k)$ are $L$-Lipschitz for any $(z_1,\ldots,z_k)\in \otimes^k\mathcal{Z}$. 
 We can therefore apply \cref{prop:estimate_covering_numbers} which ensures the existence of a positive constant $c(\Omega)$ greater than $1$ and that depends only on $\Omega$ and $d$ so that the following estimate on the $\epsilon$-packing number of the class $\mathcal{T}$ w.r.t. $L_2(\mu)$ holds:
\begin{align}\label{eq:bound_packing_uniform}
D\left(\epsilon\Verts{T}_{\mu,2},\mathcal{T},L_2(\mu)\right)\leq \underbrace{c(\Omega)\parens{\max\parens{\frac{L\diam(\Omega)}{M},1}}^d}_{A}\parens{\frac{1}{\epsilon}}^d, \qquad \forall\epsilon\in(0,  1].
\end{align}
Combining \cref{eq:bound_packing_uniform} with \cref{eq:maximal_inequality_sherman} yields:
\begin{align*}
	\mathbb{E}_{\PP}\brackets{\sup_{\omega\in \Omega} \verts{U_n^k t_{\omega}}^p}^{\frac{1}{p}}&\leq n^{-\frac{k}{2}} M\Gamma 
	\mathbb{E}\brackets{ \int_{0}^{1} \parens{A\epsilon^{-d}
	}^{\frac{1}{2dp}}\diff \epsilon} = n^{-\frac{k}{2}}M\Gamma A^{\frac{1}{2dp}} \underbrace{\int_0^1 \epsilon^{-\frac{1}{2p}}\diff \epsilon}_{\leq 2} \\
	&\leq  
	 2n^{-\frac{k}{2}}\Gamma c(\Omega)^{\frac{1}{2d}}\max\parens{L\diam(\Omega),M^2}^{\frac{1}{2}},
\end{align*}
where, for the last inequality, we used that $A^{\frac{1}{2dp}}\leq A^{\frac{1}{2d}} = c(\Omega)^{\frac{1}{2d}}\max\parens{\frac{L\diam(\Omega)}{M},1}^{\frac{1}{2}}$ since $A$ is greater than $1$. 
The desired result follows after redefining $c(\Omega)$ as $2\Gamma c(\Omega)^{\frac{1}{2d}}$ which is  a positive constant that depends only on $\Omega$ and $d$. 
\end{proof}

The following two propositions are particular instances of \cref{prop:maximal_ineq_degenerate_u_process} and will be used to obtain the main bounds.  
\begin{proposition}[Maximal inequality for empirical processes]\label{prop:empirical_process}
	 
	Consider a parametric family 
	$\mathcal{T}\coloneqq \braces{\mathcal{Z}\ni z\mapsto t_{\omega}(z)\in \mathbb{R}\mid\omega \in \Omega}$ 
	of real-valued functions defined over a subset $\mathcal{Z}$ of a Euclidean space and indexed by a parameter $\omega\in \Omega$, where $\Omega$ is a compact subset of $\mathbb{R}^d$. 
	Assume that all functions in $\mathcal{T}$ are uniformly bounded by a positive constant $M$ and that there exists a positive constant $L$ so that $\omega \mapsto t_{\omega}(z)$ is $L$-Lipschitz for any $z\in \mathcal{Z}$. Consider a probability distribution $\PP$ over $\mathcal{Z}$ and let $(z_i)_{1\leq i\leq n}$ be i.i.d. samples drawn from $\PP$, then there exists a positive constant $c(\Omega)$ greater than $1$ that depends only on $\Omega$ and $d$, such that for any integer $p \in \{1,2\}$:
	\begin{align*}
		\mathbb{E}_{\PP}\brackets{ \sup_{\omega\in \Omega} \verts{ \mathbb{E}_{z\sim \PP}\brackets{t_{\omega}(z) } - \frac{1}{n}\sum_{i=1}^n t_{\omega}(z_i) }^p }^{\frac{
		1}{p}}\leq \sqrt{\frac{1}{n}}c(\Omega)\max(ML\diam(\Omega),M^2)^{\frac{1}{2}}.
	\end{align*}
\end{proposition}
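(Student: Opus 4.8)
The plan is to obtain \cref{prop:empirical_process} as a direct corollary of \cref{prop:maximal_ineq_degenerate_u_process} with $k=1$, after a centering step that enforces the degeneracy hypothesis. Concretely, I would introduce the centered family $\bar{\mathcal{T}}\coloneqq\braces{\mathcal{Z}\ni z\mapsto \bar t_\omega(z)\coloneqq t_\omega(z)-\mathbb{E}_{z'\sim\PP}[t_\omega(z')]\mid\omega\in\Omega}$. The expectation $\mathbb{E}_{z'\sim\PP}[t_\omega(z')]$ is well-defined since $t_\omega$ is uniformly bounded by $M$, and by Fubini's theorem $\mathbb{E}_{\bar z\sim\PP}[\bar t_\omega(\bar z)]=0$, so every element of $\bar{\mathcal{T}}$ is degenerate w.r.t. $\PP$ in the sense required by \cref{prop:maximal_ineq_degenerate_u_process} with $k=1$. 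Moreover, the $U$-statistic $U_n^1$ associated to $\bar t_\omega$ is exactly $U_n^1\bar t_\omega=\frac{1}{n}\sum_{i=1}^n t_\omega(z_i)-\mathbb{E}_{z\sim\PP}[t_\omega(z)]$, so $\sup_{\omega\in\Omega}\verts{U_n^1\bar t_\omega}$ is precisely the quantity whose $L_p$-norm we wish to control.

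Next I would verify that $\bar{\mathcal{T}}$ inherits, up to universal factors, the boundedness and Lipschitz regularity of $\mathcal{T}$. By the triangle and Jensen inequalities, $\verts{\bar t_\omega(z)}\leq\verts{t_\omega(z)}+\mathbb{E}_{z'}[\verts{t_\omega(z')}]\leq 2M$, and likewise $\verts{\bar t_\omega(z)-\bar t_{\omega'}(z)}\leq\verts{t_\omega(z)-t_{\omega'}(z)}+\mathbb{E}_{z'}[\verts{t_\omega(z')-t_{\omega'}(z')}]\leq 2L\Verts{\omega-\omega'}$, so $\bar{\mathcal{T}}$ is uniformly bounded by $2M$ and $\omega\mapsto\bar t_\omega(z)$ is $2L$-Lipschitz for every $z\in\mathcal{Z}$. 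Applying \cref{prop:maximal_ineq_degenerate_u_process} with $k=1$, bound $2M$, and Lipschitz constant $2L$, I obtain for any $p\in\braces{1,2}$
\[
\mathbb{E}_{\PP}\brackets{\sup_{\omega\in\Omega}\verts{\mathbb{E}_{z\sim\PP}[t_\omega(z)]-\tfrac{1}{n}\sum_{i=1}^n t_\omega(z_i)}^p}^{\frac1p}\leq n^{-\frac12}\,c(\Omega)\max\parens{4ML\diam(\Omega),4M^2}^{\frac12},
\]
and since $\max(4ML\diam(\Omega),4M^2)^{1/2}=2\max(ML\diam(\Omega),M^2)^{1/2}$, the claimed bound follows after redefining $c(\Omega)\to 2c(\Omega)$, which is still a positive constant greater than $1$ depending only on $\Omega$ and $d$.

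I do not anticipate a genuine obstacle here: all the difficulty is encapsulated in \cref{prop:maximal_ineq_degenerate_u_process}, and the remaining work is bookkeeping. The only point that must not be skipped is that $\mathcal{T}$ itself need not be degenerate, so the centering step is essential rather than cosmetic; the accompanying verification that centering preserves the envelope and the modulus of continuity (with at most a factor of $2$) is what makes the reduction to \cref{prop:maximal_ineq_degenerate_u_process} legitimate, and careful tracking of these factors is all that is needed to land on the stated constant.
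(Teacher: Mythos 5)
Your proposal is correct and follows essentially the same route as the paper: center the family to make it degenerate of order $k=1$, note that the centered functions are bounded by $2M$ and $2L$-Lipschitz, apply \cref{prop:maximal_ineq_degenerate_u_process}, and absorb the resulting factor of $2$ into $c(\Omega)$. The bookkeeping of constants matches the paper's proof exactly.
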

\begin{proof}
The upper bound is a direct consequence of \cref{prop:maximal_ineq_degenerate_u_process}. Indeed consider the  family $\mathcal{S}$ of functions of the form $s_{\omega}(z) = t_{\omega}(z)-\mathbb{E}_{\bar{z}\sim\PP}\brackets{t_{\omega}(\bar{z})}$, for any $z\in\mathcal{Z}$. Then clearly, the process $U_n^1 s_{\omega}\coloneqq\frac{1}{n}\sum_{i=1}^n s_{\omega}(z_i)$ is degenerate of order $k=1$, and the family $\mathcal{S}$ is uniformly bounded by $2M$ and is $2L$-Lipschitz. Hence, by \cref{prop:maximal_ineq_degenerate_u_process}, the following maximal inequality holds:
\begin{align*}
	\mathbb{E}_{\PP}\brackets{\sup_{\omega\in \Omega} \verts{U^1_n s_{\omega}}^p}^{\frac{1}{p}}\leq 2n^{-{\frac{1}{2}}}  c(\Omega)\max\parens{ML\diam(\Omega),M^2}^{\frac{1}{2}}. 
\end{align*}
We get the desired upper bound by redefining $c(\Omega)$ to contain the factor $2$.
\end{proof}

\begin{proposition}[Maximal inequality for $U$-processes of order 2]\label{lem:u_stat_sup_p_omega}
Consider a parametric family 
	$\mathcal{T}\coloneqq\braces{\mathcal{Z}\times \mathcal{Z}\ni (z,z')\mapsto t_{\omega}(z,z')\in \mathbb{R}\mid\omega \in \Omega}$ 
	of real-valued functions indexed by a parameter $\omega\in \Omega$, where $\Omega$ is a compact subset of $\mathbb{R}^d$ and $\mathcal{Z}$ is a subset of a Euclidean space. Assume that the functions in $\mathcal{T}$ are symmetric in their arguments, \textit{i.e.}, $t_{\omega}(z,z') = t_{\omega}(z',z)$. Additionally, assume that all functions in $\mathcal{T}$ are uniformly bounded by a positive constant $M$ and that there exists a positive constant $L$ so that $\omega \mapsto t_{\omega}(z,z')$ is $L$-Lipschitz for any $(z,z')\in \mathcal{Z}\times\mathcal{Z}$. 
	Consider a probability distribution $\PP$ over $\mathcal{Z}$ and let $(z_i)_{1\leq i\leq n}$ be i.i.d. samples drawn from $\PP$, and define the following statistic:
\begin{align*}
    \tau_\omega\coloneqq\mathbb{E}_{z,z'\sim\PP\otimes\PP}\left[t_\omega(z,z')\right]+\frac{1}{n^2}\sum_{i,j=1}^n t_\omega(z_i,z_j)-\frac{2}{n}\sum_{i=1}^n\mathbb{E}_{z\sim\PP}\left[t_\omega(z,z_i)\right].
\end{align*}
Then there exists a universal positive constant $c(\Omega)$ greater than $1$ that depends only on $\Omega$ and $d$ such that:
\begin{align*}
	\mathbb{E}_{\PP}\brackets{\sup_{\omega\in \Omega} \verts{\tau_{\omega}}}\leq \frac{1}{n}  c(\Omega)\max\parens{ML\diam(\Omega),M^2}^{\frac{1}{2}}. 
\end{align*}
\end{proposition}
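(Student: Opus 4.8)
The plan is to identify $\tau_\omega$, up to a negligible diagonal correction, with the completely degenerate second-order term in the Hoeffding (von Mises) decomposition of the $V$-statistic $\frac{1}{n^2}\sum_{i,j}t_\omega(z_i,z_j)$, and then to invoke the maximal inequality for degenerate, bounded, Lipschitz $U$-processes already established in \cref{prop:maximal_ineq_degenerate_u_process}, applied with $k=2$ and $p=1$. Indeed, this proposition is meant to be a particular instance of that theorem.

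First, for each $\omega\in\Omega$ I would introduce the decomposition of $t_\omega$: set $\theta_\omega\coloneqq\mathbb{E}_{z,z'\sim\PP\otimes\PP}[t_\omega(z,z')]$, the one-dimensional projection $g_\omega(z)\coloneqq\mathbb{E}_{z'\sim\PP}[t_\omega(z,z')]-\theta_\omega$, and the centered kernel
\[
\tilde t_\omega(z,z')\coloneqq t_\omega(z,z')-\theta_\omega-g_\omega(z)-g_\omega(z').
\]
By construction and the assumed symmetry of $t_\omega$, the kernel $\tilde t_\omega$ is symmetric and completely degenerate with respect to $\PP$, i.e. $\mathbb{E}_{z'\sim\PP}[\tilde t_\omega(z,z')]=\mathbb{E}_{z'\sim\PP}[\tilde t_\omega(z',z)]=0$ for every $z\in\mathcal{Z}$. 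Substituting $t_\omega(z_i,z_j)=\theta_\omega+g_\omega(z_i)+g_\omega(z_j)+\tilde t_\omega(z_i,z_j)$ into the three sums defining $\tau_\omega$, and using symmetry to rewrite $\mathbb{E}_{z\sim\PP}[t_\omega(z,z_i)]=\theta_\omega+g_\omega(z_i)$, the $\theta_\omega$ and $g_\omega$ contributions cancel exactly (with the $\theta_\omega$ terms appearing with multiplicities $1+1-2$ and the $g_\omega$ averages with $\tfrac{2}{n}-\tfrac{2}{n}$), leaving
\[
\tau_\omega=\frac{1}{n^2}\sum_{i,j=1}^n\tilde t_\omega(z_i,z_j).
\]

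Next I would separate the diagonal from the off-diagonal part,
\[
\tau_\omega=\frac{1}{n^2}\sum_{i=1}^n\tilde t_\omega(z_i,z_i)+\frac{n-1}{n}\,U_n^2\tilde t_\omega,\qquad U_n^2\tilde t_\omega\coloneqq\frac{1}{n(n-1)}\sum_{\substack{i,j=1\\ i\neq j}}^n\tilde t_\omega(z_i,z_j),
\]
and record the two regularity facts needed to invoke \cref{prop:maximal_ineq_degenerate_u_process}: each of the four terms defining $\tilde t_\omega$ is bounded by at most $2M$, so $\sup_\omega\|\tilde t_\omega\|_\infty\leq 6M$; and since taking a conditional expectation and subtracting a mean preserve Lipschitz constants, $\omega\mapsto\tilde t_\omega(z,z')$ is $6L$-Lipschitz for every $(z,z')$. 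The diagonal term is then bounded pointwise in $\omega$ by $6M/n\leq(6/n)\max(ML\diam(\Omega),M^2)^{1/2}$, using $\max(ML\diam(\Omega),M^2)^{1/2}\geq M$. For the off-diagonal term, $U_n^2\tilde t_\omega$ is a degenerate $U$-statistic of order two over a uniformly bounded, $\omega$-Lipschitz parametric family, so \cref{prop:maximal_ineq_degenerate_u_process} with $k=2$, $p=1$ and constants $6M$, $6L$ gives $\mathbb{E}_{\PP}[\sup_\omega|U_n^2\tilde t_\omega|]\leq(6c(\Omega)/n)\max(ML\diam(\Omega),M^2)^{1/2}$. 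Taking suprema over $\omega$ and expectations in the displayed identity, using $\tfrac{n-1}{n}\leq1$ and the triangle inequality, and absorbing the numerical constants into a redefined $c(\Omega)$, yields the stated bound.

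\textbf{Main obstacle.} The delicate point is the bookkeeping of the Hoeffding decomposition: one must verify that $\tau_\omega$ carries \emph{no} residual first-order (linear) term, which holds precisely because $\tau_\omega$ subtracts the leave-one-out conditional means $\frac{2}{n}\sum_i\mathbb{E}_{z}[t_\omega(z,z_i)]$ and adds back $\theta_\omega$ with exactly the right signs and multiplicities; one must also confirm that $\tilde t_\omega$ inherits symmetry and two-sided degeneracy as well as the bounded/Lipschitz regularity with only constant-factor losses, so that the hypotheses of \cref{prop:maximal_ineq_degenerate_u_process} are met verbatim. The remaining estimates — the diagonal correction and the $\tfrac{n-1}{n}\leq1$ reduction — are routine.
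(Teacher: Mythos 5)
Your proposal is correct and follows essentially the same route as the paper: your centered kernel $\tilde t_\omega$ is exactly the paper's degenerate kernel $s_\omega$, the Hoeffding-type cancellation and the $\mathcal{O}(M/n)$ diagonal/correction terms match the paper's decomposition (up to trivial bookkeeping of whether the residual terms are written in $t_\omega$ or $\tilde t_\omega$), and both arguments conclude by applying \cref{prop:maximal_ineq_degenerate_u_process} with $k=2$, $p=1$ to the bounded, Lipschitz, degenerate family and absorbing numerical factors into $c(\Omega)$. Your constants ($6M$, $6L$) are slightly looser than the paper's ($4M$, $4L$), but this is immaterial.
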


\begin{proof}
The proof will proceed by first decomposing $\tau_{\omega}$ into  a sum of a degenerate $U$-process and a term of order $\mathcal{O}(\frac{1}{n})$. The maximal inequality for degenerate $U$-processes from \citep{sherman1994maximal} will be employed to obtain the desired bound.

\textbf{Decomposition of $\tau_{\omega}$.} Consider the following function  defined over $\mathcal{Z}\times \mathcal{Z}$ and indexed by elements $\omega\in \Omega$:
\begin{equation}\label{eq:def_degenerate_kernel}
    s_\omega(z,z')=t_\omega(z,z')-\mathbb{E}_{\bar{z}\sim\PP}\left[t_\omega(z,\bar{z})\right]-\mathbb{E}_{\bar{z}\sim\PP}\left[t_\omega(\bar{z},z')\right]+\mathbb{E}_{\bar{z},\underline{z}\sim\PP\otimes\PP}\left[t_\omega(\bar{z},\underline{z})\right]. 
\end{equation}
By direct calculation, we decompose $\tau_{\omega}$ into two higher order terms and a third  term, $U_n^2 s_{\omega}$, involving $s_{\omega}$, which happens to be a $U$-statistic: 
\begin{equation*}
    \tau_\omega= \overbrace{\frac{1}{n(n-1)}\sum_{\substack{i,j=1\\i\neq j}}^n s_\omega\big(z_i,z_j\big)} ^{U_n^2 s_\omega}-\frac{1}{n^2(n-1)} \sum_{\substack{i,j=1\\i\neq j}}^n t_\omega(z_i,z_j)+\frac{1}{n^2}\sum_{i=1}^n t_\omega(z_i,z_i).
\end{equation*}
Using the triangle inequality in the above equality and recalling that, by assumption, $t_{\omega}(z,z')$ is uniformly bounded by a positive constant $M$, it follows that:
\begin{align*}
    \verts{\tau_\omega}\leq\left|U_{n}^2 s_\omega\right|+\frac{1}{n^2(n-1)}\sum_{\substack{i,j=1\\i\neq j}}^{n} \left|t_\omega(z_i,z_j)\right|+\frac{1}{n^2}\sum_{i=1}^{n}\left|t_\omega(z_i,z_i)\right|
    \leq 
     \verts{U_n^2 s_{\omega}} + \frac{2M}{n}.
\end{align*}
Furthermore, taking the supremum over $\omega$ followed by the expectation over samples yields: 
\begin{align}\label{eq:main_decomposition_u_stat}
	\mathbb{E}_{\PP}\brackets{\sup_{\omega\in \Omega} \verts{\tau_{\omega}}}\leq \mathbb{E}_{\PP}\brackets{\sup_{\omega\in \Omega} \verts{U_n^2 s_{\omega}}} + \frac{2M}{n}.
\end{align}
Hence, it only remains to control the first term in the above inequality. To this end, we will use a maximal inequality for degenerate $U$-processes due to \citep{sherman1994maximal}. 

\textbf{Maximal inequality for degenerate $U$-processes.} We will first check that $U_n^2 s_{\omega}$ is a degenerate statistic for a given $\omega\in \Omega$. Simple calculations show that for any $z$ in $\mathcal{Z}$:
\begin{align*}
	\mathbb{E}_{\bar{z}\sim \PP}\brackets{s_{\omega}(z,\bar{z})} =\mathbb{E}_{\bar{z}\sim \PP}\brackets{s_{\omega}(\bar{z},z)} = 0. 
\end{align*}
The above equalities precisely ensure that $U_n^2 s_{\omega}$ is a degenerate $U$-statistic for $\PP$. 
Consider now the family $\mathcal{S}\coloneqq \braces{\mathcal{Z}\times \mathcal{Z}\ni(z,z')\mapsto s_{\omega}(z,z')\in \mathbb{R}\mid\omega\in\Omega}$. We show that $\mathcal{S}$ is uniformly bounded and Lipschitz which allows to directly apply the result stated in \cref{prop:maximal_ineq_degenerate_u_process}, which is a special case of the more general  result in \citep[Maximal inequality]{sherman1994maximal}. 
First note, by assumption, that the functions $t_{\omega}(z,z')$ are uniformly bounded by a positive constant $M$.  Hence, using \cref{eq:def_degenerate_kernel}, it follows that $s_{\omega}(z,z')$ is uniformly bounded by $4M$. Moreover, the functions $\omega\mapsto t_{\omega}(z,z')$ are $L$-Lipschitz for any $z,z'$ in $\mathcal{Z}$. Hence, from  \cref{eq:def_degenerate_kernel}, we directly have that $\omega \mapsto s_{\omega}(z,z')$ is $4L$-Lipschitz for any $z,z'\in \mathcal{Z}$. We can directly apply \cref{prop:maximal_ineq_degenerate_u_process} with $k=2$ and $p=1$ to  $\mathcal{S}$ and get the following maximal inequality:
\begin{align*}
	\mathbb{E}_{\PP}\brackets{\sup_{\omega\in \Omega} \verts{U^2_n s_{\omega}}}\leq 4n^{-1}  c(\Omega)\max\parens{ML\diam(\Omega),M^2}^{\frac{1}{2}}. 
\end{align*}
We obtain an upper bound on $\mathbb{E}_{\PP}\brackets{\sup_{\omega\in \Omega} \verts{\tau_{\omega}}}$ by combining the above inequality with 
  \cref{eq:main_decomposition_u_stat}, then noticing that $2M\leq 2c(\Omega)\max\parens{L\diam(\Omega),M}$ so that:
  \begin{align*}
	\mathbb{E}_{\PP}\brackets{\sup_{\omega\in \Omega} \verts{\tau_{\omega}}}\leq 6n^{-1}  c(\Omega)\max\parens{ML\diam(\Omega),M^2}^{\frac{1}{2}}. 
\end{align*}
Finally, the desired result follows by redefining $c(\Omega)$ to include the factor $6$ in the above inequality.  
\end{proof}
\section{Differentiability Results}\label{sec:proof_diff_results}
The proofs of \cref{prop:fre_diff_L_v,prop:fre_diff_L} are direct applications of the following more general result. 
\begin{proposition}
	Let $\mathcal{U}$ be an open non-trivial subset of $\mathbb{R}^d$. 
	Consider a real-valued function $\ell: (\omega,v,y)\mapsto \ell(\omega,v,y)$ defined on $\mathcal{U}\times \mathbb{R}\times \mathcal{Y}$ that is of class $C^{3}$ jointly in $(\omega,v)$ and whose derivatives are jointly continuous in $(\omega,v,y)$. For a given  probability distribution $\PP$ over $\mathcal{X}\times \mathcal{Y}$, consider the following functional defined over $\mathcal{U}\times\mathcal{H}$:
	\begin{align*}
		L(\omega,h) \coloneqq \mathbb{E}_{\PP}\brackets{\ell(\omega,h(x),y)}.
	\end{align*}
	Under \cref{assump:K_meas,assump:K_bounded,assump:compact}, the following properties hold for $L$:
		\begin{itemize}
			\item $L$ admits finite values for any $(\omega,h)\in \mathcal{U}\times\mathcal{H}$.
			\item $(\omega,h)\mapsto L(\omega,h)$ is Fr\'echet differentiable with partial derivatives $\partial_{\omega}L(\omega,h)$ and $\partial_{h}L(\omega,h)$ at any point $(\omega,h)\in \mathcal{U}\times \mathcal{H}$ given by:
				\begin{align*}
					\partial_{\omega}L(\omega,h) &= \mathbb{E}_{\PP}\brackets{\partial_{\omega} \ell(\omega,h(x),y)} \in \mathbb{R}^d ,\\
					\partial_{h}L(\omega,h) &= \mathbb{E}_{\PP}\brackets{\partial_v \ell(\omega,h(x),y)K(x,\cdot)}\in \mathcal{H}.
				\end{align*}
			\item The map $(\omega,h)\mapsto \partial_{h}L(\omega,h)$ is differentiable. Moreover, for any $(\omega,h)\in \mathcal{U}\times\mathcal{H}$, its partial derivatives $\partial_{\omega,h}^2L(\omega,h)$ and $\partial_{h}^2L(\omega,h)$ at $(\omega,h)$ are Hilbert-Schmidt operators given by:
				\begin{align*}
					\partial_{\omega,h}^2L(\omega,h)&= \mathbb{E}_{\PP}\brackets{\partial_{\omega,v}^2 \ell(\omega,h(x),y)K(x,\cdot)} \in \mathcal{L}(\mathcal{H},\mathbb{R}^d), \\
					\partial_{h}^2L(\omega,h) &= 
					\mathbb{E}_{\PP}\brackets{\partial_{v}^2 \ell(\omega,h(x),y)K(x,\cdot)\otimes K(x,\cdot)}\in \mathcal{L}(\mathcal{H},\mathcal{H}).
				\end{align*}
		\end{itemize}  
\end{proposition}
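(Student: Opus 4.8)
The plan is to prove all three assertions by a single ``differentiation under the expectation'' scheme, carried out first for $L$ and then for the $\mathcal{H}$-valued map $g\coloneqq\partial_h L$, with the domination needed in each step supplied by a compactness reduction built from the reproducing property of $\mathcal{H}$.

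\textbf{Localization and domination.} Fix $(\omega_0,h_0)\in\mathcal{U}\times\mathcal{H}$ and choose $r>0$ small enough that the closed ball $\overline{B}(\omega_0,r)$ lies in $\mathcal{U}$. Set $\rho\coloneqq\Verts{h_0}_\mathcal{H}+r$ and $\mathcal{K}\coloneqq\overline{B}(\omega_0,r)\times[-\sqrt{\kappa}\rho,\sqrt{\kappa}\rho]\times\mathcal{Y}$, which is compact by \cref{assump:compact}. Whenever $\Verts{\omega-\omega_0}\le r$ and $\Verts{h-h_0}_\mathcal{H}\le r$, the reproducing property and \cref{assump:K_bounded} give $\verts{h(x)}=\verts{\langle h,K(x,\cdot)\rangle_\mathcal{H}}\le\sqrt{\kappa}\Verts{h}_\mathcal{H}\le\sqrt{\kappa}\rho$ for every $x\in\mathcal{X}$, so $(\omega,h(x),y)\in\mathcal{K}$ for all $(x,y)$. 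Since $\ell$ and all its partial derivatives of order $\le 3$ in $(\omega,v)$ are continuous on $\mathcal{K}$, they are bounded there by some $M_\ell<\infty$; this single constant will dominate every integrand below. Together with \cref{assump:K_meas}---which makes $x\mapsto h(x)$, hence $(x,y)\mapsto\ell(\omega,h(x),y)$, measurable, and (in the separable RKHS setting relevant here) makes $x\mapsto K(x,\cdot)$ strongly measurable---this gives finiteness of $L(\omega,h)$ immediately.

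\textbf{First-order differentiability.} The candidate derivative at $(\omega,h)$ is the bounded linear map $(\delta\omega,\delta h)\mapsto\mathbb{E}_\PP[\partial_\omega\ell(\omega,h(x),y)\cdot\delta\omega+\partial_v\ell(\omega,h(x),y)\delta h(x)]$. Using $\delta h(x)=\langle\delta h,K(x,\cdot)\rangle_\mathcal{H}$ and the Bochner integrability of $x\mapsto\partial_v\ell(\omega,h(x),y)K(x,\cdot)$ (strongly measurable, and $\Verts{\cdot}_\mathcal{H}$-dominated by $\sqrt{\kappa}M_\ell$), linearity of the Bochner integral rewrites the second term as $\langle\delta h,\mathbb{E}_\PP[\partial_v\ell(\omega,h(x),y)K(x,\cdot)]\rangle_\mathcal{H}$, identifying $\partial_h L(\omega,h)$ with the stated element of $\mathcal{H}$; similarly $\partial_\omega L(\omega,h)=\mathbb{E}_\PP[\partial_\omega\ell(\omega,h(x),y)]$ as an ordinary $\mathbb{R}^d$-valued integral. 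For the remainder, a second-order Taylor expansion of $\ell$ in $(\omega,v)$ at $(\omega,h(x))$---legitimate because small perturbations keep $(\omega+\delta\omega,h(x)+\delta h(x))$ inside $\mathcal{K}$---yields the pointwise bound $O\!\big((\Verts{\delta\omega}+\verts{\delta h(x)})^2\big)\le O\!\big((\Verts{\delta\omega}+\sqrt{\kappa}\Verts{\delta h}_\mathcal{H})^2\big)$ with constant depending only on $M_\ell$; integrating against $\PP$ shows the difference $L(\omega+\delta\omega,h+\delta h)-L(\omega,h)$ minus the candidate derivative is $o(\Verts{\delta\omega}+\Verts{\delta h}_\mathcal{H})$, so $L$ is Fr\'echet differentiable with the claimed partials.

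\textbf{Second-order differentiability and Hilbert--Schmidt bounds.} I would then rerun exactly this argument for the $\mathcal{H}$-valued map $g(\omega,h)=\mathbb{E}_\PP[\partial_v\ell(\omega,h(x),y)K(x,\cdot)]$, differentiating under the Bochner integral and using that $\partial_v\ell$ is itself $C^2$ with derivatives bounded by $M_\ell$ on $\mathcal{K}$: the $\omega$-component gives $\partial_{\omega,h}^2 L(\omega,h)=\mathbb{E}_\PP[\partial_{\omega,v}^2\ell(\omega,h(x),y)K(x,\cdot)]$, and the $h$-component, again via $\delta h(x)=\langle\delta h,K(x,\cdot)\rangle_\mathcal{H}$, gives the operator $\partial_h^2 L(\omega,h)=\mathbb{E}_\PP[\partial_v^2\ell(\omega,h(x),y)\,K(x,\cdot)\otimes K(x,\cdot)]$, with an $\mathcal{H}$-valued remainder controlled as before by $\sqrt{\kappa}\cdot O\!\big((\Verts{\delta\omega}+\sqrt{\kappa}\Verts{\delta h}_\mathcal{H})^2\big)$. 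For the Hilbert--Schmidt claims: $\partial_{\omega,h}^2 L$ has rank $\le d$, hence is automatically Hilbert--Schmidt, with $\Verts{\partial_{\omega,h}^2 L}_{\hs}^2=\sum_{l=1}^d\Verts{\mathbb{E}_\PP[\partial_{\omega_l,v}^2\ell\,K(x,\cdot)]}_\mathcal{H}^2\le d\kappa M_\ell^2$; and $\partial_h^2 L$ is a Bochner integral, in the Hilbert space of Hilbert--Schmidt operators on $\mathcal{H}$ equipped with $\Verts{\cdot}_{\hs}$, of the rank-one operators $\partial_v^2\ell\,K(x,\cdot)\otimes K(x,\cdot)$ whose $\hs$-norms are $\verts{\partial_v^2\ell}\Verts{K(x,\cdot)}_\mathcal{H}^2\le\kappa M_\ell$, so the integral lies in that space with $\Verts{\partial_h^2 L}_{\hs}\le\kappa M_\ell$.

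\textbf{Expected main obstacle.} The Taylor-remainder bookkeeping is routine; the delicate point is the measure-theoretic scaffolding---establishing \emph{strong} (Bochner) measurability and essential separability of the $\mathcal{H}$- and $\hs$-valued integrands $x\mapsto\partial_v\ell\,K(x,\cdot)$ and $x\mapsto\partial_v^2\ell\,K(x,\cdot)\otimes K(x,\cdot)$ from the hypotheses on $K$ (this is where \cref{assump:K_meas} and the separability of $\mathcal{X}\subset\mathbb{R}^p$, together with the standard separability of the RKHS in the cases of interest, are used), and then justifying each interchange---of expectation with the inner product, with the tensor product, and with the difference quotients---via the dominated-convergence theorem for Bochner integrals. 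Once these are in place, the three bullets follow exactly as above, and specializing to $\ell_{in}$ (adding the $\tfrac{\lambda}{2}\Verts{\cdot}_\mathcal{H}^2$ term, whose gradient is $\lambda h$ and Hessian $\lambda\Id_\mathcal{H}$) and to $\ell_{out}$ recovers \cref{prop:fre_diff_L,prop:fre_diff_L_v}.
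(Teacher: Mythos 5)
Your proposal is correct and follows essentially the same route as the paper's proof: localize to a compact set built from the reproducing property, the bound $K(x,x)\le\kappa$, and the compactness of $\mathcal{Y}$, dominate all integrands by a constant there, identify $\partial_h L$, $\partial^2_{\omega,h}L$, and $\partial^2_h L$ as Bochner integrals in $\mathcal{H}$ (respectively in the space of Hilbert--Schmidt operators), and bound their norms just as you do. The only real difference is technical bookkeeping: you control the first- and second-order remainders by an explicit quadratic Taylor bound exploiting the $C^3$ regularity, whereas the paper argues along sequences via the mean-value theorem and dominated convergence for Bochner integrals; both are valid under the stated hypotheses.
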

\begin{proof}
	
	{\bf Finite values.} Fix $\omega\in \mathcal{U}$ and $h\in \mathcal{H}$. We will first show that  $h$ is bounded on $\mathcal{X}$. By the reproducing property, we know that $\verts{h(x)}\leq \Verts{h}_{\mathcal{H}}\sqrt{K(x,x)}$ for any $x\in \mathcal{X}$. Moreover, the kernel $K$ is bounded by a constant $\kappa$ thanks to \cref{assump:K_bounded}. Consequently, $\verts{h(x)}$ is upper-bounded by $\Verts{h}_{\mathcal{H}}\sqrt{\kappa}$ for any $x\in \mathcal{X}$. 
    
    Denote by $\mathcal{I}$ the compact interval defined as $ \mathcal{I}= [-\Verts{h}_{\mathcal{H}}\sqrt{\kappa},\Verts{h}_{\mathcal{H}}\sqrt{\kappa} ]$. By \cref{assump:compact}, the set $\mathcal{Y}$ is compact so that $\mathcal{I}\times\mathcal{Y}$ is also compact. 
    Moreover, we know, by assumption on  $\ell$, that $(\omega, v, y)\mapsto\ell(\omega, v, y)$ is continuous on $\mathcal{U}\times \mathbb{R}\times \mathcal{Y}$. 
    Therefore, $(v, y)\mapsto\ell(\omega, v, y)$  must be bounded by some finite constant $C$ on the compact set $\mathcal{I}\times \mathcal{Y}$. This allows to deduce that $(x, y)\mapsto\ell(\omega, h(x), y)$ is bounded by $C$ for any $(x,y)\in\mathcal{X}\times \mathcal{Y}$ and a fortiori $\PP$-integrable, which shows that $L(\omega,h)$ is finite. 

	\textbf{Fr\'echet differentiability of $L$.} 
	Let $(\omega, h)\in \mathcal{U}\times \mathcal{H}$. Consider $(\omega_j,h_j)_{j\geq 1}$ a sequence of elements in $\mathcal{U}\times \mathcal{H}$ converging to it, \textit{i.e.}, $(\omega_j,h_j)\rightarrow (\omega,h)$ with $(\omega_j,h_j)\neq (\omega,h)$ for any $j\geq 0$.    
	Define the sequence of functions $r_j:\mathcal{X}\times\mathcal{Y}\to\mathbb{R}$ for any $(x,y)\in\mathcal{X}\times\mathcal{Y}$ as follows:
    \begin{align}\label{eq:expression_rn}
    &r_j(x,y)=\nonumber\\
    &\frac{\ell\left(\omega_j,h_j(x),y\right)-\ell\left(\omega,h(x),y\right)-\left\langle\partial_v\ell\left(\omega,h(x),y\right)K(x,\cdot),h_j-h\right\rangle_\mathcal{H}- \langle \partial_{\omega}\ell(\omega,h(x),y),\omega_j-\omega\rangle}{\left\|(\omega_j,h_j)-(\omega,h)\right\|}.
    \end{align}    
    We will first show that $\mathbb{E}_{\mathbb{D}}\brackets{\verts{r_j(x,y)}}$ converges to $0$ by the dominated convergence theorem \citep[Theorem~1.34]{rudin1987real}. 
	By the reproducing property, note that $\ell(\omega,h(x),y) = \ell(\omega,\langle h,K(x,\cdot)\rangle_{\mathcal{H}},y)$. Hence, since $\ell$ is jointly differentiable in $(\omega,v)$ for any $y$, it follows that $(\omega,h)\mapsto \ell(\omega,h(x),y)$ is also differentiable for any $(x,y)$ by composition with the evaluation map $(\omega,h)\mapsto (\omega, \langle h,K(x,\cdot)\rangle_{\mathcal{H}}$ which is differentiable. Hence, the sequence $r_j(x,y)$ converges to $0$ for any $(x,y)\in \mathcal{X}\times \mathcal{Y}$. Moreover, by the mean-value theorem, there exists $0\leq c_j\leq 1$ such that:
	\begin{align*}
		&r_j(x,y)=\nonumber\\
		&\frac{\left\langle\parens{\partial_v\ell\left(\bar{\omega}_j,\bar{h}_j(x),y\right)- \partial_v\ell\left(\omega,h(x),y\right)}K(x,\cdot),h_j-h\right\rangle_\mathcal{H}- \langle \partial_{\omega}\ell\left(\bar{\omega}_j,\bar{h}_j(x),y\right)- \partial_{\omega}\ell\left(\omega,h(x),y\right),\omega_j-\omega\rangle}{\left\|(\omega_j,h_j)-(\omega,h)\right\|},
	\end{align*}
	where $(\bar{\omega}_j,\bar{h}_j)\coloneqq(1-c_j)(\omega,h) + c_j(\omega_j,h_j)$. 
	We will show that $r_j(x,y)$ is bounded for $j$ large enough. We first construct a compact set that will contain all elements of the form $(\omega_j, h_j(x),y)$ and $(\bar{\omega}_j, \bar{h}_j(x),y)$ for all $j$ large enough. Since $\omega$ is an element in the open set $\mathcal{U}$, there exists a closed ball $\mathcal{B}(\omega,R)$ centered at $\omega$ and with some radius $R$ small enough so that $\mathcal{B}(\omega,R)$ is included in $\mathcal{U}$. For all $j$ large enough, $\omega_j$ and $\bar{\omega}_j$ belong to $\mathcal{B}(\omega,R)$ as these sequences converge to $\omega$. Moreover, $h_j$ and $\bar{h}_j$ are convergent sequences. Consequently, they must be bounded by some constant $B$. By the reproducing property, and recalling that the kernel $K$ is bounded by $\kappa$ by \cref{assump:K_bounded}, it follows that $\max(\verts{h_{j}(x)},\verts{\bar{h}_{j}(x)})\leq B\kappa$.  Consider now the set {$\mathcal{W}\coloneqq\mathcal{B}(\omega, R)\times \mathcal{B}_1(0, B\kappa)\times \mathcal{Y}$} which is a product of compact sets (recalling that $\mathcal{Y}$ is compact by \cref{assump:compact}){, where $\mathcal{B}_1(0, B\kappa)$ is the closed ball in $\mathbb{R}$ centered at $0$ and of radius $B\kappa$}.   
	For $j$ large enough, we have established that  $(\omega_j, h_j(x),y)$ and $(\bar{\omega}_j, \bar{h}_j(x),y)$ belong to $\mathcal{W}$ for any $(x,y)\in \mathcal{X}\times\mathcal{Y}$. 
	Since, by assumption on $\ell$, $\partial_{v}\ell(\omega,v,y)$ and $\partial_{\omega}\ell(\omega,v,y)$ are continuous, they must be bounded on the compact set $\mathcal{W}$ by some constant $C$. This allows to deduce from the expression of $r_{j}(x,y)$ above that $r_{j}(x,y)$ is bounded, and a fortiori dominated by an integrable function (a constant function). We then deduce that $\mathbb{E}_{\mathbb{\PP}}\brackets{\verts{r_j(x,y)}}$ converges to $0$ by application of the dominated convergence theorem \citep[Theorem~1.34]{rudin1987real}.
	
	Recalling \cref{eq:expression_rn}, $\mathbb{E}_{\mathbb{\PP}}\brackets{r_j(x,y)}$ admits the following expression:
	\begin{align}\label{eq:expression_rn_2}
		&\mathbb{E}_{\mathbb{\PP}}\brackets{r_j(x,y)} =\nonumber\\ &\frac{L(\omega_j,h_j)-L(\omega,h)-\mathbb{E}_{\PP}\brackets{\left\langle\partial_v\ell\left(\omega,h(x),y\right)K(x,\cdot),h_j-h\right\rangle_\mathcal{H}}- \langle \mathbb{E}_{\PP}\brackets{\partial_{\omega}\ell(\omega,h(x),y)},\omega_j-\omega\rangle}{\left\|(\omega_j,h_j)-(\omega,h)\right\|}.
	\end{align}
	The convergence to $0$ of the above expression precisely means that $L$ is differentiable at $(\omega,h)$ provided that the linear form $g\mapsto\mathbb{E}_{\PP}\brackets{\left\langle\partial_v\ell\left(\omega,h(x),y\right)K(x,\cdot),g\right\rangle_\mathcal{H}}$ is bounded. To establish this fact, consider the RKHS-valued function $(x,y)\mapsto \partial_v\ell\left(\omega,h(x),y\right)K(x,\cdot)$. 
	This function is Bochner-integrable in the sense that $\mathbb{E}_{\PP}\brackets{\Verts{\partial_v\ell\left(\omega,h(x),y\right)K(x,\cdot)}_\mathcal{H}}$ is finite \citep[Definition~1,~Chapter~2]{diestel1977vector}. Indeed, we have the following:
\begin{align*}
	\mathbb{E}_{\PP}\brackets{\Verts{\partial_v\ell\left(\omega,h(x),y\right)K(x,\cdot)}_\mathcal{H}} &\coloneqq \mathbb{E}_{\PP}\brackets{\verts{\partial_v\ell\left(\omega,h(x),y\right)}\sqrt{K(x,x)}}\\
	&\leq \sqrt{\kappa}\mathbb{E}_{\PP}\brackets{\verts{\partial_v\ell\left(\omega,h(x),y\right)}}<+\infty,
\end{align*}
where, for the inequality, we used that $(x,y)\mapsto\partial_v\ell(\omega,h(x),y)$ is bounded as shown previously. Consequently, $\mathbb{E}_\mathbb{D}\brackets{\partial_v\ell\left(\omega,h(x),y\right)K(x,\cdot)}$ is an element in $\mathcal{H}$ satisfying:
\begin{align*}
	\langle\mathbb{E}_\mathbb{D}\brackets{\partial_v\ell\left(\omega,h(x),y\right)K(x,\cdot)},  g\rangle_{\mathcal{H}}=\mathbb{E}_{\PP}\brackets{\left\langle\partial_v\ell\left(\omega,h(x),y\right)K(x,\cdot),g\right\rangle_\mathcal{H}},\forall g\in \mathcal{H}.
\end{align*}
The above property follows from \citep[Theorem~6,~Chapter~2]{diestel1977vector} for Bochner-integrable functions that allows exchanging the integral and the application of a continuous linear map (here the scalar product with an element $g$). The above identity establishes that $g\mapsto\mathbb{E}_{\PP}\brackets{\left\langle\partial_v\ell\left(\omega,h(x),y\right)K(x,\cdot),g\right\rangle_\mathcal{H}}$ is bounded and provides the desired expression for $\partial_h L(\omega,h)$. The expression for $\partial_{\omega} L(\omega,h)$ directly follows from the last term in \cref{eq:expression_rn_2}.

\textbf{Fr\'echet differentiability of $\partial_h L$.} We use the same proof strategy as for the differentiability of $L$.  

Let $(\omega, h)\in \mathcal{U}\times \mathcal{H}$. Consider $(\omega_j,h_j)_{j\geq 1}$ a sequence of elements in $\mathcal{U}\times \mathcal{H}$ converging to it, \textit{i.e.}, $(\omega_j,h_j)\rightarrow (\omega,h)$ with $(\omega_j,h_j)\neq (\omega,h)$ for any $j\geq 0$.    
	Define the sequence of functions $s_j:\mathcal{X}\times\mathcal{Y}\to\mathcal{H}$ as follows:
    \begin{align}\label{eq:expression_sn}
    \begin{split}
\left\|(\omega_j,h_j)-(\omega,h)\right\|s_j(x,y)= &\parens{\partial_v\ell\left(\omega_j,h_j(x),y\right)-\partial_v\ell\left(\omega,h(x),y\right)}K(x,\cdot)\\
&-\partial_v^2\ell\left(\omega,h(x),y\right)K(x,\cdot)\otimes K(x,\cdot)\parens{h_j-h}\\
&- \parens{\omega_j-\omega}^{\top} \partial_{\omega,v}^2\ell(\omega,h(x),y)K(x,\cdot).
    \end{split}
\end{align}
 We will first show that $\mathbb{E}_{\mathbb{D}}\brackets{\Verts{s_j(x,y)}_{\mathcal{H}}}$ converges to $0$ by the dominated convergence theorem for Bochner-integrable functions \citep[Theorem~3,~Chapter~2]{diestel1977vector}. 
	By the reproducing property, note that $\partial_v\ell(\omega,h(x),y)K(x,\cdot) = \partial_v\ell(\omega,\langle h,K(x,\cdot)\rangle_{\mathcal{H}},y)K(x,\cdot)$. Hence, since $(\omega,v)\mapsto\partial_v\ell(\omega,v,y)$ is jointly differentiable in $(\omega,v)$ for any $y$, it follows that $(\omega,h)\mapsto \partial_v\ell(\omega,h(x),y)K(x,\cdot)$ is also differentiable for any $(x,y)$ by composition with the evaluation map $(\omega,h)\mapsto (\omega, \langle h,K(x,\cdot)\rangle_{\mathcal{H}}$ which is differentiable. Hence, the sequence $s_j(x,y)$ converges to $0$ for any $(x,y)\in \mathcal{X}\times \mathcal{Y}$. Moreover, by the mean-value theorem, there exists $0\leq c_j\leq 1$ such that:
\begin{align*}
    \begin{split}
\left\|(\omega_j,h_j)-(\omega,h)\right\|s_j(x,y) =& \partial_v^2\ell\left(\bar{\omega}_j,\bar{h}_j(x),y\right)K(x,\cdot)\otimes K(x,\cdot)\parens{h_j-h}\\
&+ \parens{\omega_j-\omega}^{\top} \partial_{\omega,v}^2\ell(\bar{\omega}_j,\bar{h}_j(x),y)K(x,\cdot)\\
&-\partial_v^2\ell\left(\omega,h(x),y\right)K(x,\cdot)\otimes K(x,\cdot)\parens{h_j-h}\\
&- \parens{\omega_j-\omega}^{\top} \partial_{\omega,v}^2\ell(\omega,h(x),y)K(x,\cdot),
    \end{split}
\end{align*}
where $(\bar{\omega}_j,\bar{h}_j)\coloneqq (1-c_j)(\omega,h) + c_j(\omega_j,h_j)$. Using the same construction as for the Fr\'echet differentiability, we find a compact set $\mathcal{W}$ containing all elements $(\omega_j,h_j(x),y)$ and $(\bar{\omega}_j,\bar{h}_j(x),y)$ for any $(x,y)\in \mathcal{X}\times \mathcal{Y}$ and all $j$ large enough. On such set, $\partial_v^2\ell(\omega,v,y)$ and $\partial_{\omega,v}^2\ell(\omega,v,y)$ are bounded by some constant $C$. Consequently, we can write:
\begin{align*}
    \begin{split}
\left\|(\omega_j,h_j)-(\omega,h)\right\|\Verts{s_j(x,y)}_{\mathcal{H}} \leq & 2C\Verts{K(x,\cdot)\otimes K(x,\cdot)\parens{h_j-h}}_{\mathcal{H}}
+ 2C\Verts{\omega_j-\omega} \Verts{K(x,\cdot)}_{\mathcal{H}}\\
\leq & 2C \kappa\Verts{h_j-h}_{\mathcal{H}} + 2C\sqrt{\kappa}\Verts{\omega_j-\omega}.
    \end{split}
\end{align*}
This already establishes that $s_j(x,y)$ is bounded so that $\mathbb{E}_{\PP}\brackets{\Verts{s_j(x,y)}_{\mathcal{H}}}$ converges to $0$ by application of the dominated convergence theorem. Recalling \cref{eq:expression_sn},  $\mathbb{E}_{\mathbb{\PP}}\brackets{s_j(x,y)}$ admits the following expression:
	\begin{align*}
\begin{split}
\left\|(\omega_j,h_j)-(\omega,h)\right\|\mathbb{E}_{\mathbb{\PP}}\brackets{s_j(x,y)}= &\partial_h L(\omega_j,h_j)-\partial_h L(\omega,h)\\
&-\mathbb{E}_{\mathbb{\PP}}\brackets{\partial_v^2\ell\left(\omega,h(x),y\right)K(x,\cdot)\otimes K(x,\cdot)\parens{h_j-h}}\\
&- \parens{\omega_j-\omega}^{\top}\mathbb{E}_{\mathbb{\PP}}\brackets{ \partial_{\omega,v}^2\ell(\omega,h(x),y)K(x,\cdot)}.
    \end{split}	\end{align*}
	The convergence to $0$ of the above expression precisely means that $L$ is differentiable at $(\omega,h)$ provided that: (1) $\mathbb{E}_{\mathbb{\PP}}\brackets{ \partial_{\omega,v}^2\ell(\omega,h(x),y)K(x,\cdot)}$ is an element in $\mathcal{H}^d$, and (2) the linear map   $g\mapsto\mathbb{E}_{\PP}\brackets{\partial_v^2\ell\left(\omega,h(x),y\right)(K(x,\cdot)\otimes K(x,\cdot))g}$ is bounded. Using the same strategy to establish Bochner's integrability of $(x,y)\mapsto \partial_v\ell\left(\omega,h(x),y\right)K(x,\cdot)$, we can show that $(x,y)\mapsto \partial_{\omega,v}^2\ell\left(\omega,h(x),y\right)K(x,\cdot)$ is also Bochner-integrable so that $\mathbb{E}_{\mathbb{\PP}}\brackets{ \partial_{\omega,v}^2\ell(\omega,h(x),y)K(x,\cdot)}$ is indeed an element in $\mathcal{H}^d$. This also establishes the expression of $\partial_{\omega,h}L(\omega,h)$. Similarly, we consider the operator-valued function $\xi:(x,y)\mapsto \partial_{v}^2\ell\left(\omega,h(x),y\right)K(x,\cdot)\otimes K(x,\cdot)$ with values in the space of Hilbert-Schmidt operators on $\mathcal{H}$. The Hilbert-Schmidt (HS) norm of such function satisfies the following inequality:
	\begin{align*}
		\mathbb{E}_{\mathbb{\PP}}\brackets{ \Verts{\partial_{v}^2\ell(\omega,h(x),y)K(x,\cdot)\otimes K(x,\cdot)}_{\hs}}\coloneqq\mathbb{E}_{\mathbb{\PP}}\brackets{ \verts{\partial_{v}^2\ell(\omega,h(x),y)}K(x,x)}\leq \kappa C<+\infty.
	\end{align*}
	Therefore, the function $\xi$ is Bochner-integrable, so that $\mathbb{E}_{\mathbb{\PP}}\brackets{ \partial_{v}^2\ell(\omega,h(x),y)K(x,\cdot)\otimes K(x,\cdot)}$ is a Hilbert-Schmidt operator satisfying: 
\begin{align*}
	\mathbb{E}_{\mathbb{\PP}}\brackets{ \partial_{v}^2\ell(\omega,h(x),y)K(x,\cdot)\otimes K(x,\cdot)}  g=\mathbb{E}_{\PP}\brackets{\partial_{v}^2\ell\left(\omega,h(x),y\right)(K(x,\cdot)\otimes K(x,\cdot))g},\forall g\in \mathcal{H}.
\end{align*}
The above property follows from \citep[Theorem~6,~Chapter~2]{diestel1977vector} for Bochner-integrable functions that allows exchanging the integral and the application of a continuous linear map (here the scalar product with an element $g$). Hence, from the above identity, we deduce the desired expression for $\partial_{h}^2 L(\omega,h)$. 
\end{proof}
\section{Auxiliary Technical Lemmas}\label{ab_sec:aux}
\begin{lemma}\label{lem:tech_res_1}
Let $A$ and $A'$ be two bounded operators from $\mathcal{H}$ to $\mathbb{R}^d$, and $B$ and $B'$ be two bounded and invertible operators from $\mathcal{H}$ to itself. Assume that $B\geq \lambda\Id_{\mathcal{H}}$ and $B'\geq \lambda\Id_{\mathcal{H}}$. 
Then, the following inequalities hold:
\begin{gather*}
    \Verts{AB^{-1}-A'(B')^{-1}}_{\op} \leq \frac{\Verts{A}_{\op}}{\lambda^2}\Verts{B-B'}_{\op} + \frac{1}{\lambda}\Verts{A-A'}_{\op},\\
    \Verts{AB^{-1}}_{\op}\leq \lambda^{-1}\Verts{A}_{\op}, \qquad\Verts{A'(B')^{-1}}_{\op}\leq \lambda^{-1}\Verts{A'}_{\op}.
\end{gather*}
\end{lemma}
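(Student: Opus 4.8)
The plan is to reduce the whole statement to sub-multiplicativity of the operator norm, $\Verts{AC}_{\op}\le\Verts{A}_{\op}\Verts{C}_{\op}$ (valid for composition of bounded operators between arbitrary normed spaces, so the fact that $A,A'$ land in $\mathbb{R}^d$ rather than a Hilbert space changes nothing), together with the elementary bound $\Verts{B^{-1}}_{\op}\le\lambda^{-1}$. The latter I would obtain directly from $B\ge\lambda\Id_{\mathcal{H}}$: for any $g\in\mathcal{H}$ we have $\lambda\Verts{g}_{\mathcal{H}}^2\le\langle Bg,g\rangle_{\mathcal{H}}\le\Verts{Bg}_{\mathcal{H}}\Verts{g}_{\mathcal{H}}$ by Cauchy--Schwarz, hence $\Verts{Bg}_{\mathcal{H}}\ge\lambda\Verts{g}_{\mathcal{H}}$; applying this with $g=B^{-1}h$ for arbitrary $h\in\mathcal{H}$ gives $\Verts{B^{-1}h}_{\mathcal{H}}\le\lambda^{-1}\Verts{h}_{\mathcal{H}}$, i.e.\ $\Verts{B^{-1}}_{\op}\le\lambda^{-1}$, and the same for $(B')^{-1}$. (This is the same mechanism already invoked in \cref{prop:strong_convexity_Lin}.) The two bounds $\Verts{AB^{-1}}_{\op}\le\Verts{A}_{\op}\Verts{B^{-1}}_{\op}\le\lambda^{-1}\Verts{A}_{\op}$ and its primed analogue are then immediate.

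For the first inequality I would insert a cross term and use a resolvent-type identity. Writing
\[
AB^{-1}-A'(B')^{-1}=A\bigl(B^{-1}-(B')^{-1}\bigr)+(A-A')(B')^{-1},
\qquad
B^{-1}-(B')^{-1}=B^{-1}(B'-B)(B')^{-1},
\]
taking operator norms, and applying sub-multiplicativity together with $\Verts{B^{-1}}_{\op},\Verts{(B')^{-1}}_{\op}\le\lambda^{-1}$, one gets $\Verts{B^{-1}-(B')^{-1}}_{\op}\le\lambda^{-2}\Verts{B-B'}_{\op}$ and hence
\[
\Verts{AB^{-1}-A'(B')^{-1}}_{\op}\le\frac{\Verts{A}_{\op}}{\lambda^2}\Verts{B-B'}_{\op}+\frac{1}{\lambda}\Verts{A-A'}_{\op},
\]
which is exactly the claim.

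There is essentially no serious obstacle here: the statement is a routine exercise in operator-norm estimates. The only two points that deserve an explicit line are the passage from the order relation $B\ge\lambda\Id_{\mathcal{H}}$ to the quantitative bound $\Verts{B^{-1}}_{\op}\le\lambda^{-1}$, and the observation that the cross-term decomposition above is an algebraic identity valid for any bounded operators with $B,B'$ invertible, so no continuity or density argument is needed. Everything else follows by sub-multiplicativity and the triangle inequality.
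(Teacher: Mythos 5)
Your proposal is correct and follows essentially the same route as the paper's proof: the same cross-term decomposition, the resolvent identity $B^{-1}-(B')^{-1}=B^{-1}(B'-B)(B')^{-1}$, sub-multiplicativity, and the bound $\Verts{B^{-1}}_{\op}\le\lambda^{-1}$ from $B\ge\lambda\Id_{\mathcal{H}}$. Your explicit Cauchy--Schwarz derivation of that last bound is a small addition the paper leaves implicit, but the argument is otherwise identical.
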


\begin{proof}
By the triangle inequality and the sub-multiplicative property of the operator norm $\|\cdot\|_{\op}$, we have:
\begin{align}
    \Verts{AB^{-1}-A'(B')^{-1}}_{\op}&\leq\left\|AB^{-1} - A(B')^{-1}\right\|_{\op}+\left\|A(B')^{-1} - A'(B')^{-1}\right\|_{\op}\nonumber\\
    &=\left\|A\left(B^{-1} - (B')^{-1}\right)\right\|_{\op}+\left\|\left(A - A'\right)(B')^{-1}\right\|_{\op}\nonumber\\
    &\leq\left\|A\right\|_{\op}\left\|B^{-1} - (B')^{-1}\right\|_{\op}+\left\|A-A'\right\|_{\op}\left\|(B')^{-1}\right\|_{\op}\nonumber\\
    &=\left\|A\right\|_{\op}\left\|B^{-1}\left(B'-B\right) (B')^{-1}\right\|_{\op}+\left\|A-A'\right\|_{\op}\left\|(B')^{-1}\right\|_{\op}\nonumber\\
    &\leq\left\|A\right\|_{\op}\left\|B^{-1}\right\|_{\op}\left\|B'-B\right\|_{\op}\left\|(B')^{-1}\right\|_{\op}+\left\|A-A'\right\|_{\op}\left\|(B')^{-1}\right\|_{\op}.
    \label{eq:technical_proof_1}
\end{align}
Since $B\geq \lambda\Id_{\mathcal{H}}$ and $B'\geq \lambda\Id_{\mathcal{H}}$, we obtain:
\begin{equation*}
    \left\|B^{-1}\right\|_{\op}\leq\frac{1}{\lambda}\quad\text{and}\quad\left\|(B')^{-1}\right\|_{\op}\leq\frac{1}{\lambda}.
\end{equation*}
Substituting these into \cref{eq:technical_proof_1}, we get:
\begin{align*}
    \Verts{AB^{-1}-A'(B')^{-1}}_{\op}\leq\frac{\Verts{A}_{\op}}{\lambda^2}\Verts{B-B'}_{\op} + \frac{1}{\lambda}\Verts{A-A'}_{\op}.
\end{align*}
This proves the first inequality. The remaining two inequalities follow directly from the sub-multiplicative property of the operator norm $\|\cdot\|_{\op}$ and the assumptions $B\geq \lambda\Id_{\mathcal{H}}$ and $B'\geq \lambda\Id_{\mathcal{H}}$.
\end{proof}

\begin{lemma}\label{lem:h_min_hstar}
Let $f:\mathcal{H}\to\mathbb{R}$ be a $\lambda$-strongly convex and Fr\'echet differentiable function. Denote by $h^\star\in\mathcal{H}$ its minimizer. Then, for any $h\in\mathcal{H}$, the following holds:
\begin{equation*}
    \left\|h-h^\star\right\|_\mathcal{H}\leq\frac{1}{\lambda}\left\|\partial_h f(h)\right\|_\mathcal{H}.
\end{equation*}
\end{lemma}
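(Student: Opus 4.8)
The statement is the standard ``gradient dominance'' estimate that $\lambda$-strong convexity provides a quantitative error bound in terms of the gradient norm, and the plan is to derive it directly from the first-order characterization of strong convexity combined with the stationarity of the minimizer. First I would record that, since $f$ is Fréchet differentiable and $\lambda$-strongly convex, for any $h_1,h_2\in\mathcal{H}$ one has the two quadratic lower bounds
\begin{align*}
f(h_2)&\geq f(h_1)+\langle\partial_h f(h_1),h_2-h_1\rangle_{\mathcal{H}}+\tfrac{\lambda}{2}\|h_2-h_1\|_{\mathcal{H}}^2,\\
f(h_1)&\geq f(h_2)+\langle\partial_h f(h_2),h_1-h_2\rangle_{\mathcal{H}}+\tfrac{\lambda}{2}\|h_1-h_2\|_{\mathcal{H}}^2.
\end{align*}
Adding these and rearranging yields the monotonicity estimate $\langle\partial_h f(h_1)-\partial_h f(h_2),h_1-h_2\rangle_{\mathcal{H}}\geq\lambda\|h_1-h_2\|_{\mathcal{H}}^2$.

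Next I would invoke the optimality condition: because $f$ is differentiable and $h^\star$ is its minimizer, $\partial_h f(h^\star)=0$ (existence and uniqueness of $h^\star$ is guaranteed by $\lambda$-strong convexity, but only the stationarity is needed here). Substituting $h_1=h$ and $h_2=h^\star$ into the monotonicity estimate then gives $\langle\partial_h f(h),h-h^\star\rangle_{\mathcal{H}}\geq\lambda\|h-h^\star\|_{\mathcal{H}}^2$. Applying the Cauchy--Schwarz inequality to the left-hand side produces $\|\partial_h f(h)\|_{\mathcal{H}}\,\|h-h^\star\|_{\mathcal{H}}\geq\lambda\|h-h^\star\|_{\mathcal{H}}^2$. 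If $h=h^\star$ the claimed inequality is trivial; otherwise dividing both sides by $\|h-h^\star\|_{\mathcal{H}}>0$ gives $\|\partial_h f(h)\|_{\mathcal{H}}\geq\lambda\|h-h^\star\|_{\mathcal{H}}$, which is the desired bound.

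There is no real obstacle here: the argument is entirely routine Hilbert-space convex analysis, and every ingredient (the quadratic lower bound form of $\lambda$-strong convexity, the first-order condition, Cauchy--Schwarz) is elementary. The only points requiring a line of care are noting that the degenerate case $h=h^\star$ must be separated out before dividing, and confirming that Fréchet differentiability is enough for the stationarity condition $\partial_h f(h^\star)=0$ to hold at the (unique) minimizer.
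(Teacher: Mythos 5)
Your proposal is correct and follows essentially the same route as the paper's proof: summing the two strong-convexity inequalities at $h$ and $h^\star$, using $\partial_h f(h^\star)=0$, applying Cauchy--Schwarz, and handling the case $h=h^\star$ separately before dividing. No gaps.
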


\begin{proof}
Let $h\in\mathcal{H}$. 

\textbf{Case 1: $h=h^\star$. }The proof is straightforward.

\textbf{Case 2: $h\neq h^\star$. }Given that $f$ is $\lambda$-strongly convex, we have:
\begin{align*}
    f(h)-f(h^\star)&\geq\left\langle\partial_h f(h^\star),h-h^\star\right\rangle_\mathcal{H}+\frac{\lambda}{2}\left\|h-h^\star\right\|_\mathcal{H}^2,\\
    \text{and }f(h^\star)-f(h)&\geq\left\langle\partial_h f(h),h^\star-h\right\rangle_\mathcal{H}+\frac{\lambda}{2}\left\|h-h^\star\right\|_\mathcal{H}^2.
\end{align*}
After summing these two inequalities, noticing that $\partial_h f(h^\star)=0$, and rearranging the terms, we obtain:
\begin{equation*}
    \left\langle\partial_h f(h),h-h^\star\right\rangle_\mathcal{H}\geq\lambda\left\|h-h^\star\right\|_\mathcal{H}^2.
\end{equation*}
After using the Cauchy-Schwarz inequality, we get:
\begin{equation*}
    \left\|\partial_h f(h)\right\|_\mathcal{H}\left\|h-h^\star\right\|_\mathcal{H}\geq\lambda\left\|h-h^\star\right\|_\mathcal{H}^2.
\end{equation*}
Dividing by $\lambda\left\|h-h^\star\right\|_\mathcal{H}\neq0$ concludes the proof.
\end{proof}

\begin{lemma}\label{lem:hs_identity}
Let $\mathcal{X}$ be a subset of $\mathbb{R}^p$, $\mathcal{Y}$ be a subset of $\mathbb{R}^q$, and $\mathbb{D}$ be a probability distribution over $\mathcal{X}\times\mathcal{Y}$. Given i.i.d. samples $(x_i,y_i)_{1\leq i\leq n}$ drawn from $\mathbb{D}$, consider a function $g:\mathcal{X}\times\mathcal{Y}\to\mathbb{R}$ of class $C^1$ such that the operator $A:\mathcal{H}\to\mathcal{H}$ defined as:
\begin{equation*}
    A\coloneqq\mathbb{E}_{(x,y)\sim\mathbb{D}}\brackets{g(x,y)K(x,\cdot)\otimes K(x,\cdot)}-\frac{1}{n}\sum_{i=1}^n g(x_i,y_i)K(x_i,\cdot)\otimes K(x_i,\cdot)
\end{equation*}
is Hilbert-Schmidt. Then, the following holds:
\begin{align*}
    \Verts{A}_{\hs}^2=&\mathbb{E}_{(x,y),(x',y')\sim\mathbb{D}\otimes\mathbb{D}}\Big[g(x,y)g(x',y')K^2(x,x')\Big]+\frac{1}{n^2}\sum_{i,j=1}^n g(x_i, y_i)g(x_j, y_j)K^2(x_i,x_j)\\
    &-\frac{2}{n}\sum_{i=1}^n\mathbb{E}_{(x,y)\sim\mathbb{D}}\Big[g(x_i, y_i)g(x, y)K^2(x,x_i)\Big].
\end{align*}
\end{lemma}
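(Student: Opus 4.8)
The plan is to reduce everything to the single rank-one identity $\langle K(x,\cdot)\otimes K(x,\cdot),\, K(x',\cdot)\otimes K(x',\cdot)\rangle_{\hs}=K(x,x')^2$. This follows from the general fact that $\langle u\otimes w,\, u'\otimes w'\rangle_{\hs}=\langle u,u'\rangle_{\mathcal{H}}\langle w,w'\rangle_{\mathcal{H}}$ for rank-one operators, applied with $u=w=K(x,\cdot)$ and $u'=w'=K(x',\cdot)$, together with the reproducing property $\langle K(x,\cdot),K(x',\cdot)\rangle_{\mathcal{H}}=K(x,x')$. First I would write $A=P-Q$ with $P\coloneqq\mathbb{E}_{(x,y)\sim\mathbb{D}}\brackets{g(x,y)K(x,\cdot)\otimes K(x,\cdot)}$ and $Q\coloneqq\frac{1}{n}\sum_{i=1}^n g(x_i,y_i)K(x_i,\cdot)\otimes K(x_i,\cdot)$, then expand $\Verts{A}_{\hs}^2=\Verts{P}_{\hs}^2-2\langle P,Q\rangle_{\hs}+\Verts{Q}_{\hs}^2$. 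This is legitimate since $Q$ is a finite sum of Hilbert-Schmidt operators (each $K(x_i,\cdot)\otimes K(x_i,\cdot)$ has Hilbert-Schmidt norm $K(x_i,x_i)<+\infty$) and $A$ is Hilbert-Schmidt by hypothesis, hence so is $P=A+Q$.

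For $\Verts{Q}_{\hs}^2$, bilinearity of $\langle\cdot,\cdot\rangle_{\hs}$ over the double finite sum plus the rank-one identity immediately gives $\frac{1}{n^2}\sum_{i,j=1}^n g(x_i,y_i)g(x_j,y_j)K^2(x_i,x_j)$. For the cross term $\langle P,Q\rangle_{\hs}$ and for $\Verts{P}_{\hs}^2=\langle P,P\rangle_{\hs}$, the key step is that for any fixed Hilbert-Schmidt operator $Z$, the map $Z'\mapsto\langle Z',Z\rangle_{\hs}$ is a bounded linear functional, so it commutes with the Bochner integral defining $P$ by \citep[Theorem~6,~Chapter~2]{diestel1977vector} (Bochner integrability of $(x,y)\mapsto g(x,y)K(x,\cdot)\otimes K(x,\cdot)$ as an operator-valued map following from boundedness of $K$ under \cref{assump:K_bounded} and local boundedness of $g$, exactly as in the differentiability proofs of \cref{sec:proof_diff_results}). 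Applying this with $Z=K(x_i,\cdot)\otimes K(x_i,\cdot)$ and the rank-one identity yields $\langle P,Q\rangle_{\hs}=\frac{1}{n}\sum_{i=1}^n\mathbb{E}_{(x,y)\sim\mathbb{D}}\brackets{g(x,y)g(x_i,y_i)K^2(x,x_i)}$, and applying the same exchange twice (the second time with an independent copy $(x',y')\sim\mathbb{D}$, using Fubini) gives $\Verts{P}_{\hs}^2=\mathbb{E}_{(x,y),(x',y')\sim\mathbb{D}\otimes\mathbb{D}}\brackets{g(x,y)g(x',y')K^2(x,x')}$. Substituting the three expressions into the expansion of $\Verts{A}_{\hs}^2$ gives the claimed formula.

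The main obstacle is precisely this last point: rigorously justifying that expectation and the Hilbert-Schmidt inner product can be interchanged, i.e. that the Hilbert-Schmidt-operator-valued map $(x,y)\mapsto g(x,y)K(x,\cdot)\otimes K(x,\cdot)$ is Bochner integrable and that the resulting iterated expectation is well defined. Once this is in place, everything else is bilinearity of $\langle\cdot,\cdot\rangle_{\hs}$ and the rank-one identity, and requires no further computation.
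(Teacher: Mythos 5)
Your proposal is correct and follows essentially the same route as the paper's proof: write $A$ as the difference of the population term and the empirical term, expand $\Verts{A}_{\hs}^2$ into the three inner-product terms, and evaluate each one by bilinearity together with the rank-one identity $\langle K(x,\cdot)\otimes K(x,\cdot),K(x',\cdot)\otimes K(x',\cdot)\rangle_{\hs}=K^2(x,x')$ coming from the reproducing property. Your explicit appeal to Bochner integrability and \citep[Theorem~6,~Chapter~2]{diestel1977vector} to justify pulling the expectation through $\langle\cdot,\cdot\rangle_{\hs}$ simply makes rigorous a step the paper carries out implicitly (and which is supported by the integrability arguments in \cref{sec:proof_diff_results}), so no gap remains.
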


\begin{proof}
Define $s\coloneqq\mathbb{E}_{(x,y)\sim\mathbb{D}}\left[g(x,y)K(x,\cdot)\otimes K(x,\cdot)\right]$ and $\hat{s}\coloneqq\frac{1}{n}\sum_{i=1}^n g(x_i,y_i)K(x_i,\cdot)\otimes K(x_i,\cdot)$. We have:
\begin{equation}\label{eq:A}
    \Verts{A}_{\hs}^2=\Verts{s-\hat{s}}_{\hs}^2=\Verts{s}_{\hs}^2+\Verts{\hat{s}}_{\hs}^2-2\left\langle s,\hat{s}\right\rangle_{\hs}.
\end{equation}
Next, we compute each of the following quantities: $\Verts{s}_{\hs}^2$, $\Verts{\hat{s}}_{\hs}^2$, and $\left\langle s,\hat{s}\right\rangle_{\hs}$, separately. Simple calculations yield:
\begin{align*}
    \Verts{s}_{\hs}^2&=\left\|\mathbb{E}_{(x,y)\sim\mathbb{D}}\left[g(x,y)K(x,\cdot)\otimes K(x,\cdot)\right]\right\|_{\hs}^2\\
    &=\left\langle\mathbb{E}_{(x,y)\sim\mathbb{D}}\left[g(x,y)K(x,\cdot)\otimes K(x,\cdot)\right],\mathbb{E}_{(x',y')\sim\mathbb{D}}\left[g(x',y')K(x',\cdot)\otimes K(x',\cdot)\right]\right\rangle_{\hs}\\
    &=\mathbb{E}_{(x,y),(x',y')\sim\mathbb{D}\otimes\mathbb{D}}\bigg[g(x,y)g(x',y')\left\langle K(x,\cdot)\otimes K(x,\cdot),K(x',\cdot)\otimes K(x',\cdot)\right\rangle_{\hs}\bigg]\\
    &=\mathbb{E}_{(x,y),(x',y')\sim\mathbb{D}\otimes\mathbb{D}}\Big[g(x,y)g(x',y')K^2(x,x')\Big],\\
    \Verts{\hat{s}}_{\hs}^2&=\frac{1}{n^2}\left\|\sum_{i=1}^n g(x_i, y_i)K(x_i,\cdot)\otimes K(x_i,\cdot)\right\|_{\hs}^2\\
    &=\frac{1}{n^2}\left\langle\sum_{i=1}^n g(x_i, y_i)K(x_i,\cdot)\otimes K(x_i,\cdot),\sum_{j=1}^n g(x_j, y_j)K(x_j,\cdot)\otimes K(x_j,\cdot)\right\rangle_{\hs}\\
    &=\frac{1}{n^2}\sum_{i,j=1}^n g(x_i, y_i)g(x_j, y_j)\left\langle K(x_i,\cdot)\otimes K(x_i,\cdot),K(x_j,\cdot)\otimes K(x_j,\cdot)\right\rangle_{\hs}\\
    &=\frac{1}{n^2}\sum_{i,j=1}^n g(x_i, y_i)g(x_j, y_j)K^2(x_i,x_j),\\
    \left\langle s,\hat{s}\right\rangle_{\hs}&=\left\langle\mathbb{E}_{(x,y)\sim\mathbb{D}}\left[g(x, y)K(x,\cdot)\otimes K(x,\cdot)\right],\frac{1}{n}\sum_{i=1}^n g(x_i, y_i)K(x_i,\cdot)\otimes K(x_i,\cdot)\right\rangle_{\hs}\\
    &=\frac{1}{n}\sum_{i=1}^n\mathbb{E}_{(x,y)\sim\mathbb{D}}\left[g(x_i, y_i)g(x, y)\left\langle K(x,\cdot)\otimes K(x,\cdot),K(x_i,\cdot)\otimes K(x_i,\cdot)\right\rangle_{\hs}\right]\\
    &=\frac{1}{n}\sum_{i=1}^n\mathbb{E}_{(x,y)\sim\mathbb{D}}\Big[g(x_i, y_i)g(x, y)K^2(x,x_i)\Big].
\end{align*}
After substituting the obtained results into \cref{eq:A} and rearranging, we obtain:
\begin{align*}
    \left\|A\right\|_{\hs}^2=&\mathbb{E}_{(x,y),(x',y')\sim\mathbb{D}\otimes\mathbb{D}}\Big[g(x,y)g(x',y')K^2(x,x')\Big]+\frac{1}{n^2}\sum_{i,j=1}^n g(x_i, y_i)g(x_j, y_j)K^2(x_i,x_j)\\
    &-\frac{2}{n}\sum_{i=1}^n\mathbb{E}_{(x,y)\sim\mathbb{D}}\Big[g(x_i, y_i)g(x, y)K^2(x,x_i)\Big].
\end{align*}
\end{proof}
\section{Details on Experiments and Additional Numerical Results}\label{app:num_ex}
In this section, we provide details on the experimental setting used to obtain \cref{fig:gen_results} and include additional numerical results in \cref{subsec:add_exp_res}. We recall the formulation of the instrumental variable regression problem introduced in \cref{sec:examples}:
\begin{multline*}
    \min_{\omega\in\mathbb{R}^d}\mathcal{F}(\omega)\coloneqq L_{out}(\omega,h^\star_\omega)=\frac{1}{2}\mathbb{E}_{(x,y)\sim\mathbb{Q}}\left[\verts{h^\star_\omega(x)-y}^2\right]\\\quad\text{s.t.}\quad h^\star_\omega=\argmin_{h\in\mathcal{H}}L_{in}(\omega,h)=\frac{1}{2}\mathbb{E}_{(x,t)\sim\mathbb{P}}\left[\verts{h(x)-\omega^\top\phi(t)}^2\right]+\frac{\lambda}{2}\Verts{h}_\mathcal{H}^2,
\end{multline*}
where $\phi(t)=(\phi_1(t),\ldots,\phi_d(t))^\top\in\mathbb{R}^d$ is the feature map. We begin by deriving a closed-form expression for $\hat{h}_\omega$ (the empirical counterpart of $h^\star_\omega$), which is key to obtaining closed-form expressions for $\widehat{\mathcal{F}}(\omega)$ and $\widehat{\nabla\mathcal{F}}(\omega)$, and thus accurate approximations of $\mathcal{F}(\omega)$ and $\nabla\mathcal{F}(\omega)$.

\subsection{\texorpdfstring{Closed-form expression for $\hat{h}_\omega$}{Closed-form expression for ĥω}}
Let $\omega\in\mathbb{R}^d$. By the first-order optimality condition, the gradient of $\widehat{L}_{in}$ with respect to its second argument must vanish at $\hat{h}_\omega$, \textit{i.e.}, $\partial_h \widehat{L}_{in}(\omega, \hat{h}_\omega)=0$. \cref{prop:fre_diff_L} implies that $\hat{h}_\omega$ satisfies the following equation:
\begin{equation*}
    \frac{1}{n}\sum_{i=1}^n\left[(\hat{h}_\omega(x_i)-\omega^\top\phi(t_i))K(x_i,\cdot)\right]+\lambda\hat{h}_\omega=0,
\end{equation*}
with $(x_i,t_i)_{1\leq i\leq n}$ being $n$ samples drawn from the distribution $\mathbb{P}$. After using the reproducing property of the RKHS $\mathcal{H}$ and rearranging the terms, we arrive at the following closed-form expression for $\hat{h}_\omega$:
\begin{equation*}
    \hat{h}_\omega=(\widehat{\Sigma}_\lambda^{-1}\widehat{\Phi})^\top\omega\in\mathcal{H},
\end{equation*}
where $\widehat{\Sigma}_\lambda=\widehat{\Sigma}+\lambda\Id_\mathcal{H}$ is an operator from $\mathcal{H}$ to $\mathcal{H}$ with $\widehat{\Sigma}=\frac{1}{n}\sum_{i=1}^n K(x_i,\cdot)\otimes K(x_i,\cdot)$ being the empirical covariance operator and $\widehat{\Phi}=\frac{1}{n}\sum_{i=1}^n\phi(t_i)K(x_i,\cdot)=(\widehat{\Phi}_1,\ldots,\widehat{\Phi}_d)^\top\in\mathcal{H}^d$. Next, we compute a closed-form expression for $\widehat{\Sigma}_\lambda^{-1}\widehat{\Phi}$, which can be determined as the solution $\hat{b}=(\hat{b}_1,\ldots,\hat{b}_d)^\top\in\mathcal{H}^d$ of the following minimization problem:
\begin{equation*}
    \hat{b}_l=\argmin_{b_l\in\mathcal{H}}\frac{1}{2}b_l^\top\widehat{\Sigma}_\lambda b_l-b_l^\top\widehat{\Phi}_l,\quad\text{for any }1\leq l\leq d.
\end{equation*}
After expanding the terms and rearranging, this minimization problem is equivalent to:
\begin{equation*}
    \hat{b}_l=\argmin_{b_l\in\mathcal{H}}\Psi_l(b_l(x_1),\ldots,b_l(x_n),\Verts{b_l}_\mathcal{H}),\quad\text{for any }1\leq l\leq d,
\end{equation*}
where, for any $e_1,\ldots,e_n, e\in\mathbb{R}$, $\Psi_l(e_1,\ldots,e_n,e)=\frac{1}{2n}\sum_{i=1}^n e_i^2-\frac{1}{n}\sum_{i=1}^n\phi_l(t_i)e_i+\frac{\lambda}{2}e^2$. By the representer theorem, for any $1\leq l\leq d$, $\hat{b}_l$ can be expressed as:
\begin{equation*}
    \hat{b}_l=\sum_{i=1}^n \hat{\vc}_{i,l}K(x_i,\cdot),
\end{equation*}
where $\hat{\vc}_l=(\hat{\vc}_{1,l},\ldots,\hat{\vc}_{n,l})^\top\in\mathbb{R}^n$ satisfies:
\begin{equation*}
    \hat{\vc}_l=\argmin_{\vc_l\in\mathbb{R}^n}\Psi_l\left([\K \vc_l]_1,\ldots,[\K \vc_l]_n,\vc_l^\top\K \vc_l\right)\coloneqq\frac{1}{2n}\vc_l^\top\K^2 \vc_l-\frac{1}{n}\F_l^\top\K \vc_l+\frac{\lambda}{2}\vc_l^\top\K \vc_l,
\end{equation*}
where $\F_l=\left(\phi_l(t_1),\ldots,\phi_l(t_n)\right)^\top\in\mathbb{R}^n$. By the first-order optimality condition, we have:
\begin{equation*}
    \nabla_{\vc_l}\Psi_l\left([\K \hat{\vc}_l]_1,\ldots,[\K \hat{\vc}_l]_n,\hat{\vc}_l^\top\K \hat{\vc}_l\right)=0\text{, which results in }\hat{\vc}_l=\left(\K+n\lambda\mathbbm{1}_{n\times n}\right)^{-1}\F_l\in\mathbb{R}^n.
\end{equation*}
Using this, we obtain:
\begin{equation*}
\hat{b}_l=\hat{\vc}_l^\top(K(x_1,\cdot),\ldots,K(x_n,\cdot))^\top,\text{ for any }1\leq l\leq d,\text{ and thus: }\hat{h}_\omega=\hat{b}^\top\omega.
\end{equation*}
Now that we have obtained a closed-form expression for $\hat{h}_\omega$, we can express $\widehat{\mathcal{F}}(\omega)$ and $\widehat{\nabla\mathcal{F}}(\omega)$ in closed-form, as we will see next.

\subsection{\texorpdfstring{Plug-in estimators for $\mathcal{F}(\omega)$ and $\nabla\mathcal{F}(\omega)$}{Plug-in estimators for F(ω) and ∇F(ω)}}\label{subsec:plugin_est}

Let $(\tilde{x}_j,\tilde{y}_j)_{1\leq j\leq m}$ be $m$ samples drawn from $\mathbb{Q}$ and $\omega\in\mathbb{R}^d$. We have:
\begin{equation*}
    \widehat{\mathcal{F}}(\omega)=\frac{1}{2m}\sum_{j=1}^m \parens{\hat{h}_\omega(\tilde{x}_j)-\tilde{y}_j}^2=\frac{1}{2m}\Verts{\widehat{\B}\omega-\tilde{\y}}^2,
\end{equation*}
where $\widehat{\B}=[\hat{b}(\tilde{x}_1),\ldots,\hat{b}(\tilde{x}_m)]^\top\in\mathbb{R}^{m\times d}$ and $\tilde{\y}=(\tilde{y}_1,\ldots,\tilde{y}_m)^\top\in\mathbb{R}^m$. For any $1\leq l\leq d$ and $1\leq j\leq m$, we have:
\begin{equation*}
    \hat{b}_l(\tilde{x}_j)=\left[\F_l\right]^\top\left(\K+n\lambda\mathbbm{1}_{n\times n}\right)^{-1}\left[\Kbar^\top\right]_j.
\end{equation*}
As a consequence, we obtain:
\begin{equation*}
    \hat{b}(\tilde{x}_j)=(\hat{b}_1(\tilde{x}_j),\ldots,\hat{b}_d(\tilde{x}_j))^\top=\widehat{\C}^\top\left[\Kbar^\top\right]_j\in\mathbb{R}^d,
\end{equation*}
where $\widehat{\C}=[\hat{\vc}_1,\ldots,\hat{\vc}_d]=\left(\K+n\lambda\mathbbm{1}_{n\times n}\right)^{-1}\F\in\mathbb{R}^{n\times d}$, with $\F=[\F_1,\ldots,\F_d]\in\mathbb{R}^{n\times d}$. This implies that $\widehat{\B}=\Kbar\widehat{\C}\in\mathbb{R}^{m\times d}$, and hence:
\begin{equation}\label{eq:plugin_value}
    \widehat{\mathcal{F}}(\omega)=\frac{1}{2m}\Verts{\Kbar\widehat{\C}\omega-\tilde{\y}}^2=\frac{1}{2m}\omega^\top(\Kbar\widehat{\C})^\top\Kbar\widehat{\C}\omega-\frac{1}{m}\tilde{\y}^\top\Kbar\widehat{\C}\omega+\frac{1}{2m}\|\tilde{\y}\|^2.
\end{equation}
On the other hand, using \cref{sec:grad_est}, we get:
\begin{equation}\label{eq:plugin_grad}
    \widehat{\nabla\mathcal{F}}(\omega)=\frac{1}{m}\widehat{\C}^\top\Kbar^\top\parens{\Kbar\widehat{\C}\omega-\tilde{\y}}=\frac{1}{m}\left[(\Kbar\widehat{\C})^\top\Kbar\widehat{\C}\omega-(\Kbar\widehat{\C})^\top\tilde{\y}\right]\in\mathbb{R}^d.
\end{equation}
The exact expressions of $\mathcal{F}(\omega)$ and $\nabla\mathcal{F}(\omega)$ involve expectations and are therefore intractable to compute analytically. A natural approach is to approximate these quantities using their plug-in estimators $\widehat{\mathcal{F}}(\omega)$ and $\widehat{\nabla\mathcal{F}}(\omega)$, evaluated with a very large number of inner and outer samples, $n$ and $m$. However, this approach quickly becomes computationally and memory-intensive. In particular, storing the kernel matrices $\K$ and $\Kbar$ requires $\mathcal{O}(n^2)$ and $\mathcal{O}(nm)$ space, respectively. Moreover, computing the inverse $\left(\K+n\lambda\mathbbm{1}_{n\times n}\right)^{-1}$ incurs a cubic time complexity of $\mathcal{O}(n^3)$, which is prohibitive for large-scale applications. To alleviate these computational bottlenecks, potential strategies rely on classical techniques in kernel methods such as Random Fourier Features (RFF), which approximate kernel functions in a finite-dimensional feature space and enable more efficient gradient computations \citep{rahimi2007random}, and Nystr\"{o}m approximations, which mitigate the computational burden of full kernel matrices by using a low-rank approximation of the kernel \citep{williams2000using}. In our experiments, we leverage the closed-form expressions of the plug-in estimators, and replace the kernel evaluations with their approximations via RFF. This enables us to construct efficient and scalable approximations of $\mathcal{F}(\omega)$ and $\nabla\mathcal{F}(\omega)$, while significantly reducing both the memory usage and the computational cost. Our approach will be discussed in the following.

\subsection{\texorpdfstring{Scalable approximations for $\mathcal{F}(\omega)$ and $\nabla\mathcal{F}(\omega)$ via random Fourier features}{Scalable approximations for F(ω) and ∇F(ω) via RFF}}\label{subsec:scal_rff}
Random Fourier Features (RFF) provide a way to approximate shift-invariant kernels (\textit{i.e.}, kernels that satisfy $K(x,x')=G(x-x')$ for some function $G:\mathcal{X}\to\mathbb{R}$) by mapping the data into a \textit{randomized} feature space. To avoid the high computational burden of building the full kernel matrix from all pairwise kernel evaluations, RFF use a randomized feature map $\psi:\mathcal{X}\to\mathbb{R}^D$, with $D$ being the number of Fourier features, to approximate the kernel as follows:
\begin{equation*}
    K(x,x')\approx\psi(x)^\top\psi(x'),\quad\text{for any }x,x'\in\mathcal{X}.
\end{equation*}
Now, we derive the expression of the feature map $\psi$. Let $x,x'\in\mathcal{X}$. By Bochner theorem \citep{bochner1959lectures}, we have that:
\begin{equation}\label{eq:boch_kernel}
    K(x,x')=\frac{1}{(2\pi)^p}\mathbb{E}_{\w\sim\widehat{G}(\w)}\left[e^{i\w^\top(x-x')}\right]=\frac{1}{(2\pi)^p}\mathbb{E}_{\w\sim\widehat{G}(\w)}\left[\cos(\w^\top (x-x'))\right],
\end{equation}
where $\widehat{G}$ is the Fourier transform of $G$. For any $b\in\mathbb{R}$, the following product-to-sum identity holds:
\begin{equation*}
    2\cos(\w^\top x+b)\cos(\w^\top x'+b)=\cos(2b+\w^\top(x+x'))+\cos(\w^\top (x-x')).
\end{equation*}
In particular, when $b\sim\mathcal{U}(0,2\pi)$ (the uniform distribution over $[0,2\pi]$), we get:
\begin{align*}
    \mathbb{E}_{b\sim\mathcal{U}(0,2\pi)}\left[2\cos(\w^\top x+b)\cos(\w^\top x'+b)\right]=&\mathbb{E}_{b\sim\mathcal{U}(0,2\pi)}\left[\cos(2b+\w^\top(x+x'))\right]\\&+\cos(\w^\top (x-x')).
\end{align*}
However, we have:
\begin{align*}
    \mathbb{E}_{b\sim\mathcal{U}(0,2\pi)}\left[\cos(2b+\w^\top(x+x'))\right]&=\frac{1}{2\pi}\int_0^{2\pi}\cos(2b+\w^\top(x+x'))\diff{b}\\
    &=\frac{1}{4\pi}[\sin(2b+\w^\top(x+x'))]_{b=0}^{b=2\pi}=0.
\end{align*}
Thus:
\begin{equation*}
    \mathbb{E}_{b\sim\mathcal{U}(0,2\pi)}\left[2\cos(\w^\top x+b)\cos(\w^\top x'+b)\right]=\cos(\w^\top (x-x')).
\end{equation*}
Substituting this back into \cref{eq:boch_kernel}, we arrive at:
\begin{equation*}
    K(x,x')=\mathbb{E}_{\w\sim\frac{1}{(2\pi)^p}\widehat{G}(\w),b\sim\mathcal{U}(0,2\pi)}\left[\sqrt{2}\cos(\w^\top x+b)\sqrt{2}\cos(\w^\top x'+b)\right].
\end{equation*}
Using $D$ samples $\w_1,\ldots,\w_D\sim\frac{1}{(2\pi)^p}\widehat{G}(\w)$ and $b_1,\ldots,b_D\sim\mathcal{U}(0,2\pi)$, we obtain by Monte Carlo estimation:
\begin{equation*}
    K(x,x')\approx\sum_{i=1}^D\left(\sqrt{\frac{2}{D}}\cos(\w_i^\top x+b_i)\right)\left(\sqrt{\frac{2}{D}}\cos(\w_i^\top x'+b_i)\right).
\end{equation*}
This implies that:
\begin{equation*}
    \psi(x)=\sqrt{\frac{2}{D}}\cos(\W x+b),\text{ where }\W=(\w_1,\ldots,\w_D)^\top\in\mathbb{R}^{D\times p}\text{ and }b=(b_1,\ldots,b_D)^\top\in\mathbb{R}^D.
\end{equation*}
In practice, one typically chooses $D\ll n$ and $D\ll m$, which reduces the space complexity of storing $\K$ from $\mathcal{O}(n^2)$ to $\mathcal{O}(nD)$, and that of storing $\Kbar$ from $\mathcal{O}(nm)$ to $\mathcal{O}((n+m)D)$. This results in significant computational and memory savings. Using the RFF approach, the two kernel matrices $\K$ and $\Kbar$ can then be approximated as:
\begin{equation*}
    \K\approx\Xi\Xi^\top\text{ and }\Kbar\approx\widetilde{\Xi}\Xi^\top,
\end{equation*}
where $\Xi=[\psi(x_1),\ldots,\psi(x_n)]^\top\in\mathbb{R}^{n\times D}$ and $\widetilde{\Xi}=[\psi(\tilde{x}_1),\ldots,\psi(\tilde{x}_m)]^\top\in\mathbb{R}^{m\times D}$. A common term in \cref{eq:plugin_value,eq:plugin_grad} is $\Kbar\widehat{\C}$, which can be approximated using the push-through identity as follows:
\begin{equation*}
    \Kbar\widehat{\C}\approx\widetilde{\Xi}\Xi^\top\parens{\Xi\Xi^\top+n\lambda\mathbbm{1}_{n\times n}}^{-1}\F=\widetilde{\Xi}\parens{\Xi^\top\Xi+n\lambda\mathbbm{1}_{D\times D}}^{-1}\Xi^\top\F\in\mathbb{R}^{m\times d}.
\end{equation*}
Here, instead of inverting a matrix of size $n\times n$, we invert a matrix of size $D\times D$, which leads to significant computational savings in time, especially when $D\ll n$. Consequently, using this approximation, we get:
\begin{align*}
    \widehat{\mathcal{F}}(\omega)\approx&\frac{1}{2m}\omega^\top\J^\top\widetilde{\Xi}^\top\widetilde{\Xi}\J\omega-\frac{1}{m}\omega^\top\J^\top\widetilde{\Xi}^\top\tilde{\y}+\frac{1}{2m}\|\tilde{\y}\|^2,\\
    \widehat{\nabla\mathcal{F}}(\omega)\approx&\frac{1}{m}\left[\J^\top\widetilde{\Xi}^\top\widetilde{\Xi}\J\omega-\J^\top\widetilde{\Xi}^\top\tilde{\y}\right],
\end{align*}
where $\J=\parens{\Xi^\top\Xi+n\lambda\mathbbm{1}_{D\times D}}^{-1}\Xi^\top\F\in\mathbb{R}^{D\times d}$. As mentioned earlier, a very large number of samples $n$ and $m$ is required to obtain accurate approximations of $\mathcal{F}(\omega)$ and $\nabla\mathcal{F}(\omega)$ using the RFF approach. To cope with the issue of storing the two matrices $\Xi\in\mathbb{R}^{n\times D}$ and $\widetilde{\Xi}\in\mathbb{R}^{m\times D}$ in memory, we implement this method in blocks. More precisely, we divide our data $(x_i,t_i)_{1\leq i\leq n}$ and $(\tilde{x}_j,\tilde{y}_j)_{1\leq j\leq m}$ into blocks, then compute $\Xi^\top\Xi$, $\widetilde{\Xi}^\top\widetilde{\Xi}$, $\Xi^\top\F$, $\widetilde{\Xi}^\top\tilde{\y}$, and $\|\tilde{\y}\|^2$ as follows:
\begin{align*}
    \Xi^\top\Xi&=\sum_{i=1}^n\psi(x_i)\psi(x_i)^\top=\sum_{B\in\mathcal{B}}\sum_{x\in B}\psi(x)\psi(x)^\top=\sum_{B\in\mathcal{B}}\Xi_B^\top\Xi_B\in\mathbb{R}^{D\times D},\\
    \widetilde{\Xi}^\top\widetilde{\Xi}&=\sum_{j=1}^m\psi(\tilde{x}_j)\psi(\tilde{x}_j)^\top=\sum_{B\in\mathcal{B}}\sum_{\tilde{x}\in B}\psi(\tilde{x})\psi(\tilde{x})^\top=\sum_{B\in\mathcal{B}}\widetilde{\Xi}_B^\top\widetilde{\Xi}_B\in\mathbb{R}^{D\times D},\\
    \Xi^\top\F&=\sum_{i=1}^n\psi(x_i)\left[\phi_1(t_i),\ldots,\phi_d(t_i)\right]=\sum_{B\in\mathcal{B}}\sum_{(x,t)\in B}\psi(x)\left[\phi_1(t),\ldots,\phi_d(t)\right]=\sum_{B\in\mathcal{B}}\Xi_B^\top\F_B\in\mathbb{R}^{D\times d},\\
    \widetilde{\Xi}^\top\tilde{\y}&=\sum_{j=1}^m\psi(\tilde{x}_j)\tilde{y}_j=\sum_{B\in\mathcal{B}}\sum_{(\tilde{x},\tilde{y})\in B}\psi(\tilde{x})\tilde{y}=\sum_{B\in\mathcal{B}}\widetilde{\Xi}_B^\top\tilde{\y}_B\in\mathbb{R}^D,\\
    \|\tilde{\y}\|^2&=\sum_{B\in\mathcal{B}}\|\tilde{\y}\|^2_B,
\end{align*}
where $\mathcal{B}$ denotes the set of blocks, and the subscript $B$ indicates that the corresponding quantity is computed using only the data contained in block $B$. These block-wise computations make it possible to precisely approximate $\widehat{\mathcal{F}}(\omega)$ and $\widehat{\nabla\mathcal{F}}(\omega)$ in a scalable manner. As a result, we can efficiently approximate both $\mathcal{F}(\omega)$ and $\nabla\mathcal{F}(\omega)$ through their plug-in estimators when choosing large sample sizes $n$ and $m$.

\subsection{Additional details on the experimental setup}\label{subsec:stat_mod}
We use the JAX framework \cite{jax2018github} to run our experiments on an NVIDIA RTX 6000 ADA GPU. The experiments take approximately 15 hours to complete.

\textbf{Choice of the kernel. }In our experiments, we consider the Gaussian kernel defined, for any $x,x'\in\mathcal{X}$, as $K(x,x')=e^{-\frac{\|x-x'\|^2}{2\sigma^2}}$, where $\sigma>0$ is the bandwidth parameter controlling the smoothness. Since the Gaussian kernel is translation-invariant, Bochner's theorem is applicable. In this case, using the same notations as in \cref{subsec:scal_rff}, we have $G(z)=e^{-\frac{\|z\|^2}{2\sigma^2}}$, for any $z\in\mathcal{X}$. Its Fourier transform $\widehat{G}$ is then given by $\widehat{G}(\w)=\left(2\pi\sigma^2\right)^\frac{p}{2}e^{-\frac{\sigma^2\|\w\|^2}{2}}$, for any $\w\in\mathbb{R}^d$. As a consequence, we obtain:
\begin{equation*}
    \frac{1}{(2\pi)^p}\widehat{G}(\w)=\frac{1}{(2\pi)^p}\left(2\pi\sigma^2\right)^\frac{p}{2}e^{-\frac{\sigma^2\|\w\|^2}{2}}=\left(\frac{2\pi}{\sigma^2}\right)^{-\frac{p}{2}}e^{-\frac{\sigma^2\|\w\|^2}{2}}=\mathcal{N}\left(0,\frac{1}{\sigma^2}\mathbbm{1}_{p\times p}\right),
\end{equation*}
which implies that $\w_1,\ldots,\w_D\sim\mathcal{N}(0,\frac{1}{\sigma^2}\mathbbm{1}_{p\times p})$.

\textbf{Choice of the statistical model and hyperparameters. }We set $p = 3$, $d = 4$, $\lambda = 0.01$, and $\sigma = 0.2$. We generate synthetic data as follows:
\begin{equation*}
    x \sim P_x, \quad t = 2(\mathbbm{1}_p^\top x + \epsilon), \quad y = {\omega^\star}^\top \phi(t) + \epsilon,
\end{equation*}
where $\epsilon \sim \mathcal{N}(0, 0.025)$, $\omega^\star \sim \mathcal{U}(0, 1)^d$, and $\phi(t) = (\sin(t + 1), \ldots, \sin(t + d))^\top$. We consider two cases for the distribution \( P_x \) of the instrumental variable \( x \): (i) a $p$-dimensional standard Gaussian, i.e., $P_x = \mathcal{N}(0, \mathbbm{1}_{p \times p})$, and (ii) a $p$-dimensional Student's $t$-distribution with degrees of freedom $\nu \in \{2.1, 2.5, 2.9\}$. All random variables are fixed across runs for reproducibility.

\subsection{Additional experimental results}\label{subsec:add_exp_res}
Here, we retain the same experimental setup as in the main paper and extend the analysis by providing additional experimental results in the scenario where both $m$ and $n$ vary simultaneously over the range $100$ to $5000$. In \cref{fig:gaussian_gaussian_heatmaps}, we visualize the results using heatmaps for four key quantities: $|\mathcal{F}(\omega_0)-\widehat{\mathcal{F}}(\omega_0)|$, $\|\nabla\mathcal{F}(\omega_0)-\widehat{\nabla\mathcal{F}}(\omega_0)\|$, $\|\nabla\mathcal{F}(\omega_T)\|$, and $\min_{i=0,\ldots,T}\|\nabla\mathcal{F}(\omega_i)\|$, with $n$ on the $x$-axis and $m$ on the $y$-axis. We report the results only for the case where the instrumental variable $x$ is sampled from a $p$-dimensional standard Gaussian, since the cases where $x$ is sampled from a $p$-dimensional Student's $t$-distribution with degrees of freedom $\nu \in \{2.1, 2.5, 2.9\}$ exhibit similar trends. From the heatmaps, we observe that the lowest errors across all four metrics occur along the diagonal of the plots, \emph{i.e.}, when $m=n$. This pattern suggests that matching the number of samples in the two dimensions leads to more accurate estimation of both the objective function and its gradient, as well as improved convergence behavior during optimization.
\begin{figure}[ht]
    \centering
    \includegraphics[width=\linewidth]{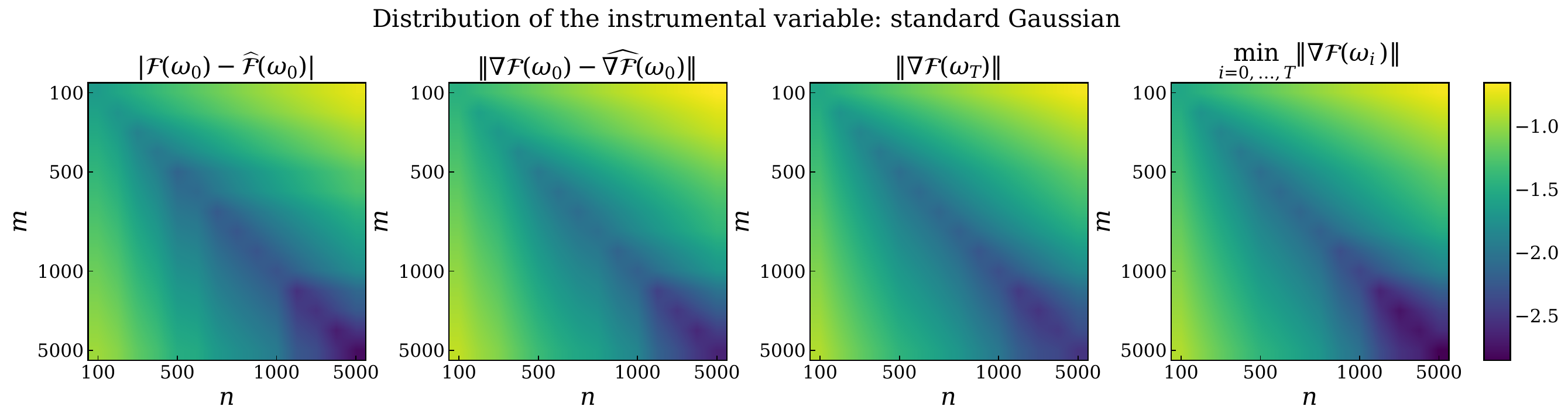}
    \caption{Illustration of gradient descent on \eqref{eq:kbo_app} for the instrumental variable regression task using synthetic data, with an instrumental variable sampled from a standard Gaussian distribution. The logs of the means of the four quantities across 50 runs are displayed.}
    \label{fig:gaussian_gaussian_heatmaps}
\end{figure}

\end{document}